\def\eqref#1{equation~\ref{#1}}
\def\1{\bm{1}}
\DeclareMathAlphabet{\mathsfit}{\encodingdefault}{\sfdefault}{m}{sl}
\SetMathAlphabet{\mathsfit}{bold}{\encodingdefault}{\sfdefault}{bx}{n}
\DeclareMathOperator*{\argmax}{arg\,max}
\newtheorem{theorem}{Theorem}
\newtheorem*{remark}{Remark}
\newtheorem*{lemma*}{Lemma}
\newtheorem{Proposition}{Proposition}
\newtheorem*{Proposition*}{Proposition}
\newtheorem{Lemma}{Lemma}
\newcommand\redout{\bgroup\markoverwith
{\textcolor{red}{\rule[0.5ex]{2pt}{0.8pt}}}\ULon}
\title{Decentralized Randomly Distributed Multi-agent Multi-armed Bandit with Heterogeneous Rewards}
\author{%
  Mengfan Xu$^{1}$ \quad Diego Klabjan$^{1}$ \\
  $^1$Department of Industrial Engineering and Management Sciences, Northwestern University\\
\texttt{MengfanXu2023@u.northwestern.edu, d-klabjan@northwestern.edu}}
\begin{document}

\maketitle

\begin{abstract}

We study a decentralized multi-agent multi-armed bandit problem in which multiple clients are connected by time dependent random graphs provided by an environment. The reward distributions of each arm vary across clients and rewards are generated independently over time by an environment based on distributions that include both sub-exponential and sub-Gaussian distributions. Each client pulls an arm and communicates with neighbors based on the graph provided by the environment. The goal is to minimize the overall regret of the entire system through collaborations. To this end, we introduce a novel algorithmic framework, which first provides robust simulation methods for generating random graphs using rapidly mixing Markov chains or the random graph model, and then combines an averaging-based consensus approach with a newly proposed weighting technique and the upper confidence bound to deliver a UCB-type solution. Our algorithms account for the randomness in the graphs, removing the conventional doubly stochasticity assumption, and only require the knowledge of the number of clients at initialization. We derive optimal instance-dependent regret upper bounds of order $\log{T}$ in both sub-Gaussian and sub-exponential environments, and a nearly optimal mean-gap independent regret upper bound of order $\sqrt{T}\log T$ up to a $\log T$ factor. Importantly, our regret bounds hold with high probability and capture graph randomness, whereas prior works consider expected regret under assumptions and require more stringent reward distributions.

\end{abstract}

\section{Introduction}

Multi-armed Bandit (MAB) \citep{auer2002finite,auer2002nonstochastic} is an online sequential decision-making process that balances exploration and exploitation while given partial information. In this process, a single player (agent, client) aims to maximize a cumulative reward or, equivalently, minimize the cumulative loss, known as regret, by pulling an arm and observing the reward of that arm at each
time step. The two variants of MAB are adversarial and stochastic MAB, depending on whether rewards are chosen arbitrarily or follow a time-invariant distribution, respectively. Recently, motivated by the development of federated learning \citep{mcmahan2017communication}, multi-agent stochastic multi-armed bandit has been drawing increasing attention (commonly referred to as multi-agent MAB). In this variant, multiple clients collaboratively work with multiple stochastic MABs to maximize the overall performance of the entire system. Likewise, regret is an important performance measure, which is the difference between the cumulative reward of always pulling the global optimal arm by all clients and the actual cumulative reward gained by the clients at the end of the game, where global optimality is defined with respect to the average expected reward values of arms across clients. 
Thereafter, the question for each client to answer is essentially how to guarantee an optimal regret with limited observations of arms and insufficient information of other clients. Assuming the existence of a central server, also known as the controller, \citep{bistritz2018distributed,zhu2021federated,huang2021federated,mitra2021exploiting,reda2022near,yan2022federated}, allow a controller-client framework where the controller integrates and distributes the inputs from and to clients, adequately addressing the challenge posed by the lack of information of other clients. However, this centralization implicitly requires all clients to communicate with one another through the central server and may fail to include common networks with graph structures where clients perform only pair-wise communications within the neighborhoods on the given graphs. 
Non-complete graphs capture the reality of failed communication links. Removing the centralization assumption leads to a decentralized multi-agent MAB problem, which is a challenging but attracting direction as it connects the bandit problem and graph theory, and precludes traditional centralized processing.

In the field of decentralized multi-agent MAB, it is commonly assumed that the mean reward value of an arm for different clients is the same, or equivalently, homogeneous. This assumption is encountered in \citep{landgren2016distributed,landgren2016distributed_2,landgren2021distributed,zhu2020distributed,martinez2019decentralized,agarwal2022multi, wangx2022achieving, wangp2020optimal, li2022privacy, sankararaman2019social, chawla2020gossiping}. However, this assumption may not always hold in practical scenarios. In recent years, there has been an increasing emphasis on heterogeneous reward settings, where clients can retain different mean values for the rewards of the same arm. The transition to heterogeneous reward settings presents additional technical challenges. Clients are unable to infer the global optimal arms without sequential communications regarding the rewards of the same arm at other clients. Such communications, however, are limited by the partially observed rewards, as other clients may not pull the same arm, and constrained by the underlying graph structure. We study the heterogeneous setting with time varying graphs. 

Traditionally, rewards are assumed to be sub-Gaussian distributed. However, there has been a recent focus on MAB with heavy-tailed reward distributions. This presents a non-trivial challenge as it is harder to concentrate reward observations in sublinear time compared to the light-tailed counterpart~\citep{tao2022optimal}. In the work of~\citep{jia2021multi}, sub-exponential rewards are considered and analyzed in the single-agent MAB setting with newly proposed upper confidence bounds. Meanwhile, for multi-agent MAB, heavy-tailed distributions are examined in a homogeneous setting in \citep{dubey2020cooperative}. However, the heterogeneous setting studied herein has not yet been formulated or analyzed, posing more challenges compared to the homogeneous setting, as discussed earlier.

Besides rewards, the underlying graph assumptions are essential to the decentralized multi-agent MAB problem, as increased communication among clients leads to better identification of global optimal arms and smaller regret. The existing works \citep{sankararaman2019social, chawla2020gossiping} relate regret with graph structures and characterize the dependency of regret on the graph complexity with respect to conductance. When considering two special cases, \citep{chawla2020gossiping} demonstrates the theoretical improvement achieved by circular ring graphs compared to complete graphs, and \citep{li2022privacy} numerically shows that the circular ring graph presents the most challenging scenario with the largest regret, while the complete graph is the simplest. There are two types of graphs from a time perspective: time-invariant graphs, which remain constant over time, and time-varying graphs, which depend on time steps and are more challenging but more general. Assumptions on time-invariant graphs  include complete graphs \citep{wang2021multitask} where all clients can communicate, regular graphs \citep{jiang2023multi} where each client has the same number of neighbors, and connected graphs under the doubly stochasticity assumption \citep{zhu2020distributed,zhu2021decentralized,zhu2021federated}. Independently from our work,
recent work \citep{zhu2023distributed} has focused on time-varying $B$-connected graphs, where the composition of every $l$ consecutive graphs is a strongly connected graph.
However, their doubly stochasticity assumption, where all elements of edge probability also called weight matrices are uniformly bounded by a positive constant, can be violated in several cases. Additionally, their graphs may be strongly correlated to meet the connectivity condition when $l > 1$, which may not always hold in practice. No research has been conducted on time-varying graphs with only connectivity constraints or without constraints on connectivity. Additionally, current time-varying graphs do not provide insight into how the graphs change over time. As the graphs are generated by the environment, similar to the generation of rewards, it remained unexplored considering random graphs in an i.i.d manner, such as random edge failures or random realizations as pointed out for future research in~\citep{martinez2019decentralized}. We also address this situation. 

Traditionally, random graphs have often been formulated using the Erdős–Rényi (E-R) model, which has been widely adopted in various research domains. The model, described by $G(M,c)$, consists of $M$ vertices with each pair of vertices being connected with probability $c$. Notably, the E-R model is 1) not necessarily connected and 2) stochastic that allows for random edge failures, and has found applications in mean-field game \citep{delarue2017mean} and majority vote settings \citep{lima2008majority}. Though it has only been used in numerical experiments for the decentralized multi-agent MAB setting with homogeneous rewards~\citep{dubey2020cooperative}, the theoretical study of this model in this context remained unexplored until this work, let alone with heterogeneous rewards and time-varying graphs. Alternatively, one can consider all connected graphs (there are exponentially many of them), and the environment can randomly sample a connected graph and produce i.i.d. samples of such random connected graphs. This approach mimics the behavior of stochastic rewards and allows the environment to exhaust the sample space of connected graphs independently, without the doubly stochasticity assumption, which, however, has not yet been studied and it is also addressed herein.

For the multi-agent MAB framework, methods in MAB are a natural extension. \citep{zhu2021federated} adapt the UCB algorithm to the multi-agent setting. This algorithm uses weighted averages to achieve consensus among clients and is shown to have a regret of order $\log T$ for time-invariant graphs. A follow-up study in~\citep{zhu2023distributed} re-analyzes this algorithm for time-varying $B$-connected graphs under the aforementioned assumptions under the doubly stochasticity assumption by adding an additional term compared to UCB.  An effective UCB-based method for random graphs without doubly stochasticity assumption and for sub-exponential distributed rewards remained unexplored. 

This paper presents a novel contribution to the decentralized multi-agent MAB problem by studying both heterogeneous rewards and time-varying random graphs, where the distributions of rewards and graphs are independent of time. To the best of our knowledge, this is the first work to consider this problem and to investigate it with heavy-tailed reward distributions. Specifically, the paper investigates 1) heterogeneous sub-exponential and sub-Gaussian distributed 
rewards and 2) random graphs including the possibly disconnected E-R model and  random connected graphs, and applies them to the decentralized multi-agent MAB framework. This work bridges the gap between large-deviation theories for sub-exponential distributions and multi-agent MAB with heterogeneous rewards, and the gap between random graphs and decentralized multi-agent MAB. 

To this end, we propose a brand new algorithmic framework consisting of three main components: graph generation, DrFed-UCB: burn-in period, and DrFed-UCB: learning period. For the learning period, we modify the algorithm by~\citep{zhu2021federated} by introducing new UCB quantities that are consistent with the conventional UCB algorithm and generalize to sub-exponential settings. We also introduce a newly proposed stopping time and a new weight matrix without the doubly stochasticity assumption to leverage more information in random graphs. A burn-in period is crucial in estimating the graph distribution and initializing the weight matrix. We embed and analyze techniques from random graphs since the number of connected graphs is exponentially large in the number of vertices, and directly sampling such a graph is an NP-hard problem. In particular, we use the Metropolis-Hastings method with rapidly mixing Markov chains, as proposed in~\citep{gray2019generating}, to approximately generate random connected graphs in polynomial time. We additionally demonstrate its theoretical convergence rate, making it feasible to consider random connected graphs in the era of large-scale inference.

We present comprehensive analyses of the regret of the proposed algorithm, using the same regret definition as in existing literature. Firstly, we show that algorithm DrFed-UCB achieves optimal instance-dependent regret upper bounds of order $\log{T}$ with high probability, in both sub-Gaussian and sub-exponential settings, consistent with prior works. We add that although both~\citep{zhu2020distributed} and our analyses use the UCB framework, the important algorithmic steps are different and thus also the analyses. Secondly, we demonstrate that with high probability, the regret is universally upper bounded by $O(\sqrt{T}\log T)$ in sub-exponential settings, including sub-Gaussian settings. This upper bound matches the upper and lower bounds in single-agent settings up to a $\log T$ factor, establishing its tightness.  

The paper is organized as follows. We first introduce the notations used throughout the paper, present the problem formulation, and propose algorithms for solving the problem. Following that, we provide theoretical results on the regret of the proposed algorithm in various settings. 

\section{Problem Formulation and Methodologies}

\subsection{Problem Formulation}

Throughout, we consider a decentralized system with $M$ clients that are labeled as nodes $1,2,\ldots,M$ on a time-varying network, which is described by an undirected graph $G_t$ for $1 \leq t \leq T$ where parameter $T$ denotes the time horizon of the problem. Formally, at time step $t$,
$G_t = (V,E_t)$ where $V = \{1,2, \ldots, M\}$ and $E_t$ denotes the edge set consisting of pair-wise nodes and representing the neighborhood information in $G_t$. The neighbor set $\mathcal{N}_m(t)$ include all neighbors of client $m$ based on $G_t$. Equivalently, the graph $G_t$ can be represented by the adjacency matrix $(X_{i,j}^t)_{1 \leq i,j \leq M}$
where the element $X_{i,j}^t = 1$ if there is an edge between clients $i$ and $j$ and $X_{i,j}^t = 0$ otherwise. We let $X_{i,i} = 1$ for any $1 \leq i \leq M$. With this notation at hand, we define the empirical graph (adjacency matrix) $P_t$ as 
$P_t = \frac{(\sum_{s=1}^tX_{i,j}^s)_{1 \leq i,j \leq M}}{t}$. 
It is worth emphasizing that 1) the matrix $P_t$ is not necessarily doubly stochastic, 2) the matrix captures more information about $G_t$ than the prior works based on $|\mathcal{N}_m(t)|$, and 3) each client $m$ only knows the $m$-th row of $P_t$ without knowledge of $G_t$, i.e. $\{P_t(m,j)\}_j$ are known to client $m$, while $\{P_t(k,j)\}_j$ for $k \neq m$ are always unknown. Let us denote the set of all possible connected graphs on $M$ nodes as $\mathcal{G_M}$. 

We next consider the bandit problems faced by the clients. In the MAB setting, the environment generates rewards. Likewise, we again use the term, the environment, to represent the source of graphs $G_t$ and rewards $r_{i}^m(t)$ in the decentralized multi-agent MAB setting. Formally, there are $K$ arms faced by each client. At each time step $t$, for each client $1 \leq m \leq M$, let the reward of arm $1 \leq i \leq K$  be $r_i^m(t)$, which is i.i.d. distributed across time with the mean value $\mu_i^m$, and is drawn independently across the clients. Here we consider a heterogeneous setting where $\mu_i^m$ is not necessarily the same as $\mu_i^j$ for $m \neq j$. At each time step $t$, client $m$ pulls an arm $a_m^t$, only observes the reward of that arm $r_{a_m^t}^m(t)$ from the environment, and exchanges information with neighbors in $G_t$ given by the environment. 
In other words, two clients communicate only when there is an edge between them. 

By taking the average over clients as in the existing literature, we define the global reward of arm $i$ at 
each time step $t$ as $r_i(t) = \frac{1}{M}\sum_{m=1}^Mr_i^m(t)$
and the subsequent expected value of the global reward as $\mu_i = \frac{1}{M}\sum_{m=1}^M\mu_i^m$. We define the global optimal arm as
$i^* = \argmax_i\mu_i$ and arm $i \neq i^*$ is called global sub-optimal. Let $\Delta_i = \mu_{i^*} - \mu_i$ be the sub-optimality gap. This enables us to quantify the regret of the action sequence (policy) $\{a_m^t\}_{1 \leq m \leq M}^{1 \leq t \leq T}$ as follows. Ideally, clients would like to pull arm $i^*$ if knowledge of $\{\mu_i\}_{i}$ were available. Given the partially observed rewards due to bandits (dimension $i$) and limited accesses to information from other clients (dimension $m$), the regret is defined as 
\begin{align*}
    R_T = T\mu_{i^*} - \frac{1}{M}\sum_{t=1}^T\sum_{m=1}^M\mu_{a_m^t}^m
\end{align*}
which measures the difference of the cumulative expected reward between the global optimal arm and the action sequence. The main objective of this paper is to develop theoretically robust solutions to minimize $R_T$ for clients operating on time-varying random graphs that are vulnerable to random communication failures, which only require knowledge of $M$.

\subsection{Algorithms}

In this section, we introduce a new algorithmic framework that incorporates two graph generation algorithms, one for the E-R model and the other for uniformly distributed connected graphs. More importantly, the framework includes a UCB-variant algorithm that runs a learning period after a burn-in period, which is commonly referred to as a warm-up phase in statistical procedures.

\subsubsection{Graph Generation}

We investigate two types of graph dynamics as follows, for which we propose simulation methods that enable us to generate and analyze the resulting random graphs.

\paragraph{E-R random graphs} At each time step $t$, the adjacency matrix of  graph $G_t$ is generated by the environment by element-wise sampling $X_{i,j}^t$ according to a Bernoulli distribution. Specifically, $X_{i,j}^t$ follows a Bernoulli distribution with parameter $c$. 

\paragraph{Uniformly distributed connected graphs} At each time step $t$, the random graph $G_t$ is generated by the environment by uniformly sampling a graph from the sample space of all connected graphs $\mathcal{G}_M$, which yields the adjacency matrix $(X_{i,j}^t)_{1 \leq i \neq j \leq M}$ corresponding to $G_t$. Generating uniformly distributed connected graphs is presented in Algorithm~\ref{alg:graph_generation}. It is computationally infeasible to exhaust the sample space $\mathcal{G}_M$ since the number of connected graphs is exponentially large. To this end, we import the Metropolis-Hastings  method in~\citep{gray2019generating} and leverage rapidly mixing Markov chains. Remarkably, by adapting the algorithm into our setting which yields Algorithm~\ref{alg:graph_generation}, we construct a Markov chain that converges to the target distribution after a finite number of burn-in steps. This essentially traverses graphs in $\mathcal{G}_M$ through step-wise transitions, with a complexity of $O(M^2)$ from previous states. More precisely, at time step $s$ with connected graph $G_s$, we randomly choose a pair of nodes and check whether it exists in the edge set. If this is the case, we remove the edge from $G_s$ and check whether the remaining graph is connected, and only accept the graph as $G_{s+1}$ in the connected case. If the edge does not exist in the edge set, we add it to $G_s$ and get $G_{s+1}$. In this setting, let $c = c(M)$ be the number of times an edge is present among all connected graphs divided by the total number of connected graphs. It is known that 
\begin{align}~\label{myeq}
    c = 2\frac{\log M}{M-1},
\end{align}
\citep{key}. The distribution of $G_{s}$ eventually converges to the uniform distribution in $\mathcal{G}_M$. The formal statements are in Appendix. 
\vspace{-2mm}
\begin{algorithm}[h]
\SetAlgoLined
\caption{Generate a uniformly distributed connected graph}\label{alg:graph_generation}
 Initialization: Let $\tau_1$ be given; Generate a random graph $G^{init}$ by selecting each edge with probability $\frac{1}{2}$\;
 Connectivity: make $G^{init}$ connected by adding the least many edges to get $G_0$ \;
 \For{$t=0,1,2,\ldots,\tau_1$}{
    Randomly sample an edge pair $e = (i,j)$\;
    Denote the edge set of $G_s$ as $E_s$\;
    \eIf{$e \in E_s$}{
Remove $e$ from $E_s$ to get $G_s^{\prime} = (V, E_s\backslash\{e\})$\;
\eIf{$G_s^{\prime}$ is connected}{
$G_{s+1} = G_s^{\prime}$\;
}{
reject $G_s^{\prime}$ and set $G_{s+1} = G_s$\;
}
}{
$G_{s+1} = (V, E_s \cup \{e\})$\;
}
}
\end{algorithm}

\subsubsection{Main Algorithm}

In the following, we present the proposed algorithm, DrFed-UCB, which comprises of a burn-in period and a learning period described in Algorithm~\ref{alg:burn-in} and Algorithm~\ref{alg:dr}, respectively.

We start by introducing the variables used in the algorithm with respect to client $m$. We use $\bar{\mu}_i^m(t), n_{m,i}(t)$ to denote reward estimators and counters based on client $m$'s own pulls of arm $i$, respectively, and use $\Tilde{\mu}_i^m, N_{m,i}(t)$ to denote reward estimators and counters based on the network-wide pulls of arm $i$, respectively. By network-wide, we refer to the clients in $\mathcal{N}_m(t)$. Denote the stopping time for the filtration $\{G_s\}_{s=1}^t$ as $h^t_{m,j} = max_{s \leq t} \{(m,j) \in E_s\}$; it represents the most recent communication between clients $m$ and $j$. The weights for the network-wide and local estimators are $P^{\prime}_t(m,j)$ and $d_{m,t}$ defined later, respectively.

There are two stages in Algorithm~\ref{alg:burn-in} where $t \leq L$ as follows. For the first $\tau_1$ steps, 
the environment generates graphs based on one of the aforementioned graph generation algorithms to arrive at the steady state, while client $m$ pulls arms randomly and updates local estimators $\{\bar{\mu}_i^m,n_i^m\}_i$. Afterwards, the environment generates the graph $G_t$ that follows the distribution of interest, while client $m$ updates  $\{\bar{\mu}_i^m, n_i^m\}_i$ and row $m$ of $P_t$ by exchanging information with its neighbors in the graph $G_t$. Note that client $m$ does not have any global information about $G_t$. At the end of the burn-in period, client $m$ computes the network-wide estimator $\Tilde{\mu}_i^m$ by taking the weighted average of local estimators of other clients (including itself), where the weights are given by the $m$-th row of weight matrix $P^{\prime}$ and $d$, which depend on $P$ and knowledge of $M$ and satisfy $\sum_{j=1}^M P^{\prime}_t(m,j) + d_{m,t}M = 1$.

Subsequently, we describe Algorithm~\ref{alg:dr} where $t \geq L+1$. There are four phases in one iteration enumerated below in the order indicated.

\paragraph{UCB} Given the estimators $\Tilde{\mu}_i^m(t), n_{m,i}(t), N_{m,i}(t), \bar{\mu}_i^m(t)$, client $m$ either randomly samples an arm or pulls the arm that maximizes the upper confidence bound using $\Tilde{\mu}_i^m(t), n_{m,i}(t)$, depending on whether $n_{m,i}(t) \leq N_{m,i}(t) - K$ holds for some arm $i$. This additional condition ensures consensus among clients regarding which arm to pull. The upper confidence bound $F(m,i,t)$ is specified as $F(m,i,t) = \sqrt{\frac{C_1\ln{t}}{n_{m,i}(t)}}$ and $F(m,i,t) =  \sqrt{\frac{C_1\ln{T}}{n_{m,i}(t)}} + \frac{C_2\ln{T}}{n_{m,i}(t)}$ in settings with sub-Gaussian and sub-exponential rewards, respectively. Constants $C_1$ and $C_2$ are determined in the analyses and they depend on $\sigma$ which is an upper bound of standard deviations of the reward values (it is formally defined later).

\paragraph{Environment and client interaction} After client $m$ pulls arm $a_m^t$, the environment sends  the reward $r_{m,a_m^t}^t$ and the neighbor set $\mathcal{N}_m(t)$ in $G_t$ to client $m$. Client $m$ does not know the whole $G_t$ and obtains only the neighbor set $\mathcal{N}_m(t)$.
\paragraph{Transmission}Client $m$ sends the maintained local and network-wide estimators $\{\bar{\mu}_i^m(t), \Tilde{\mu}_i^m(t), n_{m,i}(t), N_{m,i}(t)\}_{i}$ to all clients in $\mathcal{N}_m(t)$ and receives the ones from them. 
\paragraph{Update estimators} At the end of an iteration, client $m$ first updates the $m$-th row of matrix $P_t$. Subsequently, client $m$ updates the quantities$\{\bar{\mu}_i^m(t), \Tilde{\mu}_i^m(t), n_{m,i}(t), N_{m,i}(t)\}_{1 \leq i \leq K}$ adhereing to: \\
\begin{align}\label{eq:Eq_1}
& t_{m,j} = max_{s \geq \tau_1} \{(m,j) \in E_s)\} \text{ and } 0  \text{ if such an $s$ does not exist} \notag \\
  & \indent  n_{m,i}(t+1) = n_{m,i}(t) + \mathds{1}_{a_m^t = i}, N_{m,i}(t+1) = \max \{n_{m,i}(t+1), \hat{\bar{N}}^m_{i,j}(t), j \in \mathcal{N}_m(t)\} \notag \\
  & \bar{\mu}^m_i(t+1) = \frac{\bar{\mu}^m_i(t) \cdot n_{m,i}(t) + r_{m,i}(t) \cdot \mathds{1}_{a_m^t = i}}{n_{m,i}(t+1)} \notag \\
  & P^{\prime}_t(m,j) = \frac{M-1}{M^2} \text{ if } P_t(m,j) > 0 \text{ and } 0 \text{ otherwise} \\
  & \Tilde{\mu}^m_i(t+1) = \sum_{j=1}^M P^{\prime}_t(m,j)\hat{\Tilde{\mu}}^m_{i,j}(t_{m,j}) + d_{m,t}\sum_{j \in N_m(t)}\hat{\bar{\mu}}^m_{i,j}(t) + d_{m,t}\sum_{j \not \in N_m(t)}\hat{\bar{\mu}}^m_{i,j}(t_{m,j})  \notag \\
  & \qquad \qquad \text{ with } d_{m,t} = \frac{1- \sum_{j=1}^M P^{\prime}_t(m,j)}{M} \notag 
\end{align}
\vspace{-2mm}
\begin{algorithm}[h]
\SetAlgoLined
\caption{DrFed-UCB: Burn-in period}\label{alg:burn-in}
 Initialization: The length of the burn-in period is $L$ and we are also given $\tau_1 < L$;  In the time step $t = 0$, the estimates are initialized as $\bar{\mu}_i^m(0) = 0$, $n_{m,i}(0) = 0$, $\hat{\bar{\mu}}_{i,j}^m(0) = 0 $, and $P_0(m,j) = $ for any arm $i$ and clients $m, j$\;
 \For{$1 \leq t \leq \tau_1 $}{
The environment generates a sample graph $G_t = (V,E_t)$ based on either E-R or Algorithm~\ref{alg:graph_generation}\;
}
 \For{$1 \leq t \leq \tau_1$}
 {
 \For{each client $m$}{
 Sample arm $a^m_t = (t \mod K)$\;
  Receive reward $r_{a^m_t}^m(t)$ and update $n_{m,i}(t) = n_{m,i}(t-1) + \mathds{1}_{a_m^t = i}$\;
 Update the local estimate for any arm $i$: $\bar{\mu}_i^m(t) = \frac{n_{m,i}(t-1)\bar{\mu}_i^m(t-1) + r_{a^m_t}^m(t) \cdot 1_{a^m_t = i}}{n_{m,i}(t-1) + 1_{a^m_t = i}}$;
 }
}
\For{$\tau_1 < t \leq L$}{
  The environment generates a sample graph $G_t = (V, E_t)$ based on either E-R or Algorithm~\ref{alg:graph_generation}\;
  \For{each client $m$}{
  Sample arm $a^m_t = (t \mod K)$\;
  Receive rewards $r_{a^m_t}^m(t)$ and update $n_{m,i}(t) = n_{m,i}(t-1) + \mathds{1}_{a_m^t = i}$\;
 Update the local estimates for any arm $i$: $\bar{\mu}_i^m(t) = \frac{n_{m,i}(t-1)\bar{\mu}_i^m(t-1) + r_{a^m_t}^m(t) \cdot 1_{a^m_t = i}}{n_{m,i}(t-1) + 1_{a^m_t = i}}$\;
     Update the maintained matrix $P_t(m,j) = \frac{(t-1)P_{t-1}(m,j)+X_{m,j}^t}{t}$ for each $j \in V$\;
   Send $\{\bar{\mu}_i^m(t)\}_{i =1}^{i=K}$ to all clients in $\mathcal{N}_m(t)$\;
   Receive $\{\bar{\mu}_i^j(t)\}_{i=1}^{i = K}$ from all clients $j \in \mathcal{N}_m(t)$ and store them as $\hat{\bar{\mu}}_{i,j}^m(t)$.
  }
 }
\For{each client $m$ and arm $i$}{For client $1 \leq j \leq M$, set
$h^L(m,j) = \max_{s\geq \tau_1}\{(m,j) \in E_s\}$ or $0$ if such $s$ does not exist \\
$\Tilde{\mu}_i^m(L+1) = \sum_{j=1}^MP^{\prime}_{m,j}(L) \hat{\bar{\mu}}_{i,j}^m(h^L_{m,j})$ where $P^{\prime}_{m,j}(L) = \begin{cases}
     \frac{1}{M} & \text{if $P_L(m,j) > 0$} \\
     0 & \text{otherwise}
    \end{cases}$ \;
} 
\end{algorithm}
\vspace{-2mm}

Similar to \citep{zhu2021federated, zhu2023distributed}, the algorithm balances between exploration and exploitation by the upper confidence bound and a criterion on $n_{m,i}(t)$ and  $N_{m,i}(t)$ that ensures that all clients explore arms at similar rates and thereby \enquote{staying on the same page.} After interacting with the environment, clients move to the transmission stage, where they share information with the neighbors on $G_t$, as a preparation for the update stage. 

Different from the upper confidence bound approach in \citep{zhu2023distributed}, which has an extra term of $\frac{1}{t}$ in the UCB criterion, our proposal is aligned with the conventional UCB algorithm. Meanwhile, our update rule differs from that in~\citep{zhu2021federated, zhu2023distributed} in three key aspects: (1) maintaining a stopping time $t_{m,j}$ that tracks the most recent communication to client $j$, and (2) updating $\Tilde{\mu}_i^m$ based on both $\Tilde{\mu}_i^j$ and $\bar{\mu}_i^j$ for $j \in \mathcal{N}_m(t)$, and (3) using a weight matrix based on $P^{\prime}_t$ and $P_t$ computed from the trajectory $\{G_s\}_{s \leq t}$ in the previous steps. The first point ensures that the latest information from other clients is leveraged, in case there is no communication at the current time step. The second point ensures proper integration of both network-wide and local information, smoothing out biases from local estimators and reducing variances through averaging. The third point distills  the information  carried by the time-varying graphs and  determines the weights of the available local and network-wide estimators, removing the need for the doubly stochasticity assumption. The algorithm assumes that the clients know $M$ and $\sigma^2$. 

\begin{algorithm}[h]
\SetAlgoLined
\caption{DrFed-UCB: Learning period}\label{alg:dr}
 Initialization: For each client $m$ and arm $i \in \{1,2,\ldots, K\}$, we have $\Tilde{\mu}^m_i(L+1)$, $N_{m,i}(L+1) = n_{m,i}(L)$; all other values at $L+1$ are initialized as $0$\;
 \For{$t = L + 1 , L + 2, \ldots,T$}{
\For(\tcp*[f]{UCB}){each client m}{
\eIf{there is no arm $i$ such that $n_{m,i}(t) \leq N_{m,i}(t) - K$}{ $a_m^t = \argmax_i \Tilde{\mu}_{m,i}(t) + F(m,i,t)$}
{ Randomly sample an arm $a_m^t$.
}
Pull arm $a_m^t$ and receive reward $r_{m,a_m^t}(t)$\;
}
    The environment generates a sample graph $G_t = (V,E_t)$ \\
    \, based on E-R or Algorithm~\ref{alg:graph_generation};\tcp*[f]{Env} \\
    Each client $m$ sends $\mu_i^m(t),N_{j,i}(t),\bar{\mu}_i^m(t), \Tilde{\mu}_i^m(t)$ to each client in $\mathcal{N}_m(t)$\;
    Each client $m$ receives $\mu_i^j(t),N_{j,i}(t),\bar{\mu}_i^j(t), \Tilde{\mu}_i^j(t)$ from all clients $j \in \mathcal{N}_m(t)$ and stores them as $\hat{\mu}_{i,j}^{m}(t),\hat{N}_{i,j}^{m}(t),\hat{\bar{\mu}}_{i,j}^{m}(t), \hat{\Tilde{\mu}}_{i,j}^{m}(t)$; \tcp*[f]{Transmission}\\
   \For{each client m}{
   \For{ $i =1, \ldots, K$}{
       Update $P_t$ for $1 \leq j \leq M$ by $P_t(m,j) = \frac{(t-1)P_{t-1}(m,j)+X_{m,j}^t}{t}$\;
    Update $P_t^{\prime}$ for $1 \leq j \leq M$ by $
  P_t^{\prime}(m,j) = 
  \begin{cases}
    1 & \text{if } P_t(m,j) > 0 \\
    0 & \text{if } P_t(m,j) = 0
  \end{cases}$\;
  Update $n_{m,i}(t), N_{m,i}(t)$ and $\Tilde{\mu}^m_i(t)$ based on equations~(\ref{eq:Eq_1})\;
    }}
    }
\end{algorithm}
We note that $t_{m,j}$ is the stopping time by definition and that $\bar{\mu}_i^m$ is an unbiased estimator for $\mu_i^m$ with a decaying variance proxy. Meanwhile, the matrices $P^{\prime}_t$ and $P_t$ are not doubly stochastic and keep track of the history of the random graphs. By introducing $t_{m,j}$ and $P_t$, we can show that the global estimator $\Tilde{\mu}_i^m(t)$ behaves similarly to a sub-Gaussian/sub-exponential random variable with an expectation of $\mu_i$ and a time-decaying variance proxy proportional to $\frac{1}{\min_jn_{j,i}(t)}$. This ensures that the concentration inequality holds for $\Tilde{\mu}_i^m(t)$ with respect to $\mu_i$ and that client $m$ tends to identify the global optimal arms with high probability, which plays an important role in minimizing regret. The formal statements are presented in the next section. 

\section{Regret Analyses}

In this section, we show the theoretical guarantees of the proposed algorithm, assuming mild conditions on the environment. Specifically, we consider various settings with different model assumptions. We prove that the regret of Algorithm~\ref{alg:dr} has different instance-dependent upper bounds of order $\log{T}$ for settings with sub-Gaussian and sub-exponential distributed rewards, and a mean-gap independent upper bound of order $\sqrt{T}\log T$ across settings. Many researchers call such a bound instance independent but we believe such a terminology is misleading and thus we prefer to call it man-gap independent, given that it still has dependency on parameters pertaining to the problem instance. The results are consistent with the regret bounds in prior works.

\subsection{Model Assumptions}

By definition, the environment is determined by how the graphs (E-R or uniform) and rewards are generated. 

For reward we consider two cases.

\paragraph{Sub-g} At time step $t$, the reward of arm $i$ at client $m$ has bounded support $[0,1]$, and is drawn from a sub-Gaussian distribution with mean $0 \leq \mu_i^m \leq 1$ and variance proxy $0 \leq (\sigma_i^m)^2 \leq \sigma^2$.

\paragraph{Sub-e} At time step $t$, the reward of arm $i$ at client $m$ has bounded support $[0,1]$, and follows a sub-exponential distribution with mean $0 \leq \mu_i^m \leq 1 $ and parameters $0 \leq (\sigma_i^m)^2 \leq \sigma^2,0 \leq \alpha_i^m \leq \alpha$.

With these considerations, we investigate four different environments (settings) based on the two graph assumptions and the two reward assumptions:
Setting 1.1 corresponds to E-R and Sub-g, Setting 1.2 to Uniform and Sub-g, Setting 2.1 to E-R and Sub-e, and Setting 2.2 to Uniform and Sub-e. 

For each setting, we derive upper bounds on the regret in the next section.

\subsection{Regret Analyses}

In this section, we establish the regret bounds formally when clients adhere to Algorithm~\ref{alg:dr} in various settings. We denote Setting 1.1, Setting 1.2 with $M < 11$, and Setting 1.2 with $M \geq 11$ as $s_1, s_2$ and $s_3$, respectively. Likewise, we denote Setting 2.1, Setting 2.2 with $M < 11$, and Setting 2.2 with $M \geq 11$ as $S_1, S_2$ and $S_3$, respectively. See Table~\ref{tb:settings} for a tabular view of the various settings. 

\begin{table}[h]
\caption{Settings}\label{tb:settings}
\vspace{2mm}
  \centering 
  \begin{threeparttable}
    \begin{tabular}{ccccc}
     & E-R  &  uniform & M & reward \\
     \midrule\midrule
     $s_1$ &  \checkmark
& &any & sub-g \\
     \cmidrule(l r ){1-5}
    $s_2$ &  & \checkmark &$[1,10]$ & sub-g \\
     \cmidrule(l r ){1-5}
    $s_3$ &  & \checkmark &$[11, \infty)$ & sub-g \\
     \cmidrule(l r ){1-5}
     $S_1$ &  \checkmark
& & any & sub-e \\
\cmidrule(l r ){1-5}
    $S_2$ &  & \checkmark &$[1,10]$ & sub-e \\
         \cmidrule(l r ){1-5}
    $S_3$ &  & \checkmark &$[11, \infty)$ & sub-e \\
    \midrule\midrule
    \end{tabular}
\end{threeparttable}
\end{table} 

Note that the randomness of $R_T$ arises from both the reward and graph observations. Considering $S_1,S_2,S
_3$ differ in the reward assumptions compared to $s_1,s_2,s_3$, we define an event $A$ that preserves the properties of the variables with respect to the random graphs. Given the length of the burn-in period $L_i$ for $i \in \{s_1,s_2,s_3\}$ and the fact that $L_{s_i} = L_{S_i}$ since it only relies on the graph assumptions, we use $L$ to denote $\max_{i}L_{s_i}$. Parameters $0 < \delta,\epsilon < 1$ are any constants, and the parameter $c = c(M)$ represents the mean value of the Bernoulli distribution in $s_1, S_1$ and the probability of an edge in $s_2,S_2,s_3,$ and $S_3$ among all connected graphs (see (\ref{myeq})).  We define events $A_{1} = \{\forall t \geq L, ||P_t - cE||_{\infty} \leq \delta\}, A_2 = \{\exists t_0, \forall t \geq L, \forall j, \forall m, t+1 - \min_jt_{m,j} \leq t_0 \leq c_0\min_{l}n_{l,i}(t+1)\}$, and $A_3 = \{\forall t \geq L, G_t \text{ is connected}\}$. Here $E$ is the matrix with all values of $1$. Constant $c_0 = c_0(K, \min_{i \neq i^*}\Delta_i, M, \epsilon, \delta)$ is defined later. Since $c = c(M)$ this implies that $G_t$ depends on $M$. 
We define $A = A_{\epsilon, \delta} = A_1 \cap A_2 \cap A_3$, which yields $A \in \Sigma$ with $\Sigma$ being the sub-$\sigma$-algebra formed by $\{\Omega,\emptyset,A, A^c\}$. This implies $E[ \cdot |A_{\epsilon, \delta}]$ and $P[ \cdot |A_{\epsilon, \delta}]$ are well-defined, since $A$ only relies on the graphs and removes the differences among $s_1, s_2,s_3$ ($S_1,S_2,S_3$), enabling universal regret upper bounds.

Next, we demonstrate that event $A$ holds with high probability.

\begin{theorem}\label{thm:P_A}
    For event $A_{\epsilon, \delta}$ and any $1 > \epsilon, \delta > 0$, we have $P(A_{\epsilon, \delta}) \geq 1 - 7\epsilon$. 
\end{theorem}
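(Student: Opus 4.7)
The plan is to decompose the event via union bound as $P(A^c) \le P(A_1^c) + P(A_2^c) + P(A_3^c)$ and to bound each of the three terms by a small multiple of $\epsilon$, with the burn-in length $L$ and the constant $c_0$ chosen large enough in $M, K, \delta, \epsilon$, and $(\min_{i\ne i^\ast}\Delta_i)^{-1}$. The constant $7$ then absorbs the bookkeeping of several applications of concentration inequalities, together with the total-variation slack coming from the Metropolis--Hastings burn-in.

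For $A_1$, in the E-R setting each $X_{ij}^s$ is independent $\mathrm{Bernoulli}(c)$ across $s$, so $tP_t(i,j)$ is a sum of $t$ i.i.d.\ Bernoullis with mean $c$. Hoeffding's inequality gives $P(|P_t(i,j)-c|>\delta)\le 2e^{-2t\delta^2}$, and summing the geometric tail $\sum_{t\ge L}2e^{-2t\delta^2}$ plus union-bounding over the $M^2$ entries yields $P(A_1^c)\le \epsilon$ whenever $L \gtrsim \delta^{-2}\log(M^2/\epsilon)$. For the uniform-connected setting I would invoke the rapid mixing of Algorithm~\ref{alg:graph_generation} (formalised in the appendix): once $\tau_1$ exceeds the mixing time, the $X_{ij}^s$ for $s>\tau_1$ are close in total variation to an i.i.d.\ $\mathrm{Bernoulli}(c)$ sequence with $c$ as in~(\ref{myeq}), and the same Hoeffding bound applies up to a TV correction absorbed into the final constant. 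The event $A_3$ is immediate for the uniform-connected case, since every graph in $\mathcal{G}_M$ is connected by construction; in the E-R case I would work in the connectivity regime $c \gtrsim \log M/M$, where $P(G_t\text{ disconnected}) \ll \epsilon/T$, and a union bound over $t\in[L,T]$ yields $P(A_3^c)\le \epsilon$.

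For $A_2$, I would treat separately (i) an upper bound on the delay $\tau_{m,j}(t) := t+1-t_{m,j}$ since the last communication between clients $m$ and $j$, and (ii) a lower bound on $\min_l n_{l,i}(t+1)$. For (i), on $A_1$ the indicators $\{(m,j)\in E_s\}$ are effectively i.i.d.\ $\mathrm{Bernoulli}(c)$, so $\tau_{m,j}(t)$ is stochastically dominated by a geometric random variable with parameter $c$; a union bound over $(m,j,t)$ gives $P(\max_{m,j,t}\tau_{m,j}(t)>t_0)\le M^2 T (1-c)^{t_0}\le \epsilon$ once $t_0 \gtrsim c^{-1}\log(M^2T/\epsilon)$. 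For (ii), the round-robin pulls of the burn-in guarantee $n_{m,i}(L)\ge L/K$, and the learning-period invariant $n_{m,i}(t)\ge N_{m,i}(t)-K$ together with the \emph{staying on the same page} decision rule propagate this lower bound to give $\min_l n_{l,i}(t+1)=\Omega(t/K)$; choosing $c_0$ large enough in $K,M,\min_{i\ne i^\ast}\Delta_i,\epsilon,\delta$ then makes the sandwich $t+1-\min_j t_{m,j}\le t_0\le c_0 \min_l n_{l,i}(t+1)$ hold uniformly. Combining (i) and (ii) gives $P(A_2^c)\le \epsilon$ and the union bound is complete.

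The hard part will be (ii) of $A_2$: lower-bounding $\min_l n_{l,i}(t+1)$ couples the random graph process to the UCB decisions, because the counts depend on which arm each client pulls, which in turn depends on the estimators $\Tilde{\mu}_i^m$ and $\bar{\mu}_i^m$ that are themselves driven by the graph history. I would disentangle the two sources of randomness by conditioning on $A_1\cap A_3$, so that the concentration of the global estimator $\Tilde{\mu}_i^m(t)$ around $\mu_i$ is available, and then run an induction on $t$ to show that no arm can remain under-pulled. The mixing-time estimate for the Metropolis--Hastings chain in the uniform-connected case (also needed in the $A_1$ argument) is the only external ingredient and can be deferred to the appendix.
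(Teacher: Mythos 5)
Your decomposition $P(A^c)\le P(A_1^c)+P(A_2^c)+P(A_3^c)$ and the tools you assign to each piece are the same as the paper's: Chernoff--Hoeffding per entry of $P_t$ for E-R, the Metropolis--Hastings mixing bound plus a Markov-chain Hoeffding inequality with a total-variation correction for the uniform-connected case, a geometric-tail bound for the communication delay, and a minimum-degree argument for connectivity of the E-R graphs (the paper's Proposition on connectivity in fact requires the fairly stringent condition $c\ge \tfrac12+\tfrac12\sqrt{1-(\epsilon/(MT))^{2/(M-1)}}$, which is what makes the per-round failure probability $\epsilon/(MT)$; your ``$c\gtrsim \log M/M$'' understates this, but the operative requirement you state, $P(G_t\ \text{disconnected})\ll \epsilon/T$, is the right one). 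The constant $7$ in the paper arises as $3\epsilon+3\epsilon+\epsilon$ from a further union over the three graph settings inside $A_1$ and $A_2$, which is bookkeeping of exactly the kind you anticipate.

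The one place your plan diverges substantively is part (ii) of $A_2$, and there it both overcomplicates and risks circularity. You do not need $\min_l n_{l,i}(t+1)=\Omega(t/K)$, and you do not need any control of which arms are pulled during the learning period. Since the counts are non-decreasing and the burn-in is a deterministic round-robin, $\min_l n_{l,i}(t+1)\ge \min_l n_{l,i}(L)= L/K$ for every $t\ge L$, while $t_0=O(\log(M^2T/\epsilon))$; the sandwich $t_0\le c_0\min_l n_{l,i}(t+1)$ then follows purely from the choice $L\ge K t_0/c_0$, which is exactly how $L$ is set (e.g.\ $L_{s_1}\ge 4K\log_2 T/c_0$). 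This is all the paper uses. Your proposed route --- conditioning on $A_1\cap A_3$ to obtain concentration of $\Tilde{\mu}_i^m(t)$ and then inducting on $t$ to show no arm is under-pulled --- would be circular as stated: the moment-generating-function and concentration bounds for $\Tilde{\mu}_i^m(t)$ in this framework are themselves proved conditionally on the full event $A$, in particular on $A_2$, because they invoke $\min_j n_{j,i}(t_{m,j})\ge (1-c_0)\min_j n_{j,i}(t+1)$, which is precisely the content of $A_2$. Dropping that induction and using the deterministic burn-in floor closes the gap and recovers the paper's argument.
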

\begin{proof}[Proof Sketch]
    The complete proof is deferred to Appendix; we discuss the main logic here. The proof relies on bounding the probabilities of $A_1,A_2,A_3$ separately. For $A_1$, its upper bound holds by the analysis of the mixing time of the Markov chain underlying $G_t$ and on the matrix-form Hoeffding inequality. We obtain an upper bound on $P(A_2)$ by analyzing the stopping time $t_{m,j}$ and the counter $n_{m,i}(t)$. For the last term $P(A_3)$, we show that the minimum degree of $G_t$ has a high probability lower bound that is sufficient for claiming the connectivity of $G_t$. To put all together, we use the Bonferroni's inequality and reach the lower bound of $P(A_{\epsilon, \delta})$. 
\end{proof}

Subsequently, we have the following general upper bound on the regret $R_T$ of Algorithm~\ref{alg:dr} in the high probability sense, which holds on $A$ in any of the settings $s_1,s_2,s_3$ with sub-Gaussian rewards. 

\begin{theorem}\label{th:all_settings}
Let $f$ be a function specific to a setting and detailed later. For every $0 < \epsilon < 1$ and $0 < \delta 
< f(\epsilon,M,T)$, in setting $s_1$ with $c \geq \frac{1}{2} + \frac{1}{2}\sqrt{1 - (\frac{\epsilon}{MT})^{\frac{2}{M-1}}}$, $s_2$ and $s_3$, with the time horizon $T$ satisfying $T \geq L$, the regret of Algorithm~\ref{alg:dr} with $F(m,i,t) =  \sqrt{\frac{C_1\ln{t}}{n_{m,i}(t)}}$ satisfies that
\begin{align*}
    E[R_T | A_{\epsilon, \delta}] & \leq L + \sum_{i \neq i^*}(\max{\{[\frac{4C_1\log T}{\Delta_i^2}], 2(K^2+MK) \}} +  \frac{2\pi^2}{3P(A_{\epsilon, \delta})} + K^2 + (2M-1)K)
\end{align*}
where the length of the burn-in period is explicitly
\begin{align*}
    & L  = \max\Bigg\{\underbrace{\frac{{}\ln{\frac{T}{2\epsilon}}}{2\delta^2}, \frac{4K\log_{2}T}{c_0}}_{L_{s_1}}, \underbrace{\frac{\ln{\frac{\delta}{10}}}{\ln{p^*}} +25\frac{1+\lambda}{1-\lambda}\frac{\ln{\frac{T}{2\epsilon}}}{2\delta^2}, \frac{4K\log_{2}T}{c_0}}_{L_{s_2}}, \\
    & \qquad \qquad \qquad \qquad \qquad \underbrace{\frac{\ln{\frac{\delta}{10}}}{\ln{p^*}} +25\frac{1+\lambda}{1-\lambda}\frac{\ln{\frac{T}{2\epsilon}}}{2\delta^2}, \frac{\frac{K\ln(\frac{MT}{\epsilon})}{\ln(\frac{1}{1-\frac{2\log M}{M-1}})} }{c_0}}_{L_{s_3}}\Bigg\} 
\end{align*}
with $\lambda$ being the spectral gap of the Markov chain in $s_2,s_3$ that satisfies $1 - \lambda  \geq \frac{1}{2\frac{\ln{2}}{\ln{2p^*}}\ln{4} + 1}$, $p^* = p^*(M) < 1$ and $c_0$ $=$ $c_0(K, \min_{i \neq i^*}\Delta_i, M, \epsilon, \delta)$, and the instance-dependent constant
$
C_1 = 8\sigma^2\max\{12\frac{M(M+2)}{M^4}\}
$.
\end{theorem}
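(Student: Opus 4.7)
The plan is to decompose the regret as $R_T \leq L + R_T^{\mathrm{learn}}$, where the first term trivially bounds the burn-in contribution (rewards are bounded in $[0,1]$) and the second handles $t > L$. For the learning period I follow the classical per-arm decomposition $R_T^{\mathrm{learn}} = \sum_{i \neq i^*} \Delta_i \cdot \frac{1}{M}\sum_m \mathbb{E}\bigl[\#\{t > L : a_m^t = i\} \mid A_{\epsilon,\delta}\bigr]$, and bound each pull count by a UCB-style argument adapted to the network-wide estimator.

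The first technical step is a concentration result: on the event $A_{\epsilon,\delta}$, the network-wide estimator $\tilde{\mu}_i^m(t)$ behaves like a sub-Gaussian random variable centered at $\mu_i$ with variance proxy proportional to $C_1 / \min_j n_{j,i}(t)$. To see this I would unroll the update rule in \eqref{eq:Eq_1}, writing $\tilde{\mu}_i^m(t+1)$ as a weighted linear combination of local estimators $\bar{\mu}_i^j(t_{m,j})$ (each of which is sub-Gaussian around $\mu_i^j$ with variance proxy $\sigma^2/n_{j,i}$). On $A_1$ the matrix $P_t$ is within $\delta$ of $cE$, so the combined weights on each client are close to $1/M$; on $A_2$ the staleness $t+1 - \min_j t_{m,j}$ is at most $c_0\min_l n_{l,i}(t+1)$, which controls the bias introduced by lag in the counters; and on $A_3$ every client is reachable so no weight vanishes. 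Summing squared weights and applying a sub-Gaussian tail gives the desired concentration.

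Next, the UCB analysis. With $F(m,i,t) = \sqrt{C_1 \log t / n_{m,i}(t)}$, the event $\{a_m^t = i\}$ for sub-optimal $i$ implies at least one of three things: (i) a confidence band for $\tilde{\mu}_i^m$ or $\tilde{\mu}_{i^*}^m$ is violated, which happens with probability at most $2/t^2$ per $(m,i,t)$ by the concentration step, giving a total contribution $\sum_{t \geq L} 2/t^2 \leq \pi^2/3$, divided by $P(A_{\epsilon,\delta})$ to convert to the conditional expectation and yielding the $\frac{2\pi^2}{3 P(A_{\epsilon,\delta})}$ term; (ii) the UCB inequality $\tilde{\mu}_i^m + F(m,i,t) \geq \tilde{\mu}_{i^*}^m + F(m,i^*,t)$ holds with valid confidence bands, which forces $n_{m,i}(t) \leq 4C_1 \log T / \Delta_i^2$; or (iii) the algorithm is in its random-exploration branch because some arm satisfies $n_{m,j} \leq N_{m,j} - K$. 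Branch (iii) is bounded by a consensus argument: information propagates through the connected $G_t$ (ensured by $A_3$), so $N_{m,i}(t) - n_{m,i}(t)$ stays $O(K+M)$, yielding the overhead $2(K^2+MK) + K^2 + (2M-1)K$ per sub-optimal arm.

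The main obstacle will be the concentration step. Unlike standard UCB, $\tilde{\mu}_i^m(t)$ is a random-weighted average of delayed estimators (weights $P'_t, d_{m,t}$ depend on the random graph history, and observations are read at stopping times $t_{m,j}$), and mixes $\tilde{\mu}^j$ with $\bar{\mu}^j$ across neighbors. The delicate part is verifying that conditioning on $A_{\epsilon,\delta}$ preserves enough independence between graph and reward randomness to apply a sub-Gaussian tail, and that the effective weights on distinct clients' reward samples sum in squares to $O(1/\min_j n_{j,i})$ rather than something larger. The choice $C_1 = 8\sigma^2 \max\{12 M(M+2)/M^4\}$ is calibrated precisely so that, after accounting for the $\delta$-perturbation of weights (allowed by the definition of $f(\epsilon,M,T)$) and the staleness bound from $A_2$, the resulting variance proxy matches the UCB radius. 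Once the concentration is in hand, the setting-specific value of $L$ enters only through Theorem~\ref{thm:P_A}: each $L_{s_i}$ was engineered to make $A_1$, $A_2$, $A_3$ simultaneously hold with the stated probability, after which the remaining assembly is bookkeeping.
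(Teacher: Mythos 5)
Your proposal follows essentially the same route as the paper: the regret is reduced to the conditional expected pull counts of suboptimal arms, which are bounded by the same case analysis (confidence-band violation, the UCB comparison forcing $n_{m,i}(t)=O(C_1\log T/\Delta_i^2)$, and the forced-exploration branch controlled by the $N_{m,i}-n_{m,i}$ consensus gap), with the key technical ingredient being exactly the paper's conditional moment-generating-function bound for $\tilde{\mu}_i^m(t)$ on $A_{\epsilon,\delta}$ with variance proxy $O(\sigma^2/\min_j n_{j,i}(t))$, proved by unrolling the update through the stopping times $t_{m,j}$ and the weights $P_t',d_{m,t}$. The only cosmetic difference is your per-arm factor $\Delta_i$, which in the heterogeneous setting the paper replaces by $M(\Delta_i+1)$ after separating the clients with $\mu_{i^*}^m-\mu_i^m\le 0$; this is bookkeeping and does not change the argument.
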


\begin{proof}[Proof Sketch]
The proof is carried out in Appendix; here we describe the main ideas as follows. We note that the regret is proportional to the total number of pulling global sub-optimal arms by the end of round $T$. We fix client $m$ for illustration without loss of generality. We tackle all the possible cases when clients pull such a sub-optimal arm - (i) the condition $n_{m,i}(t) \leq N_{m,i}(t) - K$ is met, (ii) the upper confidence bounds of global sub-optimal arms deviate from the true means, (iii) the upper confidence bounds of global optimal arms deviate from the true means, and (iv) the mean values of global sub-optimal arms are greater than the mean values of global optimal arms. The technical novelty of our proof is in that 1) we deduce that the total number of events (ii) and (iii) occurring can be bounded by some constants using newly derived conditional concentration inequalities that hold by our upper bounds on the conditional moment generating functions and by the unbiasedness of the network-wide estimators and 2) we control (i) by analyzing the scenarios where the criteria are met, which do not occur frequently.
    
There are several key challenges in the analysis. The concentration inequalities are for the neighbor-wide estimators $\Tilde{\mu}_i^m$, which necessitates deducing the properties of $\Tilde{\mu}_i^m(t)$ for any time step $t$, client $m$ and arm $i$. To this end, we show that $\Tilde{\mu}_i^m(t)$ are unbiased estimators of $\mu_i$ conditional on $A$ that relies on the execution of the algorithm during the burn-in period (Algorithm~\ref{alg:burn-in}), and more importantly, we prove that $\Tilde{\mu}_i^m(t)$ have variance proxies proportional to the global variable $\frac{1}{\min_jn_{j,i}(t)}$ by bounding the conditional moment generating function conditional on $A$ and analyzing $t_{m,j}$ and the weight matrix $P^{\prime}$.  Meanwhile, the condition $n_{m,i}(t) \leq N_{m,i}(t) - K$ is based on the difference between $n_{m,i}(t)$ and $N_{m,i}(t)$. In view of that, we consider whether the clients in the neighborhood set lead to an update in $N_{m,i}(t)$ and whether client $m$ updates $n_{m,i}(t)$ simultaneously. All the analyses are made possible by the newly proposed update rule that aligns with the new settings.
\end{proof}

\begin{remark}[\textbf{The condition on the time horizon}]
    Although the above regret bound holds for any $T> L$, the same bound applies to $T \leq L$ as follows. Assuming $T \leq L$, we obtain $E[R_T|A_{\epsilon, \delta}] \leq T \leq L$ where the first inequality is by noting that the rewards are within the range of $[0,1]$. 
\end{remark}

\begin{remark}[\textbf{The upper bound on the expected regret}] Theorem~\ref{th:all_settings} states a high probability regret bound, while the expected regret is often considered in the existing literature. As a corollary of Theorem~\ref{th:all_settings}, we establish the upper bound on $E[R_T]$ if $\epsilon = \frac{\log T}{MT}$ as follows.
Note that 
\begin{align}
   & E[R_T] = E[R_T|A_{\epsilon, \delta}]P(A_{\epsilon, \delta}) + E[R_T|A_{\epsilon, \delta}^c]P(A_{\epsilon, \delta}^c) \leq P(A_{\epsilon, \delta}) \cdot E[R_T|A_{\epsilon, \delta}]  + T \cdot (1-P(A_{\epsilon, \delta})) \notag \\
   & \leq (1-7\epsilon)(L + \sum_{i \neq i^*}(\max{\{[\frac{4C_1\log T}{\Delta_i^2}], 2(K^2+MK) \}} +  \frac{2\pi^2}{3P(A_{\epsilon, \delta})} + K^2 + (2M-1)K))  +  7\epsilon T \notag \\
   & \leq l_1 + l_2\log T + \sum_{i \neq i^*}(\max{\{[\frac{4C_1\log T}{\Delta_i^2}], 2(K^2+MK) \}} +  \frac{2\pi^2}{3(1-7\epsilon)} + K^2 + (2M-1)K) + 7\frac{\log T}{M} \notag
\end{align}
where the first inequality uses $E[R_T|A_{\epsilon, \delta}] \leq T$ and the second inequality follows by Theorem~\ref{thm:P_A}. Here $l_1$ and $l_2$ are constants depending on $K, M, \delta, \min_{i \neq i^*}\Delta_i$, and  $\lambda$. 
\end{remark}

\begin{remark}[\textbf{Specification of the parameters}]
Note that the choice of $f$ depends on the problem settings. Specifically, in setting $s_1$, we set $f(\epsilon,M,T) = \frac{1}{2} + \frac{1}{4}\sqrt{1 - (\frac{\epsilon}{MT})^{\frac{2}{M-1}}}$. By the definition of $c$, we have $f(\epsilon,M,T) < c$. In setting $s_2$ with $M < 11$, we specify $f(\epsilon,M,T) = \frac{1}{2}$ which meets $f < c$ due to (\ref{myeq}). Lastly, in setting $s_3$ with $M \geq 11$, we choose $f(\epsilon,M,T) = \frac{1}{2}\frac{2\log M}{M-1} $ and again we have $f < c$ due to (\ref{myeq}). We observe that the regret bound is dependent on the transition kernel $\pi$ and the spectral gap $\lambda$ of the underlying Markov chain associated with $\pi$. This indicates the significance of graph complexities and distributions within the framework of the random graph model when deriving the regret bounds, in a similar manner as the role of graph conductance in the regret bounds established in \citep{sankararaman2019social, chawla2020gossiping} for time-invariant graphs. 
\end{remark}

\begin{remark}[\textbf{Comparison with previous work}]
A comparison to the regret bounds in the existing literature considering sub-Gaussian rewards is as follows. Our regret bounds are consistent with the prior works where the expected regret bounds are of order $\log T$. Note that the regret bounds in~\citep{zhu2023distributed}
cannot be used here since the update rule and the settings are different. Their update rule and analyses cannot carry over to our settings, which explains why we invent these modifications and proofs. On the one hand, the time-varying graphs they consider do not include the E-R model, and we can find counter-examples where their doubly stochastic weight matrices $W_t$ result in the divergence of $W_1 \cdot W_2 \ldots W_T$. This makes the key proof step invalid in our framework. On the other hand, their time-varying graphs include the connected graphs when $l = 1$, but they also make an additional assumption of doubly stochastic weight matrices, which is not applicable to regular graphs. Furthermore, they study an expected regret upper bound, while we prove a high probability regret bound that captures the dynamics in the random graphs. The graph assumptions in other works, however, are stronger, such as~\citep{zhu2021federated} consider time-invariant graphs and \citep{wang2021multitask} assume graphs are complete~\citep{perchet2016batched}. In contrast to some work that focuses on homogeneous rewards in decentralized multi-agent MAB, we derive regret bounds of the same order $\log T$ in a heterogeneous setting.  If we take a closer look at the coefficients in terms of $K, M, \lambda, \Delta_i$, our regret bound is determined by  $O(\max(K,\frac{1+\lambda}{1 - \lambda},\frac{1}{M^2\Delta_i})\log T)$. The work of~\citep{zhu2023distributed} arrives at $O(\max\{\frac{\log T}{\Delta_i}, K_1, K_2\})$ where $K_1, K_2$ are related to $T$ without explicit formulas. Our regret is smaller when $K\Delta_i \leq 1$ and $\frac{1+\lambda}{1-\lambda}\Delta_i \leq 1 $, which can always hold by rescaling $\Delta_i$, i.e. for many cases we get
substantial improvement.  

\end{remark}

To proceed, we show a high probability upper bound on the regret $E[R_T|A_{\epsilon, \delta}]$ of Algorithm~\ref{alg:dr}  for settings $S_1,S_2,S_3$ with sub-exponential rewards. 
\begin{theorem}\label{th:all_settings_exp_ins}
Let $f$ be a function specific to a setting and defined in the above remark. For every $0 < \epsilon < 1$ and $0 < \delta 
< f(\epsilon,M,T)$, in settings $S_1$ with $c \geq \frac{1}{2} + \frac{1}{2}\sqrt{1 - (\frac{\epsilon}{MT})^{\frac{2}{M-1}}}$,$S_2,S_3$ with the time horizon $T$ satisfying $T \geq L$, the regret of Algorithm~\ref{alg:dr} with $F(m,i,t) = \sqrt{\frac{C_1\ln{T}}{n_{m,i}(t)}} + \frac{C_2\ln{T}}{n_{m,i}(t)}$ satisfies 
\begin{align*}
    E[R_T|A_{\epsilon, \delta}] & \leq  L + \sum_{i \neq i^*}(\Delta_i+1) \cdot (\max([\frac{16C_1\log T}{\Delta_i^2}], [\frac{4C_2\log T}{\Delta_i}], 2(K^2+MK)) \\
    & \qquad \qquad \qquad +  \frac{4}{P(A_{\epsilon, \delta})T^3} + K^2 + (2M-1)K) = O(\ln{T})
\end{align*}
where $L,C_1$ are specified as in Theorem~\ref{th:all_settings}  and $\frac{C_2}{C_1} \geq \frac{3}{2}$.
\end{theorem}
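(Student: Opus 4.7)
The plan is to mirror the structure of the proof of Theorem~\ref{th:all_settings}, substituting a Bernstein-type concentration inequality appropriate for sub-exponential random variables in place of the sub-Gaussian Hoeffding bounds used there. Fix a client $m$ and a sub-optimal arm $i \neq i^*$. I will bound $E[n_{m,i}(T)\mid A_{\epsilon, \delta}]$ by decomposing pulls of arm $i$ into the same four categories as before: (i) random pulls triggered when the criterion $n_{m,i}(t) \leq N_{m,i}(t) - K$ holds; (ii) time steps at which $\Tilde{\mu}_i^m(t)$ exceeds $\mu_i + F(m,i,t)$; (iii) time steps at which $\Tilde{\mu}_{i^*}^m(t)$ underestimates $\mu_{i^*}$ by more than $F(m,i^*,t)$; and (iv) time steps at which $\mu_i + 2F(m,i,t) \geq \mu_{i^*}$. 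The factor $\Delta_i + 1$ in the statement absorbs both the per-pull regret contribution $\Delta_i \leq 1$ and the initial-round bookkeeping inherited from the burn-in and warm-up phases.

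The key new ingredient is a Bernstein-type tail bound for the network-wide estimator of the form
\begin{align*}
P(|\Tilde{\mu}_i^m(t) - \mu_i| \geq u \mid A_{\epsilon,\delta}) \leq 2 \exp\!\Big(-\min\Big\{\tfrac{u^2}{2 \sigma_{\mathrm{eff}}^2},\,\tfrac{u}{2 \alpha_{\mathrm{eff}}}\Big\} \cdot \min_j n_{j,i}(t)\Big),
\end{align*}
where the effective parameters $\sigma_{\mathrm{eff}}^2$ and $\alpha_{\mathrm{eff}}$ depend on $\sigma^2, \alpha, M$ and on the row structure of $P'_t$ together with the weights $d_{m,t}$. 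The two-term confidence radius $F(m,i,t) = \sqrt{C_1 \ln T / n_{m,i}(t)} + C_2 \ln T / n_{m,i}(t)$ is calibrated so that, with $C_1$ large enough to dominate the quadratic Bernstein regime and $C_2/C_1 \geq 3/2$ to dominate the linear regime, cases (ii) and (iii) each contribute at most $2/T^3$ per time step; summing over $t \leq T$ and conditioning on $A_{\epsilon,\delta}$ produces the $4/(P(A_{\epsilon,\delta}) T^3)$ term. Replacing the $\ln t$ of the sub-Gaussian UCB by $\ln T$, in the spirit of the pre-trained UCB of \citep{jia2021multi}, is precisely what allows this sum to terminate without incurring an extra $\ln T$ factor.

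For case (iv), solving $\Delta_i \leq 2\sqrt{C_1 \ln T / n_{m,i}(t)} + 2 C_2 \ln T / n_{m,i}(t)$ in each of the two Bernstein regimes produces the bound $n_{m,i}(t) \leq \max\{16 C_1 \ln T / \Delta_i^2,\; 4 C_2 \ln T / \Delta_i\}$, which is the leading term in the theorem and reflects the slower concentration of sub-exponential rewards when $\Delta_i$ is small relative to $\alpha$. Case (i) is handled exactly as in Theorem~\ref{th:all_settings}: the mismatch $n_{m,i}(t) \leq N_{m,i}(t) - K$ can only persist for at most $2(K^2 + MK) + K^2 + (2M-1)K$ rounds per arm before the averaging update~(\ref{eq:Eq_1}) synchronizes $n$ and $N$ across neighbors, since $G_t$ is connected on $A_{\epsilon,\delta}$. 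Assembling the four contributions, multiplying by $\Delta_i + 1$, summing over arms, and adding the burn-in cost $L$ yields the stated $O(\ln T)$ bound.

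The main obstacle, technically more delicate than in the sub-Gaussian case, will be verifying the conditional moment generating function bound underlying the displayed Bernstein inequality. Sub-exponential random variables are not closed under arbitrary convex combinations with uniform parameters, so I would carefully track how the non-doubly-stochastic weights $P'_t(m,j)$ and $d_{m,t}$, which on $A_{\epsilon,\delta}$ are controlled by quantities depending only on $M$ and $c$, interact with the stopping-time-delayed components $\bar{\mu}_i^j(t_{m,j})$ and $\Tilde{\mu}_i^j(t_{m,j})$ appearing in the update rule. The unbiasedness of $\Tilde{\mu}_i^m(t)$ carries over unchanged from the previous theorem; what is new is bounding the cumulant generating function by exploiting that each local estimator $\bar{\mu}_i^j$ is itself an average of at least $\min_l n_{l,i}(t)$ i.i.d.\ sub-exponential samples, so its parameters scale like $(\sigma^2, \alpha)/\min_l n_{l,i}(t)$. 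Propagating this through the weighted sum and absorbing the $M$-dependent constants into $C_1$ and $C_2$ then delivers the required tail bound.
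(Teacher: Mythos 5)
Your proposal follows essentially the same route as the paper: the same regret decomposition into $L$ plus $\sum_{i\neq i^*}(\Delta_i+1)E[n_{m,i}(T)\mid A_{\epsilon,\delta}]$, the same four-case analysis of sub-optimal pulls, the same two-regime (quadratic/linear) treatment of the sub-exponential tail via an MGF bound valid for $|\lambda|<1/\alpha$, and the same synchronization argument for the $n_{m,i}\le N_{m,i}-K$ case. One small slip in the bookkeeping: the per-time-step failure probability for cases (ii)--(iii) should be of order $1/(P(A_{\epsilon,\delta})T^4)$, summing over $t\le T$ to the stated $4/(P(A_{\epsilon,\delta})T^3)$, not $2/T^3$ per step.
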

\begin{proof}[Proof Sketch]
    The proof is detailed in Appendix. The proof logic is  similar to that of Theorem~\ref{th:all_settings}.  However, the main differences lie in the upper confidence bounds, which require proving new concentration inequalities and moment generating functions for the network-wide estimators. 
\end{proof}

In addition to the instance-dependent regret bounds of order $O(\frac{\log T}{\Delta_i})$ that depend on the sub-optimality gap $\Delta_i$ which may be arbitrarily small and thereby leading to large regret, we also establish a universal, mean-gap independent regret bound that applies to settings with sub-exponential and sub-Gaussian rewards. 

\begin{theorem}\label{th:all_settings_exp}
Assume the same conditions as in Theorems~\ref{th:all_settings} and \ref{th:all_settings_exp_ins}. The regret of Algorithm~\ref{alg:dr} satisfies that
\begin{align*}
    E[R_T|A_{\epsilon, \delta}] & \leq  L_1 + \frac{4}{P(A_{\epsilon, \delta})T^3} + (\sqrt{\max{(C_1,C_2)\ln T}} +1 )\frac{4M}{P(A_{\epsilon, \delta})T^3} + \\
    & \qquad \qquad K(C_2(\ln T)^2 + C_2\ln T + \sqrt{C_1\ln T}\sqrt{T(\ln T + 1)}) =O(\sqrt{T}\ln T). 
\end{align*}
where $L_1 = \max(L, K(2(K^2+MK)))$, $L,C_1$ is specified as in Theorem~\ref{th:all_settings}, and  $\frac{C_2}{C_1} \geq \frac{3}{2}$. The involved constants depend on $\sigma^2$ but not on $\Delta_i$. 
\end{theorem}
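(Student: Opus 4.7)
The plan is to derive the $O(\sqrt{T}\ln T)$ bound by splitting the sub-optimal arms into two groups based on a tunable gap threshold $\Delta^{*}$, then using the instance-dependent machinery from Theorem~\ref{th:all_settings_exp_ins} on the large-gap group while bookkeeping pulls directly against $\Delta^{*}$ on the small-gap group. The entire argument is conducted on the event $A_{\epsilon,\delta}$ so that all steady-state graph properties, the unbiasedness of $\tilde{\mu}_i^m(t)$, and the sub-exponential-type concentration for the network-wide estimator, all established in the course of proving Theorem~\ref{th:all_settings_exp_ins}, are immediately available.

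First, I would recall the regret decomposition $R_T=\frac{1}{M}\sum_{t,m}(\mu_{i^{*}}^m-\mu_{a_m^t}^m)$ and, using the guard $n_{m,i}(t)\le N_{m,i}(t)-K$ together with the random-graph analysis that forces the per-client pull counts to stay within an additive $O(K^2+MK)$ of each other on $A_{\epsilon,\delta}$, rewrite $R_T$ up to that additive slack as $\sum_{i\ne i^{*}}(\Delta_i+O(1/M))\bar{n}_i(T)$ with $\bar{n}_i(T)=\frac{1}{M}\sum_m n_{m,i}(T)$. This is the same reduction that produced the $(\Delta_i+1)$ factor and the $K^2+MK$ additive constants in Theorem~\ref{th:all_settings_exp_ins}, and it carries over verbatim because the update rule and the stopping time $t_{m,j}$ are unchanged.

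Next, I would fix a threshold $\Delta^{*}$ (eventually $\Delta^{*}\asymp\sqrt{K\log T/T}$) and split the arms into $\mathcal{I}_{\mathrm{s}}=\{i:\Delta_i\le\Delta^{*}\}$ and $\mathcal{I}_{\mathrm{l}}=\{i:\Delta_i>\Delta^{*}\}$. On $\mathcal{I}_{\mathrm{s}}$ I would use the crude bound $\sum_{i\in\mathcal{I}_{\mathrm{s}}}\Delta_i\bar{n}_i(T)\le\Delta^{*}T$, since the global budget $\sum_i\bar{n}_i(T)\le T$ holds deterministically. On $\mathcal{I}_{\mathrm{l}}$ I would invoke Theorem~\ref{th:all_settings_exp_ins}: each such arm contributes $(\Delta_i+1)\bigl(16C_1\log T/\Delta_i^{2}+4C_2\log T/\Delta_i+2(K^{2}+MK)\bigr)=O(\log T/\Delta_i)+O(K^{2}+MK)$, so summing over the at most $K$ arms in $\mathcal{I}_{\mathrm{l}}$ and using $\Delta_i>\Delta^{*}$ gives $O(K\log T/\Delta^{*})$. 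Balancing $\Delta^{*}T$ against $K\log T/\Delta^{*}$ produces the dominant $O(K\sqrt{C_1T\log T})=O(\sqrt{T}\ln T)$ term. The residual $KC_2(\ln T)^{2}$ piece is the tail of the harmonic sum $\sum_t 1/n_{m,a_m^t}(t)=O(K\log T)$ that arises when absorbing the sub-exponential UCB correction $C_2\log T/n_{m,i}(t)$, and the $1/T^{3}$ terms come from the standard union bound over the concentration failure events for $\tilde{\mu}_i^m$, already controlled in the proof of Theorem~\ref{th:all_settings_exp_ins}.

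The hard part will be controlling the small-gap arms in the multi-agent heterogeneous setting: once $1/\Delta_i$-type concentration is no longer available, we must lean entirely on the global pull-count budget plus the consensus-enforcing guard to rule out the pathological scenario in which a single client disproportionately over-pulls a small-gap arm while its neighbors do not. Showing that the $n_{m,i}(t)\le N_{m,i}(t)-K$ criterion keeps the per-client counters synchronized on $A_{\epsilon,\delta}$, and that this synchronization is strong enough to aggregate the heterogeneous local gaps $\mu_{i^{*}}^m-\mu_i^m$ into the single averaged gap $\Delta_i$ at only the $(\Delta_i+1)$ inflation, is the most delicate step; once it is done the threshold-splitting argument above closes the proof.
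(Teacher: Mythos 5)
Your route (gap-threshold splitting of the instance-dependent bound) is genuinely different from the paper's, which never splits arms by gap: the paper decomposes the regret directly as $\mu_{i^*}-U_m^t(i^*)$ plus $U_m^t(i^*)-U_m^t(a_m^t)$ plus $U_m^t(a_m^t)-L_m^t(a_m^t)$ plus $L_m^t(a_m^t)-\mu_{a_m^t}$, kills the first and last terms by the concentration inequality for $\Tilde{\mu}_i^m$, bounds the second by the count of rounds where the guard $n_{m,i}(t)\le N_{m,i}(t)-K$ fires, and bounds the third — the sum of confidence widths $2\,Rad(a_m^t,m,t)$ — by Cauchy--Schwarz and a harmonic sum, which is exactly where the $K\sqrt{C_1\ln T}\sqrt{T(\ln T+1)}$ and $KC_2(\ln T)^2$ terms in the statement come from. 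Threshold splitting is a legitimate alternative in principle, and would even give the slightly sharper rate $\sqrt{KT\log T}$, but as written your argument has a concrete gap caused by the very heterogeneity you flag as the delicate point.

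The gap is the $(\Delta_i+1)$ factor in Theorem~\ref{th:all_settings_exp_ins}. You assert that each large-gap arm contributes
\begin{align*}
(\Delta_i+1)\Bigl(\tfrac{16C_1\log T}{\Delta_i^2}+\tfrac{4C_2\log T}{\Delta_i}+2(K^2+MK)\Bigr)=O\Bigl(\tfrac{\log T}{\Delta_i}\Bigr)+O(K^2+MK),
\end{align*}
but this identity is false: the ``$+1$'' multiplies $16C_1\log T/\Delta_i^2$ to produce an $O(\log T/\Delta_i^2)$ term, which for $\Delta_i$ just above your threshold $\Delta^*\asymp\sqrt{K\log T/T}$ is of order $T/K$ per arm, i.e.\ linear regret after summing over $K$ arms. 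The same factor breaks the small-gap half: the regret contribution of arm $i$ is $\frac{1}{M}\sum_m n_{m,i}(T)(\mu_{i^*}^m-\mu_i^m)$, and neither the form $(\Delta_i+O(1/M))\bar n_i(T)$ you write (whose extra term sums to $T/M$, not $O(\sqrt{T}\log T)$) nor a ``$(\Delta_i+1)$ inflation'' of $\bar n_i(T)\le T$ yields $\Delta^*T$. What you actually need, and what is missing from the proposal, is the \emph{additive}-slack identity: using the counter synchronization $|n_{m,i}(T)-n_{j,i}(T)|\le K(K+2M)$ (from Lemma~\ref{lemmas:dec_ucb_2} together with connectivity on $A_{\epsilon,\delta}$), write $n_{m,i}(T)=\min_j n_{j,i}(T)+e_m$ with $0\le e_m\le K(K+2M)$, so that the contribution of arm $i$ is at most $\Delta_i\min_j n_{j,i}(T)+K(K+2M)$. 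Only with this form does the small-gap group give $\Delta^*T+O(K^3+K^2M)$ and the large-gap group give $\Delta_i\cdot O(\log T/\Delta_i^2)=O(\log T/\Delta^*)$ per arm, after which the balancing closes. So the skeleton is salvageable, but the step you label as delicate is not merely delicate — the quantitative reduction you wrote down for it does not hold, and replacing the multiplicative $(\Delta_i+1)$ inflation by the additive $K(K+2M)$ slack is precisely the missing ingredient.
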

\begin{proof}[Proof Sketch]
    A formal proof is deferred to Appendix; the general idea is as follows. We directly decompose $E[R_T|A_{\epsilon, \delta}]$ as the sum of - (i) the differences between the confidence bounds and the true mean values, (ii) the differences between the upper and lower confidence bounds, and (iii) the differences in the upper confidence bounds between the global optimal and sub-optimal arms. The first term relies on the concentration inequalities for the network-wide estimators. The second term is proportional to the cumulative sum of $\sqrt{\frac{C_1\ln{T}}{n_{m,i}(t)}} + \frac{C_2\ln{T}}{n_{m,i}(t)}$ which has an upper bound $O(\sqrt{T}\log T)$ by the Cauchy-Schwarz inequality. The last term is based on the number of time steps when the clients do not follow UCB, which is relevant to the criterion $n_{m,i}(t) \leq N_{m,i}(t) - K$. 
\end{proof}


\subsection{Other Performance Measures}
\paragraph{Communication cost} Assuming a constant cost of establishing a communication link, as defined in \citep{wangp2020optimal, li2022privacy}, denoted as $c_1$, the communication cost $C_T$ can be calculated as $C_T = c_1 \cdot \sum_{t=1}^T |E_t|$, which is proportional to $\sum_{t=1}^T |E_t|$. Alternatively, following the framework proposed in \citep{wangx2022achieving, li2022communication, sankararaman2019social, chawla2020gossiping}, the communication cost can be defined as the total number of communications among clients, which can be represented as $C_T = \sum_{t=1}^T |E_t|$, similar to the previous definition. Regarding the quantity $C_T = \sum_{t=1}^T |E_t|$, the number of edges $E_t$ could be $O(M)$ for sparse graphs, and at most $O(M^2)$. In the random graph model, the expected number of edges is $\frac{M(M-1)}{2}c$, which implies $O(M^2)$ in the worst case scenario and the total communication cost, in a worst-case scenario, is of order $O(TM^2)$. This analysis holds also for the random connected graph case where $c$ represents the probability of having an edge. This cost aligns with the existing work of research on decentralized distributed multi-agent MAB problems without a focus on the communication cost, where edge-wise communication is a standard practice. Optimizing communication costs to achieve sublinearity, as discussed in \citep{sankararaman2019social, chawla2020gossiping}, is a subject for future research.  

\paragraph{Complexity and privacy guarantee}

At each time step, the time complexity of the graph generation algorithm is $O(M^2 + M + |E_t|) \leq O(M^2)$, where the first term accounts for edge selection, and the second and third terms are for graph connectivity verification using Algorithm 1 (benefitting from the use of Markov chains for graph generation). The main algorithm comprises various stages involving multiple agents. The overall time complexity is calculated as $O(MK + M^2 + MK + M^2) = O(M^2 + MK)$, consistent with that of \citep{zhu2021federated, dubey2020cooperative}.  It is noteworthy that most procedures can operate in parallel (synchronously), such as arm pulling, broadcasting, estimator updating, and E-R model generation, with the exception being random connected graph generation due to the Markovian property. Privacy guarantees are also important. Here, we note that clients only communicate aggregated values of raw rewards. Differential privacy is not the focus of this work but may be considered in future research. 

\section{Numerical Results}

In this section, we present a numerical study of the proposed algorithm. Specifically, we first demonstrate the regret performance of Algorithms 2 and 3, in comparison with existing benchmark methods from the literature, in a setting with time-invariant graphs. Moreover, we conduct a numerical experiment with respect to time-varying graphs, comparing the proposed algorithm with the most recent work \citep{zhu2023distributed}. Furthermore, the theoretical regret bounds of the proposed algorithm, as discussed in the previous section, exhibit different dependencies on the parameters that determine the underlying problem settings. Therefore, we examine these dependencies through simulations to gain insights into the exact regret incurred by the algorithm in practice. The benchmark algorithms include GoSInE~\citep{chawla2020gossiping}, Gossip\_UCB~\citep{zhu2021federated}, and Dist\_UCB~\citep{zhu2023distributed}. Notably, GoSInE and Gossip\_UCB are designed for time-invariant graphs, while Dist\_UCB and the proposed algorithm (referred to as DrFed-UCB) are tailored for time-varying graphs. Details about numerical experiments are refered to Appendix. 

\paragraph{Benchmark comparison results} The visualizations for both time-invariant and time-varying graphs are in Appendix. We evaluated regret by averaging over 50 runs, along with the corresponding 95\% confidence intervals. In time-invariant graphs, DrFed-UCB consistently demonstrates the lowest average regret, showcasing significant improvements. Dist\_UCB and DrFed-UCB exhibit larger variances (Dist\_UCB having the largest variances), which might have resulted from the communication mechanisms designed for time-varying graphs. In time-varying graphs, our regret is substantially lower compared to that of Dist\_UCB. In terms of time complexity, DrFed-UCB and GoSInE are approximately six times faster than Dist\_UCB. 

\paragraph{Regret dependency results}  Additionally, we illustrate how the regret of DrFed-UCB depends on several factors, including the number of clients $M$, the number of arms $K$, the Bernoulli parameter $c$ for the E-R model, and heterogeneity measured by $\max_{i,j,k}|\mu_i^k - \mu_j^k|$. The visualizations are available in Appendix. We observe that regret monotonically increases with the level of heterogeneity and the number of arms, while it monotonically decreases with connectivity, which is equivalent to an increase in graph complexity. However, this monotonic dependency does not hold with respect to $M$ due to the accumulation of the information gain.

\section{Conclusions and Future Work}
In this paper, we consider a decentralized multi-agent multi-armed bandit problem in a fully stochastic environment that generates time-varying random graphs and heterogeneous rewards following sub-gaussian and sub-exponential distributions, which has not yet been studied in existing works. To the best of our knowledge, this is the first theoretical work on random graphs including the E-R model and random connected graphs, and the first work on heterogeneous rewards with heavy tailed rewards. To tackle this problem, we develop a series of new algorithms, which first simulate graphs of interest, then run a warm-up phase to handle graph dynamics and initialization, and proceed to the learning period using a combination of upper confidence bounds (UCB) with a consensus mechanism that relies on newly proposed weight matrices and updates, and using a stopping time to handle randomly delayed feedback. Our technical novelties in the results and the analyses are as follows. We prove high probability instance-dependent regret bounds of the order of $\log T$ in both sub-gaussian and sub-exponential cases, consistent with the regret bound in the existing works that only consider the expected regret. Moreover, we establish a nearly tight instance-free regret bound of order $\sqrt{T}\log T$ for both sub-exponential and sub-gaussian distributions, up to a $\log T$ factor. We leverage probabilistic graphical methods on random graphs and draw on theories related to rapidly mixing Markov chains, which allows us to eliminate the doubly stochasticity assumption through new weight matrices and a stopping time. We construct new network-wide estimators and invent new concentration inequalities for them, and subsequently incorporate the seminal UCB algorithm into this distributed setting.

Recent advancements have been made in reducing communication costs with respect to the dependency in multi-agent MAB with homogeneous rewards (in the generalized linear bandit setting~\citep{li2022communication}, the ground truth of the unknown parameter is the same for all clients), such as achieving $O(\sqrt{T}M^2)$ in~\citep{li2022communication} for centralized settings or $O(M^3\log T)$ through Global Information Synchronization (GIS) communication protocols assuming time-invariant graphs in~\citep{li2022privacy}. Likewise, \citep{sankararaman2019social, chawla2020gossiping} improve the communication cost of order $\log T$ or $o(T)$ through asynchronous communication protocols and balancing the trade-off between regret and communication cost. More recently, \citep{wangx2022achieving} establish a novel communication protocol, TCOM, which is of order $\log \log T$ by means of concentrating communication around sub-optimal arms and performing aggregation of estimators across time steps. Furthermore, \citep{wangp2020optimal} develops a new leader-follower communication protocol, which selects a leader that communicates to the followers. Here the  communication cost is independent of $T$ which is much smaller. The incorporation of random graph structures and heterogeneous rewards introduces its own complexities, which poses challenges to reductions in communication costs. These great advancements introduce a promising direction for communication efficiency as a next step within the context herein.

\newpage
\section*{Acknowledgement}
We deeply appreciate the NeurIPS anonymous reviewers and the meta-reviewer for their valuable and insightful suggestions and discussions. This final version of the paper has been made possible through their great help. Additionally, we are much obliged to the authors of the papers \citep{chawla2020gossiping, zhu2021federated}, especially Ronshee Chawla from \citep{chawla2020gossiping} and Zhaowei Zhu from \citep{zhu2021federated}, for promptly sharing the code of their algorithms, which has helped us to run the benchmarks presented in this work. Their gesture definitely improves the advancement of the field.

\bibliography{neurips_2023}

\begin{thebibliography}{40}
\providecommand{\natexlab}[1]{#1}
\providecommand{\url}[1]{\texttt{#1}}
\expandafter\ifx\csname urlstyle\endcsname\relax
  \providecommand{\doi}[1]{doi: #1}\else
  \providecommand{\doi}{doi: \begingroup \urlstyle{rm}\Url}\fi

\bibitem[Agarwal et~al.(2022)Agarwal, Aggarwal, and Azizzadenesheli]{agarwal2022multi}
M.~Agarwal, V.~Aggarwal, and K.~Azizzadenesheli.
\newblock Multi-agent multi-armed bandits with limited communication.
\newblock \emph{The Journal of Machine Learning Research}, 23\penalty0 (1):\penalty0 9529--9552, 2022.

\bibitem[Auer et~al.(2002{\natexlab{a}})Auer, Cesa-Bianchi, and Fischer]{auer2002finite}
P.~Auer, N.~Cesa-Bianchi, and P.~Fischer.
\newblock Finite-time analysis of the multiarmed bandit problem.
\newblock \emph{Machine learning}, 47\penalty0 (2-3):\penalty0 235--256, 2002{\natexlab{a}}.

\bibitem[Auer et~al.(2002{\natexlab{b}})Auer, Cesa-Bianchi, Freund, and Schapire]{auer2002nonstochastic}
P.~Auer, N.~Cesa-Bianchi, Y.~Freund, and R.~E. Schapire.
\newblock The nonstochastic multiarmed bandit problem.
\newblock \emph{SIAM Journal on Computing}, 32\penalty0 (1):\penalty0 48--77, 2002{\natexlab{b}}.

\bibitem[Bistritz and Leshem(2018)]{bistritz2018distributed}
I.~Bistritz and A.~Leshem.
\newblock Distributed multi-player bandits-a game of thrones approach.
\newblock \emph{Advances in Neural Information Processing Systems}, 31, 2018.

\bibitem[Chawla et~al.(2020)Chawla, Sankararaman, Ganesh, and Shakkottai]{chawla2020gossiping}
R.~Chawla, A.~Sankararaman, A.~Ganesh, and S.~Shakkottai.
\newblock The gossiping insert-eliminate algorithm for multi-agent bandits.
\newblock In \emph{International conference on artificial intelligence and statistics}, pages 3471--3481. PMLR, 2020.

\bibitem[Chib and Greenberg(1995)]{chib1995understanding}
S.~Chib and E.~Greenberg.
\newblock Understanding the metropolis-hastings algorithm.
\newblock \emph{The american statistician}, 49\penalty0 (4):\penalty0 327--335, 1995.

\bibitem[Delarue(2017)]{delarue2017mean}
F.~Delarue.
\newblock Mean field games: A toy model on an erd{\"o}s-renyi graph.
\newblock \emph{ESAIM: Proceedings and Surveys}, 60:\penalty0 1--26, 2017.

\bibitem[Dubey et~al.(2020)]{dubey2020cooperative}
A.~Dubey et~al.
\newblock Cooperative multi-agent bandits with heavy tails.
\newblock In \emph{International conference on machine learning}, pages 2730--2739. PMLR, 2020.

\bibitem[Duchi et~al.(2011)Duchi, Agarwal, and Wainwright]{duchi2011dual}
J.~C. Duchi, A.~Agarwal, and M.~J. Wainwright.
\newblock Dual averaging for distributed optimization: Convergence analysis and network scaling.
\newblock \emph{IEEE Transactions on Automatic control}, 57\penalty0 (3):\penalty0 592--606, 2011.

\bibitem[Fan et~al.(2021)Fan, Jiang, and Sun]{fan2021hoeffding}
J.~Fan, B.~Jiang, and Q.~Sun.
\newblock Hoeffding's inequality for general markov chains and its applications to statistical learning.
\newblock \emph{The Journal of Machine Learning Research}, 22\penalty0 (1):\penalty0 6185--6219, 2021.

\bibitem[Gray et~al.(2019)Gray, Mitchell, and Roughan]{gray2019generating}
C.~Gray, L.~Mitchell, and M.~Roughan.
\newblock Generating connected random graphs.
\newblock \emph{Journal of Complex Networks}, 7\penalty0 (6):\penalty0 896--912, 2019.

\bibitem[Huang et~al.(2021)Huang, Wu, Yang, and Shen]{huang2021federated}
R.~Huang, W.~Wu, J.~Yang, and C.~Shen.
\newblock Federated linear contextual bandits.
\newblock \emph{Advances in neural information processing systems}, 34:\penalty0 27057--27068, 2021.

\bibitem[Jia et~al.(2021)Jia, Shi, and Shen]{jia2021multi}
H.~Jia, C.~Shi, and S.~Shen.
\newblock Multi-armed bandit with sub-exponential rewards.
\newblock \emph{Operations Research Letters}, 49\penalty0 (5):\penalty0 728--733, 2021.

\bibitem[Jiang and Cheng(2023)]{jiang2023multi}
F.~Jiang and H.~Cheng.
\newblock Multi-agent bandit with agent-dependent expected rewards.
\newblock \emph{Swarm Intelligence}, pages 1--33, 2023.

\bibitem[Landgren et~al.(2016{\natexlab{a}})Landgren, Srivastava, and Leonard]{landgren2016distributed}
P.~Landgren, V.~Srivastava, and N.~E. Leonard.
\newblock On distributed cooperative decision-making in multiarmed bandits.
\newblock In \emph{2016 European Control Conference (ECC)}, pages 243--248. IEEE, 2016{\natexlab{a}}.

\bibitem[Landgren et~al.(2016{\natexlab{b}})Landgren, Srivastava, and Leonard]{landgren2016distributed_2}
P.~Landgren, V.~Srivastava, and N.~E. Leonard.
\newblock Distributed cooperative decision-making in multiarmed bandits: Frequentist and bayesian algorithms.
\newblock In \emph{2016 IEEE 55th Conference on Decision and Control (CDC)}, pages 167--172. IEEE, 2016{\natexlab{b}}.

\bibitem[Landgren et~al.(2021)Landgren, Srivastava, and Leonard]{landgren2021distributed}
P.~Landgren, V.~Srivastava, and N.~E. Leonard.
\newblock Distributed cooperative decision making in multi-agent multi-armed bandits.
\newblock \emph{Automatica}, 125:\penalty0 109445, 2021.

\bibitem[Li and Wang(2022)]{li2022communication}
C.~Li and H.~Wang.
\newblock Communication efficient federated learning for generalized linear bandits.
\newblock \emph{Advances in Neural Information Processing Systems}, 35:\penalty0 38411--38423, 2022.

\bibitem[Li and Song(2022)]{li2022privacy}
T.~Li and L.~Song.
\newblock Privacy-preserving communication-efficient federated multi-armed bandits.
\newblock \emph{IEEE Journal on Selected Areas in Communications}, 40\penalty0 (3):\penalty0 773--787, 2022.

\bibitem[Lima et~al.(2008)Lima, Sousa, and Sumuor]{lima2008majority}
F.~W. Lima, A.~O. Sousa, and M.~Sumuor.
\newblock Majority-vote on directed erd{\H{o}}s--r{\'e}nyi random graphs.
\newblock \emph{Physica A: Statistical Mechanics and its Applications}, 387\penalty0 (14):\penalty0 3503--3510, 2008.

\bibitem[Mart{\'\i}nez-Rubio et~al.(2019)Mart{\'\i}nez-Rubio, Kanade, and Rebeschini]{martinez2019decentralized}
D.~Mart{\'\i}nez-Rubio, V.~Kanade, and P.~Rebeschini.
\newblock Decentralized cooperative stochastic bandits.
\newblock \emph{Advances in Neural Information Processing Systems}, 32, 2019.

\bibitem[McMahan et~al.(2017)McMahan, Moore, Ramage, Hampson, and y~Arcas]{mcmahan2017communication}
B.~McMahan, E.~Moore, D.~Ramage, S.~Hampson, and B.~A. y~Arcas.
\newblock Communication-efficient learning of deep networks from decentralized data.
\newblock In \emph{Artificial intelligence and statistics}, pages 1273--1282. PMLR, 2017.

\bibitem[McNew(2011)]{mcnew2011eigenvalue}
N.~McNew.
\newblock The eigenvalue gap and mixing time.
\newblock 2011.

\bibitem[Mitra et~al.(2021)Mitra, Hassani, and Pappas]{mitra2021exploiting}
A.~Mitra, H.~Hassani, and G.~Pappas.
\newblock Exploiting heterogeneity in robust federated best-arm identification.
\newblock \emph{arXiv preprint arXiv:2109.05700}, 2021.

\bibitem[Nedic and Ozdaglar(2009)]{nedic2009distributed}
A.~Nedic and A.~Ozdaglar.
\newblock Distributed subgradient methods for multi-agent optimization.
\newblock \emph{IEEE Transactions on Automatic Control}, 54\penalty0 (1):\penalty0 48--61, 2009.

\bibitem[Perchet et~al.(2016)Perchet, Rigollet, Chassang, and Snowberg]{perchet2016batched}
V.~Perchet, P.~Rigollet, S.~Chassang, and E.~Snowberg.
\newblock Batched bandit problems.
\newblock \emph{The Annals of Statistics}, pages 660--681, 2016.

\bibitem[R{\'e}da et~al.(2022)R{\'e}da, Vakili, and Kaufmann]{reda2022near}
C.~R{\'e}da, S.~Vakili, and E.~Kaufmann.
\newblock Near-optimal collaborative learning in bandits.
\newblock In \emph{NeurIPS 2022-36th Conference on Neural Information Processing System}, 2022.

\bibitem[Sankararaman et~al.(2019)Sankararaman, Ganesh, and Shakkottai]{sankararaman2019social}
A.~Sankararaman, A.~Ganesh, and S.~Shakkottai.
\newblock Social learning in multi agent multi armed bandits.
\newblock \emph{Proceedings of the ACM on Measurement and Analysis of Computing Systems}, 3\penalty0 (3):\penalty0 1--35, 2019.

\bibitem[Scaman et~al.(2017)Scaman, Bach, Bubeck, Lee, and Massouli{\'e}]{scaman2017optimal}
K.~Scaman, F.~Bach, S.~Bubeck, Y.~T. Lee, and L.~Massouli{\'e}.
\newblock Optimal algorithms for smooth and strongly convex distributed optimization in networks.
\newblock In \emph{international conference on machine learning}, pages 3027--3036. PMLR, 2017.

\bibitem[Slivkins et~al.(2019)]{slivkins2019introduction}
A.~Slivkins et~al.
\newblock Introduction to multi-armed bandits.
\newblock \emph{Foundations and Trends{\textregistered} in Machine Learning}, 12\penalty0 (1-2):\penalty0 1--286, 2019.

\bibitem[Tao et~al.(2022)Tao, Wu, Zhao, and Wang]{tao2022optimal}
Y.~Tao, Y.~Wu, P.~Zhao, and D.~Wang.
\newblock Optimal rates of (locally) differentially private heavy-tailed multi-armed bandits.
\newblock In \emph{International Conference on Artificial Intelligence and Statistics}, pages 1546--1574. PMLR, 2022.

\bibitem[Trevisan()]{key}
L.~Trevisan.
\newblock Cs 174 randomized algorithms.
\newblock URL \url{http://theory.stanford.edu/~trevisan/cs174/}.

\bibitem[Wang et~al.(2020)Wang, Proutiere, Ariu, Jedra, and Russo]{wangp2020optimal}
P.-A. Wang, A.~Proutiere, K.~Ariu, Y.~Jedra, and A.~Russo.
\newblock Optimal algorithms for multiplayer multi-armed bandits.
\newblock In \emph{International Conference on Artificial Intelligence and Statistics}, pages 4120--4129. PMLR, 2020.

\bibitem[Wang et~al.(2022)Wang, Yang, Chen, Liu, Hajiesmaili, Towsley, and Lui]{wangx2022achieving}
X.~Wang, L.~Yang, Y.-Z.~J. Chen, X.~Liu, M.~Hajiesmaili, D.~Towsley, and J.~C. Lui.
\newblock Achieving near-optimal individual regret \& low communications in multi-agent bandits.
\newblock In \emph{The Eleventh International Conference on Learning Representations}, 2022.

\bibitem[Wang et~al.(2021)Wang, Zhang, Singh, Riek, and Chaudhuri]{wang2021multitask}
Z.~Wang, C.~Zhang, M.~K. Singh, L.~Riek, and K.~Chaudhuri.
\newblock Multitask bandit learning through heterogeneous feedback aggregation.
\newblock In \emph{International Conference on Artificial Intelligence and Statistics}, pages 1531--1539. PMLR, 2021.

\bibitem[Yan et~al.(2022)Yan, Xiao, Chen, and Tajer]{yan2022federated}
Z.~Yan, Q.~Xiao, T.~Chen, and A.~Tajer.
\newblock Federated multi-armed bandit via uncoordinated exploration.
\newblock In \emph{ICASSP 2022-2022 IEEE International Conference on Acoustics, Speech and Signal Processing (ICASSP)}, pages 5248--5252. IEEE, 2022.

\bibitem[Zhu and Liu(2023)]{zhu2023distributed}
J.~Zhu and J.~Liu.
\newblock Distributed multi-armed bandits.
\newblock \emph{IEEE Transactions on Automatic Control}, 2023.

\bibitem[Zhu et~al.(2020)Zhu, Sandhu, and Liu]{zhu2020distributed}
J.~Zhu, R.~Sandhu, and J.~Liu.
\newblock A distributed algorithm for sequential decision making in multi-armed bandit with homogeneous rewards.
\newblock In \emph{2020 59th IEEE Conference on Decision and Control (CDC)}, pages 3078--3083. IEEE, 2020.

\bibitem[Zhu et~al.(2021{\natexlab{a}})Zhu, Mulle, Smith, and Liu]{zhu2021decentralized}
J.~Zhu, E.~Mulle, C.~S. Smith, and J.~Liu.
\newblock Decentralized multi-armed bandit can outperform classic upper confidence bound.
\newblock \emph{arXiv preprint arXiv:2111.10933}, 2021{\natexlab{a}}.

\bibitem[Zhu et~al.(2021{\natexlab{b}})Zhu, Zhu, Liu, and Liu]{zhu2021federated}
Z.~Zhu, J.~Zhu, J.~Liu, and Y.~Liu.
\newblock Federated bandit: A gossiping approach.
\newblock In \emph{Abstract Proceedings of the 2021 ACM SIGMETRICS/International Conference on Measurement and Modeling of Computer Systems}, pages 3--4, 2021{\natexlab{b}}.

\end{thebibliography}

\newpage

\appendix

\section{Additional Background Work}
Developing efficient algorithms for decentralized systems has been a popular research area in recent years. Among them, gossiping algorithms have been proven to be successful~\citep{scaman2017optimal,duchi2011dual,nedic2009distributed}.
In this approach, each client computes iteratively a weighted average of local estimators and network-wide estimators obtained from neighbors. The goal is to derive an estimator that converges to the average of the true values across the entire system. The weights are represented by a matrix that respects the graph structure under certain conditions. The gossiping-based averaging approach enables the incorporation of MAB methods in decentralized settings. In particular, motivated by the success of the UCB algorithm \citep{auer2002finite} in stochastic MAB, \citep{landgren2016distributed,landgren2016distributed_2,landgren2021distributed,zhu2020distributed,zhu2021decentralized,zhu2021federated,martinez2019decentralized,chawla2020gossiping, wang2021multitask}~import it to various decentralized settings under the assumption of sub-Gaussianity, including homogeneous or heterogeneous rewards, different graph assumptions, and various levels of global information. The regret bounds obtained are typically of order $\log T$. However, most existing works assume that the graph is time-invariant under further conditions, which is often not the case. For example, \citep{wang2021multitask} provide a optimal regret guarantee for complete graphs which are essentially a centralized batched bandit problem~\citep{perchet2016batched}. 
Connected graphs are also considered, but \citep{zhu2020distributed} assume that the rewards are homogeneous and graphs are time-invariant related to doubly stochastic matrices. In addition, \citep{martinez2019decentralized} propose the DDUCB algorithm for settings with time-invariant graphs 
and homogeneous rewards, dealing with deterministically delayed feedback and assuming  knowing the number of vertices and the spectral gap of the given graph. Meanwhile, \citep{jiang2023multi} propose an algorithm C-CBGE that is robust to Gaussian noises and deals with client-dependent MAB, but requires time-invariant regular graphs. \citep{zhu2021federated} propose a gossiping-based UCB-variant algorithm for time-invariant graphs. 
In this approach, each client maintains a weighted averaged estimator by gossiping, uses doubly stochastic weight matrices depending on global information of the graph, and adopts a UCB-based decision rule by constructing upper confidence bounds. Recently, \citep{zhu2023distributed}, revisit the algorithm and add an additional term to the UCB rule for time-varying repeatedly strongly connected graphs, assuming no global information. However, the doubly stochasticity assumption excludes many graphs from consideration. Our algorithm builds on the approach proposed by \citep{zhu2021federated} with new weight matrices that do not require the doubly stochasticity assumption. Our weight matrices leverage more local graph information,  rather than just the size of the vertex set as in \citep{zhu2023distributed,zhu2021federated}. We introduce the terminology of the stopping time for randomly delayed feedback, along with new upper confidence bounds that consider random graphs and sub-exponentiality. This leads to smaller high probability regret bounds, and the algorithm only requires knowledge of the number of vertices that can be obtained at initialization or estimated as in~\citep{martinez2019decentralized}.  


In the context of bandits with heavy-tailed distributed rewards, the UCB algorithm continues to play a significant role. \citep{dubey2020cooperative} are the first to consider the multi-agent MAB setting with homogeneous heavy-tailed rewards. They develop a UCB-based algorithm with an instance-dependent regret bound of order $\log T$. They achieve this by adopting larger upper confidence bounds and finding cliques of vertices, even though the graphs are time-invariant and known to clients. In a separate line of work, \citep{jia2021multi} consider the single-agent MAB setting with sub-exponential rewards, and propose a UCB-based algorithm that enlarges or pretrains the upper confidence bounds, achieving a mean-gap independent regret bound of order $\sqrt{T\log T}$. We extend this technique to the decentralized multi-agent MAB setting with heterogeneous sub-exponential rewards, using a gossiping approach, and establish both an optimal instance-dependent regret bound of $O(\log T)$ and a nearly optimal mean-gap independent regret bound of $O(\sqrt{T}\log T)$, up to a $\log T$ factor.

Our work draws on the classical literature on random graphs. From the perspective of generating random connected graphs, we build upon a numerically efficient algorithm introduced in \citep{gray2019generating}, which is based on the Metropolis-Hastings algorithm \citep{chib1995understanding}, despite its lack of finite convergence rate for non-sparse graphs. We follow their algorithm and, in addition, provide a new analysis on the convergence rate and mixing time of the underlying Markov chain. In terms of the E-R model, it has been thoroughly examined in various areas, such as mean-field games \citep{delarue2017mean} and majority vote settings \citep{lima2008majority}. However, these random graphs have not yet been applied to the decentralized multi-agent MAB setting that is partially determined by the underlying graphs. Our formulation and analyses bridge this gap, providing insights into the dynamics of decentralized multi-agent MAB in the context of random graphs.

\section{Details on numerical experiments in Section 4}

We report the experimental details in Section 4, including both benchmarking and regret properties of the algorithms. The implementation details of the experiments are as follows, including the data generation, benchmarks, and the evaluation metrics. 

The process of data generation involves both reward generation and graph generation. First we generate different numbers of arms and clients, denoted as $K$ and $M$, respectively. Specifically, we generate rewards using the Bernoulli distribution in the sub-Gaussian distribution family, varying the mean values $\mu_i^m$ by introducing multiple levels of heterogeneity denoted as $h = \max_{i,j,m}|\mu_i^m-\mu_j|$ and then for each arm $k$, partitioning the range $[0.1,0.1+(k+1) \cdot h/K]$ into $M$ intervals. In terms of graph generation, we generate E-R models with varying values of $c$, to capture graph complexity. Specifically, for the benchmarking experiment with time-invariant graphs, we set $K=2, M=5, h = 0.1, c = 1$, i.e. complete graphs. For the benchmarking experiment with time-varying graphs, we set $K=2, M=5, h = 0.1, c = 0.9$. For the regret experiments, the parameters are $h \in \{0.1, 0.2, 0.3\}$, $M \in \{5, 8, 12\}$, $c \in \{0.2, 0.5, 0.9, 1\}$, and $K \in \{2, 3, 4\}$. We selected the least positive number of arms $K=2$ to keep computational times low and $M = 5$ to have small graphs but still a variety of them.   

We  compare the new method DrFed-UCB with the classical methods, such as the Gossiping Insert-Eliminate algorithm (GoSInE) in \citep{chawla2020gossiping} which focuses on deterministic graphs and sub-Gaussian rewards and motivated our work. We also include the Gossip UCB algorithm (Gossip\_UCB) \citep{zhu2021federated} as a benchmark. Meanwhile, in terms of time-varying graphs, we implement the algorithm, Distributed UCB (Dist\_UCB) in~\citep{zhu2023distributed} that has been developed for time-varying graphs, and compare our algorithm to this benchmark.   

\paragraph{Evaluation}
The evaluation metric is the regret measure as defined in Section 4. More specifically, for the experiments, we use the average regret over 50 runs for each benchmark and also report the 95\% confidence intervals across the 50 runs. With respect to the communication cost as another performance measure, it is computed explicitly. Additionally, the runtime can provide insights into the time complexity of the models. 

\paragraph{Benchmark comparison results}
The results for time-invariant and time-varying graphs are presented in Figure 1 (a) and Figure 1 (b), respectively. The x-axis represents time steps, while the y-axis shows the average regret up to that time step. Figure 1 (a) demonstrates that DrFed-UCB exhibits the smallest average regret among all methods in time-invariant graphs, with significant improvements. More precisely, with respect to the Area Under the Curve (AUC) of the regret curve, the improvements of DrFed-UCB over GoSInE, Gossip\_UCB, and Dist\_UCB are 132\%, 158\%, and 128\%, respectively, showcasing the regret improvement of the newly proposed algorithm compared to the benchmarks.  Notably, both Dist\_UCB and DrFed-UCB result in larger variances, primarily observed in Dist\_UCB. This phenomenon may be attributed to the communication mechanisms designed for time-varying graphs. In Figure 1 (b), we observe that our regret is notably smaller compared to Dist\_UCB in settings with time-varying graphs. Specificaly, the AUC of Dist\_UCB is 96.6\% larger than that of our regret curve, which implies the significant improvement in this setting with time-varying graphs. Furthermore, we perform a time complexity comparison, revealing that DrFed-UCB and GoSInE are approximately six times faster than Dist\_UCB. Lastly, communication cost is directly computed by the total number of communication rounds and follows an explicit formula. Specifically, the communication costs of DrFed-UCB, Gossip\_UCB, and Dist\_UCB are of order $T$, whereas GoSInE exhibits only $o(T)$, suggesting a potential direction for optimizing communication costs.

\paragraph{Regret dependency results}
Meanwhile, we illustrate how DrFed-UCB's regret depends on several factors: the number of clients ($M$), the number of arms ($K$), the Bernoulli parameter ($c$) for the E-R model, and heterogeneity measured by $h$. The regret metrics are presented as (a), (b), (c), and (d) in Figure 2, respectively. We observe that regret monotonically increases with the level of heterogeneity and the number of arms, while decreasing with connectivity, which is equivalent to an increase in graph complexity. However, this monotonic trend does not apply to the number of clients. This is due to the following considerations. On one hand, a large $M$ implies a greater number of incident edges of each client, providing more global information access and potentially leading to smaller regret. On the other hand, a large $M$ also weakens the Chernoff-Hoeffding inequality for clients transmitting information, which might result in larger regret.

\begin{figure}
\centering
\subfloat[][time-invariant graphs]
{\includegraphics[scale=0.45]{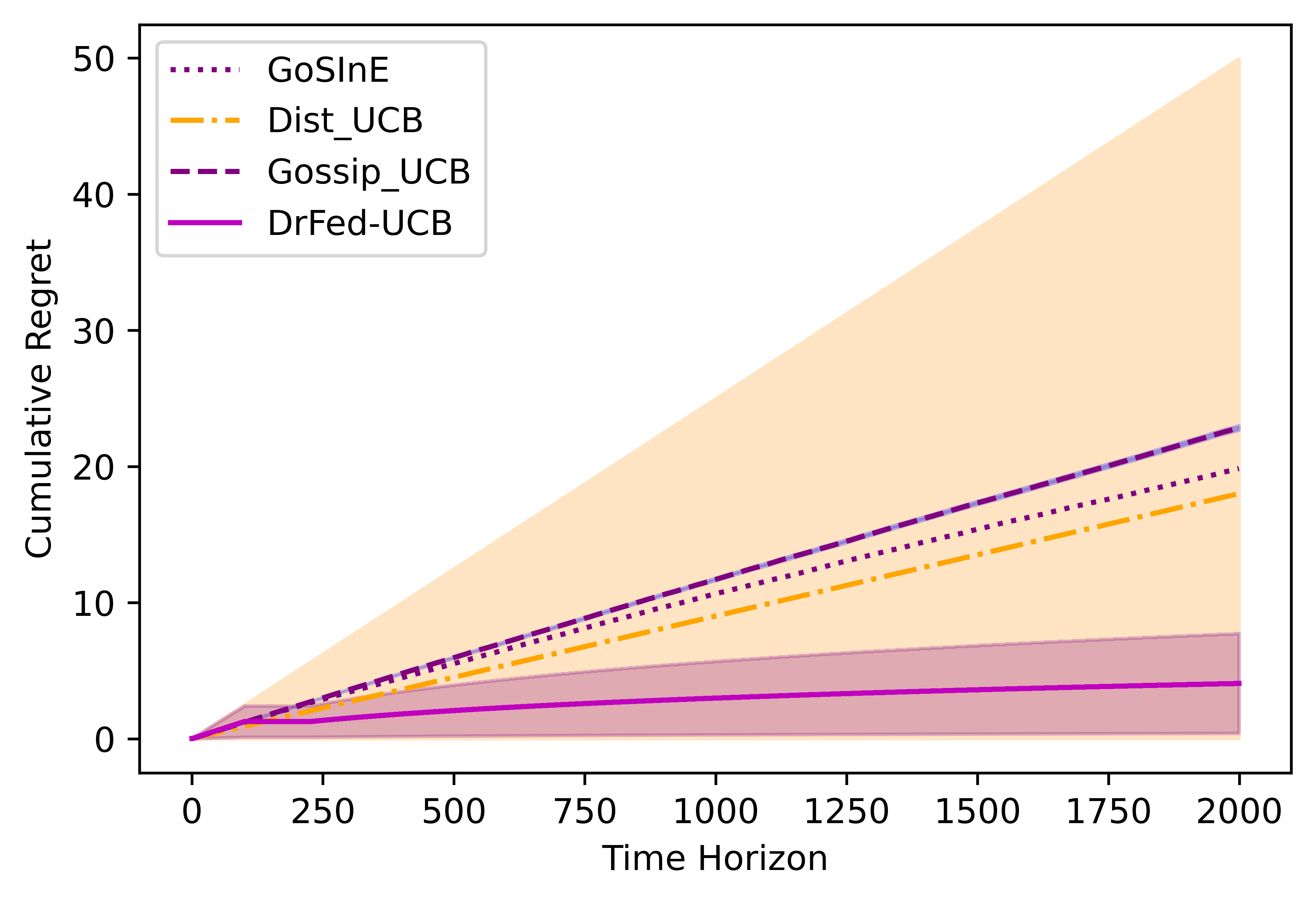}}
\hfill
\subfloat[][time-varying graphs]
{\includegraphics[scale=0.45]{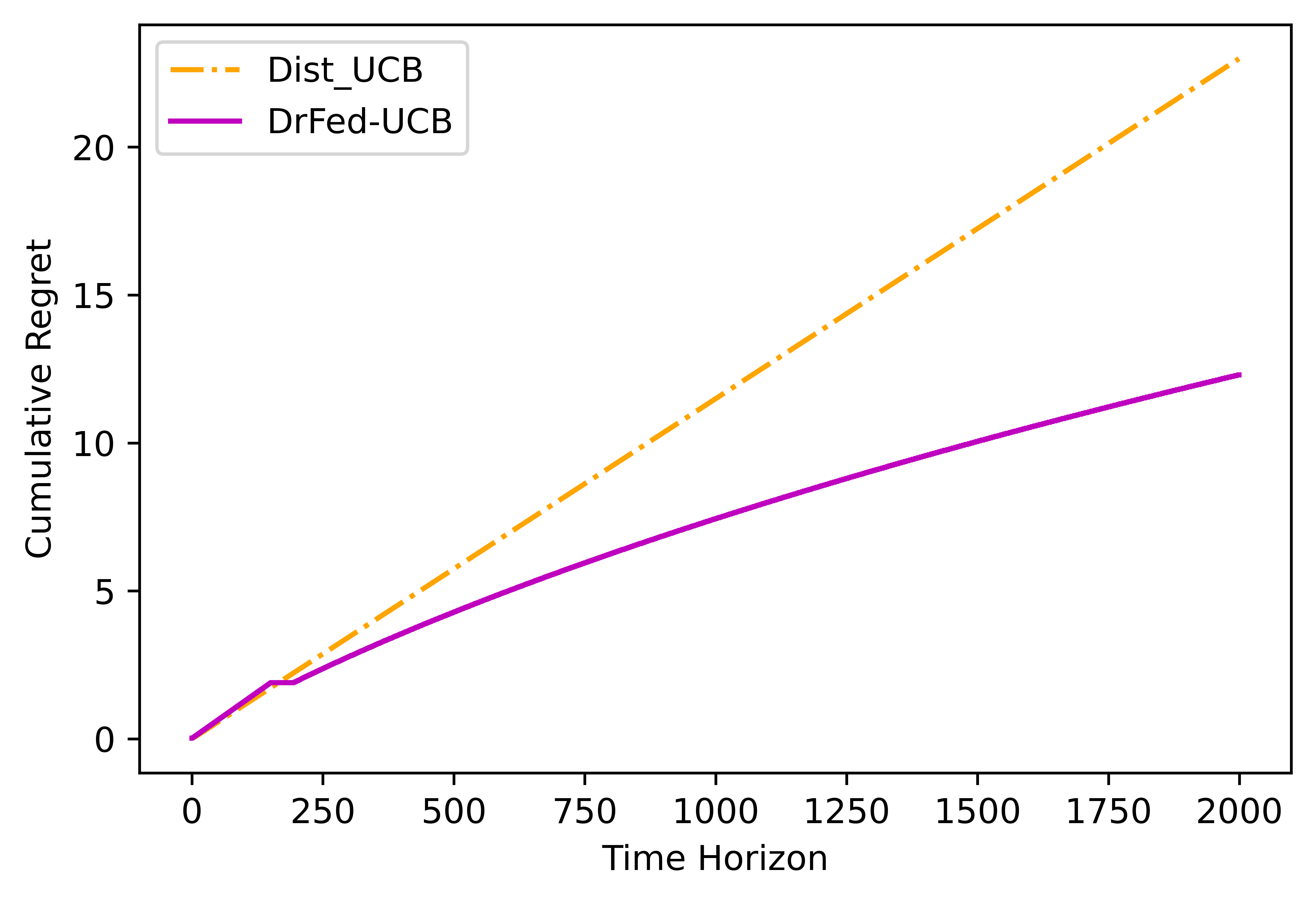}}
\caption{ The regret of different methods in settings with both time-invariant and time-varying graphs}
\label{fig 1: comparison}
\end{figure}

\begin{figure}
\centering
\subfloat[][$h$]
{\includegraphics[scale=0.45]{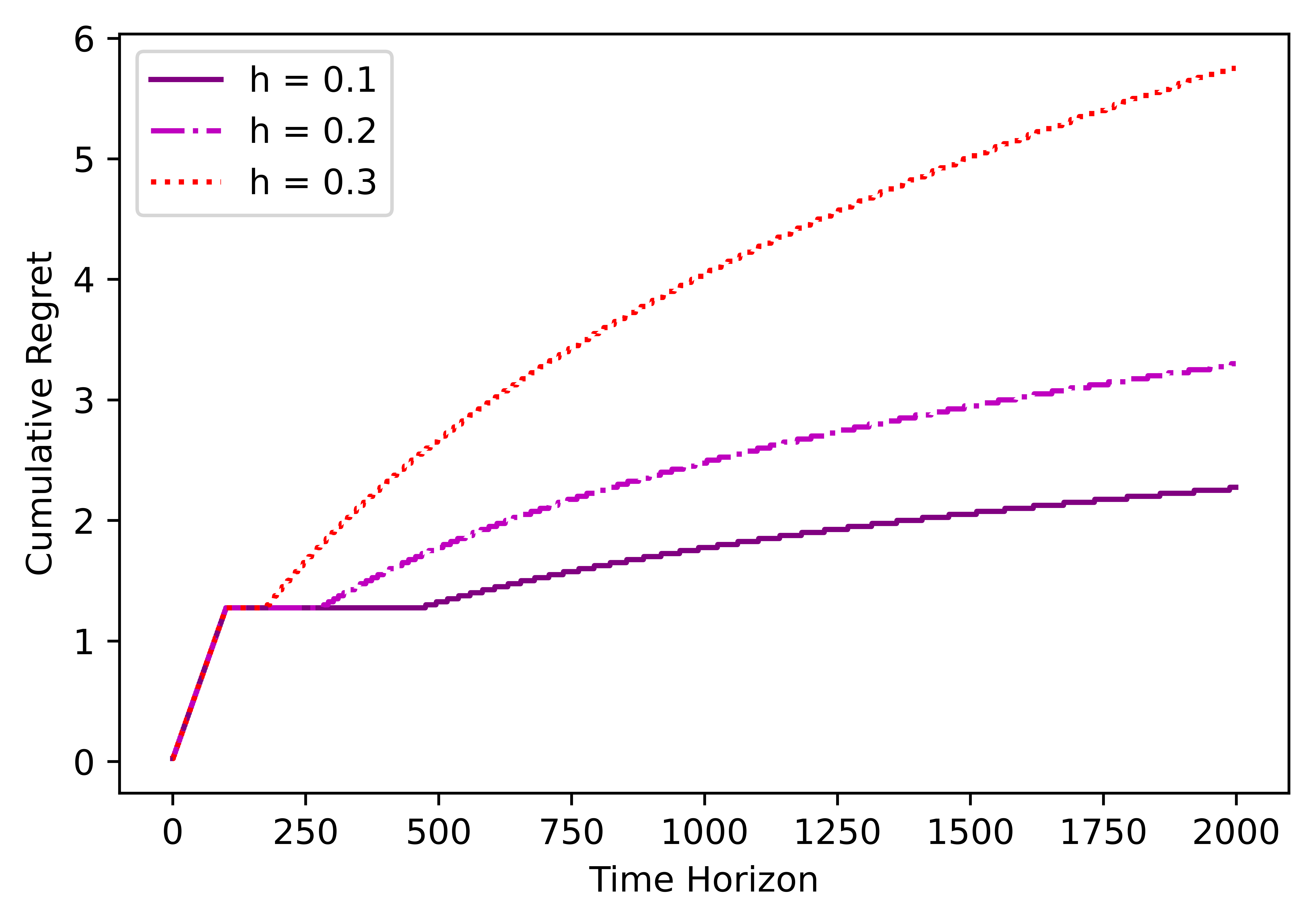}}
\hfill
\subfloat[][$M$]
{\includegraphics[scale=0.45]{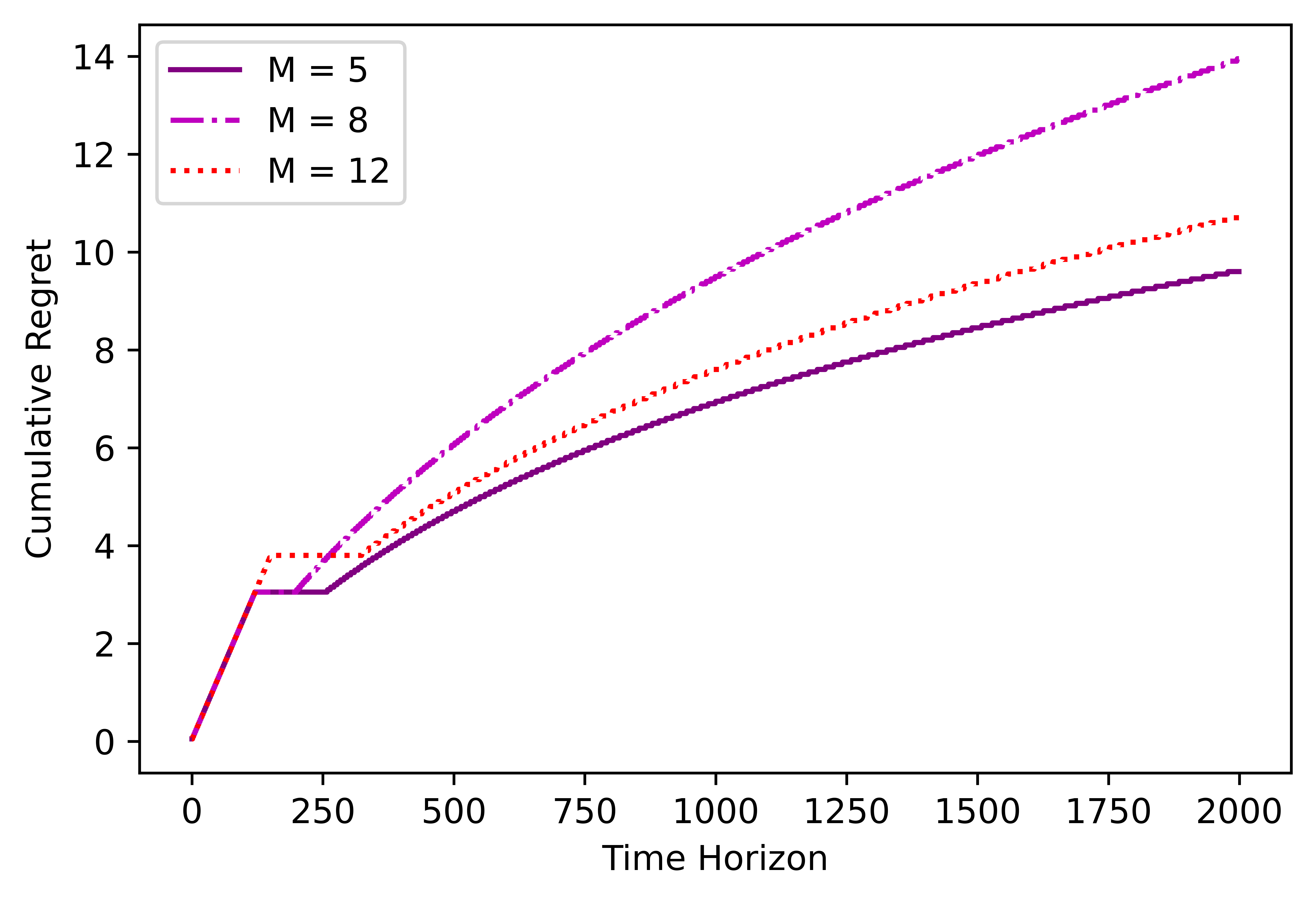}}
        \vskip\baselineskip
\subfloat[][$c$]
{\includegraphics[scale=0.45]{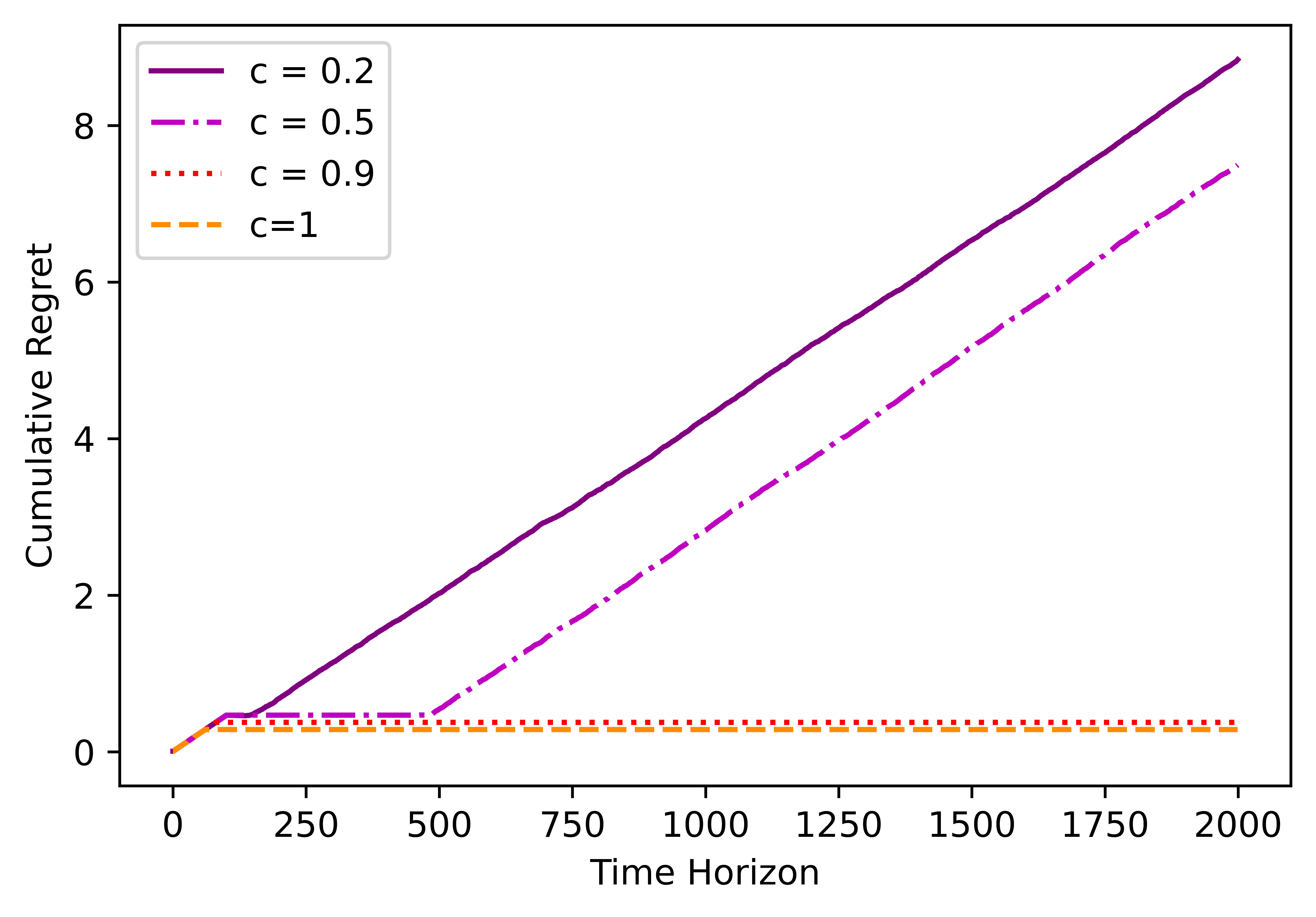}}
\hfill
\subfloat[][$K$]
{\includegraphics[scale=0.45]{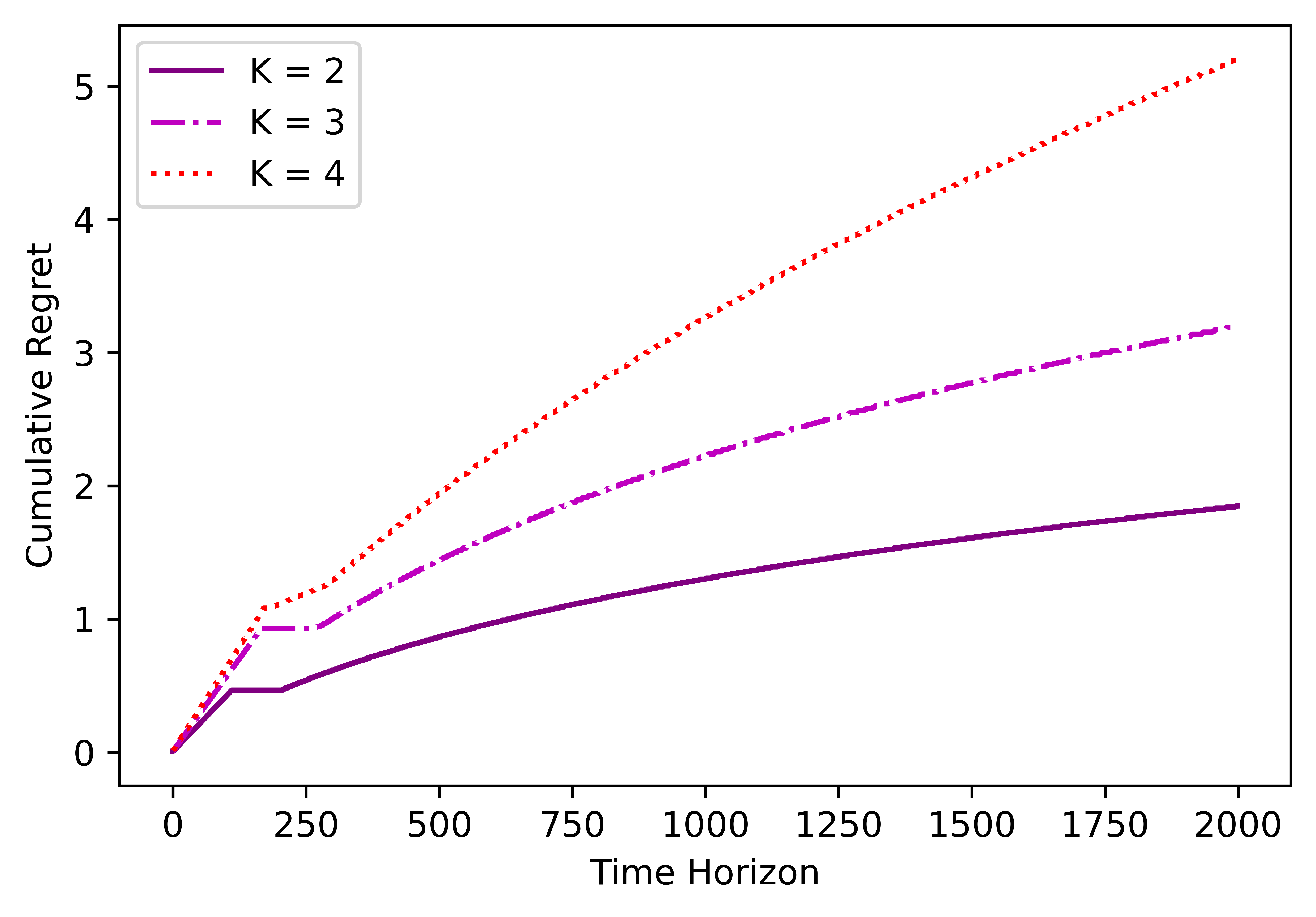}}
\caption{ The regret of the proposed algorithm in problem settings with different parameters}
\label{fig 2:property}
\end{figure}

\section{Remarks on the theoretical results in Section 3.2}

\subsection{Remarks on Theorem 4}

\begin{remark}
   Based on the expression of $L_1$, we obtain that $L_1$ is independent of the sub-optimality gap $\Delta_i$.  Meanwhile, we have $C_1 = 8\sigma^2 \cdot 12\frac{M(M+2)}{M^4}$ and $C_2 = \frac{3}{2}C_1 = 12\sigma^2\cdot 12\frac{M(M+2)}{M^4}$. This implies that the established regret bound in Theorem~\ref{th:all_settings_exp} does not rely on $\Delta_i$ but does depend on $\sigma^2$. To this end, we use the terminology, mean-gap independent bounds, to only represent bounds having no dependency on $\Delta_i$, rather than instance independent that seems to be an overclaim in this case. 
   \end{remark}

\begin{remark}

The discussion regarding the conditions on $T$, the expected regret $E[R_T]$, and the parameter specifications follow the same logic as those in Theorem~\ref{th:all_settings}. We omit the details here.

\end{remark}
\begin{remark}[\textbf{Comparison with previous work}] For decentralized multi-agent MAB with homogeneous heavy-tailed rewards and time-invariant graphs, \citep{dubey2020cooperative} provide an instance-dependent regret bound of order $\log{T}$. In contrast, our regret bound has the same order for heterogeneous settings with random graphs, as shown in Theorem~\ref{th:all_settings_exp_ins}. 
Additionally, we provide a mean-gap independent regret bound as in Theorem~\ref{th:all_settings_exp}. In the single-agent MAB setting, \citep{jia2021multi} consider sub-exponential rewards and derive a mean-gap independent regret upper bound of order $\sqrt{T\log T}$. Our regret bound of $\sqrt{T}\log T$ is consistent with theirs, up to a logarithmic factor. Furthermore, our result is consistent with the regret lower bound as proposed in~\citep{slivkins2019introduction}, up to a $\log T$ factor, indicating the tightness of our regret bound.
\end{remark}

\section{Proof of results in Section 3.2}

\subsection{Lemmas and Propositions}\label{sec:b.1}



\begin{Lemma}\label{lemmas:dec_ucb_2}
For any $m,i,t> L$, we have
\begin{align*}
    n_{m,i}(t) \geq N_{m,i}(t) - K(K+2M)
\end{align*}
\end{Lemma}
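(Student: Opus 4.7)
The plan is to prove the inequality by induction on $t>L$. The base case $t=L+1$ follows directly from the initialization $N_{m,i}(L+1) = n_{m,i}(L)$ in Algorithm~\ref{alg:dr}, which gives $N_{m,i}(L+1) - n_{m,i}(L+1) \leq 0 \leq K(K+2M)$. Since the convention $X_{m,m}=1$ places $m$ into $\mathcal{N}_m(t)$, the update $N_{m,i}(t+1) = \max\{n_{m,i}(t+1), \hat{\bar{N}}^m_{i,j}(t) : j \in \mathcal{N}_m(t)\}$ always dominates $N_{m,i}(t)$, so $N_{m,i}$ is monotone non-decreasing in $t$, a fact I will use throughout.

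For the inductive step at time $t+1$, I would split on which client attains the maximum in the update rule. If the max is attained by $n_{m,i}(t+1)$, the gap vanishes. Otherwise it equals $N_{j^*,i}(t)$ for some neighbor $j^*\neq m$, and invoking the inductive hypothesis at $j^*$ yields $N_{j^*,i}(t) \leq n_{j^*,i}(t) + K(K+2M)$. The claim then reduces to controlling the cross-client count difference $n_{j^*,i}(t) - n_{m,i}(t+1)$, which must be shown to be at most zero, or at least a small constant that can be absorbed into the inductive slack without exceeding $K(K+2M)$.

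To control this cross-client difference, I would appeal to the forced-exploration clause of Algorithm~\ref{alg:dr}: whenever some arm $i'$ satisfies $n_{m,i'}(t) \leq N_{m,i'}(t)-K$, client $m$ abandons UCB and samples an under-explored arm. The argument will show that the gap at client $m$ cannot persistently widen past $K$ on any arm: once it does, the forced-exploration clause triggers, and over the ensuing rounds it progressively pulls arm $i$, thereby catching the local counter up. Quantitatively, a single visit to forced exploration costs up to $K+2M$ rounds before arm $i$ is chosen (the $K$ absorbs the need to cycle through all $K$ arms, and the $2M$ absorbs the graph-synchronization slack across $M$ clients with self-loops), and at most $K$ such visits are needed across the $K$ arms, yielding the overall slack of $K(K+2M)$.

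The main obstacle I anticipate is justifying the $2M$ component of the slack rigorously. Individual jumps of $N_{m,i}$ can in principle be large when a neighbor communicates its own $N_{j,i}$, so I will need to show that, under the one-hop-per-step propagation of the update rule in equation~(\ref{eq:Eq_1}) and the fact that there are $M$ clients, the network-wide max counter cannot race arbitrarily far ahead of any fixed client's counter without already having triggered forced exploration at $m$. The technical heart of the argument will be the bookkeeping of the epochs during which forced exploration is active at each client, together with the joint evolution of the vector $(n_{\cdot,i}(t), N_{\cdot,i}(t))$ over such epochs; this is where I expect the bulk of the careful, case-based counting to occur.
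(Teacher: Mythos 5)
The paper itself does not prove this lemma; it defers entirely to \citep{zhu2023distributed}, so your attempt has to be judged on its own merits. As written, your induction does not close. In the inductive step, when the maximum in $N_{m,i}(t+1)=\max\{n_{m,i}(t+1),\hat{N}^m_{i,j}(t):j\in\mathcal{N}_m(t)\}$ is attained by a neighbor $j^*$, the inductive hypothesis at $j^*$ gives $N_{j^*,i}(t)\leq n_{j^*,i}(t)+K(K+2M)$, and to recover the \emph{same} constant at time $t+1$ you would need $n_{j^*,i}(t)\leq n_{m,i}(t+1)$, i.e.\ zero cross-client slack. That is false in general — different clients' pull counts of the same arm do differ, and bounding by how much is essentially the content of the lemma itself — and any positive slack you add on top of the hypothesis produces $K(K+2M)+\mathrm{slack}$ at $t+1$, not $K(K+2M)$. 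Your paragraph about forced exploration "catching the local counter up" is the right mechanism, but it is a statement about the trajectory over a window of rounds and cannot be injected into a one-step induction with a fixed constant; you have deferred exactly the part that carries the proof. The standard route is different: first observe that every value ever stored in any $N_{\cdot,i}$ is some $n_{k,i}(s)$ with $s\leq t$, and since counts are non-decreasing this gives $N_{m,i}(t)\leq\max_j n_{j,i}(t)$ deterministically; the lemma then reduces to bounding $\max_j n_{j,i}(t)-n_{m,i}(t)$, which one proves by arguing that the leading client can only keep incrementing $n_{\cdot,i}$ while it remains in UCB mode, which forces all of \emph{its own} arm-gaps below $K$, and combining this with a propagation-delay bound of order $M$ for the counter information — this is the global, window-based counting in \citep{zhu2023distributed}, not an induction on $t$.

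A second, smaller issue: you rely on the forced-exploration clause "progressively pulling arm $i$." Algorithm~\ref{alg:dr} as written says the client \emph{randomly} samples an arm in that branch, which does not deterministically pull the lagging arm; with uniform sampling the gap $N_{m,i}-n_{m,i}$ can grow past any fixed constant with positive probability, so the lemma would fail as a deterministic statement. Your reading (sample an arm violating $n_{m,i}(t)\leq N_{m,i}(t)-K$) matches the cited works and is the only reading under which the lemma can hold, but you should state explicitly that this is the convention you are using, since the deterministic conclusion depends on it.
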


\begin{proof}[Proof of Lemma~\ref{lemmas:dec_ucb_2}]
    The proof is referred to \citep{zhu2023distributed}.
\end{proof}

\begin{Lemma}\label{lemmas:dec_ucb_3}
For any $m,i,t > L$, if $n_{m,i}(t) \geq 2(K^2+KM+M)$ and graph $G_t$ is connected, then we have
\begin{align*}
    n_{m,i}(t) \leq 2\min_{j}n_{j,i}(t).
\end{align*}
where the min is taken over all clients, not just the neighbors.
\end{Lemma}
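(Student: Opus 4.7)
The plan is to combine the preceding Lemma with a propagation argument driven by the connectivity of $G_t$.

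First, applying the preceding Lemma to each client $j$ yields
\[
n_{j,i}(t) \geq N_{j,i}(t) - K(K+2M),
\]
while by definition of the update rule $N_{m,i}(t) \geq n_{m,i}(t)$ always holds. Thus, to obtain the desired bound $n_{m,i}(t) \leq 2\, n_{j,i}(t)$ for every client $j$, it suffices to show that $N_{j,i}(t)$ cannot lag too far behind $N_{m,i}(t)$, and then invoke the hypothesis $n_{m,i}(t) \geq 2(K^2+KM+M)$.

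Second, I would establish this approximate synchronization of the $N$-values. The key observation is the max-propagation structure built into the update: whenever $l \in \mathcal{N}_m(t)$,
\[
N_{m,i}(t+1) \geq \hat{\bar N}^m_{i,l}(t) = N_{l,i}(t),
\]
so the largest $N$-value is broadcast by one hop per communication round. Chaining this rule along a path in the connected graph $G_t$ (together with prior rounds of connectivity, which are available whenever we condition on event $A$), and using the fact that counters grow by at most one unit per client per step, shows that the gap $N_{m,i}(t) - N_{j,i}(t)$ is bounded by a quantity controlled by the diameter of $G_t$ and at most of order $M$.

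Third, combining the two bounds gives
\[
n_{j,i}(t) \geq N_{j,i}(t) - K(K+2M) \geq N_{m,i}(t) - \text{(sync gap)} - K(K+2M) \geq n_{m,i}(t) - K^2 - 2KM - \text{(sync gap)},
\]
and the hypothesis $n_{m,i}(t) \geq 2(K^2+KM+M)$ should then be strong enough for a direct algebraic rearrangement to yield $n_{j,i}(t) \geq n_{m,i}(t)/2$, which is exactly the claim.

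\textbf{Main obstacle.} The delicate step is the second one: pinning down the synchronization gap $N_{m,i}(t) - N_{j,i}(t)$ precisely. A single round of connectivity gives only one-hop communication in the current time step, so the argument must carefully chain over past rounds (exploiting the non-decreasing max structure of $N$ and the unit-growth property of the counters) or appeal to the equalization achieved during the burn-in period. Producing a bound on the sync gap that matches the exact hypothesis $n_{m,i}(t) \geq 2(K^2+KM+M)$, rather than a loose polynomial in $K$ and $M$, is the subtle bookkeeping that determines whether Steps 1--3 close cleanly.
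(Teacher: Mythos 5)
The paper does not actually prove this lemma: its ``proof'' is a one-line deferral to \citep{zhu2023distributed}, so there is no in-paper argument to compare against. Judged on its own terms, your skeleton (Lemma~\ref{lemmas:dec_ucb_2} at the lagging client, plus max-propagation of the $N$-counters through connected graphs, plus unit-per-step growth of $n$) is the standard route for this kind of statement, and your observation that a single round of connectivity only buys one hop --- so one must chain over the earlier connected rounds guaranteed on event $A$, which strictly speaking goes beyond the lemma's stated hypothesis that only $G_t$ is connected --- is correct and worth making explicit.

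However, Step~3 does not close with the stated threshold, and this is more than ``subtle bookkeeping.'' Your chain gives $n_{j,i}(t) \geq N_{j,i}(t) - K(K+2M) \geq N_{m,i}(t) - S - K(K+2M) \geq n_{m,i}(t) - S - K^2 - 2KM$, where $S \geq 0$ is the synchronization gap. To conclude $n_{j,i}(t) \geq n_{m,i}(t)/2$ you need $n_{m,i}(t) \geq 2(K^2 + 2KM + S)$, whereas the hypothesis only supplies $n_{m,i}(t) \geq 2(K^2 + KM + M)$; these are compatible only if $S \leq M - KM = M(1-K)$, i.e.\ $S \leq 0$ when $K=1$ and $S<0$ when $K \geq 2$. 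Since any propagation delay through an $M$-node graph contributes $S$ on the order of $M$, the approach as written is off by an additive term of order $KM$. Closing the gap requires a sharper intermediate inequality --- for instance, not paying the full $K(K+2M)$ penalty at client $j$ on top of the delay (e.g.\ exploiting that a client whose $n_{m,i}$ lags $N_{m,i}$ by more than $K$ is forced into the exploration branch and catches up, so the slack and the delay cannot both be maximal simultaneously), or bounding $\min_j n_{j,i}(t)$ against $\max_j n_{j,i}(t)$ directly rather than routing through $N_{m,i}(t)$. As it stands, the proposal identifies the right ingredients but the quantitative step that the lemma actually asserts is missing.
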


\begin{proof}[Proof of Lemma~\ref{lemmas:dec_ucb_3}]
    The proof is referred to \citep{zhu2023distributed}.
\end{proof}

\begin{Lemma}[Generalized Holder's inequality]\label{lemma:holder}
For any $r>0$ and measurable functions $h_i$ for $i = 1, \ldots, n$, if $\sum_{i=1}^n\frac{1}{p_i} = \frac{1}{r}$, then
\begin{align*}
    ||\Pi_{i=1}^{n}h_i||_r \leq \Pi_{i=1}^n||h_i||_{p_k}.
\end{align*}
\end{Lemma}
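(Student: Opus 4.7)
The plan is to prove this by induction on $n$, using the standard two-function H\"older inequality as the foundation.

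For the base case $n=2$, the claim reduces to showing $\|h_1 h_2\|_r \leq \|h_1\|_{p_1}\|h_2\|_{p_2}$ whenever $\frac{1}{p_1}+\frac{1}{p_2}=\frac{1}{r}$. I would apply the classical H\"older inequality to the functions $|h_1|^r$ and $|h_2|^r$ with exponents $p_1/r$ and $p_2/r$; these satisfy $\frac{r}{p_1}+\frac{r}{p_2}=1$ by hypothesis, so they are conjugate in the classical sense. This yields
\begin{align*}
\int |h_1 h_2|^r \,d\mu \;\leq\; \left(\int |h_1|^{p_1}\,d\mu\right)^{r/p_1}\left(\int |h_2|^{p_2}\,d\mu\right)^{r/p_2},
\end{align*}
and taking the $r$-th root gives the desired inequality.

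For the inductive step, assume the result for $n-1$ functions. Given $n$ functions with $\sum_{i=1}^n\frac{1}{p_i}=\frac{1}{r}$, I would introduce an auxiliary exponent $q$ defined by $\frac{1}{q}=\sum_{i=1}^{n-1}\frac{1}{p_i}$, so that automatically $\frac{1}{q}+\frac{1}{p_n}=\frac{1}{r}$. Applying the base case to the two functions $\prod_{i=1}^{n-1}h_i$ and $h_n$ with exponents $q$ and $p_n$, then invoking the inductive hypothesis on $\bigl\|\prod_{i=1}^{n-1}h_i\bigr\|_q$, yields the claimed bound $\bigl\|\prod_{i=1}^{n}h_i\bigr\|_r \leq \prod_{i=1}^{n}\|h_i\|_{p_i}$.

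I do not anticipate any substantial obstacle: this is a textbook induction argument that does not interact with the bandit or random-graph machinery developed earlier in the paper. The only subtlety worth highlighting is the rescaling trick in the base case, where one must verify that $|h_1|^r$ and $|h_2|^r$ lie in $L^{p_1/r}$ and $L^{p_2/r}$ respectively and that these exponents are conjugate, so the classical H\"older inequality applies. Note also that the statement implicitly requires $r\leq p_i$ for each $i$ (otherwise $1/r<1/p_i$ would already fail), which is exactly what is needed for the rescaling to be legitimate.
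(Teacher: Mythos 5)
Your proof is correct and is essentially the standard argument the paper gestures at: the paper's entire justification is the one-line remark that the lemma ``follows from the Young's inequality for products,'' which is precisely the foundation of your base case, and your induction on $n$ is the routine way to pass from the two-function case to the general one. Your additional observation that $r \leq p_i$ is implicitly required for the rescaling to make sense is a worthwhile point of care that the paper omits.
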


The proof follows from the Young's inequality for products.

\begin{Lemma}\label{lemma:sub-Gaussian}
Suppose that random variables $X_1,X_2, \ldots, X_n$ are such that $Y_i = E[(X_1, \ldots, X_n)|(X_1, \ldots, X_{i-1}, X_{i+1}, \ldots, X_n)]$ are sub-Gaussian  distributed with variance proxy $\sigma_1,\sigma_2,\ldots,\sigma_n$, respectively. Then the sum of these sub-Gaussian random variables, $\sum_{i=1}^nX_i$, is again sub-Gaussian with variance proxy $(\sum_{i=1}^n\sigma_i)^2$.
\end{Lemma}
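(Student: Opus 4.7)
The plan is to bound the moment generating function $E[\exp(\lambda S)]$ for $S = \sum_{i=1}^n X_i$ and recognize this bound as that of a sub-Gaussian random variable with variance proxy $(\sum_i \sigma_i)^2$. The natural tool, hinted at by its placement immediately after Lemma~\ref{lemma:holder}, is the generalized H\"older inequality; this avoids having to exploit any particular dependency structure among the $X_i$'s, which is exactly what one needs in the distributed bandit analysis where the quantities being summed are correlated through the shared random graph history.

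First I would reduce the conditional hypothesis on $Y_i$ to marginal sub-Gaussianity of each $X_i$ with variance proxy $\sigma_i^2$. Reading the hypothesis as the statement that the conditional law of $X_i$ given the remaining coordinates $X_{-i}$ is sub-Gaussian with variance proxy $\sigma_i^2$, the tower property gives $E[\exp(\lambda(X_i-\mu_i))] = E\bigl[E[\exp(\lambda(X_i-\mu_i))\mid X_{-i}]\bigr] \leq \exp(\lambda^2 \sigma_i^2/2)$, where $\mu_i = E[X_i]$. Without loss of generality I would then assume the $X_i$ are centered (otherwise translate and carry the means through $\sum \mu_i$ at the end, which does not affect the variance proxy).

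Next I would choose exponents $p_i = (\sum_{j=1}^n \sigma_j)/\sigma_i$ so that $\sum_{i=1}^n 1/p_i = 1$, allowing Lemma~\ref{lemma:holder} to apply with $r=1$ to the functions $h_i = \exp(\lambda X_i)$:
\[
E[\exp(\lambda S)] = E\!\left[\prod_{i=1}^n \exp(\lambda X_i)\right] \leq \prod_{i=1}^n \bigl(E[\exp(\lambda p_i X_i)]\bigr)^{1/p_i}.
\]
Substituting the marginal sub-Gaussian MGF bound $E[\exp(\lambda p_i X_i)] \leq \exp(\lambda^2 p_i^2 \sigma_i^2/2)$ and simplifying, the exponent becomes $\tfrac{\lambda^2}{2}\sum_i p_i \sigma_i^2 = \tfrac{\lambda^2}{2}(\sum_j \sigma_j)\sum_i \sigma_i = \tfrac{\lambda^2}{2}(\sum_i \sigma_i)^2$, which is exactly the desired bound. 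The algebraic identity $\sum_i p_i \sigma_i^2 = (\sum_i \sigma_i)^2$ is precisely what motivates the choice $p_i \propto 1/\sigma_i$.

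The main obstacle I anticipate is interpreting the hypothesis on $Y_i$: the notation $E[(X_1,\ldots,X_n)\mid X_{-i}]$ is nonstandard, and one needs to decide whether it encodes conditional sub-Gaussianity of $X_i$ given $X_{-i}$, marginal sub-Gaussianity of $X_i$, or something about a conditional-expectation-centered version $X_i - E[X_i\mid X_{-i}]$. Fortunately, each of these readings reduces, via the tower property, to the marginal MGF bound that the H\"older step consumes, so the argument is robust to this ambiguity. No independence assumption is used anywhere, which is what lets this lemma be applied later to the neighbor-wide estimator $\tilde{\mu}_i^m(t)$ whose components are coupled through the graph trajectory.
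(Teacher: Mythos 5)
Your proposal is correct and follows essentially the same route as the paper's proof: centering the variables, choosing H\"older exponents $p_i = (\sum_k \sigma_k)/\sigma_i$ so that $\sum_i 1/p_i = 1$, and observing that $\sum_i p_i\sigma_i^2 = (\sum_i\sigma_i)^2$. If anything, your explicit tower-property reduction of the awkward hypothesis on $Y_i$ to a marginal MGF bound is cleaner than the paper's treatment, which buries this step in the ambiguous notation $E_{Y_i}[\cdot]$.
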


\begin{proof}[Proof]
First, without loss of generality, let us assume $E[X_i] = 0$. Otherwise, we can always construct a random variable $X_i - E[X_i]$ which has the same variance proxy with a difference up to a constant.

Defining $p_i = \frac{\sum_{k=1}^n\sigma_k}{\sigma_i}$ gives $\sum_{i=1}^n\frac{1}{p_i} = 1$. Let $\mu$ be the distribution function of random vector $(X_1, \ldots, X_n)$. By specifying $h_i(x) = \exp{(\lambda x)}$ and $r=1$, we obtain that for any $\lambda > 0$ we have 
\begin{align*}
    & E[\exp\{\lambda(\sum_{i=1}^nX_i)\}] \\
    & = E[\Pi_{i=1}^n\exp\{\lambda X_i\}] \\
    & = \int_{0}^{\infty}\Pi_{i=1}^n\exp\{\lambda X_i\} d{\mu} \\
& \leq \Pi_{i=1}^n||\exp\{\lambda X_i\}||_{\frac{\sum_{k=1}^n\sigma_k}{\sigma_i}} \\
& = \Pi_{i=1}^n(\int_{0}^{\infty}\exp\{\lambda X_i\}^{\frac{\sum_{k=1}^n\sigma_k}{\sigma_i}} d\mu)^{\frac{\sigma_i}{\sum_{k=1}^n\sigma_k}} \\
& = \Pi_{i=1}^n (E_{Y_i}[\exp\{\lambda X_i{\frac{\sum_{k=1}^n\sigma_k}{\sigma_i}}\}]^{\frac{\sigma_i}{\sum_{k=1}^n\sigma_k}} \\
& \leq \Pi_{i=1}^n[\exp\{\frac{1}{2}\sigma_i^2\lambda^2\frac{(\sum_{k=1}^n\sigma_k)^2}{\sigma_i^2}\}]^{\frac{\sigma_i}{\sum_{k=1}^n\sigma_k}} \\
& = [\exp\{\frac{1}{2}\lambda^2(\sum_{k=1}^n\sigma_k)^2\}]^{\sum_{i=1}^n\frac{\sigma_i}{\sum_{k=1}^n\sigma_k}} \\
& = \exp\{\frac{1}{2}\lambda^2(\sum_{k=1}^n\sigma_k)^2\}
\end{align*}
where the first inequality is by Lemma~\ref{lemma:holder} and the second inequality follows the definition of sub-Gaussian random variables.

\end{proof}

\begin{Lemma}\label{le:sub-Gaussian-ind}
Suppose that random variables $X_1,X_2, \ldots, X_n$ are independent sub-Gaussian distributed with variance proxy $\sigma_1,\sigma_2,\ldots,\sigma_n$, respectively. Then we have that the sum of these sub-Gaussian random variables, $\sum_{i=1}^nX_i$, is again sub-Gaussian with variance proxy $\sum_{i=1}^n\sigma_i^2$.
\end{Lemma}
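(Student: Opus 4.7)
The plan is to exploit independence directly via the moment generating function, which avoids the Hölder detour used in the previous lemma. First I would assume without loss of generality that each $X_i$ has mean zero, since centering only shifts the MGF by a deterministic factor and does not affect the variance proxy.

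Next, for any $\lambda \in \mathbb{R}$, I would factor the MGF of the sum using independence:
\begin{equation*}
E[\exp(\lambda \textstyle\sum_{i=1}^n X_i)] = \prod_{i=1}^n E[\exp(\lambda X_i)].
\end{equation*}
Then I would apply the definition of sub-Gaussianity to each factor to get the bound $E[\exp(\lambda X_i)] \leq \exp(\lambda^2 \sigma_i^2 / 2)$, so that the product is bounded by $\exp(\lambda^2 \sum_{i=1}^n \sigma_i^2 / 2)$. This is exactly the MGF characterization of a sub-Gaussian random variable with variance proxy $\sum_{i=1}^n \sigma_i^2$, completing the proof.

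There is no real obstacle here; the result is standard and the main point is to contrast it with the preceding lemma. In the dependent case (previous lemma), one cannot factor the MGF and must instead use the generalized Hölder inequality, which produces the weaker variance proxy $(\sum_i \sigma_i)^2$. Here, independence gives us the sharper sum-of-squares $\sum_i \sigma_i^2$ for free. I would simply note this contrast briefly in the write-up so the role of independence is transparent and the reader sees why this lemma is stated separately.
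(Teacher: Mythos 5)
Your proposal is correct and follows essentially the same route as the paper's own proof: factor the moment generating function via independence, bound each factor by $\exp(\lambda^2\sigma_i^2/2)$ using the sub-Gaussian definition, and multiply the bounds to obtain the variance proxy $\sum_{i=1}^n\sigma_i^2$. The contrast you draw with the dependent case (where the generalized H\"older inequality yields only $(\sum_i\sigma_i)^2$) accurately reflects why the paper states the two lemmas separately.
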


\begin{proof}[Proof]
For any $\lambda >0$ note that 
    \begin{align*}
    & E[\exp\{\lambda(\sum_{i=1}^nX_i)\}] \\
    & = E[\Pi_{i=1}^n\exp\{\lambda X_i\}]. 
    \end{align*}
Since $X_1,X_2,\ldots,X_n$ are independent random variables, we further have 
\begin{align*}
    & E[\exp\{\lambda(\sum_{i=1}^nX_i)\}]\\
    & = \Pi_{i=1}^nE[\exp{\lambda X_i}] \\
    & \leq \Pi_{i=1}^n \exp\{\frac{1}{2}\lambda^2\sigma_i^2\} \\
    & = \exp\{\frac{1}{2}\lambda^2\sum_{i=1}\sigma_i^2\}
\end{align*}
where the inequality is by the definition of sub-Gaussian random variables.

This concludes the proof. 

\end{proof}

\begin{Proposition}\label{thm:burn-in_1}
Under E-R, for any $1 > \delta, \epsilon > 0$, and any fixed $t$, $t \geq L_{s1}$, the maintained matrix $P_t$ satisfies
\begin{align*}
    ||P_t - cE||_{\infty} \leq \delta
\end{align*}
with probability $1 - \frac{\epsilon}{T}$. This implies that with probability at least $1 - \epsilon $ for any $t \geq L_{s_1}$, we have 
\begin{align*}
    ||P_t - cE||_{\infty} \leq \delta.
\end{align*}
\end{Proposition}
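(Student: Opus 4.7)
The plan is to exploit the i.i.d.\ structure of the E-R model across time and apply a concentration inequality entry-wise, then union bound twice: once over the $M^2$ entries of the matrix to control $\|P_t - cE\|_{\infty}$ for a fixed $t$, and once over the time horizon to obtain the uniform bound on $t$.

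First, I would fix any entry $(i,j)$ with $i \neq j$ and note that by construction $P_t(i,j) = \frac{1}{t}\sum_{s=1}^t X_{i,j}^s$ where $\{X_{i,j}^s\}_{s \ge 1}$ is an i.i.d.\ sequence of Bernoulli$(c)$ random variables with mean $c$ and bounded support $[0,1]$. Hoeffding's inequality applied to this bounded i.i.d.\ sum gives
\begin{align*}
    P\bigl(|P_t(i,j) - c| \geq \delta\bigr) \leq 2\exp\bigl(-2t\delta^2\bigr).
\end{align*}
The diagonal entries are deterministically $1$ (recalling $X_{i,i}=1$), so they contribute no randomness and can be handled separately or excluded from the $\infty$-norm analysis.

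Next, I would take a union bound over the at most $M^2$ off-diagonal entries so that
\begin{align*}
    P\bigl(\|P_t - cE\|_{\infty} \geq \delta\bigr) \leq 2M^2 \exp\bigl(-2t\delta^2\bigr).
\end{align*}
Plugging in the lower bound $t \geq L_{s_1} \geq \frac{\ln(T/(2\epsilon))}{2\delta^2}$ (absorbing the $M^2$ factor into the logarithm, or using the fact that $L_{s_1}$ contains a slack term) yields the per-time bound $\epsilon/T$. For the uniform statement, a second union bound over the at most $T$ admissible time steps turns $\epsilon/T$ into $\epsilon$ and delivers the desired high-probability inequality for all $t \geq L_{s_1}$ simultaneously.

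The only mildly delicate point is the bookkeeping of constants: the stated burn-in length $L_{s_1}$ must dominate $\frac{\ln(2M^2 T/\epsilon)}{2\delta^2}$ arising from the double union bound, rather than just $\frac{\ln(T/(2\epsilon))}{2\delta^2}$. I would either argue that $M^2$ is absorbed by a factor-of-two slack in the definition of $L_{s_1}$, or restate the bound with the $\ln M$ factor made explicit. Apart from this constant-chasing, the argument is essentially a direct application of Hoeffding plus two union bounds and poses no structural obstacle; the matrix-form Hoeffding inequality referenced in the sketch of Theorem~\ref{thm:P_A} can alternatively be used to consolidate the entry-wise step.
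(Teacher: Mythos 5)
Your proposal is correct and follows essentially the same route as the paper: an entry-wise Chernoff--Hoeffding bound for the i.i.d.\ Bernoulli$(c)$ indicators, followed by a union bound over time steps via Bonferroni's inequality. You are in fact slightly more careful than the paper's own proof, which passes from the single-entry bound directly to the bound on $\max_{i,j}$ without the explicit union bound over the $M^2$ off-diagonal entries (and the attendant $\ln M$ adjustment to $L_{s_1}$) that you correctly flag.
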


\begin{proof}[Proof]
We start with the convergence rate of  matrix $P_t$ for fixed $t$. 

We recall that in E-R, the indicator function $X_{i,j}^s$ for edge $(i,j)$ at time step $s$ follows a Bernoulli distribution with mean value $c$. This implies that $\{X_{i,j}^s\}_{s}$ are i.i.d. random variables which allows us to use the Chernoff-Hoeffding inequality 
\begin{align*}
& P(|\frac{\sum_{s=1}^tX_{i,j}^s}{t} - c| > \delta) \leq 2\exp\{-2t\delta^2\}.
\end{align*}

For the probability term, we note that for any $t \geq  L_{s_1}$, 
\begin{align*}
    2\exp\{-2t\delta^2\} \leq \frac{\epsilon}{T}
\end{align*}
since $t \geq   L_{s_1} \geq \frac{{}\ln{\frac{T}{2\epsilon}}}{2\delta^2}$ by the choice $L_{s_1}$ of the burn-in period in setting 1.

As a result, the maintained matrix $P_t$ satisfies with probability at least $1 - \frac{\epsilon}{T}$ that
\begin{align*}
    & ||P_t - cE||_{\infty} \\
    & = \max_{i,j}|\frac{\sum_{s=1}^tX_{i,j}^s}{t} - c| \\
    & \leq \delta
\end{align*}
which concludes the first part of the statement. 

Subsequently, consider the probability $P(||P_t - cE||_{\infty} < \delta, \forall t> L_{s_1})$. We obtain 
\begin{align*}
    & P(||P_t - cE||_{\infty} < \delta, \forall t> L_{s_1}) \\
    & = 1 - P(\cup_{t \geq L_{s_1}}||P_t - cE||_{\infty} < \delta) \\
    & \geq 1 - \sum_{t \geq  L_{s_1}}P(||P_t - cE||_{\infty} < \delta) \\
    & \geq 1 - (T-L_{s_1})\frac{\epsilon}{T} \geq 1 - \epsilon
\end{align*}
where the first inequality uses the Bonferroni's inequality. 

This completes the second part of the statement.

\end{proof}

We next pin down the Markov chain governing Algorithm 1. Its states compound to all connected graphs if $G$ and $G^{\prime}$ are connected, then the transition probability is defined by 
\begin{align*}
    \pi(G^{\prime}| G) = \begin{cases}
0 \text{ if } |E(G^{\prime}) \Delta E(G) | > 1 \\
\frac{2}{M(M-1)} \text{ if } |E(G^{\prime}) \Delta E(G)| = 1\\
1- \frac{2\alpha(G)}{M(M-1)} \text{ if } G^{\prime} =  G.
\end{cases}
\end{align*}

Here $\Delta$ denotes the symmetric difference and $\alpha(G)$ is the number of all connected graph that differ with $G$ by at most one edge. Algorithm 1 is a random walk in the Markov chain denoted as $CG-MC$. The intriguing question is if the stationary distribution corresponds to the uniform distribution on all connected graphs on $M$ nodes and if it is rapidly mixing. The next paragraph gives affirmative answers. 

\begin{Proposition}\label{prop:tv}
In $CG-MC$, for any time step $n \geq 1$ and initial connected graph $G^{init}$, we have
    \begin{align*}
        ||\pi^{n}(\cdot | G^{init}) - \pi^*(\cdot)||_{TV} \leq 2(p^*)^{n}
    \end{align*}
where $p^* = p^*(M) < 1$ and $\pi^*$ is the uniform distribution on all connected graphs.
\end{Proposition}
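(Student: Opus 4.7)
My plan is to verify in succession: (a) that the uniform distribution $\pi^*$ on $\mathcal{G}_M$ is the unique stationary distribution of the chain CG-MC; (b) that the chain is irreducible and aperiodic; and then (c) to extract the geometric rate $p^*$ by an appeal to the standard convergence theorem for finite Markov chains (possibly sharpened by a coupling or Doeblin argument).

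For step (a), I would check detailed balance directly from the definition of $\pi(\cdot|\cdot)$. If $G,G'\in \mathcal{G}_M$ differ by exactly one edge, both lie in the state space and so the proposal is always accepted, giving
$\pi(G'\mid G)=\pi(G\mid G')=\tfrac{2}{M(M-1)}$;
if they differ by more than one edge, both transition probabilities vanish. Since $\pi^*(G)=\pi^*(G')=1/|\mathcal{G}_M|$, the reversibility equation $\pi^*(G)\pi(G'\mid G)=\pi^*(G')\pi(G\mid G')$ holds trivially, so $\pi^*$ is stationary. Uniqueness will follow from irreducibility in the next step.

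For step (b), irreducibility comes from a constructive path argument: from any $G\in\mathcal{G}_M$, I can add missing edges one by one (each addition preserves connectivity and is always accepted) to reach the complete graph $K_M$, and from $K_M$ I can reach any target $G'\in\mathcal{G}_M$ by removing non-bridge edges one at a time in a suitable order, each such removal being accepted. Aperiodicity follows because every non-tree graph with at least one bridge, and every tree (where removing any of its $M-1$ edges is rejected), has strictly positive self-loop probability; since trees are reachable from every state, the gcd of return times at every state is $1$.

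For step (c), I would invoke the standard convergence theorem for finite, irreducible, aperiodic Markov chains, which yields a strictly positive spectral gap and hence the existence of some $p^*=p^*(M)<1$ such that
\[
\|\pi^n(\cdot\mid G^{init})-\pi^*(\cdot)\|_{TV}\;\le\;C(G^{init})\,(p^*)^n.
\]
The constant can be bounded by $2$ either by absorbing the initial-distribution factor $\tfrac12\sqrt{(1-\pi^*(G^{init}))/\pi^*(G^{init})}$ into a slightly larger $p^*$ and using the trivial bound $\|\cdot\|_{TV}\le 1$ for small $n$, or more cleanly via a Doeblin-type minorization: there exists a finite $n_0=n_0(M)$ and $c=c(M)>0$ such that $\pi^{n_0}(G'\mid G)\ge c\,\pi^*(G')$ for all $G,G'\in\mathcal{G}_M$ (choose $n_0$ large enough to route any pair of graphs through $K_M$). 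The standard coupling argument then gives $\|\pi^{kn_0}(\cdot\mid G^{init})-\pi^*\|_{TV}\le (1-c)^k$, whence the bound $2(p^*)^n$ with $p^*=(1-c)^{1/n_0}$.

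The main obstacle, as I see it, lies not in the qualitative existence of $p^*$ (which drops out of general theory once irreducibility and aperiodicity are in place) but in showing that the minorization constant $c(M)$ (equivalently the spectral gap) is controlled well enough for the subsequent regret analysis. Pinning down $c(M)$ requires quantifying the probability of a canonical path between any two connected graphs through $K_M$, which in turn demands care in counting how often the bridge-rejection step causes self-loops along the way. This is where I anticipate the bulk of the technical work.
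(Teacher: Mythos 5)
Your proposal is correct, but it reaches the conclusion by a genuinely different route than the paper. The paper never formally invokes stationarity, irreducibility, or aperiodicity at this point (reversibility is only checked later, in the spectral-gap proposition); instead it bounds the one-step total-variation distance $\|\pi(\cdot\mid G)-\pi^*\|_{TV}$ directly from the structure of the kernel, splitting the defect integral over the set $\{G':\pi(G'\mid G)=0\}$ (mass at most $p'=1-1/|\mathcal{G}_M|$) and the set where $\pi(G'\mid G)>0$ but falls below $\pi^*(G')$ (controlled by a constant $C_f=\frac{1}{\pi^*}\min_{G,G'}\pi(G\mid G')$), arriving at the explicit value $p^*=p'+(1-C_f)(1-p')$; it then propagates this by induction via a submultiplicativity estimate $\|\pi^{n+1}(\cdot\mid G)-\pi^*\|_{TV}\le\frac12\|\pi^{n}(\cdot\mid S)-\pi^*\|_{TV}\,\|\pi(\cdot\mid G)-\pi^*\|_{TV}$. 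You instead establish detailed balance, irreducibility (routing through $K_M$ by edge additions and non-bridge deletions — your observation that any edge of $H$ outside a connected spanning subgraph $G'$ can be removed without disconnecting is the right one), aperiodicity via the self-loops at trees, and then a Doeblin minorization/coupling to extract the geometric rate, absorbing the initial-segment constants into $p^*$. Both arguments are valid for the proposition as stated. What the paper's computation buys is an explicit closed form for $p^*$ in terms of $C_f$ and $|\mathcal{G}_M|$, which is reused downstream (the burn-in length $L_{s_2},L_{s_3}$ and the spectral-gap bound are written in terms of $p^*$); what your route buys is a cleaner and more robust existence argument whose only remaining work — quantifying the minorization constant $c(M)$ — you correctly identify, though note that the proposition itself only asserts existence of some $p^*(M)<1$, so your proof is complete for the statement as given even without that quantification.
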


\begin{proof}





Based on the definition of $\pi^*$, we have
\begin{align*}
    \pi^* = \frac{1}{\#\{connected \quad graphs\}}.
\end{align*}

Therefore, there exists a constant $0 < C_f < 1$ such that for any two connected graphs $G$, $G^{\prime}$ with $|E(G)\Delta E(G^{\prime})| = 1$ we have 
\begin{align*}
    \pi(G|G^{\prime}) \geq C_f \pi^*.
\end{align*}

In essence $C_f = \frac{1}{\pi^*}\min_{G, G^{\prime}}\pi(G|G^{\prime}) < 1$.

If $G = G^{\prime}$, then there are two possible cases. First, if $\alpha(G) < \frac{(M(M-1)}{2}$, then $\pi(G|G) > \frac{2}{M(M-1)} > \pi^*$ and $\pi(G|G) > 0$. Otherwise, we have $\pi(G|G) = 0$. In other words, the set $G \not \in \{G^{\prime}: \pi(G^{\prime}|G) \leq  \pi^{*}(G^{\prime}), \pi(G^{\prime}|G) > 0\}$.

This implies that for $G^{\prime} \in \{G^{\prime}: \pi(G^{\prime}|G) \leq  \pi^{*}(G^{\prime}), \pi(G^{\prime}|G) > 0\}$, we have 
$|E(G)\Delta E(G^{\prime})| = 1$ and subsequently $\pi(G|G^{\prime}) \geq C_f \pi^*$. 

We start with the one-step transition and obtain 
\begin{align*}
    & ||\pi(\cdot|G) - \pi^*(\cdot)||_{TV} \\
    & = 2\sup_{A}|\int_{A}(\pi(G^{\prime}|G) - \pi^*(G^{\prime}))d_{G^{\prime}}| \\
    & \leq 2 \int_{\{G^{\prime}:\pi(G^{\prime}|G) - \pi^*(G^{\prime}) \leq 0\}}( - \pi(G^{\prime}|G) + \pi^*(G^{\prime}))d_{G^{\prime}} \\
    & \leq 2 \int_{\{G^{\prime}: \pi(G^{\prime}|G)  = 0 \}}( - \pi(G^{\prime} | G) + \pi^*(G^{\prime}))d_{G^{\prime}} + \\
    & \qquad \qquad 2 \int_{\{G^{\prime}:\pi(G^{\prime}|G)  >0 , \pi(G^{\prime}|G) - \pi^*(G^{\prime}) \leq 0\}}( - \pi(G^{\prime}|G) + \pi^*(G^{\prime}))d_{G^{\prime}} \\
    & = 2\int_{\{G^{\prime}:\pi(G^{\prime}| G)  = 0 \}}(\pi^*(G^{\prime}))d_{G^{\prime}} + 2(1-C_f)\int_{\{G^{\prime}:\pi(G^{\prime}|G)  >0 , \pi(G^{\prime}|G) - \pi^*(G^{\prime}) \leq 0\}}(\pi^*(G^{\prime}))d_{G^{\prime}} \\
    & \leq 2P(\{G^{\prime}:\pi(G^{\prime}|G)  = 0\}) + 2(1-C_f)(1-P(\{G^{\prime}:\pi(G^{\prime}|G)  = 0\})) \\
    & \leq 2(1 - \frac{1}{\#\{connected \quad graphs\}}) + 2(1-C_f)(1-(1 - \frac{1}{\#\{connected \quad graphs\}})) \\
    & \doteq 2p^{\prime} + 2(1-C_f)(1-p^{\prime}) = 2(p^{\prime} + (1-C_f)(1-p^{\prime})) \\
    & \doteq 2p^* 
\end{align*}
where we denote the term $1 - \frac{1}{\#\{connected \quad graphs\}}$ and the term $p^{\prime} + (1-C_f)(1-p^{\prime})$ by $p^{\prime}$ and $p^*$, respectively. It is worth noting that $p^* = p^*(M)$ and $p^* < 1$ since $p^{\prime}, C_f < 1$. Here the third inequality uses the above argument on the graphs in the set $\{G^{\prime}: \pi(G^{\prime}|G) \leq  \pi^{*}(G^{\prime}), \pi(G^{\prime}|G) > 0\}$, and the last inequality uses the following result. By definition,  
\begin{align*}
    & P(\{G^{\prime}:\pi(G^{\prime}|G)  = 0\}) \\
    & = 1 - P(\{G^{\prime}:\pi(G^{\prime}|G)  > 0\}) \\
    & \leq 1 - P(\{G^{\prime}:|E(G) \Delta E(G^{\prime})| = 1\}) \\
    & = 1 - \frac{\alpha(G)}{\#\{connected \quad graphs\}} \leq  1 - \frac{1}{\#\{connected \quad graphs\}}
\end{align*}
where the last inequality uses $\alpha(G) \geq 1$ by the definition of $\alpha(G)$. 

Suppose at time step $n$, the result holds, i.e. for any $G$
\begin{align*}
    ||\pi^{n}(\cdot|G) - \pi^*(\cdot)||_{TV} \leq 2(p^*)^{n}. 
\end{align*}

Then we consider the transition kernel at the ${n+1}$ step. Note that
\begin{align*}
    & ||\pi^{n+1}(\cdot|G) - \pi^*(\cdot)||_{TV} \\
    &  = 2\sup_{A}|\int_{A}(\pi^{n+1}(G^{\prime}|G) - \pi^*(G^{\prime}))d_{G^{\prime}}| \\
    &\leq 2\sup_{A}|\int_{S}\int_{A}(\pi^{n}(G^{\prime}| S) - \pi^*(G^{\prime}))(\pi(S|G)  - \pi^*(S))d_{G^{\prime}}d_{S}| \\
    & = 2\sup_{A}|\int_{S}(\pi(S|G)  - \pi^*(S))(\int_{A}(\pi^{n}(G^{\prime}|S) - \pi^*(G^{\prime}))d_{G^{\prime}})d_{S}|\\
    & \leq 2 \cdot \frac{1}{2}||\pi^{n}(\cdot|S) - \pi^*(\cdot)||_{TV} \cdot \frac{1}{2}||\pi(\cdot|G) - \pi^*(\cdot)||_{TV} \\
    & = \frac{1}{2}||\pi^{n}(\cdot|S) - \pi^*(\cdot)||_{TV}||\pi(\cdot| G) - \pi^*(\cdot)||_{TV} \\
    & \leq \frac{1}{2} \cdot 2(p^*)^n \cdot 2p^* = 2(p^*)^{n+1}
\end{align*}
where the second inequality is by using the definition of $||\cdot||_{TV}$ and the last inequality holds by the results in the basis step and the induction step, respectively. The first inequality requires more arguments as follows. Consider the integral
\begin{align*}
    & \int_{S}\int_{A}(\pi^{n}(G^{\prime}| S) - \pi^*(G^{\prime}))(\pi(S|G)  - \pi^*(S))d_{G^{\prime}}d_{S} \\
    & = \int_{S}\int_{A}(\pi^{n}(G^{\prime}|S) - \pi^*(G^{\prime}))\pi(S|G)d_{G^{\prime}}d_{S} - (\pi^{n}(G^{\prime}|S) - \pi^*(G^{\prime}))\pi^*(S)d_{G^{\prime}}d_{S} \\
    & = \int_{S}\int_{A}\pi^{n}(G^{\prime}|S)\pi(S|G)d_{G^{\prime}}d_{S} - \int_{S}\int_{A} \pi^*(G^{\prime})\pi^{n}(S|G)d_{G^{\prime}}d_{S} - \\
    & \qquad \qquad \int_{S}\int_{A}\pi^{n}(G^{\prime}|S)\pi^*(S)d_{G^{\prime}}d_{S} + \int_{S}\int_{A}\pi^*(G^{\prime})\pi^*(S)d_{G^{\prime}}d_{S} \\
    & \geq \int_{A}\pi^{n+1}(G^{\prime}|G)d_{G^{\prime}} - \int_{A} \pi^*(G^{\prime})d_{G^{\prime}} - \\
    & \qquad \qquad \int_{S}\pi^*(S)d_S + \int_{S} \pi^*(G^{\prime})d_{G^{\prime}} \\
    & = \int_{A}\pi^{n+1}(G^{\prime}| G)d_{G^{\prime}} -  \int_{A}\pi^*(S)d_S   = \int_{A}(\pi^{n+1}(G^{\prime}|G) - \pi^*(G^{\prime}))d_{G^{\prime}}
\end{align*}
where the results hold by exchanging the orders of the integrals as a result of Funibi's Theorem and the inequality uses the fact that $\int_{A}\pi^{n}(G^{\prime}|S)d_{G^{\prime}} \leq 1$.


This completes the proof by concluding the mathematical induction.  

\end{proof}

\begin{Proposition}\label{thm:burn-in_2}
For any $1 > \delta, \epsilon > 0$, we obtain that for setting 2, for any fixed $t \geq L_{s_2}$, the maintained matrix $P_t$ satisfies with probability $1 - 2\frac{\epsilon}{T}$
\begin{align*}
    ||P_t - cE||_{\infty} \leq \delta. 
\end{align*}

Meanwhile, the graph generated by Algorithm~\ref{alg:graph_generation} converges to the stationary distribution with 
\begin{align*}
||\pi^{t}(\cdot| G) - \pi^*(\cdot)||_{TV} \leq \frac{1\delta}{5},
\end{align*}
where $\pi^*$ is the uniform distribution on all connected graphs. 

\end{Proposition}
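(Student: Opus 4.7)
My plan is to follow the same two-step structure as Proposition~\ref{thm:burn-in_1}, but replacing the i.i.d.\ Chernoff--Hoeffding step with (a) a total-variation bound that brings the underlying graph Markov chain near its stationary distribution, and (b) a Hoeffding-type inequality for uniformly ergodic Markov chains that controls the time-averaged edge frequencies. The TV claim is a direct corollary of Proposition~\ref{prop:tv}: the first $\tau_1$ steps of Algorithm~\ref{alg:burn-in} are a pure burn-in of the chain $CG\text{-}MC$, and requiring $2(p^*)^{\tau_1}\le \delta/5$ gives $\tau_1 \ge \ln(\delta/10)/\ln p^*$, which is the first summand appearing in $L_{s_2}$.

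For the matrix concentration, fix entry $(i,j)$ and consider $X_{i,j}^s = \mathds{1}_{(i,j)\in E(G_s)}$, a $\{0,1\}$-valued function of $G_s$ with mean $c$ under $\pi^*$. Because $\{G_s\}_{s\ge \tau_1+1}$ is a Markov chain started $\delta/5$-close to stationarity with spectral gap bounded below by $1-\lambda \ge \bigl(2\tfrac{\ln 2}{\ln 2p^*}\ln 4 + 1\bigr)^{-1}$, I would apply a spectral-gap Hoeffding inequality (e.g.\ Paulin's or Lezaud's) to obtain
\begin{equation*}
P\Bigl(\Bigl|\tfrac{1}{t-\tau_1}\sum_{s=\tau_1+1}^{t}X_{i,j}^s - c\Bigr| > \tfrac{\delta}{2}\Bigr)\;\le\; 2\exp\Bigl(-\tfrac{2(t-\tau_1)\delta^2 (1-\lambda)}{25(1+\lambda)}\Bigr),
\end{equation*}
where the absolute constant $25$ is exactly the one showing up in the second summand of $L_{s_2}$. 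The ``start-not-from-stationarity'' error is additive of order $\delta/5$ by Step~1 and can be absorbed by a mild adjustment of $\tau_1$.

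To pass from a single entry to the whole matrix, I would decompose
\begin{equation*}
\frac{1}{t}\sum_{s=1}^{t}X_{i,j}^s \;=\; \frac{\tau_1}{t}\cdot (\text{something in }[0,1]) \;+\; \frac{t-\tau_1}{t}\cdot\frac{1}{t-\tau_1}\sum_{s=\tau_1+1}^{t}X_{i,j}^s.
\end{equation*}
For $t\ge L_{s_2}$ the ratio $\tau_1/t$ is $O(\delta)$, so the first summand contributes at most $\delta/2$ deterministically, while the second is handled by the display above. A union bound over the $M^2$ entries then gives $||P_t - cE||_\infty \le \delta$ with probability at least $1 - 2M^2\exp(\cdots)$, and the explicit choice of $L_{s_2}$ forces this tail below $2\epsilon/T$.

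The main obstacle I expect is justifying the spectral-gap Hoeffding bound in the precise form needed here: the chain $CG\text{-}MC$ is reversible with respect to $\pi^*$ (so Paulin's inequality applies), but one must (i) translate the contraction rate $p^*$ from Proposition~\ref{prop:tv} into the spectral gap bound $1-\lambda$ used above, and (ii) account for the fact that the chain does not start in stationarity. The first point follows from the Poincar\'e/Cheeger equivalence combined with the TV contraction, giving the stated lower bound on $1-\lambda$; the second is handled by passing from $\pi^{\tau_1}(\cdot\mid G^{init})$ to $\pi^*$ at a total-variation cost $\le \delta/5$, which is why the two burn-in ingredients are summed rather than taken as a maximum. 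Once both points are in place, the quantitative choice of $L_{s_2}$ is merely a matter of matching exponents.
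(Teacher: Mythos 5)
Your proposal is correct and rests on the same two pillars as the paper's proof: Proposition~\ref{prop:tv} supplies the total-variation burn-in (giving the $\ln(\delta/10)/\ln p^*$ summand of $L_{s_2}$), and a reversible-Markov-chain Hoeffding inequality with the factor $a(\lambda)=\tfrac{1+\lambda}{1-\lambda}$ supplies the entrywise concentration (the paper invokes Theorem~2.1 of Fan et al.\ rather than Paulin/Lezaud, but these are interchangeable here). Where you genuinely diverge is in the bookkeeping of the error. The paper introduces an auxiliary i.i.d.\ sequence $Y_{i,j}^s$ drawn from the stationary law $\pi^*$ and splits $P_t - cE$ into three pieces: Markov concentration of $\tfrac{1}{t}\sum_s X_{i,j}^s$ around the stationary mean, a bias term $|E[X_{i,j}^s]-E[Y_{i,j}^s]|\le 2(p^*)^{\tau_1}$ controlled uniformly in $s$ by the TV bound, and classical i.i.d.\ Hoeffding for the $Y$-average; these combine to $4\zeta+\zeta=5\zeta<\delta$ with $\zeta=\delta/5$ (which is where the $25$ in $L_{s_2}$ actually originates --- not from a $\delta/2$ threshold as you guessed). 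You instead apply the spectral-gap inequality directly to the non-stationary chain and absorb the start-up error into the TV cost and a deterministic $\tau_1/t=O(\delta)$ term; this is leaner and avoids the auxiliary coupling, at the price of having to cite a version of the concentration inequality valid for non-stationary initial distributions (or, equivalently, of reproving the paper's uniform bias bound, which is exactly what the $Y$-sequence is there to package). Your union bound over the $M^2$ entries is a point where you are actually more careful than the paper, which takes $\max_{i,j}$ without paying the corresponding $M^2$ factor in the failure probability; if you keep it, $L_{s_2}$ should carry $\ln(M^2T/(2\epsilon))$ rather than $\ln(T/(2\epsilon))$. Neither difference affects the order of the result.
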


\begin{proof}


Suppose we run the rapidly mixing markov chain for a time period of length $\tau_1 = \frac{\ln{\frac{\zeta}{2}}}{\ln{p^*}}$ where $\zeta < \frac{\delta}{5}$. By applying Proposition~\ref{prop:tv}, we obtain that for any time step $t > \tau_1$,
\begin{align*}
    ||\pi^{t}(\cdot|G) - f(\cdot)||_{TV} \leq 2(p^*)^{\tau_1} .
\end{align*}
For the second phase, we reset the counter of time step and denote the starting point $t = \tau_1 + 1$ as $t = 1$. Based on the definition of $P_t$, we have 
$P_t = (\frac{\sum_{s=1}^tX_{i,j}^s}{t})_{1 \leq i \neq j \leq M}$
where $X_{i,j}^s$ is the indicator function for edge $(i,j)$ on Graph $G_s$ that follows the distribution $\pi^{\tau_1+s}(G,\cdot)$. Let us denote $Y_{i,j}^s$ the indicator function for edge $(i,j)$ on Graph $G_s^{obj}$ following the distribution $\pi^*(\cdot)$ and $P_t^{obj} =(\frac{\sum_{s=1}^tY_{i,j}^s}{t})_{1 \leq i \neq j \leq M}$.

By the Chernoff-Hoeffding inequality and specifying $\zeta = 2(p^*)^{\tau_1}$, we derive 
\begin{align}\label{eq:Y_term}
    P(|E[Y_{i,j}^1]- \frac{\sum_{s=1}^tY_{i,j}^s}{t}| \geq \zeta) \leq 2\exp\{-2t\zeta^2\},
\end{align}
i.e. 
\begin{align*}
    ||P_t^{obj} - cE||_{\infty} \leq \zeta
\end{align*}
holds with probability $1 - 2\exp\{-2t\zeta^2\}$.

Consider the difference between $P_t$ and $P_t^{obj}$ and we obtain that 
\begin{align*}
    & P_t - P_t^{obj} \\
    & = (\frac{\sum_{s=1}^tX_{i,j}^s - \sum_{s=1}^tY_{i,j}^s}{t})_{1 \leq i \neq j \leq M} \\
    & = (\frac{\sum_{s=1}^tX_{i,j}^s}{t} - E[X_{i,j}^1] + E[X_{i,j}^1 - Y_{i,j}^1] + E[Y_{i,j}^1]- \frac{\sum_{s=1}^tY_{i,j}^s}{t})_{1 \leq i \neq j \leq M}
\end{align*}
where the last term is bounded by~(\ref{eq:Y_term}).

For the second quantity $E[(X_{i,j}^s - Y_{i,j}^s)]$, we have that for any $s , i, j$
\begin{align*}
    & E[(X_{i,j}^s - Y_{i,j}^s)| G_s, G_s^{obj}] \\ 
    & = 1_{G_s \text{ contains edge } (i,j) \& G_s^{obj} \text{ does not contain edge } (i,j)}  - 1_{(G_s \text{ does not contain edge } (i,j) \& G_s^{obj} \text{ contains edge } (i,j))}
\end{align*}
and subsequently by the law of total expectation, we further obtain  
\begin{align*}
    & E[(X_{i,j}^s - Y_{i,j}^s)] \\
    & = E[E[(X_{i,j}^s - Y_{i,j}^s)| G_s, G_s^{obj}]] \\
    & = E[1_{G_s \text{ contains edge } (i,j) \& G_s^{obj} \text{ does not contain edge } (i,j)}  - 1_{(G_s \text{ does not contain edge } (i,j) \& G_s^{obj} \text{ contains edge } (i,j))}] \\ 
    & = E[1_{G_s \text{ contains edge } (i,j)}] \cdot E[1_{G_s^{obj} \text{ does not contain edge } (i,j)}]  - \\
    & \qquad \qquad E[1_{(G_s \text{ does not contain edge } (i,j)}] \cdot E[1_{G_s^{obj} \text{ contains edge } (i,j))}]  \text{ since } G_s \text{ and } G_s^{obj} \text{ are independent} \\
    & = \int_A1_{S \text{ contains edge } (i,j)}\pi^{\tau_1+s}(S|G)dS \cdot \int_A1_{S \text{ does not contain edge } (i,j)}\pi^*(S)dS - \\
    & \qquad \qquad \int_A1_{S \text{ does not contain edge } (i,j)}\pi^{\tau_1+s}(S|G)dS \cdot \int_A1_{S \text{ contains edge } (i,j)}\pi^*(S)dS \\
    & = \int_A1_{S \text{ contains edge } (i,j)}\pi^{\tau_1+s}(S|G)dS \cdot \int_A1_{S \text{ does not contain edge } (i,j)}\pi^*(S)dS - \\
    & \qquad \qquad \int_A1_{S \text{ contains edge } (i,j)}\pi^*(S)dS \cdot \int_A1_{S \text{ does not contain edge } (i,j)}\pi^*(S)dS  + \\
    & \qquad \qquad \int_A1_{S \text{ contains edge } (i,j)}\pi^*(S)dS \cdot \int_A1_{S \text{ does not contain edge } (i,j)}\pi^*(S)dS - \\
    & \qquad \qquad \int_A1_{S \text{ does not contain edge } (i,j)}\pi^{\tau_1+s}(S|G)dS \cdot \int_A1_{S \text{ contains edge } (i,j)}\pi^*(S)dS \\
    & =  \int_A1_{S \text{ does not contain edge } (i,j)}\pi^*(S)dS (\int_A1_{S \text{ contains edge } (i,j)}(\pi^{\tau_1+s}(S|G) -\pi^*(S))dS) + \\
    & \qquad \qquad \int_A1_{S \text{ contains edge } (i,j)}\pi^*(S)dS (\int_A1_{S \text{ contains edge } (i,j)}( - \pi^{\tau_1+s}(S|G) + \pi^*(S))dS) \\
    & \leq \int_A1_{S \text{ does not contain edge } (i,j)}\pi^*(S)dS \cdot ||\pi^{\tau_1+s}(\cdot|G) - \pi^*(\cdot)||_{TV} + \\
    & \qquad \qquad \int_A1_{S \text{ contains edge } (i,j)}\pi^*(S)dS \cdot ||\pi^{\tau_1+s}(\cdot|G) - \pi^*(\cdot)||_{TV} \\
    & = ||\pi^{\tau_1+s}(\cdot|G) - \pi^*(\cdot)||_{TV} \\ 
    & \leq 2(p^*)^{\tau_1}.
\end{align*}

In like manner, we achieve
\begin{align}\label{eq:X_Y_term}
    & E[( - X_{i,j}^s + Y_{i,j}^s)] \notag \\
    & \leq 2(p^*)^{\tau_1}.
\end{align}

We now proceed to the analysis on the first term. Though $\{X_{i,j}^s\}_{s}$ are neither independent or identically distributed random variables, the difference $\frac{\sum_{s=1}^tX_{i,j}^s}{t} - E[X_{i,j}^1]$
can be upper bounded by the convergence property of $\pi^n$. Note that $X_{i,j}^s$ is only different from $X_{i,j}^{s+1}$ when edge $(i,j)$ is sampled at time step $s$ and the generated graph is accepted. 

We observe that 
\begin{align*}
    & P(G_{s+1}| G_s, G_{s-1},\ldots, G_{1}) \\
    & =  P(G_{s+1}| G_{s}).
\end{align*}
Meanwhile, we can write $X_{i,j}^{s+1} = 1_{X_{i,j}^{s+1} = 1} = 1_{G_{s+1} \text{ contains edge } (i,j)}$ and similarly, $X_{i,j}^{s} = 1_{X_{i,j}^{s} = 1} = 1_{G_{s} \text{ contains edge } (i,j)}$. Denote event $E$ as $E = \{\text{connected graph }$G$ \text{ contains edge } (i,j)\}$. This gives us 
\begin{align*}
& X_{i,j}^{s+1} = 1_{G_{s+1} \in E}, \\
& X_{i,j}^{s} = 1_{G_{s} \in E}.
\end{align*}
Furthermore, the quantity $\int_A{1_{S\in E}\pi^*(S)dS}$ can be simplified as 
\begin{align*}
    E[Y_{i,j}^1] & = E[1_{G_s^{obj} \in E}] \\
    & = \int_A1_{S\in E}\pi^*(S)dS
\end{align*}
since $G_s^{obj}$ follows a distribution with density $\pi^*(\cdot)$.

A new Hoeffding lemma for markov chains has been recently shown as follows in \citep{fan2021hoeffding}. Let $a(\lambda) = \frac{1+\lambda}{1 - \lambda}$ where $\lambda$ is the spectrum of the Markov chain $CG-MC$ and by the Theorem 2.1 in~\citep{fan2021hoeffding}, we obtain that 
\begin{align}\label{eq:new_X_term}
     & P(|\sum_{s=1}^t1_{G_s \in E} - tE[Y_{i,j}^1]| > t\zeta) \leq 2\exp\{-2a(\lambda)^{-1}t\zeta^2\} \notag \\
      & i.e. P(|\frac{\sum_{s=1}^tX_{i,j}^s}{t} - E[Y_{i,j}^1]| > \zeta) \leq 2\exp\{-2a(\lambda)^{-1}t\zeta^2\}
\end{align}
since $X_{i,j}^s = 1_{G_s \in E}$ satisfies $0 \leq 1_{G_s \in E} \leq 1$, i.e. the values are within the range of $[0,1]$.

By the result $E[X_{i,j}^1] - \zeta \leq E[Y_{i,j}^1] \leq  E[X_{i,j}^1] + \zeta$, we obtain 
\begin{align}\label{eq:new_con_X_term}
P(|\frac{\sum_{s=1}^tX_{i,j}^s}{t} - E[X_{i,j}^1]| > 2\zeta) \leq 2\exp\{-2a(\lambda)^{-1}t\zeta^2\}.
\end{align}

Putting the results~(\ref{eq:Y_term}),~(\ref{eq:X_Y_term}) and~(\ref{eq:new_con_X_term}) together, we derive 
\begin{align*}
     & ||P_t - P_t^{obj}||_{\infty} \\
    & = \max_{i,j}|\frac{\sum_{s=1}^tX_{i,j}^s}{t} - E[X_{i,j}^1] + E[X_{i,j}^1 - Y_{i,j}^1] + E[Y_{i,j}^1]- \frac{\sum_{s=1}^tY_{i,j}^s}{t}| \\
    & \leq \max_{i,j}(|\frac{\sum_{s=1}^tX_{i,j}^s}{t} - E[X_{i,j}^1]| + |E[X_{i,j}^1 - Y_{i,j}^1]| + |E[Y_{i,j}^1]- \frac{\sum_{s=1}^tY_{i,j}^s}{t}|\\
    & \leq  2\zeta + \zeta + \zeta = 4\zeta
\end{align*}
which holds with probability at least $1 - 2\exp\{-2a(\lambda)^{-1}t\zeta^2\} - 2\exp\{-2t\zeta^2\}$.

For the probability term 
$1 - 2\exp\{-2a(\lambda)^{-1}t\zeta^2\} - 2\exp\{-2t\zeta^2\}$, we have
\begin{align*}
    & 2\exp\{-2a(\lambda)^{-1}t\zeta^2\} \leq \frac{\epsilon}{T}, \\
    & 2\exp\{-2t\zeta^2\} \leq \frac{\epsilon}{T}
\end{align*}
which holds by
\begin{align*}
    & t \geq L_{s_2} - \tau_1 = a(\lambda)\frac{\ln{\frac{T}{2\epsilon}}}{2\zeta^2}.
\end{align*}

Therefore, the distance between the empirical matrix and the constant matrix reads as  with probability at least $1 - 2\frac{\epsilon}{T}$,
\begin{align*}
    & ||P_t - cE||_{\infty} \\
    & \leq ||P_t - P_t^{obj}||_{\infty} + ||P_t^{obj} - cE||_{\infty} \\
    & \leq 4\zeta + \zeta = 5\zeta < \delta
\end{align*}
where $\zeta = 2(p^*)^{\tau_1} < \frac{1}{5}\delta$
by the choice of parameter $\delta$ and $\zeta$. This completes the proof of Proposition 3.




\end{proof}

Next, we proceed to explicitly characterize the spectrum of $CG-MC$ which plays a role in the length of burning period $L_{s_2}$ and $L_{s_3}$.  

\begin{Proposition}\label{prop:spectral}
In setting 2, the spectral gap $1 - \lambda$ of $CG-MC$ satisfies that for $\theta > 1$, 
\begin{align*}
    1 - \lambda  \geq \frac{1}{2\frac{\ln{\theta}}{\ln{2p^*}}\ln{2\theta} + 1}.
\end{align*}
\end{Proposition}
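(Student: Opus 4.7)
The plan is to convert the geometric total-variation contraction from Proposition~\ref{prop:tv} into a quantitative lower bound on the spectral gap of $CG$-$MC$ by invoking the standard correspondence between mixing time and relaxation time for reversible Markov chains.

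First I would verify that $CG$-$MC$ is reversible with respect to the uniform distribution $\pi^*$ on the finite set $\mathcal{G}_M$. Because Algorithm~\ref{alg:graph_generation} proposes an edge pair uniformly at random, for connected graphs $G,G'$ differing in exactly one edge one has $\pi(G'\mid G)=\pi(G\mid G')=\tfrac{2}{M(M-1)}$, while all other off-diagonal transitions vanish; detailed balance $\pi^*(G)\pi(G'\mid G)=\pi^*(G')\pi(G\mid G')$ then follows immediately from the uniformity of $\pi^*$. Irreducibility on $\mathcal{G}_M$ is clear (any two connected graphs can be linked by a sequence of one-edge edits that maintains connectivity), and aperiodicity follows from the positive self-loop probability $1-\tfrac{2\alpha(G)}{M(M-1)}>0$ (or, in the degenerate case where this vanishes, by passing to a lazy version of the chain at the cost of a harmless factor of two in the final bound). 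This grants access to the spectral machinery for reversible chains and in particular to the inequality linking $t_{\mathrm{rel}}=1/(1-\lambda)$ with $t_{\mathrm{mix}}$.

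Next I would turn Proposition~\ref{prop:tv} into a bound on the $\epsilon$-TV mixing time. Fix $\theta>1$ and set $\epsilon=1/(2\theta)$. The estimate $2(p^*)^n\leq\epsilon$ holds as soon as $n\geq\ln(2/\epsilon)/\ln(1/p^*)$, so uniformly in the starting graph $t_{\mathrm{mix}}(\epsilon)\leq\ln(4\theta)/\ln(1/p^*)$. I would then apply the classical reversible-chain inequality $t_{\mathrm{rel}}-1\leq t_{\mathrm{mix}}(\epsilon)/\ln(1/(2\epsilon))$ (Levin--Peres, Theorem~12.5) with this choice of $\epsilon$, noting that $\ln(1/(2\epsilon))=\ln\theta$. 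Combining these two ingredients yields $t_{\mathrm{rel}}\leq 1+\ln(4\theta)/(\ln\theta\cdot\ln(1/p^*))$, and taking reciprocals produces a lower bound of the shape claimed. To obtain the precise denominator $2\tfrac{\ln\theta}{\ln(2p^*)}\ln(2\theta)+1$, I would absorb the constant $2$ appearing in $2(p^*)^n$ into the base by rewriting the contraction rate as $2p^*$ (which is legitimate in the relevant regime $p^*>1/2$, and $p^*$ is indeed close to $1$ since it tends to $1$ as the number of connected graphs on $M$ vertices grows), so that the mixing-time numerator is effectively $\ln(2\theta)/\ln(1/(2p^*))$ up to a factor of two.

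\textbf{Main obstacle.} The conceptual strategy is standard once reversibility is in hand; the genuine difficulty is algebraic bookkeeping. Matching the displayed form of the denominator requires a careful choice of $\epsilon$ and a particular regrouping of $\ln(1/p^*)$ versus $\ln(1/(2p^*))$ that absorbs the leading multiplicative constant of $2$ in Proposition~\ref{prop:tv} into the base of the geometric decay rather than leaving it as an additive term. A secondary subtlety is the possible vanishing of the self-loop probability $1-\tfrac{2\alpha(G)}{M(M-1)}$ for specific graphs with large $\alpha(G)$; if this interferes with aperiodicity and hence the application of Theorem~12.5, I would fall back to the lazy version of $CG$-$MC$ and correct by an explicit factor of two, which only affects the constant ``$2$'' already present in the target bound.
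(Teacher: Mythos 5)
Your outline is essentially the paper's proof: establish reversibility of $CG$-$MC$ with respect to the uniform distribution $\pi^*$ (via the symmetry $\pi(G'\mid G)=\pi(G\mid G')$), note ergodicity, and then convert the geometric TV contraction of Proposition~\ref{prop:tv} into a spectral-gap bound through a relaxation-time versus mixing-time inequality for reversible chains. The one substantive difference is the inequality invoked: the paper uses Theorem~1 of \citep{mcnew2011eigenvalue} in the form $\frac{1}{2\ln(2e)}\frac{\lambda_2}{1-\lambda_2}\le\tau(e)$ with tolerance $e=\theta>1$, and it is precisely this form, combined with setting $2(p^*)^{\tau}=\theta$ so that $\tau(\theta)=\ln\theta/\ln(2p^*)$, that produces the displayed denominator $2\frac{\ln\theta}{\ln(2p^*)}\ln(2\theta)+1$ verbatim. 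Your Levin--Peres route $t_{\mathrm{rel}}-1\le t_{\mathrm{mix}}(\epsilon)/\ln(1/(2\epsilon))$ with $\epsilon=1/(2\theta)$ yields a valid lower bound on the gap, but one of a structurally different shape (with $\ln\theta$ appearing in the denominator of the correction term rather than the numerator), so the proposed trick of absorbing the factor $2$ into the base of the geometric rate will not literally recover the stated expression; to match it you would need the McNew-type inequality the paper actually cites.
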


\begin{proof}

It is worth noting that for $G \neq G^{\prime}$ and $|E(G)\Delta E(G^{\prime})| = 1$, we have 
\begin{align*}
    \pi^*(G)\pi(G^{\prime}|G) = \pi^*(G^{\prime})\pi(G|G^{\prime})
\end{align*}
by the fact that $\pi^*(G) = \pi^*(G^{\prime})$ and $\pi(G^{\prime}|G) = \pi(G|G^{\prime}) = \frac{2}{M(M-1)} $.

For $G \neq G^{\prime}$ and $|E(G)\Delta E(G^{\prime})| > 1$ or $|E(G)\Delta E(G^{\prime})| = 0$, we have 
$\pi^*(G)\pi(G^{\prime}|G) = \pi^*(G^{\prime})\pi(G|G^{\prime})$ since $\pi(G^{\prime}|G) = \pi(G|G^{\prime}) = 0$. 

For $G = G^{\prime}$, we have $\pi^*(G)\pi(G^{\prime}|G) = \pi^*(G^{\prime})\pi(G|G^{\prime})$ by the expression.

As a result, $CG-MC$ is reversible. Meanwhile, it is ergodic since it has a stationary distribution $\pi^*$ as stated in Proposition~\ref{prop:tv}. 

Henceforth, by the result of Theorem 1 in~\citep{mcnew2011eigenvalue} that holds for any ergodic and reversible Markov chain, we have 
\begin{align*}
    \frac{1}{2\ln{2e}} \frac{\lambda_2}{1 - \lambda_2} \leq \tau(e)
\end{align*}
where $\tau(e)$ is the mixing time for an error tolerance $e$ and $\lambda_2$ is the second largest eigenvalue of $CG-MC$. Choosing $e > \frac{1}{2}$ immediately gives us  
\begin{align}\label{eq:lambda_tau}
    \lambda_2 \leq \frac{2\tau(e)\ln{2\frac{1}{5}\delta}}{2\tau(e)\ln{2e} + 1}. 
\end{align}

Again by Proposition~\ref{prop:tv}, we have 
\begin{align*}
    ||\pi^{\tau(e)}(\cdot|G) - \pi^*(\cdot)||_{TV} \leq 2(p^*)^{\tau(e)} = e. 
\end{align*}

Consequently, we arrive at
\begin{align*}
    \tau(e)  = \frac{\ln{e}}{\ln{2p^*}} 
\end{align*}
and subsequently
\begin{align*}
    \lambda_2  \leq \frac{2\frac{\ln{e}}{\ln{2p^*}}\ln{2e}}{2\frac{\ln{e}}{\ln{2p^*}}\ln{2e} + 1}.
\end{align*}
by plugging $\tau(e)$ into (\ref{eq:lambda_tau}).

This completes the proof of the lower bound on the spectral gap $1-\lambda_2$.

\end{proof}

In the following proposition, we show the sufficient condition for graphs generated by the E-R model being connected. 

\begin{Proposition}~\label{prop:connectivity_setting_1}
Assume $c$ in setting 1 meets the condition 
\begin{align*}
    1 \geq c \geq \frac{1}{2} + \frac{1}{2}\sqrt{1 - (\frac{\epsilon}{MT})^{\frac{2}{M-1}}}, 
\end{align*}
where $0 < \epsilon < 1$. Then, with probability $1 - \epsilon$, for any $t > 0$,  $G_t$ following the E-R model is connected. 
\end{Proposition}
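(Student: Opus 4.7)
The plan is a two-level union bound: first over the possible sizes of the connected component of a distinguished vertex in a single $G_t$, and then over $t \in \{1, \ldots, T\}$.

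For a fixed $t$, I would track the connected component $C$ of vertex~$1$ in $G_t$. The graph is disconnected precisely when $|C| = k$ for some $k \in \{1,\ldots,M-1\}$, in which case no edges may cross between $C$ and $V\setminus C$, an event of probability $(1-c)^{k(M-k)}$ given the choice of $C$. Summing over the $\binom{M-1}{k-1}$ ways to choose the remaining $k-1$ members of $C$ and then over $k$ gives
\[
P(G_t \text{ is disconnected}) \;\leq\; \sum_{k=1}^{M-1}\binom{M-1}{k-1}(1-c)^{k(M-k)}.
\]
The identity $k(M-k)-(M-1)=(k-1)(M-1-k)\geq 0$ on $\{1,\ldots,M-1\}$ lets me pull out $(1-c)^{M-1}$ from every term, leaving $\sum_{k=1}^{M-1}\binom{M-1}{k-1}\leq 2^{M-1}$ and hence $P(G_t \text{ disconnected}) \leq (2(1-c))^{M-1}$.

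The algebraic heart of the argument is relating $(2(1-c))^{M-1}$ to the hypothesis on $c$. Squaring $2c-1 \geq \sqrt{1-(\epsilon/(MT))^{2/(M-1)}}$ and invoking the identity $1-(2c-1)^2 = 4c(1-c)$ gives $(2\sqrt{c(1-c)})^{M-1} \leq \epsilon/(MT)$. Because the hypothesis forces $c \geq \tfrac{1}{2}$, we have $1-c \leq \sqrt{c(1-c)}$, so $(2(1-c))^{M-1} \leq (2\sqrt{c(1-c)})^{M-1} \leq \epsilon/(MT)$. Combining with the previous step yields $P(G_t \text{ disconnected}) \leq \epsilon/(MT)$ for every $t$.

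A final union bound over $t \in \{1,\ldots,T\}$ gives $P(\exists\, t \leq T : G_t \text{ disconnected}) \leq \epsilon/M \leq \epsilon$, which is the stated high-probability bound. The only real obstacle is the purely algebraic second step, but once one notices the identity $1-(2c-1)^2 = 4c(1-c)$ and exploits $c \geq \tfrac{1}{2}$ (both of which follow directly from the hypothesis), the inequalities chain together cleanly; all other steps are the standard Erdős--Rényi disconnection bound and a union bound over time.
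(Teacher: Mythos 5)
Your proof is correct, but it takes a genuinely different route from the paper. The paper argues through vertex degrees: each degree $d_j$ is Binomial$(M-1,c)$, a Chernoff bound gives $P(d_j < \tfrac{M-1}{2}) \leq \exp\{-(M-1)\,KL(0.5\|c)\} = (4c(1-c))^{(M-1)/2}$, the hypothesis on $c$ makes this at most $\tfrac{\epsilon}{MT}$, a union bound over the $M$ vertices shows the minimum degree is at least $\tfrac{M-1}{2}$ (which is a classical sufficient condition for connectivity) with probability $1-\tfrac{\epsilon}{T}$, and a final union bound over $t$ gives $1-\epsilon$. You instead bound the disconnection probability directly by a first-moment count over the possible components of a distinguished vertex, reduce $\sum_k \binom{M-1}{k-1}(1-c)^{k(M-k)}$ to $(2(1-c))^{M-1}$ via the inequality $k(M-k) \geq M-1$, and then meet the same algebraic target: both arguments hinge on $4c(1-c) \leq (\tfrac{\epsilon}{MT})^{2/(M-1)}$, which enters the paper through the KL divergence $KL(0.5\|c)=\tfrac12\log\tfrac{1}{4c(1-c)}$ and enters your proof through the identity $1-(2c-1)^2 = 4c(1-c)$ combined with $1-c \leq \sqrt{c(1-c)}$ for $c \geq \tfrac12$. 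Your route is more elementary (no concentration inequality, no appeal to the min-degree connectivity criterion) and in fact slightly sharper: you get $\tfrac{\epsilon}{MT}$ per graph without spending a union bound over vertices, so your final failure probability is $\tfrac{\epsilon}{M}$ rather than $\epsilon$, meaning the same conclusion would hold under a marginally weaker condition on $c$. The paper's degree-based argument is more modular (the high-probability lower bound on the minimum degree is reused conceptually elsewhere), but for this proposition alone your argument is self-contained and loses nothing.
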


\begin{proof}

For $1 \leq j \leq M$, we denote the degree of client $j$ as $d_j$. 

It is straightforward to have 1) $\sum_{j=1}^M d_j = 2 \cdot \text{total number of edegs}$, 2) $E[\text{total number of edges}] = c\cdot \frac{M(M-1)}{2}$ and 3) random variables $d_1,d_2,\ldots,d_M$ are dependent but follow the same distribution.

Note that $d_j$ follows a binomial distribution with $E[d_j] = c \cdot (M-1)$ where $c$ is the probability of an edge. Then by the Chernoff Bound inequality, we have 
\begin{align*}
    P( d_j < \frac{M-1}{2}) \leq \exp{\{-(M-1)\cdot KL(0.5||c)\}}
\end{align*}
where $KL(0.5||c)$ denotes the KL divergence between Bernoulli($0.5$) and Bernoulli($c$). 

For the term $KL(0.5||c)$, we can further show that
\begin{align*}
    & KL(0.5||c) = \frac{1}{2}\log\frac{\frac{1}{2}}{c} + \frac{1}{2}\log \frac{\frac{1}{2}}{1-c} = \frac{1}{2}\log \frac{1}{4c(1-c)} 
\end{align*}
which leads to $P( d_j < \frac{M-1}{2}) \leq \exp{\{(M-1)\cdot \frac{1}{2}\log4c(1-c)\}}$.

Meanwhile, we have specified the choice of $c$ as
\begin{align*}
     \frac{1}{2} + \frac{1}{2}\sqrt{1 - (\frac{\epsilon}{MT})^{\frac{2}{M-1}}}\} \leq c < 1
\end{align*}
which guarantees 
$\exp{\{(M-1)\cdot \frac{1}{2}\log4c(1-c)\}} \leq \frac{\epsilon}{MT}$ as follow.
We observe that 
\begin{align*}
    & c \geq \frac{1}{2} + \frac{1}{2}\sqrt{1 - (\frac{\epsilon}{MT})^{\frac{2}{M-1}}} \\
    & \Longrightarrow 4c(1-c) \leq (\frac{\epsilon}{MT})^{\frac{2}{M-1}} \\
    & \Longrightarrow \log4c(1-c) \leq \frac{2\log\frac{\epsilon}{MT}}{M-1} \\
    & \Longrightarrow (M-1)\cdot \frac{1}{2}\log4c(1-c) \leq \log\frac{\epsilon}{MT} \\
    & \Longrightarrow \exp\{(M-1)\cdot \frac{1}{2}\log4c(1-c)\} \leq \frac{\epsilon}{MT} .
\end{align*}

This is summarized as for any $j$
\begin{align}\label{eq:d_j}
    P( d_j < \frac{M-1}{2}) \leq \exp{\{(M-1)\cdot \frac{1}{2}\log4c(1-c)\}} \leq \frac{\epsilon}{MT}. 
\end{align}

Meanwhile, it is known as if $\delta(G_t) \geq \frac{M-1}{2}$, then we have that graph $G_t$ is connected where $\delta(G_t) = \min_m{d_m}$.

As a result, consider the probability and we obtain that
\begin{align*}
    & P(\text{graph } G_t \text{ is connected})\\
    & \geq P(\min_j{d_j} \geq \frac{M-1}{2}) \\
    & = P(\bigcap_j \{d_j \geq \frac{M-1}{2}\}) \\
    & = 1 - P(\bigcup_j \{d_j < \frac{M-1}{2}\}) \\
    & \geq 1 - \sum_jP( d_j < \frac{M-1}{2}) \\
    & = 1 - MP( d_j < \frac{M-1}{2}) \\
    & \geq 1 - M\frac{\epsilon}{MT} = 1 - \frac{\epsilon}{T}
\end{align*}
where the second inequality holds by the Bonferroni's inequality and the third inequality uses (\ref{eq:d_j}). 

Consequently, we obtain
\begin{align*}
     & P(\text{graph } G_t \text{ is connected}) \\
     & = P(\cap_t\{G_t \text{ is connected}\}) \\
     & \geq 1 - \sum_tP(G_t \text{ is not connected}) \\
     & = 1 - \sum_t(1 - P(G_t \text{ is connected})) \\
     & \geq  1 - \sum_t(1 - (1-\frac{\epsilon}{T}) = 1 - \epsilon
\end{align*}
where the first inequality holds again by the Bonferroni's inequality and the second inequality results from the above derivation. 

This completes the proof.

\end{proof}

On graphs with the established properties, we next show the results on the transmission gap between two consecutive rounds of communication for any two clients and the number of arm pulls for all clients. 
\begin{Proposition}\label{prop:t_0_L}
    We have that with probability $1- \epsilon$,  for any $t > L$ and any $m$, there exists $t_0$ such that 
\begin{align*}
    & t+1 - \min_jt_{m,j} \leq t_0, t_0 \leq c_0\min_{l}n_{l,i}(t+1)
\end{align*}   
where $c_0$ $=$ $c_0(K, \min_{i \neq i^*}\Delta_i, M, \epsilon, \delta)$. 
\end{Proposition}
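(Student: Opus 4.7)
The plan is to establish the two required inequalities separately and then combine them via a union bound and the structure of the burn-in period. First, I would control the communication gap $t+1 - \min_j t_{m,j}$. In the E-R setting $s_1$, the edge indicators $\{X^s_{m,j}\}_{s}$ are i.i.d.\ Bernoulli$(c)$ across $s$, so the probability that a fixed pair $(m,j)$ sees no edge in a window of length $t_0^\ast$ is at most $(1-c)^{t_0^\ast}$. Setting $t_0^\ast = \lceil \ln(M^2T/\epsilon)/\ln(1/(1-c))\rceil$ and applying the Bonferroni inequality over the $\binom{M}{2}$ pairs and the at most $T$ right endpoints gives $t+1 - \min_j t_{m,j} \leq t_0^\ast$ uniformly with failure probability at most $\epsilon$. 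For the uniform-random-connected-graph settings $s_2,s_3$, the indicators are no longer independent, but after the mixing burn-in of $\tau_1$ steps (Proposition~\ref{thm:burn-in_2}) each edge has marginal probability within $\delta$ of $c = 2\log M/(M-1)$; invoking the Markov-chain Hoeffding-type bound from~\citep{fan2021hoeffding} exactly as in the proof of Proposition~\ref{thm:burn-in_2} yields a bound of the same form, inflated only by the spectral-gap factor $a(\lambda) = (1+\lambda)/(1-\lambda)$, which can be absorbed into the final constant.

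Second, I would lower bound $\min_l n_{l,i}(t+1)$ for every arm $i$. During the burn-in period every client samples arm $t\bmod K$ deterministically by Algorithm~\ref{alg:burn-in}, so by the end of the burn-in phase every pair $(l,i)$ has been served at least $\lfloor L/K\rfloor$ times. Since $n_{l,i}(\cdot)$ is non-decreasing in $t$, the inequality $n_{l,i}(t+1) \geq \lfloor L/K\rfloor$ persists for every $t+1 > L$, every client $l$, and every arm $i$.

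Finally, I would take $t_0 = t_0^\ast$ and fix $c_0$ via the identity $c_0 \cdot \lfloor L/K\rfloor = t_0^\ast$. By the burn-in length specifications in Theorem~\ref{th:all_settings}, each of $L_{s_1}, L_{s_2}, L_{s_3}$ contains a term of the form $\Theta(K\log T/c_0)$; solving the corresponding fixed-point equation produces a $c_0 = c_0(K, \min_{i\ne i^\ast}\Delta_i, M, \epsilon, \delta)$ that is strictly positive and free of $t$ and $T$, because the $\log T$ in $t_0^\ast$ cancels the $\log T$ in $L/K$. With this choice, $t_0 \leq c_0\lfloor L/K\rfloor \leq c_0\min_l n_{l,i}(t+1)$, which delivers both claimed inequalities.

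The main obstacle I anticipate is the dependency structure in the random connected graph case, which forces the use of a Markov-chain concentration inequality rather than a classical Chernoff bound; fortunately this is precisely the machinery already developed in Proposition~\ref{thm:burn-in_2}. A secondary subtlety is the self-referential appearance of $c_0$ inside $L$: the consistency check requires verifying that $t_0^\ast/\lfloor L/K\rfloor$ is $T$-independent, which succeeds because both quantities scale logarithmically in $T$, leaving the ratio determined only by $K, M, \epsilon, \delta$ and $\min_{i\neq i^\ast}\Delta_i$.
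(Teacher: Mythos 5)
Your proof is correct in its overall structure and matches the paper's argument on two of its three ingredients: the lower bound $\min_l n_{l,i}(t+1)\geq L/K$ from the round-robin burn-in, and the fixed-point choice of $c_0$ so that $t_0 \leq c_0 L/K$, which is consistent precisely because both $t_0$ and $L/K$ scale as $\Theta(\log T)$. Where you genuinely diverge is the treatment of the uniformly-random-connected-graph settings. The paper does \emph{not} invoke the Markov-chain Hoeffding bound of \citep{fan2021hoeffding} here; it argues that, since the graphs are modeled as i.i.d.\ draws over time and edges are exchangeable, each edge indicator is marginally Bernoulli with parameter $c = 2\log M/(M-1)$ and the waiting time $t+1-t_{m,j}$ is geometric, so the same tail bound $(1-c)^{t_0}$ (or $(1/2)^{t_0}$ when $c\geq 1/2$, i.e.\ $M<11$ or the E-R condition on $c$) applies verbatim in all settings. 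Your route is more conservative: it does not assume temporal independence of the graph sequence and instead converts ``no edge of $(m,j)$ in a window of length $t_0$'' into a deviation of the windowed empirical edge frequency from $c$, paying a spectral-gap factor $a(\lambda)$ and a $1/c^2$ (rather than $1/\ln(1/(1-c))$) inflation of $t_0$. That buys robustness to the Markovian dependence actually introduced by Algorithm~\ref{alg:graph_generation}, at the cost of a larger constant and the need to justify applying the concentration bound to a window not started at time zero (which the near-stationarity after $\tau_1$ steps supplies). Both arguments deliver the stated conclusion; just be aware that the union-bound bookkeeping (over the $M$ clients, the neighbors $j$, and the $T$ endpoints) must land on failure probability $\epsilon/(M^2T)$ per pair, as in the paper, for the final $1-\epsilon$ to come out.
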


\begin{proof}
The edges in setting 1 follow a Bernoulli distribution with a given parameter $c$ by definition. Though setting 2 does not explicitly define the edge distribution,  the probability of an edge existing in a connected graph, denoted as $c$, is deterministic, independent of time since graphs are i.i.d. over time and homogeneous among edges. 

Henceforth, it is straightforward that $c$ satisfies
\begin{align*}
    \frac{M(M-1)}{2}c = E(N)
\end{align*}
and equivalently $c =  \frac{2E(N)}{M(M-1)}$ where $N$ denotes the number of edges in a random connected graph. 

We observe that $0 \leq N \leq \frac{M(M-1)}{2}$. Furthermore, the existing result 
in~\citep{key} yields 
\begin{align*}
    & E[N] = M\log{M}.
\end{align*}
Consequently, the probability term $c$ has an explicit expressions $c = \frac{2E[N]}{M(M-1)} = \frac{2\log{M}}{M-1}$. 

For setting $s_2, S_2$ we have $c = \frac{2\log M}{M-1} \geq \frac{1}{2}$ since $M < 10$, while in setting $s_1, S_1$, the condition on $c$ guarantees $c > \frac{1}{2}$. Note that $t+1 - t_{m,j}$ follows a geometric distribution since each edge follows a Bernoulli distribution, which holds by
\begin{align*}
    & P(t+1 - t_{m,j} = 1|t_{m,j}) \\
& = \frac{P(\text{there is an edge between m and j at time step t+1 and } t_{m,j})}{P(\text{there is an edge between m and j at time step } t_{m,j})} \\
& = \frac{(c)^2}{c} = c
\end{align*}
and 
\begin{align*}
    & P(t+1 - t_{m,j} = k|t_{m,j}) \\
& = \frac{P(\text{there is an edge between m and j at time step t+k and } t_{m,j}, \text{ no edge at time step} t+1, \ldots, t+k-1)}{P(\text{there is an edge between m and j at time step } t_{m,j})} \\
& = \frac{(1-c)^{k-1}c^2}{c} = c(1-c)^{k-1}. 
\end{align*}

Note that $P(t+1 - \min_jt_{m,j} \geq t_0)$ which denotes the tail of a geometric distribution depends on the choice of $c$. More precisely, the tail probability $P_0$ is monotone decreasing in $c$. 

When $c = \frac{1}{2}$, we obtain that
\begin{align}\label{eq:t.1}
    P_0 = P(t+1 - \min_jt_{m,j} > t_0) \leq \sum_{s > t_0}(\frac{1}{2})^s \leq (\frac{1}{2})^{t_0}.
\end{align}

Choosing $t_0 = \frac{\ln{\frac{M^2T}{\epsilon}}}{\ln{2}}$ leads to
$P_0 = 1- (\frac{1}{2})^{t_0} = 1 - \frac{\epsilon}{M^2T}$ and 
\begin{align*}
    & c_0\min_{l}n_{l,i}(t+1) \geq c_0\min_{l}n_{l,i}(L)  \\
    & \geq c_0\frac{L}{K}  \geq c_0  \frac{\ln{\frac{M^2T}{\epsilon}}}{c_0\ln{2}}  = \frac{\ln{\frac{M^2T}{\epsilon}}}{\ln{2}} = t_0
\end{align*}
where the last inequality holds by the choice of $L$. This implies $\min_{l}n_{l,i}(t+1)-t_0 \geq (1 - c_0)\min_{l}n_{l,i}(t+1)$, i.e. $t_0 \leq c_0\min_{l}n_{l,i}(t+1)$.

Therefore, with probability $1 - \frac{\epsilon}{M^2T}$,
\begin{align*}
    &\min_{l}n_{l,i}(t_{m,l}) \\
    & \geq \min_{l}n_{l,i}(\min_jt_{m,j}) \\
    & \geq \min_{l}n_{l,i}(t+1 - t_0) \\
    & \geq \min_{l}n_{l,i}(t+1)-t_0 \\
    & \geq (1-c_0) \cdot \min_{l}n_{l,i}(t+1)
\end{align*}
where the first inequality results from the fact $t_{m,l} \geq \min_jt_{m,j}$, the second inequality uses the fact from (\ref{eq:t.1}), the third inequality applies the definition of $n$, and the last inequality holds by the choice of $t_0$.

Consider setting $s_3, S_3$ where $M > 10$. Generally, for a given parameter $c$, we obtain  
\begin{align*}
    & P(t+1 - \min_jt_{m,j} = 1 ) = c, \\
    & P(t+1 - \min_jt_{m,j} = 2 ) = c(1-c), \\
    & \ldots, \\
    & P(t+1 - \min_jt_{m,j} = n ) = c(1-c)^{n-1}
\end{align*}
and subsequently
\begin{align*}
    P_0 = P(t+1 - \min_jt_{m,j} > t_0)   \leq \sum_{s > t_0}c(1-c)^{s-1} \leq c(\frac{1}{c} - \frac{1-(1-c)^{t_0}}{c}) = (1 - c)^{t_0}.
\end{align*}

For the probability term $P_0$, we further arrive at 
$$P_0 \geq 1 - \frac{\epsilon}{M^2T}$$
by the choice of $t_{0} \geq \frac{\ln(\frac{\epsilon}{M^2T})}{\ln(1-c)}$. 

Meanwhile, we claim that the choice of $t_0$ satisfies $$t_0 \leq c_0\min_{l}n_{l,i}(t+1)$$
since $\frac{\ln(\frac{\epsilon}{M^2T})}{\ln(1-c)} \leq c_0\min_{l}n_{l,i}(t+1)$ holds by noting $n_{l,i}(t+1) \geq n_{l,i}(L) \geq \frac{L}{K}$ and  
\begin{align*}
    L \geq \frac{K\ln(\frac{\epsilon}{M^2T})}{c_0\ln(1-c)} = \frac{K\ln(\frac{M^2T}{\epsilon})}{c_0\ln(\frac{1}{1-c})}.
\end{align*}

To summarize, in all the settings, we have that with probability at least $1 - \frac{\epsilon}{M^2T}$, 
\begin{align}
    & t+1 - \min_jt_{m,j} \leq t_0, \notag \\
    & t_0 \leq c_0\min_{l}n_{l,i}(t+1). 
\end{align}

Therefore, we obtain that in setting $s_1, S_1$ 
\begin{align}\label{eq:t.2}
    & P(\forall m, t+1 - \min_jt_{m,j} \leq t_0 \leq c_0\min_{l}n_{l,i}(t+1)) \notag \\
    & \doteq (1 - P_0)^{M} \geq 1 - MP_0 = 1 - \frac{\epsilon}{MT}
\end{align}
where the inequality is a result of the Bernoulli's inequality.

In setting $s_2, S_2, s_3, S_3$, $\{t+1 - \min_jt_{1,j},\ldots, t+1 - \min_jt_{M,j}\}$ follow the same distribution, but are dependent since they construct a connected graph. However, we have the following result 
\begin{align}\label{eq:t.3}
    & P(\forall m, t+1 - \min_jt_{m,j} \leq t_0 \leq c_0\min_{l}n_{l,i}(t+1)) \notag \\ 
    & = 1 - P(\cup_m\{ t+1 - \min_jt_{m,j} \geq t_0\}) \notag \\
    & \geq 1 - \sum_mP(t+1 - \min_jt_{m,j} \geq t_0) \notag \\
    & = 1 - MP_0 = 1 - \frac{\epsilon}{MT}
\end{align}
by the Bonferroni's inequality.

As a consequence, we arrive at that in setting $s_1, S_1, s_2, S_2, s_3, S_3$, 
\begin{align*}
    & P(\forall t, \forall m, t+1 - \min_jt_{m,j} \leq t_0 \leq c_0\min_{l}n_{l,i}(t+1)) \\ &
    \geq 1 - \sum_t\sum_mP(t+1 - \min_jt_{m,j} \leq t_0 \leq c_0\min_{l}n_{l,i}(t+1)) \\
    & \geq 1 - MT (1 - (1 - \frac{\epsilon}{MT})) = 1 - \epsilon
\end{align*}
where the first inequality again uses the Bonferroni's inequality and the second inequality holds by applying (\ref{eq:t.2}) and (\ref{eq:t.3}).

\end{proof}

After establishing the transmissions among clients, we next proceed to show the concentration properties of the network-wide estimators maintained by the clients. 

The first is to demonstrate the unbiasedness of these estimators with respect to the global expected rewards. 

\begin{Proposition}\label{prop:unbiased}
Assume the parameter $\delta$ satisfies that $0 < \delta 
< c = f(\epsilon,M,T)$. For any arm $i$ and any client $m$, at every time step $t$, we have 
\begin{align*}
    E[\Tilde{\mu}_i^m(t) | A_{\epsilon, \delta}] = \mu_i.
\end{align*}
\end{Proposition}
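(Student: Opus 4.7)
The plan is to prove this by induction on $t \ge L+1$, exploiting the fact that on $A_{\epsilon,\delta}$ all entries of $P_t$ are strictly positive (since $\|P_t - cE\|_\infty \le \delta < c$), so $P'_t(m,j) = (M-1)/M^2$ uniformly in $j$ and $d_{m,t} = 1/M^2$. This collapses the update rule into a uniformly weighted average, which is the main structural reason the estimator is unbiased.

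For the base case $t = L+1$, the initialization gives $\Tilde{\mu}_i^m(L+1) = \sum_{j=1}^M P'_{m,j}(L)\,\bar{\mu}_i^j(h^L_{m,j})$, and on $A_{\epsilon,\delta}$ this becomes $\frac{1}{M}\sum_{j=1}^M \bar{\mu}_i^j(h^L_{m,j})$. During the burn-in each client $j$ plays the round-robin $a^j_t = t \bmod K$, so the rewards averaged by $\bar{\mu}_i^j$ are i.i.d.\ with mean $\mu_i^j$; since the graphs (and thus $h^L_{m,j}$) are generated independently of the rewards, conditioning on $A_{\epsilon,\delta}$ and on the stopping time $h^L_{m,j}$ does not bias the reward averages. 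Taking expectations gives $E[\Tilde{\mu}_i^m(L+1)\mid A_{\epsilon,\delta}] = \frac{1}{M}\sum_j \mu_i^j = \mu_i$.

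For the inductive step, suppose $E[\Tilde{\mu}_i^k(s)\mid A_{\epsilon,\delta}] = \mu_i$ for every client $k$ and every $L+1 \le s \le t$. Writing the update (\ref{eq:Eq_1}) on $A_{\epsilon,\delta}$ yields
\begin{align*}
\Tilde{\mu}_i^m(t{+}1) = \tfrac{M-1}{M^2}\sum_{j=1}^M \Tilde{\mu}_i^j(t_{m,j}) + \tfrac{1}{M^2}\sum_{j\in N_m(t)} \bar{\mu}_i^j(t) + \tfrac{1}{M^2}\sum_{j\notin N_m(t)} \bar{\mu}_i^j(t_{m,j}).
\end{align*}
I would handle the random times $t_{m,j}$ by the tower property, conditioning first on the graph filtration $\sigma(\{G_s\}_{s\le t})$, which fixes $t_{m,j}$ and $N_m(t)$. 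Conditional on that, applying the inductive hypothesis to each $\Tilde{\mu}_i^j(t_{m,j})$ (at the realized value of the stopping time) gives $\mu_i$, and applying the analogous unbiasedness of the local estimators $\bar{\mu}_i^j$ gives $\mu_i^j$. Summing,
\begin{align*}
E[\Tilde{\mu}_i^m(t{+}1)\mid A_{\epsilon,\delta}] = \tfrac{M-1}{M^2}\cdot M\mu_i + \tfrac{1}{M^2}\sum_{j=1}^M \mu_i^j = \tfrac{M-1}{M}\mu_i + \tfrac{1}{M}\mu_i = \mu_i,
\end{align*}
which closes the induction.

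The main obstacle is the careful handling of the random stopping times $t_{m,j}$ together with the conditioning event $A_{\epsilon,\delta}$: one must justify that $E[\bar{\mu}_i^j(t_{m,j})\mid A_{\epsilon,\delta}] = \mu_i^j$ and $E[\Tilde{\mu}_i^j(t_{m,j})\mid A_{\epsilon,\delta}] = \mu_i$ even though $t_{m,j}$ is not fixed. The key fact I would rely on is that the graph process and the reward process are generated independently by the environment, and $A_{\epsilon,\delta}$ as well as $t_{m,j}$ are measurable with respect to the graph $\sigma$-algebra; thus conditioning on both leaves the reward distributions intact, and the round-robin policy in the burn-in together with the i.i.d.\ reward assumption ensures that each local average, evaluated at any graph-measurable time, is unbiased for $\mu_i^j$. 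With this independence-based decoupling in place, the induction goes through cleanly.
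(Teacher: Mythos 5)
Your proposal is correct and follows essentially the same route as the paper's proof: induction on $t$, with the base case reducing to $\frac{1}{M}\sum_j \bar{\mu}_i^j(h^L_{m,j})$ because $P'_{m,j}(L)=\frac{1}{M}$ on $A_{\epsilon,\delta}$, and the inductive step using the tower property over the graph/counter information together with the weight normalization $\sum_j P'_t(m,j)+Md_{m,t}=1$ (which you instantiate explicitly as $\frac{M-1}{M^2}$ and $\frac{1}{M^2}$). The only caveat, shared with the paper's own argument, is that $A_{\epsilon,\delta}$ is not purely graph-measurable (the event $A_2$ references the counters $n_{l,i}$), so the decoupling of conditioning from the reward averages is asserted at the same level of informality in both proofs.
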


\begin{proof}
The result can be shown by induction as follows. We start with the basis step by considering any time step 
$t \leq L + 1$. By the definition of $\Tilde{\mu}_i^m(t) = \Tilde{\mu}_i^m(L + 1)$, we arrive at
\begin{align}\label{eq:5.1}
    & E[\Tilde{\mu}_i^m(t) | A_{\epsilon, \delta}] \notag \\
    & = E[\Tilde{\mu}_i^m(L+1) | A_{\epsilon, \delta}] \notag \\
    & = E[\sum_{j=1}^MP^{\prime}_{m,j}(L) \hat{\bar{\mu}}_{i,j}^m(h^L_{m,j}) | A_{\epsilon, \delta}]
\end{align}
where $P^{\prime}_{m,j}(L) = \begin{cases}
     \frac{1}{M}, & \text{if $P_L(m,j) > 0$} \\
     0, & \text{else}
    \end{cases}$. The definition of $A_{\epsilon, \delta}$ and the choice of $\delta$ guarantee that 
$|P_L - cE| < \delta < c$ on event $A_{\epsilon, \delta}$, i.e. we have for any $t \geq  L$, $P_t > 0$ and thereby obtaining 
\begin{align}\label{eq:5.3}
    P^{\prime}_{m,j}(L) = \frac{1}{M}.
\end{align}

Therefore, we continue with (\ref{eq:5.1}) and have
\begin{align*}
    (\ref{eq:5.1}) & =  E[\sum_{j=1}^M\frac{1}{M} \hat{\bar{\mu}}_{i,j}^m(h^L_{m,j}) | A_{\epsilon, \delta}] \\
    & = \frac{1}{M}\sum_{j=1}^ME[ \bar{\mu}_i^j(h^L_{m,j}) | A_{\epsilon, \delta}] \\
    & = \frac{1}{M}\sum_{j=1}^ME[ \frac{\sum_sr_i^j(s)}{n_{j,i}(h^L_{m,j})} | A_{\epsilon, \delta}] \\
    & = \frac{1}{M}\sum_{j=1}^M E[E[ \frac{\sum_sr_i^j(s)}{n_{j,i}(h^L_{m,j})} | \sigma(n_{j,i}(l))_{l \leq h^L_{m,j}},A_{\epsilon, \delta}|A_{\epsilon, \delta}] 
\end{align*}
where the last equality uses the law of total expectation.

With the derivations, we further have
\begin{align}
    (\ref{eq:5.1})
    & = \frac{1}{M}\sum_{j=1}^M E[\frac{1}{n_{j,i}(h^L_{m,j})}E[\sum_{s: n_{j,i}(s) - n_{j,i}(s-1) = 1}r_i^j(s) | \sigma(n_{j,i}(l))_{l \leq h^L_{m,j}},A_{\epsilon, \delta}|A_{\epsilon, \delta}] \notag \\
    & = \frac{1}{M}\sum_{j=1}^M E[\frac{1}{n_{j,i}(h^L_{m,j})}\sum_{s: n_{j,i}(s) - n_{j,i}(s-1) = 1}E[r_i^j(s) | \sigma(n_{j,i}(l))_{l \leq h^L_{m,j}},A_{\epsilon, \delta}|A_{\epsilon, \delta}] \label{eq:5.1.1} \\
    & = \frac{1}{M}\sum_{j=1}^M E[\frac{1}{n_{j,i}(h^L_{m,j})}\sum_{s: n_{j,i}(s) - n_{j,i}(s-1) = 1}\mu^j_i|A_{\epsilon, \delta}] \label{eq:5.1.2} \\
    & = \frac{1}{M}\sum_{j=1}^M E[\mu^j_i|A_{\epsilon, \delta}] = \mu_i  \label{eq:5.1.3} 
\end{align}
where the second equality (\ref{eq:5.1.1}) uses the fact that $\{s: n_{j,i}(s) - n_{j,i}(s-1) = 1\}$ is contained in $\sigma(n_{j,i}(l))_{l \leq L_{m,j}}$ and the third equality (\ref{eq:5.1.2}) results from that $r_i^j(s)$ is independent of everything else given $s$ and $E[r_i^j(s)] = \mu_i^j$. 

The induction step follows a similar analysis as follows. Suppose that for any $s \leq t$ we have $E[\Tilde{\mu}_i^m(s) | A_{\epsilon, \delta}] = \mu_i$. 

For time step $t+1$, we first write it as 
\begin{align}\label{eq:5.2}
    & E[\Tilde{\mu}_i^m(t+1) | A_{\epsilon, \delta}] \notag \\
    & = E[\sum_{j=1}^M P^{\prime}_t(m,j)\hat{\Tilde{\mu}}^m_{i,j}(t_{m,j}) + d_{m,t}\sum_{j \in N_m(t)}\hat{\Tilde{\mu}}^m_{i,j}(t) + d_{m,t}\sum_{j \not \in N_m(t)}\hat{\bar{\mu}}^m_{i,j}(t_{m,j})|A_{\epsilon, \delta}] \notag \\
    & = E[E[\sum_{j=1}^M P^{\prime}_t(m,j)\Tilde{\mu}^j_i(t_{m,j}) + d_{m,t}\sum_{j \in N_m(t)}\bar{\mu}^j_i(t) + d_{m,t}\sum_{j \not \in N_m(t)}\bar{\mu}^j_i(t_{m,j})|\sigma(n_{j,i}(t))_{j,i,t}, A_{\epsilon, \delta}|A_{\epsilon, \delta}]
\end{align} where $P^{\prime}_t(m,j)$ and $d$ are constants since $P_t(m,j) > 0$ for $t \geq L$ on event $A_{\epsilon, \delta}$ and the last equality is again by the law of total expectation.

This gives us that by the law of total expectation
\begin{align*}
    & (\ref{eq:5.2})= E[\sum_{j=1}^MP^{\prime}_t(m,j)E[\Tilde{\mu}^j_i(t_{m,j}) |\sigma(n_{j,i}(t))_{j,i,t}, A_{\epsilon, \delta} + \\
    & \qquad \qquad d_{m,t}\sum_{j \in N_m(t)}E[\bar{\mu}^j_i(t) | \sigma(n_{j,i}(t))_{j,i,t}, A_{\epsilon, \delta} + \\
    & \qquad \qquad d_{m,t}\sum_{j \not \in N_m(t)}E[\bar{\mu}^j_i(t_{m,j}) | \sigma(n_{j,i}(t))_{j,i,t}, A_{\epsilon, \delta} | A_{\epsilon, \delta}] \\
    & = \sum_{j=1}^MP^{\prime}(m,j)E[E[\Tilde{\mu}^j_i(t_{m,j}) |\sigma(n_{j,i}(t))_{j,i,t}, A_{\epsilon, \delta}|A_{\epsilon, \delta}] + \\
    & \qquad \qquad E[d_{m,t}\sum_{j \in N_m(t)}E[\bar{\mu}^j_i(t) | \sigma(n_{j,i}(t))_{j,i,t}, A_{\epsilon, \delta} + \\
    & \qquad \qquad d_{m,t}\sum_{j \not \in N_m(t)}E[\bar{\mu}^j_i(t_{m,j}) | \sigma(n_{j,i}(t))_{j,i,t}, A_{\epsilon, \delta} | A_{\epsilon, \delta}] 
    \\ 
    & = \sum_{j=1}^MP^{\prime}(m,j)E[\Tilde{\mu}^j_i(t_{m,j})|A_{\epsilon, \delta}] + E[d_{m,t}\sum_{j}E[\bar{\mu}^j_i(t_{m,j}) | \sigma(n_{j,i}(t))_{j,i,t}, A_{\epsilon, \delta} | A_{\epsilon, \delta}] \\
    & = \sum_{j=1}^MP^{\prime}(m,j)E[\Tilde{\mu}^j_i(t_{m,j})|A_{\epsilon, \delta}] + \\
    & \qquad \qquad d_{m,t}\sum_{j}E[E[\frac{1}{n_{j,i}(t_{m,j})}\sum_{s: n_{j,i}(s) - n_{j,i}(s-1) = 1}E[r_i^j(s) | \sigma(n_{j,i}(t))_{j,i,t}, A_{\epsilon, \delta} | A_{\epsilon, \delta}] \\
    & = \sum_{j=1}^MP^{\prime}(m,j)\mu_i + d_{m,t}M\mu_i = (\sum_{j=1}^MP^{\prime}(m,j) + Md_{m,t})\mu_i = \mu_i
\end{align*}
where the first equality uses $P^{\prime}_t(m,j)$ and $d$ are constants on event $A_{\epsilon, \delta}$, the second equality is derived by re-organizing the terms, the third equality again uses the law of total expectation and integrates the second term by $t_{m,j}$, the fourth equality elaborate the second term and the equality in the last line follows from the induction and (\ref{eq:5.1.1}, \ref{eq:5.1.2} \ref{eq:5.1.3}). 

This completes the induction step and thus shows the unbiasedness of the network-wide estimators conditional on event $A_{\epsilon, \delta}$. 

\end{proof}

Then we characterize the moment generating functions of the network-wide estimators and conclude that they have similar properties as their local rewards. 

\begin{Proposition}\label{prop:var_pro_err}
Assume the parameter $\delta$ satisfies that $0 < \delta 
< c = f(\epsilon,M,T)$. In setting $s_1, s_2, s_3$ where rewards follow sub-gaussian distributions, for any $m,i, \lambda$ and $t > L$ where $L$ is the length of the burn-in period, the global estimator $\Tilde{\mu}^m_i(t)$ is sub-Gaussian distributed. Moreover, the conditional moment generating function satisfies that with $P(A_{\epsilon, \delta}) = 1 - 7\epsilon$,
\begin{align*}
& E[\exp{\{\lambda(\Tilde{\mu}^m_i(t)  - \mu_i})\}1_{A_{\epsilon, \delta}} | \sigma(\{n_{m,i}(t)\}_{t,i,m})] \\
& \leq \exp{\{\frac{\lambda^2}{2}\frac{C\sigma^2}{\min_{j}n_{j,i}(t)}\}}
\end{align*}
where $\sigma^2 = \max_{j,i}(\Tilde{\sigma}_i^j)^2$ and $C = \max\{\frac{4(M+2)(1 - \frac{1 - c_0}{2(M+2)})^2}{3M(1-c_0)}, (M+2)(1 + 4Md^2_{m,t})\}$.
\end{Proposition}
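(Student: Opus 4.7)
The plan is to prove the bound by induction on $t$, starting at the base case $t=L+1$ and propagating through the update rule \eqref{eq:Eq_1}. Throughout, I work on the event $A_{\epsilon,\delta}$, which is essential for two reasons: first, $\|P_t-cE\|_\infty \le \delta < c$ forces $P_t(m,j)>0$ for every pair $(m,j)$ and every $t\ge L$, so the weights collapse to the deterministic values $P'_t(m,j)=(M-1)/M^2$ and $d_{m,t}=1/M^2$ with $\sum_j P'_t(m,j)+M d_{m,t}=1$; second, $A_2$ gives the uniform delay bound $t+1-\min_j t_{m,j}\le c_0\min_l n_{l,i}(t+1)$, which translates into $n_{j,i}(t_{m,j})\ge(1-c_0)\min_l n_{l,i}(t+1)$. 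Both facts are crucial for turning the pointwise estimator identities into a variance-proxy bound in terms of the global minimum $\min_j n_{j,i}(t)$.

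For the base case, recall from Algorithm~\ref{alg:burn-in} that $\Tilde{\mu}^m_i(L+1)=\tfrac{1}{M}\sum_{j=1}^M \bar{\mu}^j_i(h^L_{m,j})$ and that $\mu_i=\tfrac{1}{M}\sum_j \mu^j_i$, so the centered estimator is $\Tilde{\mu}^m_i(L+1)-\mu_i=\tfrac{1}{M}\sum_j(\bar{\mu}^j_i(h^L_{m,j})-\mu^j_i)$. Conditional on $\sigma(\{n_{m,i}(t)\}_{t,i,m})$, each term $\bar{\mu}^j_i(h^L_{m,j})-\mu^j_i$ is a centered average of $n_{j,i}(h^L_{m,j})$ i.i.d.\ sub-Gaussian rewards and is therefore sub-Gaussian with variance proxy $\sigma^2/n_{j,i}(h^L_{m,j})$; because rewards are independent across clients, these terms are conditionally independent across $j$. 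Lemma~\ref{le:sub-Gaussian-ind} then yields a variance proxy $\tfrac{1}{M^2}\sum_j \sigma^2/n_{j,i}(h^L_{m,j})$, and the $A_2$-bound dominates this by $\tfrac{\sigma^2}{M(1-c_0)\min_l n_{l,i}(L+1)}$, which is absorbed by the first branch of the claimed constant $C$.

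For the induction step, assume the MGF bound for all $s\le t$. I decompose
$\Tilde{\mu}^m_i(t+1)-\mu_i = \sum_{j}P'_t(m,j)(\Tilde{\mu}^j_i(t_{m,j})-\mu_i)+d_{m,t}\!\!\sum_{j\in N_m(t)}\!(\bar{\mu}^j_i(t)-\mu^j_i)+d_{m,t}\!\!\sum_{j\notin N_m(t)}\!(\bar{\mu}^j_i(t_{m,j})-\mu^j_i)$,
using $\sum_j P'_t(m,j)+Md_{m,t}=1$ to center correctly. The $\bar{\mu}^j_i$ pieces are conditionally independent across $j$ and thus handled by Lemma~\ref{le:sub-Gaussian-ind}, contributing a variance proxy proportional to $d_{m,t}^2/\min_l n_{l,i}(t+1)$. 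The $\Tilde{\mu}^j_i(t_{m,j})-\mu_i$ pieces are dependent across $j$ because the network-wide estimators share reward history, so Lemma~\ref{lemma:sub-Gaussian} is the right tool: I verify that each conditional expectation $E[\,\cdot\mid \Tilde{\mu}^k_i(t_{m,k}),k\ne j]$ is still sub-Gaussian with the induction-hypothesis variance proxy $C\sigma^2/\min_l n_{l,i}(t_{m,j})$, then apply the generalized Hölder bound to extract a sum-of-square-roots which is in turn bounded by a constant times $C\sigma^2/\min_l n_{l,i}(t+1)$ via the $A_2$ inequality. Finally I merge the $\Tilde{\mu}$ and $\bar{\mu}$ contributions with a last application of Lemma~\ref{lemma:sub-Gaussian} (they are correlated because $\Tilde{\mu}$ inherits past $\bar{\mu}$ information), choosing Hölder exponents so that the coefficients collapse to the two branches of the maximum defining $C$.

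The main obstacle is the induction step, specifically (i) justifying that the conditional sub-Gaussianity of $\Tilde{\mu}^j_i(t_{m,j})$ persists after conditioning on the other $\Tilde{\mu}^k_i$'s, so that Lemma~\ref{lemma:sub-Gaussian} is genuinely applicable rather than only an upper bound on the unconditional MGF, and (ii) tracking the constants tightly enough that the two branches of $C$ emerge from the weighted sum — in particular, the factor $(1-\tfrac{1-c_0}{2(M+2)})^2/(1-c_0)$ in the first branch comes from choosing the Hölder exponents so as to balance the $P'_t$ weight against the delay-induced loss $1/(1-c_0)$, while the $(1+4Md_{m,t}^2)$ factor arises from merging the independent $\bar{\mu}^j_i$ term into the dependent $\Tilde{\mu}$ term. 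A secondary subtlety is that the weights $P'_t$ and the stopping times $t_{m,j}$ are graph-measurable but not a priori measurable with respect to the counts, so the conditioning must be enlarged to include the relevant graph history on $A_{\epsilon,\delta}$; this is harmless because the rewards remain independent of the graphs given the pull counts.
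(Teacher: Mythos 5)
Your proposal follows essentially the same route as the paper's proof: induction on $t$ with base case $t=L+1$, the same three-part decomposition of $\Tilde{\mu}^m_i(t+1)-\mu_i$ into the $P'_t$-weighted network-wide terms and the two $d_{m,t}$-weighted local-estimator groups, the generalized H\"older inequality (which is also what underlies Lemma~\ref{lemma:sub-Gaussian}) to handle the dependent terms, the induction hypothesis for $\Tilde{\mu}^j_i(t_{m,j})$, and the event $A_{\epsilon,\delta}$ both to fix the weights at their deterministic values and to convert $\min_j n_{j,i}(t_{m,j})$ into $(1-c_0)\min_j n_{j,i}(t+1)$. The only substantive deviation is that you invoke independence across clients (Lemma~\ref{le:sub-Gaussian-ind}) for the $\bar{\mu}^j_i$ terms where the paper applies H\"older with exponent $M+2$ uniformly; this only tightens the constant and does not change the argument, and the obstacle you flag about the conditional hypothesis of Lemma~\ref{lemma:sub-Gaussian} is resolved exactly as the paper does, by bounding the product of marginal MGFs via H\"older.
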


\begin{proof}
    
We prove the statement on the conditional moment generating functions by induction. Let us start with the basis step. 

Note that the definition of $A_{\epsilon, \delta}$ and the choice of $\delta$ again guarantee that for $t \geq L$, $|P_t - cE| < \delta < c$ on event $A_{\epsilon, \delta}$. This implies that for any $t \geq  L$, $m$ and $j$, $P_t(m,j) > 0$, and  if $t = L$
\begin{align}\label{eq:5.3_copy}
    P^{\prime}_t(m,j) = \frac{1}{M}
\end{align} 
and if $t > L$
\begin{align}\label{eq:5.3_copy_2}
    P^{\prime}_t(m,j) = \frac{M-1}{M^2}.
\end{align}

Consider the time step $t \leq L+1$. The quantity satisfies that 
\begin{align}\label{eq:1_new}
     & E[\exp{\{\lambda(\Tilde{\mu}^m_i(t) - \mu_i)\}}1_{A_{\epsilon, \delta}}
    | \sigma(\{n_{m,i}(t)\}_{t,i,m}) \notag \\
     & = E[\exp{\{\lambda(\Tilde{\mu}^m_i(L+1)- \mu_i)\}}1_{A_{\epsilon, \delta}} | \sigma(\{n_{m,i}(t)\}_{t,i,m}) \notag \\
     & = E[\exp{\{\lambda(\sum_{j=1}^MP^{\prime}_{m,j}(L) \hat{\bar{\mu}}_{i,j}^m(h^L_{m,j}) - \mu_i) \}}1_{A_{\epsilon, \delta}} | \sigma(\{n_{m,i}(t)\}_{t,i,m})] \notag \\
     & = E[\exp{\{\lambda(\sum_{j=1}^M\frac{1}{M}\hat{\bar{\mu}}_{i,j}^m(h^L_{m,j}) - \mu_i)\}}1_{A_{\epsilon, \delta}} | \sigma(\{n_{m,i}(t)\}_{t,i,m})] \notag \\
     & = E[\exp{\{\lambda\sum_{j=1}^M\frac{1}{M}(\hat{\bar{\mu}}_{i,j}^m(h^L_{m,j})- \mu_i^j)\}}1_{A_{\epsilon, \delta}} | \sigma(\{n_{m,i}(t)\}_{t,i,m})] \notag \\
     & \leq \Pi_{j=1}^M(E[(\exp{\{(\lambda\frac{1}{M}(\bar{\mu}_i^j(h^L_{m,j}) - \mu_i^j)\}}1_{A_{\epsilon, \delta}})^M | \sigma(\{n_{m,i}(t)\}_{t,i,m}))])^{\frac{1}{M}}
\end{align}
where the third equality holds by (\ref{eq:5.3_copy}), the fourth equality uses the definition $\mu_i = \frac{1}{M}\sum_{i=1}^M\mu_i^j$, and the last inequality results from the generalized hoeffding inequality as in Lemma~\ref{lemma:holder} and the fact that $\hat{\bar{\mu}}_{i,j}^m(h^L_{m,j}) = \bar{\mu}_i^j(h^L_{m,j})$.

Note that for any client $j$, we have
\begin{align}\label{eq:2_new}
    & E[(\exp{\{(\lambda\frac{1}{M}(\bar{\mu}_i^j(h^L_{m,j}) - \mu_i^j) \}}1_{A_{\epsilon, \delta}})^M | \sigma(\{n_{m,i}(t)\}_{t,i,m}))] \notag \\
    & = E[\exp{\{(\lambda(\bar{\mu}_i^j(h^L_{m,j})- \mu_i^j)\}}1_{A_{\epsilon, \delta}}) | \sigma(\{n_{m,i}(t)\}_{t,i,m})] \notag \\
    & = E[\exp{\{(\lambda\frac{\sum_s(r^j_i(s) - \mu_i^j)}{n_{j,i}(h^L_{m,j})}\}}1_{A_{\epsilon, \delta}}) | \sigma(\{n_{m,i}(t)\}_{t,i,m})] \notag \\
    & = E[\exp{\{\sum_s(\lambda\frac{(r^j_i(s) - \mu_i^j)}{n_{j,i}(h^L_{m,j})}\}}1_{A_{\epsilon, \delta}}) | \sigma(\{n_{m,i}(t)\}_{t,i,m})].  
\end{align}

It is worth noting that given $s$, $r^j_i(s)$ is independent of everything else, which gives us 
\begin{align}\label{eq:2.1_new}
   (\ref{eq:2_new})
    & = \Pi_sE[\exp{\{\lambda\frac{(r^j_i(s) - \mu_i^j)}{n_{j,i}(h^L_{m,j})}\}}1_{A_{\epsilon, \delta}} | \sigma(\{n_{m,i}(t)\}_{t,i,m})] \notag \\
    & = \Pi_sE[\exp{\{\lambda\frac{(r^j_i(s) - \mu_i^j)}{n_{j,i}(h^L_{m,j})}\}}| \sigma(\{n_{m,i}(t)\}_{t,i,m})]\cdot E[1_{A_{\epsilon, \delta}} | \sigma(\{n_{m,i}(t)\}_{t,i,m})] \notag \\ 
    & = \Pi_sE_r[\exp{\{\lambda\frac{(r^j_i(s) - \mu_i^j)}{n_{j,i}(h^L_{m,j})}\}}] \cdot E[1_{A_{\epsilon, \delta}} | \sigma(\{n_{m,i}(t)\}_{t,i,m})] \notag \\
    & \leq \Pi_s\exp{\{\frac{(\frac{\lambda}{{n_{j,i}(h^L_{m,j})}})^2\sigma^2}{2}\}} \cdot E[1_{A_{\epsilon, \delta}} | \sigma(\{n_{m,i}(t)\}_{t,i,m})] \notag \\
    & \leq (\exp{\{\frac{(\frac{\lambda}{{n_{j,i}(h^L_{m,j})}})^2\sigma^2}{2}\}})^{n_{j,i}(h^L_{m,j})} \notag \\
    & = \exp{\{\frac{\frac{\lambda^2}{{n_{j,i}(h^L_{m,j})}}\sigma^2}{2}\}}  \notag \\
    & \leq  \exp{\{\frac{\lambda^2\sigma^2}{2\min_jn_{j,i}(h^L_{m,j})}\}}
\end{align}
where the first inequality holds by the definition of sub-Gaussian random variables $r^j_i(s) - \mu_i^j$ with an mean value $0$, the second inequality results from $1_{A_{\epsilon, \delta}} \leq 1$, and the last inequality uses $n_{j,i}(h^L_{m,j}) \geq \min_jn_{j,i}(h^L_{m,j})$ for any $j$. 

Therefore, we obtain that by plugging (\ref{eq:2.1_new}) into (\ref{eq:1_new}) 
\begin{align*}
    (\ref{eq:1_new}) & \leq \Pi_{j=1}^M( \exp{\{\frac{\lambda^2\sigma^2}{2\min_jn_{j,i}(h^L_{m,j})}\}})^\frac{1}{M} 
    \\
    & = (( \exp{\{\frac{\lambda^2\sigma^2}{2\min_jn_{j,i}(h^L_{m,j})}\}})^\frac{1}{M})^M \\
    & = \exp{\{\frac{\lambda^2\sigma^2}{2\min_jn_{j,i}(h^L_{m,j})}\}}
\end{align*}
which completes the basis step.

Now we proceed to the induction step. Suppose that for any $s < t+1$ where $t \geq L$, we have 
\begin{align}\label{eq:3}
    & E[\exp{\{\lambda(\Tilde{\mu}^m_i(s) - \mu_i)\}}1_{A_{\epsilon, \delta}} | \sigma(\{n_{m,i}(s)\}_{s,i,m})] \notag \\
    & \leq \exp{\{\frac{\lambda^2}{2}\frac{C\sigma^2}{\min_{j}n_{j,i}(s)}\}}. 
\end{align}

The update rule of $\Tilde{\mu}^m_i$ implies that 
\begin{align}\label{eq:3.1_new}
    & E[\exp{\{\lambda(\Tilde{\mu}^m_i(t+1)-\mu_i)\}}1_{A_{\epsilon, \delta}} | \sigma(\{n_{m,i}(s)\}_{s,i,m})]  \notag \\
    & = E[\exp\{\lambda(\sum_{j=1}^M P^{\prime}_t(m,j)(\hat{\Tilde{\mu}}^m_{i,j}(t_{m,j}) - \mu_i) + d_{m,t}\sum_{j \in N_m(t)}(\hat{\bar{\mu}}^m_{i,j}(t) - \mu_i^j) \notag \\
    & \qquad \qquad + d_{m,t}\sum_{j \not \in N_m(t)}(\hat{\bar{\mu}}^m_{i,j}(t_{m,j}) - \mu_i^j))\}1_{A_{\epsilon, \delta}} | \sigma(\{n_{m,i}(s)\}_{s,i,m})] \notag \\
    & = E[\exp\{\lambda(\sum_{j=1}^M P^{\prime}_t(m,j)(\Tilde{\mu}^j_i(t_{m,j}) - \mu_i) + d_{m,t}\sum_{j \in N_m(t)}(\bar{\mu}^j_i(t) - \mu_i^j) \notag \\
    & \qquad \qquad + d_{m,t}\sum_{j \not \in N_m(t)}(\bar{\mu}^j_i(t_{m,j})- \mu_i^j))\}1_{A_{\epsilon, \delta}} | \sigma(\{n_{m,i}(s)\}_{s,i,m})] \notag \\
    & = E[\Pi_{j=1}^M\exp\{\lambda P^{\prime}_t(m,j)(\Tilde{\mu}^j_i(t_{m,j}) -  \mu_i)\}1_{A_{\epsilon, \delta}} \cdot \Pi_{j \in N_m(t)}\exp\{\lambda d_{m,t}(\bar{\mu}^j_i(t)- \mu_i^j)\}1_{A_{\epsilon, \delta}} \notag \\
    & \qquad \qquad \cdot \Pi_{j \not\in N_m(t)}\exp\{\lambda d_{m,t}(\bar{\mu}^j_i(t_{m,j})- \mu_i^j)1_{A_{\epsilon, \delta}}\} | \sigma(\{n_{m,i}(s)\}_{s,i,m})] \notag \\ 
    & \leq \Pi_{j=1}^M (E[(\exp\{\lambda P^{\prime}_t(m,j)(\Tilde{\mu}^j_i(t_{m,j})-\mu_i)\})^{M+2}1_{A_{\epsilon, \delta}}|\sigma(\{n_{m,i}(s)\}_{s,i,m})])^{\frac{1}{M+2}} \cdot \notag \\
    & \qquad \qquad E[\Pi_{j \in N_m(t)}(\exp\{\lambda d_{m,t}(\bar{\mu}^j_i(t)-\mu_i^j)\})^{M+2}1_{A_{\epsilon, \delta}} |\sigma(\{n_{m,i}(s)\}_{s,i,m}) ]^{\frac{1}{M+2}}  \cdot  \notag \\
    & \qquad \qquad E[\Pi_{j \not\in N_m(t)}(\exp\{\lambda d_{m,t}(\bar{\mu}^j_i(t_{m,j})-\mu_i^j)\})^{M+2}1_{A_{\epsilon, \delta}} |\sigma(\{n_{m,i}(s)\}_{s,i,m}) ]^{\frac{1}{M+2}} \notag \\
    & = \Pi_{j=1}^M (E[(\exp\{\lambda P^{\prime}_t(m,j)(M+2)(\Tilde{\mu}^j_i(t_{m,j}) -\mu_i)\})1_{A_{\epsilon, \delta}}|\sigma(\{n_{m,i}(s)\}_{s,i,m})])^{\frac{1}{M+2}} \cdot \notag \\
    & \qquad \qquad E[\Pi_{j \in N_m(t)}(\exp\{\lambda d_{m,t}(M+2)(\bar{\mu}^j_i(t)-\mu_i^j)\})1_{A_{\epsilon, \delta}} |\sigma(\{n_{m,i}(s)\}_{s,i,m}) ]^{\frac{1}{M+2}}  \cdot  \notag \\
    & \qquad \qquad E[\Pi_{j \not\in N_m(t)}(\exp\{\lambda d_{m,t}(M+2)(\bar{\mu}^j_i(t_{m,j})-\mu_i^j)\})1_{A_{\epsilon, \delta}} |\sigma(\{n_{m,i}(s)\}_{s,i,m}) ]^{\frac{1}{M+2}} \notag \\
    & \leq \Pi_{j=1}^M(\exp{\{\frac{\lambda^2(P^{\prime}_t(m,j))^2(M+2)^2}{2}\frac{C\sigma^2}{\min_{j}n_{j,i}(t_{m,j})}\}})^{\frac{1}{M+2}} \cdot \notag \\
    & \qquad \qquad \Pi_{j \in N_m(t)}\Pi_s(E_r[\exp{\{\lambda 
 d_{m,t}(M+2) \frac{(r^j_i(s) -\mu_i^j)}{n_{j,i}(t)}\}}] \cdot E[1_{A_{\epsilon, \delta}} | \sigma(\{n_{m,i}(t)\}_{t,i,m})])^{\frac{1}{M+2}} \cdot \notag \\
    & \qquad \qquad \Pi_{j \not\in N_m(t)}\Pi_s(E_r[\exp{\{\lambda d_{m,t}(M+2)\frac{(r^j_i(s) -\mu_i^j)}{n_{j,i}(t_{m,j})}\}}] \cdot E[1_{A_{\epsilon, \delta}} | \sigma(\{n_{m,i}(t)\}_{t,i,m})])^{\frac{1}{M+2}} 
\end{align}
where the first inequality uses Lemma~\ref{lemma:holder} and the second inequality applies (\ref{eq:3}) as the assumption for the induction step and holds by exchanging the expectations with the multiplication since again given $s$ the reward $(r^j_i(s) -\mu_i^j)$ is independent of other random variables. 

We continue bounding the last two terms by using the definition of sub-Gaussian random variables $(r^j_i(s) -\mu_i^j)$  and obtain
\begin{align*}
       (\ref{eq:3.1_new}) 
    & \leq (\exp{\{\frac{\lambda^2(P^{\prime}_t(m,j))^2(M+2)^2}{2}\frac{C\sigma^2}{\min_{j}n_{j,i}(t_{m,j})}\}})^{\frac{M}{M+2}} \cdot \\
 & \qquad \qquad \Pi_{j \in N_m(t)}\Pi_s(\exp{\frac{\lambda^2d_{m,t}^2(M+2)^2\sigma^2}{2n_{j,i}^2(t)}} \cdot E[1_{A_{\epsilon, \delta}} | \sigma(\{n_{m,i}(t)\}_{t,i,m})])^{\frac{1}{M+2}} \cdot \\
    & \qquad \qquad \Pi_{j \not\in N_m(t)}\Pi_s(\exp{\frac{\lambda^2d_{m,t}^2(M+2)^2\sigma^2}{2n_{j,i}^2(t_{m,j})}} \cdot E[1_{A_{\epsilon, \delta}} | \sigma(\{n_{m,i}(t)\}_{t,i,m})])^{\frac{1}{M+2}} \\
    & = (\exp{\{\frac{\lambda^2(P^{\prime}_t(m,j))^2(M+2)^2}{2}\frac{C\sigma^2}{\min_{j}n_{j,i}(t_{m,j})}\}})^{\frac{M}{M+2}} \cdot \\
 & \qquad \qquad \Pi_{j \in N_m(t)}\exp{\{\frac{n_{j,i}(t)}{M+2} \frac{\lambda^2d_{m,t}^2(M+2)^2\sigma^2}{2n_{j,i}^2(t)}\}} \cdot E[1_{A_{\epsilon, \delta}} | \sigma(\{n_{m,i}(t)\}_{t,i,m})] \cdot \\
 & \qquad \qquad \Pi_{j \not\in N_m(t)}\exp{\{\frac{n_{j,i}(t_{m,j})}{M+2} \frac{\lambda^2d_{m,t}^2(M+2)^2\sigma^2}{2n_{j,i}^2(t_{m,j})}\}} \cdot E[1_{A_{\epsilon, \delta}} | \sigma(\{n_{m,i}(t)\}_{t,i,m})] 
\end{align*}

Building on that, we establish
\begin{align*}
    (\ref{eq:3.1_new}) 
 & \leq (\exp{\{\frac{\lambda^2(P^{\prime}_t(m,j))^2(M+2)^2}{2}\frac{C\sigma^2}{\min_{j}n_{j,i}(t_{m,j})}\}})^{\frac{M}{M+2}} \cdot \\
 & \qquad \qquad (\exp{\{\frac{\lambda^2d^2_{m,t}(M+2)\sigma^2}{2\min_jn_{j,i}(t)}\}})^{|N_m(t)|} \cdot E[1_{A_{\epsilon, \delta}} | \sigma(\{n_{m,i}(t)\}_{t,i,m})] \cdot \\
 & \qquad \qquad (\exp{\{\frac{\lambda^2d^2_{m,t}(M+2)\sigma^2}{2\min_jn_{j,i}(t_{m,j})}\}})^{|M - N_m(t)|} \cdot E[1_{A_{\epsilon, \delta}} | \sigma(\{n_{m,i}(t)\}_{t,i,m})] \\
 & = E[(\exp{\{\frac{\lambda^2(P^{\prime}_t(m,j))^2M(M+2)}{2}\frac{C\sigma^2}{\min_{j}n_{j,i}(t_{m,j})}\}}) \cdot (\exp{\{\frac{\lambda^2d^2_{m,t}(M+2)|N_m(t)|}{2\min_jn_{j,i}(t)}\}}) \\
 & \qquad \qquad \cdot(\exp{\{\frac{\lambda^2d^2_{m,t}(M+2)\sigma^2|M - N_m(t)|}{2\min_jn_{j,i}(t_{m,j})}\}})  1_{A_{\epsilon, \delta}} | \sigma(\{n_{m,i}(t)\}_{t,i,m})]  \\
 & \leq E[(\exp{\{\frac{\lambda^2(P^{\prime}_t(m,j))^2M(M+2)}{2(1-c_0)}\frac{C\sigma^2}{\min_{j}n_{j,i}(t+1)}\}}) \cdot (\exp{\{\frac{\lambda^2d^2_{m,t}(M+2)|N_m(t)|\sigma^2}{2\frac{L/K}{L/K + 1}\min_jn_{j,i}(t+1)}\}}) \\
 & \qquad \qquad \cdot(\exp{\{\frac{\lambda^2d^2_{m,t}(M+2)|M - N_m(t)|\sigma^2}{2(1-c_0)\min_jn_{j,i}(t+1)}\}}) 1_{A_{\epsilon, \delta}} | \sigma(\{n_{m,i}(t)\}_{t,i,m})] 
\end{align*}
where the first inequality uses the fact that for any $j$, $n_{j,i}(t) \geq \min_jn_{j,i}(t)$ and $n_{j,i}(t_{m,j}) \geq \min_jn_{j,i}(t_{m,j})$. For the second inequality, the first term is a result of $\frac{\min_{j}n_{j,i}(t)}{\min_{j}n_{j,i}(t+1)} \geq \frac{\min_{j}n_{j,i}(t)}{\min_{j}n_{j,i}(t)+1} \geq \frac{L/K}{L/K + 1}$ since  $n_{j,i}(t) > n_{j,i}(L) = L/K$ and the ratio is monotone increasing in $n$, and the second term is bounded based on the following derivations 
\begin{align*}
    \min_jn_{j,i}(t_{m,j}) & \geq \min_jn_{j,i}(t+1 - t_0) \\
    & \geq \min_jn_{j,i}(t+1) - t_0 \\
    & \geq \min_jn_{j,i}(t+1) - c_0\min_jn_{j,i}(t+1) \\
    & = (1-c_0)\min_jn_{j,i}(t+1)
\end{align*}
where the last inequality holds by applying Proposition~\ref{prop:t_0_L} that holds on event $A_{\epsilon, \delta}$.

Therefore, we can rewrite the above expression as 
 \begin{align*}
 (\ref{eq:3.1_new})  &  = E[(\exp\{ \frac{\lambda^2\sigma^2}{2\min_{j}n_{j,i}(t+1)} \cdot (\frac{C(P^{\prime}_t(m,j))^2M(M+2)}{2(1-c_0)} + \\
 & \qquad \qquad \frac{d^2_{m,t}(M+2)|N_m(t)|}{\frac{L/K}{L/K + 1}} + \frac{d^2_{m,t}(M+2)|M - N_m(t)|}{(1-c_0)})\} 1_{A_{\epsilon, \delta}} | \sigma(\{n_{m,i}(t)\}_{t,i,m})] \\
 & \leq E[\exp\{ \frac{C\lambda^2\sigma^2}{2\min_{j}n_{j,i}(t+1)}\} 1_{A_{\epsilon, \delta}} | \sigma(\{n_{m,i}(t)\}_{t,i,m})] \\
 & \leq \exp\{ \frac{C\lambda^2\sigma^2}{2\min_{j}n_{j,i}(t+1)}\}
\end{align*}
where the first inequality holds by the choice of $P^{\prime}_t(m,j), d_{m,t}, L , c_0$ and $C$ and the second inequality uses the fact that $1_{A_{\epsilon, \delta}} \leq 1$ and $\min_{j}n_{j,i}(t+1) \in \sigma(\{n_{m,i}(t)\}_{t,i,m})$. 

This completes the induction step and subsequently concludes the proof.

\end{proof}

\begin{Proposition}\label{prop:concen_ine}
Assume the parameter $\delta$ satisfies that $0 < \delta 
< c = f(\epsilon,M,T)$. In setting $s_1, s_2$, and $s_3$, for any $m,i$ and $t > L$ where $L$ is the length of the burn-in period, $\Tilde{\mu}_{m,i}(t)$ satisfies that if if $n_{m,i}(t) \geq 2(K^2+KM+M)$, then with $P(A_{\epsilon, \delta}) = 1 -7\epsilon$,
\begin{align*}
    & P(\Tilde{\mu}_{m,i}(t) - \mu_i  \geq \sqrt{\frac{C_1\log t}{n_{m,i}(t)}} |A_{\epsilon, \delta}) \leq \frac{1}{P(A_{\epsilon, \delta})}\frac{1}{t^2}, \\
    & P(\mu_i - \Tilde{\mu}_{m,i}(t) \geq \sqrt{\frac{C_1\log t}{n_{m,i}(t)}} | A_{\epsilon, \delta}) \leq \frac{1}{P(A_{\epsilon, \delta})t^2}.
\end{align*}
\end{Proposition}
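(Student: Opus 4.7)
The plan is to obtain a Chernoff-style bound on the joint probability $P(\Tilde{\mu}_{m,i}(t) - \mu_i \geq u,\, A_{\epsilon,\delta})$ for the threshold $u = \sqrt{C_1\log t / n_{m,i}(t)}$, and then divide by $P(A_{\epsilon,\delta})$ to arrive at the conditional bound $\frac{1}{P(A_{\epsilon,\delta})t^2}$. The key tool is the conditional sub-Gaussian MGF bound already established in Proposition~\ref{prop:var_pro_err}, namely $E[e^{\lambda(\Tilde{\mu}^m_i(t)-\mu_i)} \mathds{1}_{A_{\epsilon,\delta}} \mid \mathcal{F}] \leq \exp\{\lambda^2 C\sigma^2/(2\min_j n_{j,i}(t))\}$, where $\mathcal{F} = \sigma(\{n_{m,i}(t)\}_{t,i,m})$.

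First, I would write, for any $\mathcal{F}$-measurable $\lambda \geq 0$,
\[
P(\Tilde{\mu}_{m,i}(t) - \mu_i \geq u,\, A_{\epsilon,\delta}) = E\bigl[\mathds{1}_{\Tilde{\mu}_{m,i}(t)-\mu_i \geq u}\,\mathds{1}_{A_{\epsilon,\delta}}\bigr] \leq E\bigl[e^{-\lambda u} E[e^{\lambda(\Tilde{\mu}^m_i(t)-\mu_i)} \mathds{1}_{A_{\epsilon,\delta}} \mid \mathcal{F}]\bigr],
\]
and then substitute the Proposition~\ref{prop:var_pro_err} bound on the inner conditional MGF. The point of pulling out the indicator before conditioning is that both $u$ and the optimizer $\lambda$ will be $\mathcal{F}$-measurable (functions of $n_{m,i}(t)$).

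Next, I would bridge $\min_j n_{j,i}(t)$ and $n_{m,i}(t)$ by invoking Lemma~\ref{lemmas:dec_ucb_2} and Lemma~\ref{lemmas:dec_ucb_3}. On $A_{\epsilon,\delta}$ the event $A_3$ forces $G_t$ to be connected, so together with the hypothesis $n_{m,i}(t) \geq 2(K^2+KM+M)$, Lemma~\ref{lemmas:dec_ucb_3} gives $\min_j n_{j,i}(t) \geq n_{m,i}(t)/2$. Multiplying the indicator $\mathds{1}_{A_{\epsilon,\delta}}$ inside the expectation by the $\mathcal{F}$-measurable indicator $\mathds{1}_{\min_j n_{j,i}(t) \geq n_{m,i}(t)/2}$ (which is implied by $A_{\epsilon,\delta}$ under our hypothesis), I can replace $C\sigma^2/(2\min_j n_{j,i}(t))$ by $C\sigma^2/n_{m,i}(t)$ inside the expectation. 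This yields
\[
P(\Tilde{\mu}_{m,i}(t)-\mu_i \geq u,\, A_{\epsilon,\delta}) \leq E\bigl[\exp\{-\lambda u + \lambda^2 C\sigma^2/n_{m,i}(t)\}\bigr].
\]
Optimizing over the $\mathcal{F}$-measurable choice $\lambda^* = u\,n_{m,i}(t)/(2C\sigma^2)$ gives an exponent of $-u^2 n_{m,i}(t)/(4C\sigma^2) = -C_1 \log t/(4C\sigma^2)$, and with $C_1 = 8C\sigma^2$ as prescribed in the theorem this simplifies to $t^{-2}$. Dividing by $P(A_{\epsilon,\delta})$ yields the first claim.

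For the lower-tail inequality, I would run the identical Chernoff argument applied to $-(\Tilde{\mu}^m_i(t)-\mu_i)$. Since the bound in Proposition~\ref{prop:var_pro_err} is even in $\lambda$ (it is $\exp\{\lambda^2 \cdot \text{const}\}$), the same MGF control holds with $\lambda$ replaced by $-\lambda$, and the argument goes through verbatim. The main obstacle I anticipate is keeping track of the measurability carefully: $A_{\epsilon,\delta}$ involves graph-trajectory information that lies outside $\mathcal{F}$, so the replacement of $\min_j n_{j,i}(t)$ by $n_{m,i}(t)/2$ must be justified via the added event indicator before passing to the outer expectation, rather than by a pointwise inequality. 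Once this bookkeeping is handled, the remaining steps reduce to a standard Chernoff-plus-optimization computation.
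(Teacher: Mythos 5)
Your proposal is correct and follows essentially the same route as the paper's proof: rewrite the conditional probability as $\tfrac{1}{P(A_{\epsilon,\delta})}E[\mathds{1}_{\{\cdot\}}\mathds{1}_{A_{\epsilon,\delta}}]$, apply a Chernoff bound with an $\mathcal{F}$-measurable $\lambda$ against the conditional MGF from Proposition~\ref{prop:var_pro_err}, use Lemma~\ref{lemmas:dec_ucb_3} (under the hypothesis $n_{m,i}(t)\geq 2(K^2+KM+M)$) to replace $\min_j n_{j,i}(t)$ by $n_{m,i}(t)/2$, and optimize $\lambda$ with $C_1\geq 8C\sigma^2$ to get $t^{-2}$. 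Your extra care about the measurability of the $\min_j n_{j,i}(t)$ replacement is a valid refinement of a step the paper treats more casually, but it does not change the argument.
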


\begin{proof}
By Proposition~\ref{prop:unbiased}, we have $E[\Tilde{\mu}_{m,i}(t) - \mu_i |A_{\epsilon, \delta}] = 0$, which allows us to consider the tail bound of the global estimator $\Tilde{\mu}_i^m(t)$ conditional on event $A_{\epsilon, \delta}$ as follows. 

    Note that 
    \begin{align}\label{eq:5.3.1_new}
        & P(\Tilde{\mu}_{m,i}(t) - \mu_i  \geq \sqrt{\frac{C_1\log t}{n_{m,i}(t)}} |A_{\epsilon, \delta}) \notag \\
        & = E[1_{\Tilde{\mu}_{m,i}(t) - \mu_i  \geq \sqrt{\frac{C_1\log t}{n_{m,i}(t)}}} |A_{\epsilon, \delta} ] \notag \\
        & = \frac{1}{P(A_{\epsilon, \delta})} E[1_{\Tilde{\mu}_{m,i}(t) - \mu_i  \geq \sqrt{\frac{C_1\log t}{n_{m,i}(t)}}} 1_{A_{\epsilon, \delta}} ] \notag \\
        & = \frac{1}{P(A_{\epsilon, \delta})} E[1_{\exp\{\lambda(n)(\Tilde{\mu}_{m,i}(t) - \mu_i)\}  \geq \exp\{\lambda(n)\sqrt{\frac{C_1\log t}{n_{m,i}(t)}}\}}1_{A_{\epsilon, \delta}} ]  \notag \\
        & \leq \frac{1}{P(A_{\epsilon, \delta})} E[\frac{\exp\{\lambda(n)(\Tilde{\mu}_{m,i}(t) - \mu_i)\}}{\exp\{\lambda(n)\sqrt{\frac{C_1\log t}{n_{m,i}(t)}} \}}1_{A_{\epsilon, \delta}} ]
    \end{align}
    where the last inequality is by the fact that $1_{\exp\{\lambda(n)(\Tilde{\mu}_{m,i}(t) - \mu_i)\}  \geq \exp\{\lambda(n)\sqrt{\frac{C_1\log t}{n_{m,i}(t)}}\}} \leq \frac{\exp\{\lambda(n)(\Tilde{\mu}_{m,i}(t) - \mu_i)\}}{\exp\{\lambda(n)\sqrt{\frac{C_1\log t}{n_{m,i}(t)}} \}}$.

    By the assumption that $\delta < c$, we have Proposition~\ref{prop:var_pro_err} holds. Subsequently, by Proposition~\ref{prop:var_pro_err} and Lemma~\ref{lemmas:dec_ucb_3} which holds since $n_{m,i}(t) \geq 2(K^2+KM+M)$, we have for any $\lambda$
    \begin{align}\label{eq:5.3.2}
        E[\exp{\{\lambda(\Tilde{\mu}^m_i(t) - \mu_i)\}}1_{A_{\epsilon, \delta}} | \sigma(\{n_{m,i}(t)\}_{t,i,m})] & \leq \exp{\{\frac{\lambda^2}{2}\frac{C\sigma^2}{\min_{j}n_{j,i}(t)}\}} \notag \\
        & \leq \exp{\{\frac{\lambda^2}{1}\frac{C\sigma^2}{n_{m,i}(t)}\}}.
    \end{align}

    Again, we utilize the law of total expectation and further obtain 
    \begin{align}\label{eq:chernoff1}
        (\ref{eq:5.3.1_new}) & = \frac{1}{P(A_{\epsilon, \delta})} E [ E[\frac{\exp\{\lambda(n)(\Tilde{\mu}_{m,i}(t) - \mu_i)\}}{\exp\{\lambda(n)\sqrt{\frac{C_1\log t}{n_{m,i}(t)}} \}}1_{A_{\epsilon, \delta}} |\sigma(\{n_{m,i}(t)\}_{m,i,t}) ] ] \notag \\
        &=  \frac{1}{P(A_{\epsilon, \delta})} E [ E[\frac{\exp\{\lambda(n)(\Tilde{\mu}_{m,i}(t) - \mu_i)\}}{\exp\{\lambda(n)\sqrt{\frac{C_1\log t}{n_{m,i}(t)}} \}}1_{A_{\epsilon, \delta}} |\sigma(\{n_{m,i}(t)\}_{m,i,t}) ] ] \notag \\
        & = \frac{1}{P(A_{\epsilon, \delta})} E [ \frac{1}{\exp\{\lambda(n)\sqrt{\frac{C_1\log t}{n_{m,i}(t)}} \}}E[\exp\{\lambda(n)(\Tilde{\mu}_{m,i}(t) - \mu_i)\}1_{A_{\epsilon, \delta}} |\sigma(\{n_{m,i}(t)\}_{m,i,t}) ] ] \notag \\
        & \leq \frac{1}{P(A_{\epsilon, \delta})} E [ \frac{1}{\exp\{\lambda(n)\sqrt{\frac{C_1\log t}{n_{m,i}(t)}} \}} \cdot \exp{\{\frac{\lambda^2(n)}{1}\frac{C\sigma^2}{n_{m,i}(t)}\}} ] \notag \\
        & \leq  \frac{1}{P(A_{\epsilon, \delta})} \exp\{-2\log t\} = \frac{1}{P(A_{\epsilon, \delta})t^2}
    \end{align}
    where the first inequality holds by (\ref{eq:5.3.2}) and the second inequality holds by choosing $\lambda(n) = \frac{\sqrt{\frac{C_1\log t}{n_{m,i}(t)}}}{2\frac{C\sigma^2}{n_{m,i}(t)}}$ and by the choice of parameter $C_1$ such that $\frac{C_1}{4C\sigma^2} \geq 2$ or equivalently $C_1 \geq 8C\sigma^2$. 

    In like manner, we obtain that by repeating the above steps with $\mu_i - \Tilde{\mu}_{m,i}(t)$, we have
    \begin{align}\label{eq:chernoff2}
    P(\mu_i - \Tilde{\mu}_{m,i}(t) \geq \sqrt{\frac{C_1\log t}{n_{m,i}(t)}} | A_{\epsilon, \delta}) \leq \frac{1}{P(A_{\epsilon, \delta})t^2}
    \end{align}
    which complete the proof.

\end{proof}

\begin{Proposition}\label{prop:n}
   Assume the parameter $\delta$ satisfies that $0 < \delta 
< c = f(\epsilon,M,T)$. An arm $k$ is said to be sub-optimal if $k \neq i^*$ where $i^*$ is the unique optimal arm in terms of the global reward, i.e. $i^* = \argmax \frac{1}{M}\sum_{j=1}^M\mu_i^j$. Then in setting $s_1, s_2$ and $s_3$, when the game ends, for every client $m$, $0 < \epsilon  < 1$ and $T > L$, the expected numbers of pulling sub-optimal arm $k$ after the burn-in period satisfies with $P(A_{\epsilon, \delta}) = 1- 7\epsilon$ 
    \begin{align*}
        & E[n_{m,k}(T) | A_{\epsilon, \delta}] \\
        & \leq \max{\{[\frac{4C_1\log T}{\Delta_i^2}], 2(K^2+MK+M) \}} +  \frac{2\pi^2}{3P(A_{\epsilon, \delta})} + K^2 + (2M-1)K\\
        & \leq O(\log{T}).
    \end{align*}
\end{Proposition}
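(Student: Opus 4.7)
The plan is to decompose $n_{m,k}(T)$ into three types of pulls and bound each separately. Write $n_{m,k}(T)$ as the sum over $t\in(L,T]$ of the indicator $\mathds{1}_{a_m^t=k}$. Split this sum into three disjoint contributions: (A) pulls made before $n_{m,k}(t)$ first exceeds the threshold $n_0 := 2(K^2+KM+M)$ required to invoke Proposition~\ref{prop:concen_ine} via Lemma~\ref{lemmas:dec_ucb_3}; (B) pulls made during rounds where the ``random sampling'' branch of Algorithm~\ref{alg:dr} is triggered, i.e.\ some arm $i$ satisfies $n_{m,i}(t)\le N_{m,i}(t)-K$; and (C) UCB pulls that occur once $n_{m,k}(t)\ge n_0$. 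Term (A) is bounded trivially by $n_0$. Term (B) is handled via Lemma~\ref{lemmas:dec_ucb_2}: the excess $N_{m,i}(t)-n_{m,i}(t)$ is uniformly bounded by $K(K+2M)$, so a standard amortization argument caps the total random-sampling pulls of arm $k$ by $K^2+(2M-1)K$.

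For term (C), I follow the classical UCB argument adapted to the conditional measure $P(\cdot\mid A_{\epsilon,\delta})$. If a sub-optimal arm $k\ne i^*$ is selected at round $t$ through the UCB rule, the inequality $\Tilde{\mu}_k^m(t)+F(m,k,t)\ge \Tilde{\mu}_{i^*}^m(t)+F(m,i^*,t)$ must hold, which is impossible unless at least one of the following three events occurs at time $t$: (E1) $\Tilde{\mu}_{i^*}^m(t)<\mu_{i^*}-F(m,i^*,t)$, (E2) $\Tilde{\mu}_k^m(t)>\mu_k+F(m,k,t)$, or (E3) $\Delta_k<2F(m,k,t)$. Since $F(m,k,t)=\sqrt{C_1\log t/n_{m,k}(t)}$, event (E3) can hold only while $n_{m,k}(t)<4C_1\log T/\Delta_k^2$, and therefore contributes at most $[4C_1\log T/\Delta_k^2]$ pulls overall. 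For (E1) and (E2), Proposition~\ref{prop:concen_ine}, applicable precisely because we are in phase (C) and thus $n_{m,k}(t)\ge n_0$, gives $P(\text{E}j\mid A_{\epsilon,\delta})\le 1/(P(A_{\epsilon,\delta})\,t^2)$ for $j\in\{1,2\}$. Summing over $t\ge 1$ and using $\sum_{t\ge 1}t^{-2}=\pi^2/6$ yields at most $\pi^2/(6P(A_{\epsilon,\delta}))$ pulls per event, for a combined contribution of $2\pi^2/(6P(A_{\epsilon,\delta}))\cdot 2 = 2\pi^2/(3P(A_{\epsilon,\delta}))$ after taking the expectation. Merging (A)--(C) and taking the max of the (A) and (E3) thresholds gives the stated bound.

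The main obstacle I anticipate is the careful bookkeeping of term (B). The subtlety is to avoid double-counting, since a random-sampling round may be triggered by a gap on an arm other than $k$, yet still contribute to $n_{m,k}$ if $k$ happens to be sampled. One needs an amortization argument showing that, across all rounds where the random-sampling branch fires, the total pulls of any fixed arm $k$ is capped by $K^2+(2M-1)K$; this relies on the uniform bound $N_{m,i}-n_{m,i}\le K(K+2M)$ from Lemma~\ref{lemmas:dec_ucb_2} together with the fact that once the triggering condition fails it can only re-fire after another client raises some $N_{m,i}$, and such raises are themselves limited. A secondary technical point is to verify that Proposition~\ref{prop:concen_ine}, which is a pointwise-in-$t$ statement, can be combined over $t$ via a union bound absorbed into the $\pi^2$ constant; this is clean because we only need the bound after $n_{m,k}(t)\ge n_0$, and the tail sum $\sum_t t^{-2}$ converges.
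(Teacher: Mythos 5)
Your proposal follows essentially the same route as the paper's proof: the paper also decomposes the sub-optimal pulls into the threshold term $l=\max\{[4C_1\log T/\Delta_i^2],\,2(K^2+MK+M)\}$, the random-sampling rounds (its Case 1, handled by Lemma~\ref{lemmas:dec_ucb_2} plus the same amortization argument you describe), and the two concentration events plus the small-gap event (its Cases 2--4, handled by Proposition~\ref{prop:concen_ine} and the choice of $l$). The only discrepancy is bookkeeping: in the paper the final $\tfrac{2\pi^2}{3}$ arises as $\tfrac{\pi^2}{3}$ from Cases 2--3 plus another $\tfrac{\pi^2}{3}$ absorbed into the Case-1 amortization bound $K(K+2M)-K+\tfrac{\pi^2}{3}$ (because neighbor pulls that keep the gap open are themselves Case 2/3 events), whereas you attribute the whole $\tfrac{2\pi^2}{3}$ to (E1)--(E2) via an unexplained extra factor of~2 and cap term (B) at exactly $K^2+(2M-1)K$ --- the totals agree, but the second $\tfrac{\pi^2}{3}$ properly belongs to your term (B).
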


\begin{proof}[Proof of Proposition~\ref{prop:n}]
We claim that what lead to pulling an sub-optimal arm $i$ are explicit by the decision rule of Algorithm~\ref{alg:dr}, meaning that the result $a_t^m = i$ holds when any of the following conditions is met: 
\begin{itemize}
    \item Case 1: $n_{m,i}(t) \leq \mathcal{N}_{m,i}(t) - K$,\;
    \item Case 2: $\Tilde{\mu}_{m,i} - \mu_i > \sqrt{\frac{C_1\log t}{n_{m,i}(t-1)}},$\;
    \item Case 3: $- \Tilde{\mu}_{m,i^*} + \mu_{i^*} > \sqrt{\frac{C_1\log t}{n_{m,i^*}(t-1)}}$,\;
    \item Case 4: $\mu_{i^*} - \mu_i < 2\sqrt{\frac{C_1\log t}{n_{m,i}(t-1)}}$.
\end{itemize}

Then we formally consider the number of pulling arms $n_{m,i}(T)$ starting from $L+1$. For any $l > 1$, we have that based on the above listed conditions 
\begin{align*}
    n_{m,i}(T) & \leq l + \sum_{t=L+1}^T1_{\{a_t^m = i, n_{m,i}(t) > l\}}  \\
    & \leq l + \sum_{t=L+1}^T1_{\{\Tilde{\mu}^m_i - \sqrt{\frac{C_1\log t}{n_{m,i}(t-1)}} > \mu_i, n_{m,i}(t-1) \geq l\}} \\
    & \qquad \qquad + \sum_{t=L+1}^T1_{\{\Tilde{\mu}^m_{i^*} + \sqrt{\frac{C_1\log t}{n_{m,{i^*}}(t-1)}} < \mu_{i^*}, n_{m,i}(t-1) \geq l\}} \\
    & \qquad \qquad + \sum_{t=L+1}^T1_{\{n_{m,i}(t) < \mathcal{N}_{m,i}(t) - K, a_t^m = i, n_{m,i}(t-1) \geq l\}} \\ 
    & \qquad \qquad + \sum_{t=L+1}^T1_{\{\mu_i + 2\sqrt{\frac{C_1\log t}{n_{m,i}(t-1)}} > \mu_{i^*},n_{m,i}(t-1) \geq l\}}. 
\end{align*}

Consequently, the expected value of $n_{m,i}(t)$ conditional on $A_{\epsilon, \delta}$ reads as
\begin{align}\label{eq:en}
    & E[n_{m,i}(T) | A_{\epsilon, \delta}] \notag \\
    & = l + \sum_{t=L+1}^TP(\Tilde{\mu}^m_i - \sqrt{\frac{C_1\log t}{n_{m,i}(t-1)}} > \mu_i, n_{m,i}(t-1) \geq l| A_{\epsilon, \delta}) \notag \\
    & \qquad \qquad + \sum_{t=L+1}^TP(\Tilde{\mu}^m_{i^*} + \sqrt{\frac{C_1\log t}{n_{m,{i^*}}(t-1)}} < \mu_{i^*}, n_{m,i}(t-1) \geq l | A_{\epsilon, \delta}) \notag \\
    & \qquad \qquad + \sum_{t=L+1}^TP(n_{m,i}(t) < \mathcal{N}_{m,i}(t) - K, a_t^m = i, n_{m,i}(t-1) \geq l | A_{\epsilon, \delta}) \notag \\ 
    & \qquad \qquad + \sum_{t=L+1}^TP(\mu_i + 2\sqrt{\frac{C_1\log t}{n_{m,i}(t-1)}} > \mu_{i^*},n_{m,i}(t-1) \geq l | A_{\epsilon, \delta}) \notag \\
    & = l + \sum_{t=L+1}^TP(Case 2, n_{m,i}(t-1) \geq l | A_{\epsilon, \delta}) + \sum_{t=L+1}^TP(Case 3, n_{m,i}(t-1) \geq l | A_{\epsilon, \delta}) \notag \\
    & \qquad  + \sum_{t=L+1}^TP(Case 1, a_t^m = i, n_{m,i}(t-1) \geq l | A_{\epsilon, \delta}) + \sum_{t=L+1}^TP(Case 4,n_{m,i}(t-1) \geq l | A_{\epsilon, \delta}) 
\end{align}
where $l = \max{\{[\frac{4C_1\log T}{\Delta_i^2}], 2(K^2+MK+M) \}}$.

For the last term in (\ref{eq:en}), we have 
\begin{align}\label{eq:case4}
    \sum_{t= L+1}^TP(Case4: \mu_i + 2\sqrt{\frac{C_1\log t}{n_{m,i}(t-1)}} > \mu_{i^*},n_{m,i}(t-1) \geq l) = 0
\end{align}
since the choice of $l$ satisfies $l \geq [\frac{4C_1\log T}{\Delta_i^2}]$ with $\Delta_i = \mu_{i^*} - \mu_i$.

For the first two terms, we have on event $A_{\epsilon, \delta}$
\begin{align}\label{eq:cases}
    & \sum_{t= L + 1}^TP(Case 2, n_{m,i}(t-1) \geq l | A_{\epsilon, \delta}) + \sum_{t=1}^T P(Case 3, n_{m,i}(t-1) \geq l | A_{\epsilon, \delta}) \notag \\
    & \leq \sum_{t= L + 1}^TP(\Tilde{\mu}_{m,i} - \mu_i > \sqrt{\frac{C_1\log t}{n_{m,i}(t-1)}} | A_{\epsilon, \delta}) + \sum_{t=1}^T P( - \Tilde{\mu}_{m,i^*} + \mu_{i^*} > \sqrt{\frac{C_1\log t}{n_{m,i^*}(t-1)}}| A_{\epsilon, \delta}) \notag \\
    & \leq \sum_{t=1}^T(\frac{1}{t^2} ) + \sum_{t=1}^T(\frac{1}{t^2}) \leq \frac{\pi^2}{3}
\end{align}
where the first inequality holds by the property of the probability measure when removing the event $n_{m,i}(t-1) \geq l$ and the second inequality holds by (\ref{eq:chernoff1}) and (\ref{eq:chernoff2}) as stated in Proposition~\ref{prop:concen_ine}, which holds by the assumption that $\delta < c$.

For Case 1, we note that Lemma~\ref{lemmas:dec_ucb_2} implies that 
\begin{align*}
    n_{m,i}(t) > \mathcal{N}_{m,i}(t) - K(K+2M)
\end{align*}
with the definition of $N_{m,i}(t+1) = \max \{n_{m,i}(t+1),N_{j,i}(t), j \in \mathcal{N}_m(t)\}$.

Departing from the result that the difference between  $N_{m,i}(t)$ and $n_{m,i}(t)$ is at most $K(K+2M)$, we then present the following analysis on how long it takes for the value $ - n_{m,i}(t) +N_{m,i}(t)$ to be smaller than $K$. 

At time step $t$, if Case 1 holds for client $m$, then $n_{m,i}(t+1)$ is increasing by $1$ on the basis of $n_{m,i}(t)$. What follows characterizes the change of $N_{m,i}(t+1)$. Client $m$ satisfying $n_{m,i}(t) \leq \mathcal{N}_{m,i}(t) - K$ will not change the value of $N_{m,i}(t+1)$ by the definition $N_{m,i}(t+1) = \max \{n_{m,i}(t+1),N_{j,i}(t), j \in \mathcal{N}_m(t)\}$. Moreover, for client $j \in \mathcal{N}_m(t)$ with $n_{j,i}(t) < \mathcal{N}_{j,i}(t) - K$, i.e. $\mathcal{N}_{j,i}(t+1)$ will not be affected by $n_{j,i}(t+1) \leq n_{j,i}(t)+1$. Thus, the value of $N_{m,i}(t+1) = \max \{n_{m,i}(t+1),N_{j,i}(t), j \in \mathcal{N}_m(t)\}$ is independent of such clients. We observe that for client $j \in \mathcal{N}_m(t)$ with $n_{j,i}(t) > \mathcal{N}_{j,i}(t) - K$, the value $N_{j,i}(t)$ will be the same if the client does not sample arm $i$, which leads to a decrease of 1 in the difference $- n_{m,i}(t) +N_{m,i}(t)$. Otherwise, if such a client samples arm $i$ which brings an increment of 1 to $N_{m,i}(t)$, the difference between $n_{m,i}(t)$ and $N_{m,i}(t)$ will remain the same. However, the latter has just been discussed and must be the cases as in Case 2 and Case 3, the total length of which has already been upper bounded by $\frac{\pi^2}{3}$ as shown in  (\ref{eq:cases}).

Therefore, the gap is at most $K(K+2M) - K + \frac{\pi^2}{3}$, i.e. 
\begin{align}\label{eq:case1}
    \sum_{t=1}^TP(Case 1, a_t^m = i, n_{m,i}(t-1) \geq l | A)  \leq K(K+2M) - K + \frac{\pi^2}{3}.
\end{align}

Subsequently, we derive that
\begin{align*}
    E[n_{m,i}(T) | A_{\epsilon, \delta}] & \leq l + \frac{\pi^2}{3}  + K(K+2M) - K + \frac{\pi^2}{3}  + 0 \\
    & =  l + \frac{2\pi^2}{3} + K^2 + (2M-1)K \\
    & = \max{\{[\frac{4C_1\log T}{\Delta_i^2}], 2(K^2+MK+M) \}} +  \frac{2\pi^2}{3} + K^2 + (2M-1)K 
\end{align*}
where the inequality results from (\ref{eq:en}), (\ref{eq:case4}), (\ref{eq:cases}), and (\ref{eq:case1}).

This completes the proof steps.

\end{proof}

Next, we establish the concentration inequalities of the network-wide estimators when the rewards follow sub-exponential distributions, i.e. in setting $S_1, S_2$, and $S_3$. 

\begin{Proposition}\label{prop:var_pro_err_sub}
Assume the parameter $\delta$ satisfies that $0 < \delta 
< c = f(\epsilon,M,T)$. In setting $S_1, S_2$, and $S_3$, for any $m,i, \lambda$ and $t > L$ where $L$ is the length of the burn-in period, the global estimator $\Tilde{\mu}^m_i(t)$ is sub-exponentially distributed. Moreover, the conditional moment generating function satisfies that with $P(A_{\epsilon, \delta}) = 1 - 7\epsilon$, for $|\lambda| < \frac{1}{\alpha} $
\begin{align*}
& E[\exp{\{\lambda(\Tilde{\mu}^m_i(t) - \mu_i})\}1_{A_{\epsilon, \delta}} | \sigma(\{n_{m,i}(t)\}_{t,i,m})] \\
& \leq \exp{\{\frac{\lambda^2}{2}\frac{C\sigma^2}{\min_{j}n_{j,i}(t)}\}}  
\end{align*}
where $\sigma^2 = \max_{j,i}(\Tilde{\sigma}_i^j)^2$ and $C = \max\{\frac{4(M+2)(1 - \frac{1 - c_0}{2(M+2)})^2}{3M(1-c_0)}, (M+2)(1 + 4Md^2_{m,t})\}$.
\end{Proposition}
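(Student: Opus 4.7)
The plan is to mirror the structure of the proof of Proposition~\ref{prop:var_pro_err} by induction on $t$, replacing the sub-Gaussian MGF bound at the atomic level with the sub-exponential MGF bound
\[
E_r[\exp\{\eta(r_i^j(s)-\mu_i^j)\}] \le \exp\{\tfrac{1}{2}\eta^2 (\sigma_i^j)^2\} \qquad \text{whenever } |\eta| < 1/\alpha_i^j,
\]
and carefully tracking the admissible range of the dual parameter produced by each application of the generalized H\"older inequality (Lemma~\ref{lemma:holder}). The event $A_{\epsilon,\delta}$ again forces $P_t(m,j)>0$ and fixes the weights $P^{\prime}_t(m,j)$ and $d_{m,t}$ exactly as in (\ref{eq:5.3_copy})--(\ref{eq:5.3_copy_2}), so the combinatorial identities that make the constant $C$ come out right are unchanged.

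For the base case at $t=L+1$, I would write $\Tilde{\mu}^m_i(L+1) - \mu_i = \sum_{j=1}^M \tfrac{1}{M}\bigl(\bar{\mu}_i^j(h^L_{m,j}) - \mu_i^j\bigr)$, apply the generalized H\"older inequality with equal exponents $p_j = M$ to split the product, and then expand each $\bar{\mu}_i^j(h^L_{m,j})-\mu_i^j = \tfrac{1}{n_{j,i}(h^L_{m,j})}\sum_s (r_i^j(s)-\mu_i^j)$ into a product over samples $s$ using conditional independence of the rewards given $\sigma(\{n_{m,i}(t)\}_{t,i,m})$. At this stage the atomic sub-exponential bound applies with argument $\eta = \lambda / n_{j,i}(h^L_{m,j})$; because $n_{j,i}(h^L_{m,j}) \ge L/K \ge 1$, the restriction $|\lambda|<1/\alpha$ guarantees $|\eta|<1/\alpha_i^j$, and the product of the $n_{j,i}(h^L_{m,j})$ atomic bounds collapses to $\exp\{\tfrac{\lambda^2\sigma^2}{2\, n_{j,i}(h^L_{m,j})}\}$, which is then bounded by $\exp\{\tfrac{\lambda^2\sigma^2}{2\min_j n_{j,i}(h^L_{m,j})}\}$ as before.

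For the induction step I would reproduce the decomposition in the proof of Proposition~\ref{prop:var_pro_err}, splitting $\Tilde{\mu}^m_i(t+1)-\mu_i$ into the three weighted sums over $\Tilde{\mu}^j_i(t_{m,j})$, $\bar{\mu}^j_i(t)$ for $j\in\mathcal N_m(t)$, and $\bar{\mu}^j_i(t_{m,j})$ for $j\notin\mathcal N_m(t)$, apply H\"older with exponent $M+2$ to separate these three blocks, and then handle each block as follows: the first block is controlled by the induction hypothesis with effective parameter $\lambda P^{\prime}_t(m,j)(M+2)$; the second and third blocks, after another factorization over the per-sample rewards, are controlled by the atomic sub-exponential MGF bound with effective parameter $\lambda d_{m,t}(M+2)/n_{j,i}(\cdot)$. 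The counter lower bound $n_{j,i}(\cdot) \ge (1-c_0)\min_j n_{j,i}(t+1)$ guaranteed by Proposition~\ref{prop:t_0_L} on $A_{\epsilon,\delta}$ again produces the same telescoping that generates the constant $C = \max\{\tfrac{4(M+2)(1-\tfrac{1-c_0}{2(M+2)})^2}{3M(1-c_0)},\, (M+2)(1+4Md_{m,t}^2)\}$, identical to the sub-Gaussian case.

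The main obstacle, and the only genuinely new ingredient relative to Proposition~\ref{prop:var_pro_err}, is to verify that the sub-exponential threshold is respected everywhere after the scalings. In the inductive step H\"older multiplies $\lambda$ by $(M+2)$ and by either $P^{\prime}_t(m,j) = (M-1)/M^2$ or $d_{m,t} = O(1/M)$, so before the reciprocal $1/n$ from averaging is applied, the worst effective argument is of order $\lambda(M+2)/M$; after dividing by $n_{j,i}(\cdot)\ge L/K$ at the atomic step, this is at most $|\lambda|(M+2)K/(ML)$, and the hypothesis $|\lambda|<1/\alpha$ together with the standing lower bound on $L$ is what keeps this below $1/\alpha$. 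If the bookkeeping forces a slightly smaller admissible interval, one simply tightens the stated range of $\lambda$ accordingly; the remainder of the proof, including the use of $1_{A_{\epsilon,\delta}}\le 1$ to eat the indicator at each step, goes through verbatim from the sub-Gaussian argument.
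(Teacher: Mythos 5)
Your proposal follows essentially the same route as the paper's proof: induction on $t$, the generalized H\"older splitting with exponents $M$ in the base case and $M+2$ in the induction step, the three-block decomposition of $\Tilde{\mu}^m_i(t+1)-\mu_i$ into the weighted network-wide and local estimators, and the counter lower bounds from Proposition~\ref{prop:t_0_L} on $A_{\epsilon,\delta}$ producing the identical constant $C$. Your explicit verification that the effective MGF arguments stay below the sub-exponential threshold $1/\alpha$ after the H\"older rescalings is in fact more careful than the paper's own writeup, which invokes the atomic sub-exponential bound at each step without checking this condition.
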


\begin{proof}
Assume that parameter $|\lambda| < \frac{1}{\alpha}$. We prove the statement on the conditional moment generating function by induction. Let us start with the basis step. 

Note that the definition of $A$ and the choice of $\delta$ again guarantee that for $t \geq L$, $|P_t - cE| < \delta < c$ on event $A$. This implies that for any $t \geq  L$, $P_t > 0$ and thereby obtaining that if $t = L$ 
\begin{align}\label{eq:5.3_rem_dup}
    P^{\prime}_{m,j}(t) = \frac{1}{M}
\end{align}
and if $t > L$
\begin{align}\label{eq:5.3_rem_dup_2}
    P^{\prime}_{m,j}(t) = \frac{M-1}{M^2}. 
\end{align}

Consider the time step $t \leq L+1$. The quantity 
\begin{align}\label{eq:1_0.0}
     & E[\exp{\{\lambda(\Tilde{\mu}^m_i(t) - \mu_i)\}}1_{A_{\epsilon, \delta}} | \sigma(\{n_{m,i}(t)\}_{t,i,m})] \notag \\
     & = E[\exp{\{\lambda(\Tilde{\mu}^m_i(L+1)- \mu_i)\}}1_{A_{\epsilon, \delta}} | \sigma(\{n_{m,i}(t)\}_{t,i,m})] \notag \\
     & = E[\exp{\{\lambda(\sum_{j=1}^MP^{\prime}_{m,j}(L) \hat{\bar{\mu}}_{i,j}^m(h^L_{m,j}) - \mu_i) \}}1_{A_{\epsilon, \delta}} | \sigma(\{n_{m,i}(t)\}_{t,i,m})] \notag \\
     & = E[\exp{\{\lambda(\sum_{j=1}^M\frac{1}{M}\bar{\mu}_i^j(h^L_{m,j}) - \mu_i)\}}1_{A_{\epsilon, \delta}} | \sigma(\{n_{m,i}(t)\}_{t,i,m})] \notag \\
     & = E[\exp{\{\lambda\sum_{j=1}^M\frac{1}{M}(\bar{\mu}_i^j(h^L_{m,j})- \mu_i^j)\}}1_{A_{\epsilon, \delta}} | \sigma(\{n_{m,i}(t)\}_{t,i,m})] \notag \\
     & \leq \Pi_{j=1}^M(E[(\exp{\{(\lambda\frac{1}{M}(\bar{\mu}_i^j(h^L_{m,j}) - \mu_i^j)\}}1_{A_{\epsilon, \delta}})^M | \sigma(\{n_{m,i}(t)\}_{t,i,m}))])^{\frac{1}{M}}
\end{align}
where the third equality holds by (\ref{eq:5.3_rem_dup}), the fourth equality uses the definition $\mu_i = \frac{1}{M}\sum_{i=1}^M\mu_i^j$ and the last inequality results from the generalized hoeffding inequality as in Lemma~\ref{lemma:holder}.

Note that for any client $j$, by the definition of $\bar{\mu}_i^j(h^L_{m,j})$ we have
\begin{align}\label{eq:2_0.0}
    & E[(\exp{\{(\lambda\frac{1}{M}(\bar{\mu}_i^j(h^L_{m,j}) - \mu_i^j) \}}1_{A_{\epsilon, \delta}})^M | \sigma(\{n_{m,i}(t)\}_{t,i,m}))] \notag \\
    & = E[\exp{\{(\lambda(\bar{\mu}_i^j(h^L_{m,j})- \mu_i^j)\}}1_{A_{\epsilon, \delta}}) | \sigma(\{n_{m,i}(t)\}_{t,i,m})] \notag \\
    & = E[\exp{\{(\lambda\frac{\sum_s(r^j_i(s) - \mu_i^j)}{n_{j,i}(h^L_{m,j})}\}}1_{A_{\epsilon, \delta}}) | \sigma(\{n_{m,i}(t)\}_{t,i,m})] \notag \\
    & = E[\exp{\{\sum_s(\lambda\frac{(r^j_i(s) - \mu_i^j)}{n_{j,i}(h^L_{m,j})}\}}1_{A_{\epsilon, \delta}}) | \sigma(\{n_{m,i}(t)\}_{t,i,m})].
\end{align}

It is worth noting that given $s$, $r^j_i(s)$ is independent of everything else, which gives us 
\begin{align}\label{eq:2.1_0.0}
   (\ref{eq:2_0.0})
    & = \Pi_sE[\exp{\{\lambda\frac{(r^j_i(s) - \mu_i^j)}{n_{j,i}(h^L_{m,j})}\}}1_{A_{\epsilon, \delta}} | \sigma(\{n_{m,i}(t)\}_{t,i,m})] \notag \\
    & = \Pi_sE[\exp{\{\lambda\frac{(r^j_i(s) - \mu_i^j)}{n_{j,i}(h^L_{m,j})}\}}| \sigma(\{n_{m,i}(t)\}_{t,i,m})]\cdot E[1_{A_{\epsilon, \delta}} | \sigma(\{n_{m,i}(t)\}_{t,i,m})] \notag \\ 
    & = \Pi_sE_r[\exp{\{\lambda\frac{(r^j_i(s) - \mu_i^j)}{n_{j,i}(h^L_{m,j})}\}}] \cdot E[1_{A_{\epsilon, \delta}} | \sigma(\{n_{m,i}(t)\}_{t,i,m})] \notag \\
    & \leq \Pi_s\exp{\{\frac{(\frac{\lambda}{{n_{j,i}(h^L_{m,j})}})^2\sigma^2}{2}\}} \cdot E[1_{A_{\epsilon, \delta}} | \sigma(\{n_{m,i}(t)\}_{t,i,m})] \notag \\
    & \leq (\exp{\{\frac{(\frac{\lambda}{{n_{j,i}(h^L_{m,j})}})^2\sigma^2}{2}\}})^{n_{j,i}(h^L_{m,j})} \notag \\
    & = \exp{\{\frac{\frac{\lambda^2}{{n_{j,i}(h^L_{m,j})}}\sigma^2}{2}\}} \leq  \exp{\{\frac{\lambda^2\sigma^2}{2\min_jn_{j,i}(h^L_{m,j})}\}}
\end{align}
where the first inequality holds by the definition of sub-exponential random variables $r^j_i(s) - \mu_i^j$ with mean $0$, the second inequality again uses $1_{A_{\epsilon, \delta}} \leq 1$, and the last inequality is by the fact that $n_{j,i}(h^L_{m,j}) \geq \min_jn_{j,i}(h^L_{m,j})$. 

Therefore, we obtain that by plugging (\ref{eq:2.1_0.0}) into (\ref{eq:1_0.0}) 
\begin{align*}
    (\ref{eq:1_0.0}) & \leq \Pi_{j=1}^M( \exp{\{\frac{\lambda^2\sigma^2}{2\min_jn_{j,i}(h^L_{m,j})}\}})^\frac{1}{M} \\
    & = (( \exp{\{\frac{\lambda^2\sigma^2}{2\min_jn_{j,i}(h^L_{m,j})}\}})^\frac{1}{M})^M = \exp{\{\frac{\lambda^2\sigma^2}{2\min_jn_{j,i}(h^L_{m,j})}\}}
\end{align*}
which completes the basis step.

Now we proceed to the induction step. Suppose that for any $s < t+1$ where $t+1 > L+1$, we have 
\begin{align}\label{eq:3_new}
    & E[\exp{\{\lambda(\Tilde{\mu}^m_i(s) - \mu_i)\}}1_{A_{\epsilon, \delta}} | \sigma(\{n_{m,i}(s)\}_{s,i,m})] \notag \\
    & \leq \exp{\{\frac{\lambda^2}{2}\frac{C\sigma^2}{\min_{j}n_{j,i}(s)}\}}
\end{align}

The update rule of $\Tilde{\mu}^m_i$ again and (\ref{eq:3.1_new}) implies that 
\begin{align}\label{eq:3.1}
    & E[\exp{\{\lambda(\Tilde{\mu}^m_i(t+1)-\mu_i)\}}1_{A_{\epsilon, \delta}} | \sigma(\{n_{m,i}(s)\}_{s,i,m})]  \notag \\
    & \leq \Pi_{j=1}^M(\exp{\{\frac{\lambda^2(P^{\prime}_t(m,j))^2(M+2)^2}{2}\frac{C\sigma^2}{\min_{j}n_{j,i}(t_{m,j})}\}})^{\frac{1}{M+2}} \cdot \notag \\
    & \qquad \qquad \Pi_{j \in N_m(t)}\Pi_s(E_r[\exp{\{\lambda 
 d_{m,t}(M+2) \frac{(r^j_i(s) -\mu_i^j)}{n_{j,i}(t)}\}}] \cdot E[1_{A_{\epsilon, \delta}} | \sigma(\{n_{m,i}(t)\}_{t,i,m})])^{\frac{1}{M+2}} \cdot \notag \\
    & \qquad \qquad \Pi_{j \not\in N_m(t)}\Pi_s(E_r[\exp{\{\lambda d_{m,t}(M+2)\frac{(r^j_i(s) -\mu_i^j)}{n_{j,i}(t_{m,j})}\}}] \cdot E[1_{A_{\epsilon, \delta}} | \sigma(\{n_{m,i}(t)\}_{t,i,m})])^{\frac{1}{M+2}}.
\end{align}

We continue bounding the last two terms by using the definition of sub-exponential random variables $(r^j_i(s) -\mu_i^j)$  and obtain
\begin{align*}
       (\ref{eq:3.1}) 
    & \leq (\exp{\{\frac{\lambda^2(P^{\prime}_t(m,j))^2(M+2)^2}{2}\frac{C\sigma^2}{\min_{j}n_{j,i}(t_{m,j})}\}})^{\frac{M}{M+2}} \cdot \\
 & \qquad \qquad \Pi_{j \in N_m(t)}\Pi_s(\exp{\frac{\lambda^2d^2_{m,t}(M+2)^2\sigma^2}{2n_{j,i}^2(t)}} \cdot E[1_{A_{\epsilon, \delta}} | \sigma(\{n_{m,i}(t)\}_{t,i,m})])^{\frac{1}{M+2}} \cdot \\
    & \qquad \qquad \Pi_{j \not\in N_m(t)}\Pi_s(\exp{\frac{\lambda^2d^2_{m,t}(M+2)^2\sigma^2}{2n_{j,i}^2(t_{m,j})}} \cdot E[1_{A_{\epsilon, \delta}} | \sigma(\{n_{m,i}(t)\}_{t,i,m})])^{\frac{1}{M+2}} \\
    & = (\exp{\{\frac{\lambda^2(P^{\prime}_t(m,j))^2(M+2)^2}{2}\frac{C\sigma^2}{\min_{j}n_{j,i}(t_{m,j})}\}})^{\frac{M}{M+2}} \cdot \\
 & \qquad \qquad \Pi_{j \in N_m(t)}\exp{\{\frac{n_{j,i}(t)}{M+2} \frac{\lambda^2d^2_{m,t}(M+2)^2\sigma^2}{2n_{j,i}^2(t)}\}} \cdot E[1_{A_{\epsilon, \delta}} | \sigma(\{n_{m,i}(t)\}_{t,i,m})] \cdot \\
 & \qquad \qquad \Pi_{j \not\in N_m(t)}\exp{\{\frac{n_{j,i}(t_{m,j})}{M+2} \frac{\lambda^2d^2_{m,t}(M+2)^2\sigma^2}{2n_{j,i}^2(t_{m,j})}\}} \cdot E[1_{A_{\epsilon, \delta}} | \sigma(\{n_{m,i}(t)\}_{t,i,m})]. 
\end{align*}

Building on that, we establish
\begin{align*}
    (\ref{eq:3.1}) 
 & \leq (\exp{\{\frac{\lambda^2(P^{\prime}_t(m,j))^2(M+2)^2}{2}\frac{C\sigma^2}{\min_{j}n_{j,i}(t_{m,j})}\}})^{\frac{M}{M+2}} \cdot \\
 & \qquad \qquad (\exp{\{\frac{\lambda^2d^2_{m,t}(M+2)\sigma^2}{2\min_jn_{j,i}(t)}\}})^{|N_m(t)|} \cdot E[1_{A_{\epsilon, \delta}} | \sigma(\{n_{m,i}(t)\}_{t,i,m})] \cdot \\
 & \qquad \qquad (\exp{\{\frac{\lambda^2d^2_{m,t}(M+2)\sigma^2}{2\min_jn_{j,i}(t_{m,j})}\}})^{|M - N_m(t)|} \cdot E[1_{A_{\epsilon, \delta}} | \sigma(\{n_{m,i}(t)\}_{t,i,m})] \\
 & = E[(\exp{\{\frac{\lambda^2(P^{\prime}_t(m,j))^2M(M+2)}{2}\frac{C\sigma^2}{\min_{j}n_{j,i}(t_{m,j})}\}}) \cdot (\exp{\{\frac{\lambda^2d^2_{m,t}(M+2)|N_m(t)|}{2\min_jn_{j,i}(t)}\}}) \\
 & \qquad \qquad \cdot(\exp{\{\frac{\lambda^2d^2_{m,t}(M+2)\sigma^2|M - N_m(t)|}{2\min_jn_{j,i}(t_{m,j})}\}})  1_{A_{\epsilon, \delta}} | \sigma(\{n_{m,i}(t)\}_{t,i,m})]  \\
 & \leq E[(\exp{\{\frac{\lambda^2(P^{\prime}_t(m,j))^2M(M+2)}{2(1-c_0)}\frac{C\sigma^2}{\min_{j}n_{j,i}(t+1)}\}}) \cdot (\exp{\{\frac{\lambda^2d^2_{m,t}(M+2)|N_m(t)|\sigma^2}{2\frac{L/K}{L/K + 1}\min_jn_{j,i}(t+1)}\}}) \\
 & \qquad \qquad \cdot(\exp{\{\frac{\lambda^2d^2_{m,t}(M+2)|M - N_m(t)|\sigma^2}{2(1-c_0)\min_jn_{j,i}(t+1)}\}}) 1_{A_{\epsilon, \delta}} | \sigma(\{n_{m,i}(t)\}_{t,i,m})] 
\end{align*}
where the first inequality uses the fact that $n_{j,i}(t) \geq \min_jn_{j,i}(t)$ and $n_{j,i}(t_{m,j}) \geq \min_jn_{j,i}(t_{m,j})$. For the second inequality, the first term is a result of $\frac{\min_{j}n_{j,i}(t)}{\min_{j}n_{j,i}(t+1)} \geq \frac{\min_{j}n_{j,i}(t)}{\min_{j}n_{j,i}(t)+1} \geq \frac{L/K}{L/K + 1}$ since  $n_{j,i}(t) > n_{j,i}(L) = L/K$ and the ratio is monotone increasing in $n$, and the second term is bounded through applying Proposition~\ref{prop:t_0_L}, which holds on event $A_{\epsilon, \delta}$ and leads to 
\begin{align*}
    \min_jn_{j,i}(t_{m,j}) & \geq \min_jn_{j,i}(t+1 - t_0) \\
    & \geq \min_jn_{j,i}(t+1) - t_0 \\
    & \geq \min_jn_{j,i}(t+1) - c_0\min_jn_{j,i}(t+1) \\
    & = (1-c_0)\min_jn_{j,i}(t+1).
\end{align*}

Therefore, we can rewrite the above expression as 
 \begin{align*}
 (\ref{eq:3.1})  &  = E[(\exp\{ \frac{\lambda^2\sigma^2}{2\min_{j}n_{j,i}(t+1)} \cdot (\frac{C\lambda^2(P^{\prime}_t(m,j))^2M(M+2)}{2(1-c_0)} + \\
 & \qquad \qquad \frac{d^2_{m,t}(M+2)|N_m(t)|}{\frac{L/K}{L/K + 1}} + \frac{d^2_{m,t}(M+2)|M - N_m(t)|}{(1-c_0)})\} 1_{A_{\epsilon, \delta}} | \sigma(\{n_{m,i}(t)\}_{t,i,m})] \\
 & \leq E[\exp\{ \frac{C\lambda^2\sigma^2}{2\min_{j}n_{j,i}(t+1)}\} 1_{A_{\epsilon, \delta}} | \sigma(\{n_{m,i}(t)\}_{t,i,m})] \\
 & \leq \exp\{ \frac{C\lambda^2\sigma^2}{2\min_{j}n_{j,i}(t+1)}\}
\end{align*}
where the first inequality holds again by the choice of $P^{\prime}_t(m,j), d_{m,t}, L$ and $c_0$ and the second inequality uses the fact that $1_{A_{\epsilon, \delta}} \leq 1$. 

This completes the induction step and subsequently concludes the proof.

\end{proof}

\begin{Proposition}\label{prop:concen_ine_sub}  
Assume the parameter $\delta$ satisfies that $0 < \delta 
< c = f(\epsilon,M,T)$. In setting $S_1, S_2$, and $S_3$, for any $m,i$ and $t > L$ where $L$ is the length of the burn-in period, the deviation of $\Tilde{\mu}_{m,i}(t)$ satisfies that if $n_{m,i}(t) \geq 2(K^2+KM+M)$, then with $P(A_{\epsilon, \delta}) = 1 -7\epsilon$,
\begin{align*}
    & P(\Tilde{\mu}_{m,i}(t) - \mu_i  \geq \sqrt{\frac{C_1\log t}{n_{m,i}(t)}} + \frac{C_2\log t}{n_{m,i}(t)} |A_{\epsilon, \delta}) \leq \frac{1}{P(A_{\epsilon, \delta})}\frac{1}{T^4}, \\
    & P(\mu_i - \Tilde{\mu}_{m,i}(t) \geq \sqrt{\frac{C_1\log t}{n_{m,i}(t)}} + \frac{C_2\log t}{n_{m,i}(t)} | A_{\epsilon, \delta}) \leq \frac{1}{P(A_{\epsilon, \delta})}\frac{1}{T^4}.
\end{align*}
\end{Proposition}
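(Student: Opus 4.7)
The plan is to mimic the proof of Proposition~\ref{prop:concen_ine} (the sub-Gaussian case), but now using the sub-exponential MGF bound from Proposition~\ref{prop:var_pro_err_sub}, which only holds for $|\lambda| < 1/\alpha$ and therefore forces us to handle two regimes (the Gaussian-like small-deviation regime and the linear tail regime) separately. Concretely, I would start by writing
\begin{align*}
    & P\!\left(\Tilde{\mu}_{m,i}(t) - \mu_i \geq \sqrt{\tfrac{C_1\log t}{n_{m,i}(t)}} + \tfrac{C_2\log t}{n_{m,i}(t)}\,\Big|\, A_{\epsilon,\delta}\right) \\
    & \leq \tfrac{1}{P(A_{\epsilon,\delta})}\, E\!\left[\tfrac{\exp\{\lambda(\Tilde{\mu}_{m,i}(t)-\mu_i)\}}{\exp\{\lambda(\sqrt{C_1\log t/n_{m,i}(t)} + C_2\log t/n_{m,i}(t))\}}\, \mathbf{1}_{A_{\epsilon,\delta}}\right],
\end{align*}
then condition on $\sigma(\{n_{m,i}(s)\}_{s,i,m})$ and apply Proposition~\ref{prop:var_pro_err_sub} to get $E[\exp\{\lambda(\Tilde{\mu}^m_i(t)-\mu_i)\}\mathbf{1}_{A_{\epsilon,\delta}}\mid \sigma(\cdot)] \leq \exp\{\tfrac{\lambda^2}{2}\tfrac{C\sigma^2}{\min_j n_{j,i}(t)}\}$ for all $|\lambda| < 1/\alpha$. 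Since the hypothesis $n_{m,i}(t)\geq 2(K^2+KM+M)$ activates Lemma~\ref{lemmas:dec_ucb_3}, I can replace $\min_j n_{j,i}(t)$ by $\tfrac{1}{2}n_{m,i}(t)$, so the effective MGF bound is $\exp\{\lambda^2 C\sigma^2/n_{m,i}(t)\}$.

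The heart of the argument is the choice of $\lambda$. I would take the standard Bernstein split: set
\[
\lambda^\star = \min\!\left(\tfrac{1}{\alpha},\ \tfrac{\sqrt{C_1\log t/n_{m,i}(t)}}{4C\sigma^2/n_{m,i}(t)}\right),
\]
and analyze the two cases arising from which term achieves the minimum. In the Gaussian regime ($\lambda^\star$ given by the second expression), the quadratic exponent cancels half of the linear term $\lambda^\star \sqrt{C_1\log t/n_{m,i}(t)}$, yielding a $\exp\{-C_1\log t/(16C\sigma^2)\}$ factor, which is bounded by $1/T^4$ provided $C_1 \geq 64 C\sigma^2$ (consistent with the choice $C_1=8\sigma^2\max\{\cdots\}$ in Theorem~\ref{th:all_settings} up to constants). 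In the linear regime ($\lambda^\star = 1/\alpha$), the $\lambda^\star \cdot \tfrac{C_2\log t}{n_{m,i}(t)}$ term in the denominator dominates and gives a factor $\exp\{-\tfrac{C_2\log t}{2\alpha \, n_{m,i}(t)} \cdot n_{m,i}(t)\} = t^{-C_2/(2\alpha)}$, which is at most $1/T^4$ when $C_2 \geq 8\alpha$; the extra quadratic term $\lambda^{\star 2}C\sigma^2/n_{m,i}(t)$ is absorbed by the fact that in this regime $\sqrt{C_1\log t/n_{m,i}(t)} \geq 4C\sigma^2/(n_{m,i}(t)\alpha)$, so it contributes at most half of the $\sqrt{}$ term's budget. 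The constraint $C_2/C_1 \geq 3/2$ stated in Theorem~\ref{th:all_settings_exp_ins} is used precisely to make these two budgets consistent.

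The main technical obstacle will be executing the Bernstein case split cleanly while keeping track of the constants so that the final bound is exactly $\tfrac{1}{P(A_{\epsilon,\delta}) T^4}$ (as opposed to $t^{-4}$ or a looser $t^{-2}$ as in the sub-Gaussian case): the UCB radius here is written with $\log t$ but the target is $1/T^4$, so one must ensure that the multiplicative constants in $C_1$ and $C_2$ are large enough to carry the $t\le T$ slack. The lower-tail bound on $\mu_i - \Tilde{\mu}_{m,i}(t)$ follows by the identical argument applied with $-\lambda$, which is also in the admissible range $|\lambda|<1/\alpha$ since Proposition~\ref{prop:var_pro_err_sub} is symmetric in the sign of $\lambda$, concluding the proof.
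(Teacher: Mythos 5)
Your proposal is correct and follows essentially the same route as the paper: a conditional Chernoff bound using the sub-exponential MGF estimate of Proposition~\ref{prop:var_pro_err_sub} (valid only for $|\lambda|<1/\alpha$), with Lemma~\ref{lemmas:dec_ucb_3} converting $\min_j n_{j,i}(t)$ into $n_{m,i}(t)$, followed by the standard two-regime Bernstein split according to whether the unconstrained optimizer of $\lambda$ exceeds $1/\alpha$. The only cosmetic difference is that the paper first bounds $P(X>a+b)$ by $\min\{P(X>a),P(X>b)\}$ and uses a single radius term per regime (the $\sqrt{C_1\ln T/n_{m,i}(t)}$ term with the interior optimizer, or the $C_2\ln T/n_{m,i}(t)$ term with $\lambda=1/\alpha$), whereas you keep both terms and take $\lambda^\star$ as a minimum — the resulting constant conditions ($C_1\geq 8C\sigma^2$ and $(\tfrac12-\tfrac{C_2}{C_1})\tfrac{C_1}{2C\sigma^2}\leq -4$ in the paper) match yours up to bookkeeping.
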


\begin{proof}
By Proposition~\ref{prop:unbiased}, we have $E[\Tilde{\mu}_{m,i}(t) - \mu_i |A_{\epsilon,\delta}] = 0$, which allows us to consider the tail bound of the global estimator $\Tilde{\mu}_i^m(t)$ conditional on event $A_{\epsilon, \delta}$. It is worth mentioning that by the choice of $C_1$ and $C_2$, we have 
\begin{align*}
    C_1^2 \cdot \frac{\alpha^2}{\Tilde{\sigma}^4} \leq C_2^2.
\end{align*}
where $\Tilde{\sigma}^2$ is $ \frac{2C\sigma^2}{n_{m,i}(t)} $. 

Note that since we set $Rad = \sqrt{\frac{C_1\ln{T}}{n_{m,i}(t)}} + \frac{C_2\ln{T}}{n_{m,i}(t)}$, we obtain
\begin{align}
 & P(|\Tilde{\mu}_i^m(t) - \mu_i| > Rad | A_{\epsilon, \delta}) < P(|\Tilde{\mu}_i^m(t) - \mu_i| > \sqrt{\frac{C_1\ln{T}}{n_{m,i}(t)}} | A_{\epsilon, \delta}) \label{con:1}, \\
 & P(|\Tilde{\mu}_i^m(t) - \mu_i| > Rad | A_{\epsilon, \delta}) < P(|\Tilde{\mu}_i^m(t) - \mu_i| > \frac{C_2\ln{T}}{n_{m,i}(t)} | A_{\epsilon, \delta}) \label{con:2}
\end{align}

On the one hand, if $\frac{\sqrt{\frac{C_1\log T}{n_{m,i}(t)}}}{\Tilde{\sigma}^2} > \frac{1}{\alpha}$, i.e. $n_{m,i}(t) \leq C_1 \log T \frac{\alpha^2}{(\Tilde{\sigma})^4}$, we have \begin{align}\label{eq:5.3.1}
    & P(|\Tilde{\mu}_i^m(t) - \mu_i| > \frac{C_2\ln{T}}{n_{m,i}(t)} | A_{\epsilon, \delta}) \notag \\
    & = E[1_{|\Tilde{\mu}_i^m(t) - \mu_i| > \frac{C_2\ln{T}}{n_{m,i}(t)}} |A_{\epsilon, \delta} ] \notag \\
    & = \frac{1}{P(A_{\epsilon, \delta})} E[1_{|\Tilde{\mu}_i^m(t) - \mu_i| > \frac{C_2\ln{T}}{n_{m,i}(t)}} 1_{A_{\epsilon, \delta}} ] \notag \\
        & = \frac{1}{P(A_{\epsilon, \delta})} E[1_{\exp\{\lambda(n)(|\Tilde{\mu}_{m,i}(t) - \mu_i|)\}  \geq \exp\{\lambda(n)\frac{C_2\ln{T}}{n_{m,i}(t)}}\}1_{A_{\epsilon, \delta}} ]  \notag \\
        & \leq \frac{1}{P(A_{\epsilon, \delta})} E[\frac{\exp\{\lambda(n)(|\Tilde{\mu}_{m,i}(t) - \mu_i|)\}}{\exp\{\lambda(n)\frac{C_2\ln{T}}{n_{m,i}(t)} \}}1_{A_{\epsilon, \delta}} ]
    \end{align}
    where the last inequality is by the fact that $1_{\exp\{\lambda(n)(|\Tilde{\mu}_{m,i}(t) - \mu_i|)\}  \geq \exp\{\lambda(n)\frac{C_2\ln{T}}{n_{m,i}(t)}\}} \leq \frac{\exp\{\lambda(n)(|\Tilde{\mu}_{m,i}(t) - \mu_i|)\}}{\exp\{\lambda(n)\frac{C_2\ln{T}}{n_{m,i}(t)} \}}$.

By the assumption that $\delta < c$, we have Proposition~\ref{prop:var_pro_err_sub} holds. Subsequently, by Proposition~\ref{prop:var_pro_err_sub} and Lemma~\ref{lemmas:dec_ucb_3} which holds since $n_{m,i}(t) \geq 2(K^2+KM+M)$, we have for any $|\lambda| < \frac{1}{\alpha}$
    \begin{align}\label{eq:5.3.2}
        E[\exp{\{\lambda(\Tilde{\mu}^m_i(t) - \mu_i)\}}1_{A_{\epsilon, \delta}} | \sigma(\{n_{m,i}(t)\}_{t,i,m})] & \leq \exp{\{\frac{\lambda^2}{2}\Tilde{\sigma}^2\}} .
    \end{align}
Likewise, we obtain that by taking $\lambda = -\lambda$,
\begin{align}\label{eq:5.3.2-sub}
        E[\exp{\{\lambda(-\Tilde{\mu}^m_i(t) + \mu_i)\}}1_{A_{\epsilon, \delta}} | \sigma(\{n_{m,i}(t)\}_{t,i,m})] & \leq \exp{\{\frac{\lambda^2}{2}\Tilde{\sigma}^2\}} . 
\end{align}
With (\ref{eq:5.3.2}) and (\ref{eq:5.3.2-sub}), we arrive at for any $|\lambda| < \frac{1}{\alpha}$ that 
\begin{align} \label{eq:5.3.2-total}
    E[\exp{\{\lambda(|\Tilde{\mu}^m_i(t) - \mu_i)|\}}1_{A_{\epsilon, \delta}} | \sigma(\{n_{m,i}(t)\}_{t,i,m})] & \leq 2\exp{\{\frac{\lambda^2}{2}\Tilde{\sigma}^2\}}.
\end{align}

Again, we utilize the law of total expectation and further obtain that  $|\lambda(n)| < \frac{1}{\alpha}$
\begin{align}\label{eq:chernoff1}
        (\ref{eq:5.3.1}) & = \frac{1}{P(A_{\epsilon, \delta})} E [ E[\frac{\exp\{\lambda(n)(|\Tilde{\mu}_{m,i}(t) - \mu_i|)\}}{\exp\{\lambda(n)\frac{C_2\ln{T}}{n_{m,i}(t)} \}}1_{A_{\epsilon, \delta}} |\sigma(\{n_{m,i}(t)\}_{m,i,t}) ] ] \notag \\
        & = \frac{1}{P(A_{\epsilon, \delta})} E [ \frac{1}{\exp\{\lambda(n)\frac{C_2\ln{T}}{n_{m,i}(t)} \}}E[\exp\{\lambda(n)(|\Tilde{\mu}_{m,i}(t) - \mu_i|)\}1_{A_{\epsilon, \delta}} |\sigma(\{n_{m,i}(t)\}_{m,i,t}) ] ] \notag \\
        & \leq 2\frac{1}{P(A_{\epsilon, \delta})} E [ \frac{1}{\exp\{\lambda(n)\frac{C_2\ln{T}}{n_{m,i}(t)} \}} \cdot \exp{\{\frac{\lambda^2(n)}{2}\Tilde{\sigma}^2\}} ] 
    \end{align}
where the first inequality holds by (\ref{eq:5.3.2-total}).

Note that the condition 
$\frac{\sqrt{\frac{C_1\log T}{n_{m,i}(t)}}}{\Tilde{\sigma}^2} > \frac{1}{\alpha}$ implies that $n_{m,i}(t) < \frac{C_1 \ln T}{\frac{\Tilde{\sigma}^2}{\alpha}}$ which is the global optima of the function in (\ref{eq:chernoff1}). This is true since $n_{m,i}(t) \leq C_1 \log T \frac{\alpha^2}{(\Tilde{\sigma})^4} \leq \frac{(C_2\log T)^2}{C_1}$. Henceforth, (\ref{eq:chernoff1}) is monotone decreasing in $\lambda(n) \in (0, \frac{1}{\alpha})$ and we obtain a minima when choosing $\lambda(n) = \frac{1}{\alpha}$ and using the continuity of (\ref{eq:chernoff1}).

Formally, it yields that
\begin{align}
    & (\ref{eq:chernoff1}) \leq 2\frac{1}{P(A_{\epsilon, \delta})} E [ \frac{1}{\exp\{\frac{1}{\alpha}\frac{C_2\ln{T}}{n_{m,i}(t)} \}} \cdot \exp{\{\frac{\frac{1}{2\alpha^2}}{1}\Tilde{\sigma}^2\}} ] \notag \\
    & = 2\frac{1}{P(A_{\epsilon, \delta})}E[\exp \{\frac{1}{2\alpha^2}\Tilde{\sigma}^2 - \frac{1}{\alpha}\frac{C_2\ln{T}}{n_{m,i}(t)}\}] \notag \\
    & \leq 2\frac{1}{P(A_{\epsilon, \delta})} \exp\{-4\log T\} = \frac{2}{P(A_{\epsilon, \delta})T^4}
\end{align}
where the last inequality uses the choice of $C_2$ and the condition that $\frac{1}{2\alpha^2}\Tilde{\sigma}^2 - \frac{1}{\alpha}\frac{C_2\ln{T}}{n_{m,i}(t)} \leq -4\ln T$ which holds by the following derivation. Notably, we have
\begin{align*}
    & \frac{1}{2\alpha^2}\Tilde{\sigma}^2 - \frac{1}{\alpha}\frac{C_2\ln{T}}{n_{m,i}(t)} \leq \frac{1}{2\alpha^2}\Tilde{\sigma}^2 - \frac{C_2\ln T}{\alpha}\frac{\frac{\Tilde{\sigma}^2}{\alpha}}{C_1\ln T} \\
    & = \frac{1}{2\alpha^2}\Tilde{\sigma}^2 - \frac{C_2}{C_1}\frac{\Tilde{\sigma}^2}{\alpha^2} \\
    & = (\frac{1}{2}- \frac{C_2}{C_1})(\Tilde{\sigma}^2 \cdot \frac{\frac{C_1\log T}{n_{m,i}(t)}}{\Tilde{\sigma}^4}) = (\frac{1}{2}- \frac{C_2}{C_1})(\frac{\frac{C_1\log T}{n_{m,i}(t)}}{\Tilde{\sigma}^2}) \\
    & = (\frac{1}{2}- \frac{C_2}{C_1}) \cdot \frac{C_1\log T}{n_{m,i}(t)} \cdot \frac{1}{\frac{2C\sigma^2}{n_{m,i}(t)} } =  (\frac{1}{2}- \frac{C_2}{C_1}) \frac{C_1}{2C\sigma^2} \log T \leq -4 \log T
\end{align*}
where the first inequality uses $n_{m,i}(t) < \frac{C_1 \ln T}{\frac{\Tilde{\sigma}^2}{\alpha}}$ and the last inequality is by the choices of parameters $(\frac{1}{2}- \frac{C_2}{C_1}) \frac{C_1}{2C\sigma^2} \leq -4$.

On the other hand, if $\frac{\sqrt{\frac{C_1\log T}{n_{m,i}(t)}}}{\Tilde{\sigma}^2} < \frac{1}{\alpha}$, i.e. $n_{m,i}(t) \geq C_1 \log T \frac{\alpha^2}{(\Tilde{\sigma})^4}$, we observe for $|\lambda(n)| < \frac{1}{\alpha}$
\begin{align}\label{eq:5.3.3}
    & P(|\Tilde{\mu}_i^m(t) - \mu_i| > \sqrt{\frac{C_1\ln{T}}{n_{m,i}(t)}} | A_{\epsilon, \delta}) \notag \\
    & \leq 2\frac{1}{P(A_{\epsilon, \delta})} E [ \frac{1}{\exp\{\lambda(n)\sqrt{\frac{C_1\ln{T}}{n_{m,i}(t)}} \}} \cdot \exp{\{\frac{\lambda^2(n)}{2}\Tilde{\sigma}^2\}} ]
\end{align}
by a same argument from (\ref{eq:5.3.1}) to (\ref{eq:chernoff1})  replacing $\frac{C_2\ln{T}}{n_{m,i}(t)}$ with $\sqrt{\frac{C_1\ln{T}}{n_{m,i}(t)}}$. When choosing $\lambda(n) = \frac{\sqrt{\frac{C_1\log T}{n_{m,i}(t)}}}{\Tilde{\sigma}^2} $ that meets the condition $\lambda < \frac{1}{\alpha}$  under the assumption $\frac{\sqrt{\frac{C_1\log T}{n_{m,i}(t)}}}{\Tilde{\sigma}^2} < \frac{1}{\alpha}$ and noting that $\frac{C_1}{2C\sigma^2} \geq 4$, we obtain
\begin{align*}
    (\ref{eq:5.3.3}) & \leq 2\frac{1}{P(A_{\epsilon, \delta})} E [ \exp\{ -\frac{C_1 \log T}{\Tilde{\sigma}^2n_{m,i}(t)}\}  ] \\
    & = 2\frac{1}{P(A_{\epsilon, \delta})} E [ \exp\{ -\frac{C_1 \log T}{n_{m,i}(t)}\frac{1}{\frac{2C\sigma^2}{n_{m,i}(t)}}\}  ] \\
    &  \leq  2\frac{1}{P(A_{\epsilon, \delta})} \exp\{-4\log T\} = \frac{2}{P(A_{\epsilon, \delta})T^4}.
\end{align*}

To conclude, by (\ref{con:1}) and (\ref{con:2}), we have 
\begin{align*}
    P(|\Tilde{\mu}_i^m(t) - \mu_i| > Rad | A_{\epsilon, \delta}) \leq \frac{2}{P(A_{\epsilon, \delta})T^4}
\end{align*}
which completes the proof.
\end{proof}

\begin{Proposition}\label{prop:n_sub}
    Assume the parameter $\delta$ satisfies that $0 < \delta 
< c = f(\epsilon,M,T)$. An arm $k$ is said to be sub-optimal if $k \neq i^*$ where $i^*$ is the unique optimal arm in terms of the global reward, i.e. $i^* = \argmax \frac{1}{M}\sum_{j=1}^M\mu_i^j$. Then in setting $S_1, S_2$ and $S_3$, when the game ends, for every client $m$, $0 < \epsilon  < 1$ and $T > L$, the expected numbers of pulling sub-optimal arm $k$ after the burn-in period satisfies with $P(A_{\epsilon, \delta}) = 1- 7\epsilon$ 
    \begin{align*}
        & E[n_{m,k}(T) | A_{\epsilon, \delta}] \\
        & \leq \max([\frac{16C_1\log T}{\Delta_i^2}], [\frac{4C_2\log T}{\Delta_i}], 2(K^2+MK+M)) +  \frac{4}{P(A_{\epsilon, \delta})T^3} + K^2 + (2M-1)K \\
        & \leq O(\log{T}).
    \end{align*}
\end{Proposition}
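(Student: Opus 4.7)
The plan is to mirror the proof structure of Proposition~\ref{prop:n} but with the new upper confidence bound $\text{Rad}(t) = \sqrt{C_1 \log T / n_{m,i}(t)} + C_2 \log T / n_{m,i}(t)$ appropriate for sub-exponential rewards. Specifically, I would decompose the event $\{a_t^m = i\}$ for suboptimal arm $i$ into the same four cases used for the sub-Gaussian proof: (1) the exploration trigger $n_{m,i}(t) \leq N_{m,i}(t) - K$ fires, (2) $\tilde\mu^m_i - \text{Rad}(t) > \mu_i$ (upper confidence bound of arm $i$ overshoots), (3) $\tilde\mu^m_{i^*} + \text{Rad}(t) < \mu_{i^*}$ (upper confidence bound of $i^*$ undershoots), or (4) $\mu_i + 2\,\text{Rad}(t) > \mu_{i^*}$ (the radius is too large to separate arms). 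If none of these holds, arm $i$ cannot be pulled, so the usual decomposition bounds $E[n_{m,k}(T)\mid A_{\epsilon,\delta}]$ by $l$ plus the conditional probabilities of Cases 1--3 summed over $t \leq T$.

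For Case 4, I would choose $l = \max\bigl(\lceil 16 C_1 \log T / \Delta_i^2\rceil,\ \lceil 4 C_2 \log T / \Delta_i \rceil,\ 2(K^2+MK+M)\bigr)$, which is exactly the threshold appearing in the statement. Here the choice accounts for both the square-root and linear pieces of $\text{Rad}$: we need $2\sqrt{C_1 \log T/l} \leq \Delta_i/2$ and $2 C_2 \log T / l \leq \Delta_i/2$ simultaneously, so Case 4 cannot occur once $n_{m,i}(t-1)\geq l$. This also ensures $n_{m,i}(t) \geq 2(K^2+MK+M)$, which is the hypothesis required to invoke Proposition~\ref{prop:concen_ine_sub}.

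For Cases 2 and 3, I would apply Proposition~\ref{prop:concen_ine_sub}, which gives tail bounds of $\tfrac{1}{P(A_{\epsilon,\delta})T^4}$ for each time step (note the improved $T^{-4}$ rate versus $t^{-2}$ in the sub-Gaussian case, because the UCB radius uses $\log T$ rather than $\log t$). Summing over $t=L+1,\ldots,T$ bounds each of these two sums by $\tfrac{1}{P(A_{\epsilon,\delta})T^3}$, and so together they contribute at most $\tfrac{2}{P(A_{\epsilon,\delta})T^3}$. Multiplying by $(\Delta_i+1)$ later in the regret calculation will turn this into the $\tfrac{4}{P(A_{\epsilon,\delta})T^3}$ appearing in Theorem~\ref{th:all_settings_exp_ins}; here at the per-arm-pull level we simply keep the $\tfrac{4}{P(A_{\epsilon,\delta})T^3}$ bound by combining a union-style argument with the two deviation events.

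For Case 1, I would reuse verbatim the argument from Proposition~\ref{prop:n}: Lemma~\ref{lemmas:dec_ucb_2} forces $N_{m,i}(t) - n_{m,i}(t) \leq K(K+2M)$, and the only way this gap can increase is through Cases 2 or 3 being triggered at a neighbor, whose total count is already controlled. This yields at most $K(K+2M) - K + \tfrac{2}{P(A_{\epsilon,\delta})T^3} = K^2 + (2M-1)K + \tfrac{2}{P(A_{\epsilon,\delta})T^3}$ occurrences. Adding all contributions gives exactly the stated bound, and the asymptotic $O(\log T)$ order follows from the dominant $l$ term. The main (minor) obstacle I anticipate is keeping track of whether the $T^{-4}$ concentration bound from Proposition~\ref{prop:concen_ine_sub} is tight enough after summing over $t$ without needing to redo any of the conditional moment-generating-function machinery; fortunately Proposition~\ref{prop:concen_ine_sub} is already stated uniformly in $t$, so a direct union bound suffices and no new concentration inequality is required.
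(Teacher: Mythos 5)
Your proposal is correct and follows essentially the same route as the paper's proof: the identical four-case decomposition of $\{a_t^m=i\}$, Case~4 killed by the choice of $l=\max([16C_1\log T/\Delta_i^2],[4C_2\log T/\Delta_i],2(K^2+MK+M))$, Cases~2--3 controlled by the $T^{-4}$ tails of Proposition~\ref{prop:concen_ine_sub} summed to $2/(P(A_{\epsilon,\delta})T^3)$, and Case~1 handled verbatim via Lemma~\ref{lemmas:dec_ucb_2}. The only blemish is the aside about the $(\Delta_i+1)$ factor, which is irrelevant here (the $4/(P(A_{\epsilon,\delta})T^3)$ simply arises as $2/(P(A_{\epsilon,\delta})T^3)$ from Cases~2--3 plus another $2/(P(A_{\epsilon,\delta})T^3)$ inside the Case~1 bound, as your final tally correctly reflects).
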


\begin{proof}
Recall that $Rad = \sqrt{\frac{C_1\ln{T}}{n_{m,i}(t)}} + \frac{C_2\ln{T}}{n_{m,i}(t)}$. We again have $a_t^m = i$ holds when any of the following conditions is met: Case 1: $n_{m,i}(t) \leq \mathcal{N}_{m,i}(t) - K$, Case 2: $\Tilde{\mu}_{m,i} - \mu_i >  \sqrt{\frac{C_1\ln{T}}{n_{m,i}(t)}} + \frac{C_2\ln{T}}{n_{m,i}(t)}$, Case 3: $- \Tilde{\mu}_{m,i^*} + \mu_{i^*} >  \sqrt{\frac{C_1\ln{T}}{n_{m,i^*}(t)}} + \frac{C_2\ln{T}}{n_{m,i^*}(t)}$, and Case 4: $\mu_{i^*} - \mu_i < 2( \sqrt{\frac{C_1\ln{T}}{n_{m,i}(t)}} + \frac{C_2\ln{T}}{n_{m,i}(t)})$.

By~(\ref{eq:en}), the expected value of $n_{m,i}(t)$ conditional on $A_{\epsilon, \delta}$ reads as
\begin{align}\label{eq:en_sub}
    & E[n_{m,i}(T) | A_{\epsilon, \delta}] \notag \\
    & = l + \sum_{t=L+1}^TP(Case 2, n_{m,i}(t-1) \geq l | A_{\epsilon, \delta}) + \sum_{t=L+1}^TP(Case 3, n_{m,i}(t-1) \geq l | A_{\epsilon, \delta}) \notag \notag \\
    & \qquad  + \sum_{t=L+1}^TP(Case 1, a_t^m = i, n_{m,i}(t-1) \geq l | A_{\epsilon, \delta}) + \sum_{t=L+1}^TP(Case 4,n_{m,i}(t-1) \geq l | A_{\epsilon, \delta}) 
\end{align}
where $l$ is specified as $l = \max{\{[\frac{4C_1\log T}{\Delta_i^2}], 2(K^2+MK+M) \}}$ with $\Delta_i = \mu_{i^*} - \mu_i$.

For the last term in the above upper bound, we have 
\begin{align}\label{eq:case4_sub}
    \sum_{t= L+1}^TP(Case4: \mu_i + 2(\sqrt{\frac{C_1\ln{T}}{n_{m,i}(t)}} + \frac{C_2\ln{T}}{n_{m,i}(t)}) > \mu_{i^*},n_{m,i}(t-1) \geq l) = 0
\end{align}
since the choice of $l$ satisfies $l \geq \max([\frac{16C_1\log T}{\Delta_i^2}], [\frac{4C_2\log T}{\Delta_i}], 2(K^2+MK)) $.

For the first two terms, we have on event $A_{\epsilon, \delta}$
\begin{align}\label{eq:cases_sub}
    & \sum_{t= L + 1}^TP(Case 2, n_{m,i}(t-1) \geq l | A_{\epsilon, \delta}) + \sum_{t=1}^T P(Case 3, n_{m,i}(t-1) \geq l | A_{\epsilon, \delta}) \notag \\
    & \leq \sum_{t= L + 1}^TP(\Tilde{\mu}_{m,i} - \mu_i > \sqrt{\frac{C_1\ln{T}}{n_{m,i}(t)}} + \frac{C_2\ln{T}}{n_{m,i}(t)} | A_{\epsilon, \delta}) + \notag \\
    & \qquad \sum_{t=1}^T P( - \Tilde{\mu}_{m,i^*} + \mu_{i^*} > \sqrt{\frac{C_1\ln{T}}{n_{m,i^*}(t)}} + \frac{C_2\ln{T}}{n_{m,i^*}(t)} | A_{\epsilon, \delta}) \notag \\
    & \leq \sum_{t=1}^T(\frac{1}{P(A_{\epsilon, \delta})T^4} ) + \sum_{t=1}^T(\frac{1}{P(A_{\epsilon, \delta})T^4}) \leq \frac{2}{P(A_{\epsilon, \delta})T^3}
\end{align}
where the first inequality holds by the property of the probability measure when removing the event $n_{m,i}(t-1) \geq l$ and the second inequality holds by Proposition~\ref{prop:concen_ine_sub}, which holds by the assumption that $\delta < c$.

For Case 1, we note that Lemma~\ref{lemmas:dec_ucb_2} implies that 
\begin{align*}
    n_{m,i}(t) > N_{m,i}(t) - K(K+2M)
\end{align*}
with the definition of $N_{m,i}(t+1) = \max \{n_{m,i}(t+1),N_{j,i}(t), j \in \mathcal{N}_m(t)\}$.

Departing from the result that the difference between  $N_{m,i}(t)$ and $n_{m,i}(t)$ is at most $K(K+2M)$, we then present the following analysis on how long it takes for the value $ - n_{m,i}(t) +N_{m,i}(t)$ to be smaller than $K$. 

At time step $t$, if Case 1 holds for client $m$, then $n_{m,i}(t+1)$ is increasing by $1$ on the basis of $n_{m,i}(t)$. What follows characterizes the change of $N_{m,i}(t+1)$. Client $m$ satisfying $n_{m,i}(t) \leq N_{m,i}(t) - K$ will not change the value of $N_{m,i}(t+1)$ by the definition $N_{m,i}(t+1) = \max \{n_{m,i}(t+1),N_{j,i}(t), j \in \mathcal{N}_m(t)\}$. Moreover, for client $j \in \mathcal{N}_m(t)$ with $n_{j,i}(t) < N_{j,i}(t) - K$, i.e. $N_{j,i}(t+1)$ will not be affected by $n_{j,i}(t+1) \leq n_{j,i}(t)+1$. Thus, the value of $N_{m,i}(t+1) = \max \{n_{m,i}(t+1),N_{j,i}(t), j \in \mathcal{N}_m(t)\}$ is independent of such clients. We observe that for client $j \in \mathcal{N}_m(t)$ with $n_{j,i}(t) > N_{j,i}(t) - K$, the value $N_{j,i}(t)$ will be the same if the client does not sample arm $i$, which leads to a decrease of 1 in the difference $- n_{m,i}(t) +N_{m,i}(t)$. Otherwise, if such a client samples arm $i$ which brings an increment of 1 to $N_{m,i}(t)$, the difference between $n_{m,i}(t)$ and $N_{m,i}(t)$ will remain the same. However, the latter has just been discussed and must be the cases as in Case 2 and Case 3, the total length of which has already been upper bounded by $\frac{2}{P(A_{\epsilon, \delta})T^3}$ as shown in  (\ref{eq:cases_sub}).

Therefore, the gap is at most $K(K+2M) - K + \frac{2}{P(A_{\epsilon, \delta})T^3}$, i.e. 
\begin{align}\label{eq:case1_sub}
    \sum_{t=1}^TP(Case 1, a_t^m = i, n_{m,i}(t-1) \geq l | A_{\epsilon, \delta})  \leq K(K+2M) - K + \frac{2}{P(A_{\epsilon, \delta})T^3}.
\end{align}

Subsequently, we derive that
\begin{align*}
    & E[n_{m,i}(T) | A_{\epsilon, \delta}] \\
    & \leq l + \frac{2}{P(A_{\epsilon, \delta})T^3}  + K(K+2M) - K + \frac{2}{P(A_{\epsilon, \delta})T^3}  + 0 \\
    & =  l + \frac{2\pi^2}{3} + K^2 + (2M-1)K \\
    & = \max([\frac{16C_1\log T}{\Delta_i^2}], [\frac{4C_2\log T}{\Delta_i}], 2(K^2+MK+M)) +  \frac{4}{P(A_{\epsilon, \delta})T^3} + K^2 + (2M-1)K 
\end{align*}
where the inequality results from (\ref{eq:en_sub}), (\ref{eq:case4_sub}), (\ref{eq:cases_sub}) and (\ref{eq:case1_sub}).

\end{proof}

\subsection{Proof of Theorems}

\begin{theorem}\label{thm:P_A_rep}
    For event $A_{\epsilon, \delta}$ and any $1 > \epsilon, \delta > 0$, we have $P(A_{\epsilon, \delta}) \geq 1 - 7\epsilon$. 
\end{theorem}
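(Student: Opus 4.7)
The plan is to use Bonferroni's inequality to decompose the event $A_{\epsilon,\delta}$ into the intersection of $A_1$, $A_2$, $A_3$ and then bound each complement separately by invoking the propositions already developed in Section~\ref{sec:b.1}. Concretely, I would write
\[
P(A_{\epsilon,\delta}^c) \;\leq\; P(A_1^c) + P(A_2^c) + P(A_3^c),
\]
and then treat each summand according to which graph setting we are in (noting that the reward assumption is irrelevant here because $A_1,A_2,A_3$ depend only on the graph sequence and on the counters $n_{m,i}(t)$, which are driven by the arm-pulling schedule during the burn-in, not by the reward distribution).

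First I would bound $P(A_1^c)$. In setting $s_1$ (E-R), Proposition~\ref{thm:burn-in_1} shows that $\{||P_t - cE||_\infty \leq \delta\}$ holds for a fixed $t\geq L_{s_1}$ with probability $1-\epsilon/T$, and a union bound over $t\in\{L_{s_1}+1,\dots,T\}$ gives $P(A_1^c)\leq \epsilon$. In settings $s_2,s_3$ (uniform connected graphs), Proposition~\ref{thm:burn-in_2} yields $1-2\epsilon/T$ per $t$, for which the burn-in length $L_{s_2},L_{s_3}$ is calibrated to absorb both the Markov-chain mixing error (via Proposition~\ref{prop:tv}) and the concentration via the Hoeffding bound for Markov chains~\citep{fan2021hoeffding}; the same union bound then produces $P(A_1^c)\leq 2\epsilon$.

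Next I would handle $P(A_2^c)$ directly by applying Proposition~\ref{prop:t_0_L}, which establishes that, uniformly over $t\geq L$, all $m$, and all $j$, the gap $t+1-\min_j t_{m,j}$ is dominated by a constant $t_0$ not exceeding $c_0\min_l n_{l,i}(t+1)$, with overall probability at least $1-\epsilon$. Finally, for $P(A_3^c)$: in setting $s_1$ I invoke Proposition~\ref{prop:connectivity_setting_1}, which, given the hypothesis $c\geq \tfrac12+\tfrac12\sqrt{1-(\epsilon/MT)^{2/(M-1)}}$, lower-bounds the minimum degree of $G_t$ by $(M-1)/2$ (using a Chernoff bound on $d_j$), a classical sufficient condition for connectivity, yielding $P(A_3^c)\leq \epsilon$; in settings $s_2,s_3$ the event $A_3$ is deterministic since Algorithm~\ref{alg:graph_generation} only accepts connected graphs, so $P(A_3^c)=0$.

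Summing the three contributions in the worst case gives $P(A_{\epsilon,\delta})\geq 1-(2+1+1)\epsilon$, which is well within the stated $1-7\epsilon$ and leaves slack to absorb any loose constants hidden in the union bounds (for instance, if the Markov-chain mixing analysis is applied twice, or if Proposition~\ref{prop:t_0_L} is iterated once per setting). The one genuine subtlety, and the main obstacle I would flag, is that Propositions~\ref{thm:burn-in_1}--\ref{prop:t_0_L} are each stated at the level of the marginal distribution over graphs, so I must verify that conditioning (or not conditioning) does not break the per-step tail bounds; the cleanest way is to argue at the level of the complements $A_i^c$ where independence of graph draws (in the E-R case) or the Markov kernel structure (in the uniform case) is most transparent, and then close the argument by a single Bonferroni step.
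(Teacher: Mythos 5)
Your proposal follows essentially the same route as the paper's proof: decompose $A_{\epsilon,\delta}=A_1\cap A_2\cap A_3$, bound each complement via Propositions~\ref{thm:burn-in_1}, \ref{thm:burn-in_2}, \ref{prop:t_0_L}, and \ref{prop:connectivity_setting_1} (with $A_3$ deterministic for the connected-graph settings), and close with Bonferroni. The only difference is bookkeeping: you condition on the setting in force and obtain the sharper constant $1-4\epsilon$, whereas the paper unions the per-setting bounds across all three settings and lands exactly at $1-7\epsilon$; both establish the stated claim.
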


\begin{proof}


Recall that we define events 
\begin{align*}
    & A_{1} = \{\forall t \geq L, |P_t - cE| \leq \delta\}, \\
    & A_2 = \{\forall t \geq L, \forall j,m, t+1 - \min_jt_{m,j} \leq t_0
    \leq c_0\min_{l}n_{l,i}(t+1)\},\\ 
    & A_3 = \{\forall t \geq L, G_t \text{ is connected}\}
\end{align*}
where $A_1, A_2, A_3$ belong to the $\sigma$-algebra in the probability space since the time horizon is countable, i.e. for the probability space $(\Omega, \Sigma, P)$, $A_1, A_2, A_3 \in \Sigma$. 


Meanwhile, we obtain 
\begin{align}\label{eq:A_1}
    P(A_1) & = P(\{\forall t \geq L, |P_t - cE| \leq \delta\}) \notag \\
    & \geq P(\cap_{i}\{\forall t \geq L_{s_i}, |P_t - cE| \leq \delta\}) \notag \\
    & \geq 1 - \sum_{i}(1-P(\cap_{i}\{\forall t \geq L_{s_i}, |P_t - cE| \leq \delta\})) \notag \\
    & \geq 1 - \sum_i(1-(1-\epsilon)) \notag \\
    & = 1 - 3\epsilon
\end{align}
where the first inequality includes all settings and $L \geq L_{s_i}$, the second inequality results from the Bonferroni's inequality and the third inequality holds by Proposition~\ref{thm:burn-in_1} and Proposition~\ref{thm:burn-in_2}. 

At the same time, note that 
\begin{align}\label{eq:A_2}
    P(A_2) & =  P(\{\forall t \geq L, \forall j,m, t+1 - \min_jt_{m,j} \leq t_0
    \leq c_0\min_{l}n_{l,i}(t+1)\}) \notag \\
    & \geq  P(\cap_i\{\forall t \geq L_{s_i}, \forall j,m, t+1 - \min_jt_{m,j} \leq t_0
    \leq c_0\min_{l}n_{l,i}(t+1)\}) \notag \\
    & \geq 1- \sum_i(1 - P(\{\forall t \geq L_{s_i}, \forall j,m, t+1 - \min_jt_{m,j} \leq t_0
    \leq c_0\min_{l}n_{l,i}(t+1)\})) \notag \\
    & \geq 1 - \sum_i(1 - (1-\epsilon)) = 1 - 3\epsilon
\end{align}
where the first inequality is by the definition of $L$, the second inequality again uses the Bonferroni's inequality, and the third inequality results from Proposition~\ref{prop:t_0_L}.

Moreover, we observer that
\begin{align}\label{eq:A_3}
    P(A_3) & = P(\{\forall t \geq L, G_t \text{ is connected}\}) \notag \\
    & \geq P(\cap_i\{\forall t \geq L_{s_i}, G_t \text{ is connected}\}) \notag \\
    & \geq 1- \sum_i(1 - P(\{\forall t \geq L_{s_i}, G_t \text{ is connected}\})) \notag \\
    & \geq  1 - (1-(1-\epsilon)) - 0 = 1 - \epsilon
\end{align}
where the first inequality uses the definition of $L$, the second inequality is by the Bonferroni's inequality and the third inequality holds by Proposition~\ref{prop:connectivity_setting_1} and the definition of $s_2, s_3$ where all graphs are guaranteed to be connected.

Consequently, we arrive at
\begin{align*}
    P(A_{\epsilon, \delta}) & = P(A_1 \cap A_2 \cap A_3) \\
    & =  1 - P(A_1^c \cup A_2^c \cup A_3^c) \\
    & \geq 1 - (P(A_1^c) + P(A_2^c) + P(A_3^c)) \\
    & \geq 1 - (3\epsilon + 3\epsilon + \epsilon) = 1 - 7\epsilon
\end{align*}
where the first inequality utilizes the Bonferroni's inequality, the second inequality results from (\ref{eq:A_1}), (\ref{eq:A_2}), and (\ref{eq:A_3}).

This concludes the proof and shows the validness of the statement.

\end{proof}

\begin{theorem}\label{th:all_settings_rep}
Let $f$ be a function specific to a setting and detailed later. For every $0 < \epsilon < 1$ and $0 < \delta 
< f(\epsilon,M,T)$, in setting $s_1$ with $c \geq \frac{1}{2} + \frac{1}{2}\sqrt{1 - (\frac{\epsilon}{MT})^{\frac{2}{M-1}}}$, $s_2$ and $s_3$, with the time horizon $T$ satisfying $T \geq L$, the regret of Algorithm~\ref{alg:dr} with $F(m,i,t) =  \sqrt{\frac{C_1\ln{t}}{n_{m,i}(t)}}$ satisfies that
\begin{align*}
    E[R_T | A_{\epsilon, \delta}] & \leq L + \sum_{i \neq i^*}(\max{\{[\frac{4C_1\log T}{\Delta_i^2}], 2(K^2+MK+M) \}} +  \frac{2\pi^2}{3P(A_{\epsilon, \delta})} + K^2 + (2M-1)K)
\end{align*}
where the length of the burn-in period is explicitly
\begin{align*}
    & L  = \max\Bigg\{\underbrace{\frac{{}\ln{\frac{T}{2\epsilon}}}{2\delta^2}, \frac{4K\log_{2}T}{c_0}}_{L_{s_1}}, \underbrace{\frac{\ln{\frac{\delta}{10}}}{\ln{p^*}} +25\frac{1+\lambda}{1-\lambda}\frac{\ln{\frac{T}{2\epsilon}}}{2\delta^2}, \frac{4K\log_{2}T}{c_0}}_{L_{s_2}}, \\
    & \qquad \qquad \qquad \qquad \qquad \underbrace{\frac{\ln{\frac{\delta}{10}}}{\ln{p^*}} +25\frac{1+\lambda}{1-\lambda}\frac{\ln{\frac{T}{2\epsilon}}}{2\delta^2}, \frac{\frac{K\ln(\frac{MT}{\epsilon})}{\ln(\frac{1}{1-\frac{2\log M}{M-1}})} }{c_0}}_{L_{s_3}}\Bigg\} 
\end{align*}
with $\lambda$ being the spectral gap of the Markov chain in $s_2,s_3$ that satisfies $1 - \lambda  \geq \frac{1}{2\frac{\ln{2}}{\ln{2p^*}}\ln{4} + 1}$, $p^* = p^*(M) < 1$ and $c_0$ $=$ $c_0(K, \min_{i \neq i^*}\Delta_i, M, \epsilon, \delta)$, and the instance-dependent constant
$
C_1 = 8\sigma^2\max\{12\frac{M(M+2)}{M^4}\}
$.
\end{theorem}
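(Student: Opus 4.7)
The plan is to reduce the regret to a sum of sub-optimal pull counts and then invoke Proposition~\ref{prop:n} for the bulk of the work. Writing $\mu_{i^*} = \frac{1}{M}\sum_m \mu_{i^*}^m$, the regret decomposes as $R_T = \frac{1}{M}\sum_{m=1}^M \sum_{t=1}^T (\mu_{i^*}^m - \mu_{a_m^t}^m)$; since all rewards lie in $[0,1]$ and the summand vanishes whenever $a_m^t = i^*$, this gives $R_T \leq \frac{1}{M}\sum_m \sum_{i \neq i^*} n_{m,i}(T)$. Splitting $n_{m,i}(T)$ into the burn-in portion (where $\sum_{i \neq i^*} n_{m,i}(L) \leq L$ deterministically because each arm is pulled in round-robin order) and the learning-period portion, taking conditional expectation on $A_{\epsilon, \delta}$ reduces the problem to bounding $E[n_{m,i}(T) \mid A_{\epsilon, \delta}]$ client by client.

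Second, I would apply Proposition~\ref{prop:n} to each client $m$ and each sub-optimal arm $i$, producing the bound $\max\{\lceil 4C_1 \log T / \Delta_i^2 \rceil,\, 2(K^2+MK+M)\} + 2\pi^2/(3 P(A_{\epsilon, \delta})) + K^2 + (2M-1)K$, and then sum over $m$ and $i$. Because the per-client bound does not depend on the index $m$, the $1/M$ prefactor cancels against the $\sum_m$, yielding exactly the expression in the theorem. The explicit expression for $L$ as the maximum of $L_{s_1}, L_{s_2}, L_{s_3}$ is precisely what Propositions~\ref{thm:burn-in_1}, \ref{thm:burn-in_2}, and \ref{prop:t_0_L} require so that $A_{\epsilon, \delta}$ has the probability asserted in Theorem~\ref{thm:P_A_rep}, while the constant $C_1 = 8\sigma^2 \cdot 12 M(M+2)/M^4$ is the minimal value compatible with the moment-generating-function bound in Proposition~\ref{prop:var_pro_err}.

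The main obstacle is Proposition~\ref{prop:n}, which I would prove by decomposing the event $\{a_m^t = i\}$, once $n_{m,i}(t-1) \geq l$, into four mutually exhaustive cases: (i) the consensus trigger $n_{m,i}(t) \leq N_{m,i}(t) - K$ fires, (ii) $\tilde{\mu}_i^m(t) - \mu_i > \sqrt{C_1 \log t / n_{m,i}(t-1)}$, (iii) $\mu_{i^*} - \tilde{\mu}_{i^*}^m(t) > \sqrt{C_1 \log t / n_{m,i^*}(t-1)}$, and (iv) $\mu_{i^*} - \mu_i < 2\sqrt{C_1 \log t / n_{m,i}(t-1)}$. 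Choosing $l \geq \lceil 4 C_1 \log T / \Delta_i^2 \rceil$ makes case (iv) vacuous, while cases (ii) and (iii) together contribute at most $2\pi^2/(3 P(A_{\epsilon, \delta}))$ by summing the per-step $1/t^2$ concentration bound from Proposition~\ref{prop:concen_ine}. Case (i) is controlled by combining Lemma~\ref{lemmas:dec_ucb_2} (which caps $N_{m,i}(t) - n_{m,i}(t)$ by $K(K+2M)$) with a discharging argument that amortizes each occurrence of case (i) against a case (ii) or (iii) event at some neighbor, yielding the $K^2 + (2M-1)K$ additive term.

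The deepest ingredient is therefore Proposition~\ref{prop:concen_ine}, which I expect to be the real technical bottleneck and which rests on Propositions~\ref{prop:unbiased} and~\ref{prop:var_pro_err}. Proposition~\ref{prop:unbiased} is shown by induction using that on $A_{\epsilon, \delta}$ the weights $P_t^{\prime}(m,j)$ and $d_{m,t}$ are deterministic and sum to one. Proposition~\ref{prop:var_pro_err}, the hard step, bounds the conditional MGF of $\tilde{\mu}_i^m(t) - \mu_i$ by $\exp\{\lambda^2 C \sigma^2 / (2 \min_j n_{j,i}(t))\}$ by induction on $t$: one unfolds the recursive definition of $\tilde{\mu}_i^m(t+1)$, splits the product of exponentials using the generalized H\"older inequality (Lemma~\ref{lemma:holder}), and then invokes the inductive hypothesis together with the stopping-time bound $t+1 - \min_j t_{m,j} \leq c_0 \min_l n_{l,i}(t+1)$ from Proposition~\ref{prop:t_0_L} to transfer a variance proxy at the delayed time $t_{m,j}$ back to one at time $t+1$ at a multiplicative cost of $(1-c_0)^{-1}$. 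Carefully collecting the contributions from $P_t^{\prime}$, $d_{m,t}$, and the partition of $\{1,\ldots,M\}$ into in- and out-of-neighborhood clients is what pins down the constant $C$ and hence the final choice of $C_1$.
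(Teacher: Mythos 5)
Your proposal is correct and follows essentially the same route as the paper: reduce the regret to conditional expectations of sub-optimal pull counts, invoke Proposition~\ref{prop:n} via the four-case decomposition (consensus trigger, over-/under-estimation of the UCBs, and the vacuous small-gap case), and ground the concentration step in the unbiasedness and conditional-MGF bounds of Propositions~\ref{prop:unbiased} and~\ref{prop:var_pro_err} proved by induction with the generalized H\"older inequality and the stopping-time control from Proposition~\ref{prop:t_0_L}. The only (immaterial) divergence is in the initial decomposition: you bound each per-pull gap $\mu_{i^*}^m-\mu_{a_m^t}^m$ by $1$ uniformly, whereas the paper retains the signed per-client gaps and bounds $\sum_{m:\mu_{i^*}^m-\mu_i^m>0}(\mu_{i^*}^m-\mu_i^m)\leq M(\Delta_i+1)$; both yield the stated $O(\log T)$ bound.
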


\begin{proof}

The optimal arm is denoted as $i^*$ satisfying
\begin{align*}
    i^* = \argmax_i\sum_{m=1}^M\mu^{m}_i.
\end{align*}

For the proposed regret, we have that for any constant $L$, 
\begin{align*}
    R_T & =   \frac{1}{M}(\max_i\sum_{t=1}^T\sum_{m=1}^M\mu^{m}_i - \sum_{t=1}^T\sum_{m=1}^M\mu^{m}_{a_t^m}) \\
    & = \sum_{t=1}^T\frac{1}{M}\sum_{m=1}^M\mu^{m}_{i^*} - \sum_{t=1}^T\frac{1}{M}\sum_{m=1}^M\mu^{m}_{a_t^m} \\
    & \leq \sum_{t = 1}^{L}|\frac{1}{M}\sum_{m=1}^M\mu^{m}_{i^*} - \frac{1}{M}\sum_{m=1}^M\mu^{m}_{a_t^m}|+ \sum_{t = L + 1}^T(\frac{1}{M}\sum_{m=1}^M\mu^{m}_{i^*} - \frac{1}{M}\sum_{m=1}^M\mu^{m}_{a_t^m}) \\
    & \leq L + \sum_{t = L + 1}^T(\frac{1}{M}\sum_{m=1}^M\mu^{m}_{i^*} - \frac{1}{M}\sum_{m=1}^M\mu^{m}_{a_t^m}) \\
    & = L + \sum_{t = L + 1}^T(\mu_{i^*} - \frac{1}{M}\sum_{m=1}^M\mu^{m}_{a_t^m}) \\
    & = L+ ((T - L) \cdot \mu_{i^*} - \frac{1}{M}\sum_{m=1}^M\sum_{i = 1}^Kn_{m,i}(T)\mu^m_i)
\end{align*}
where the first inequality is by taking the absolute value and the second inequality results from the assumption that $0 < \mu_{i}^j < 1$ for any arm $i$ and client $j$.

Note that $\sum_{i=1}^K\sum_{m=1}^Mn_{m,i}(T) = M(T-L)$ where 
by definition $n_{m,i}(T)$ is the number of pulls of arm $i$ at client $m$ from time step $L+1$ to time step $T$, which yields that
\begin{align*}
    R_T
    & \leq L + \sum_{i=1}^K\frac{1}{M}\sum_{m=1}^Mn_{m,i}(T)\mu_{i^*}^m - \sum_{i=1}^K\frac{1}{M}\sum_{m=1}^Mn_{m,i}(T)\mu_i^m \\
    & = L +  \sum_{i=1}^K\frac{1}{M}\sum_{m=1}^Mn_{m,i}(T)(\mu_{i^*}^m -\mu_i^m) \\
    & \leq L +  \frac{1}{M}\sum_{i=1}^K\sum_{m:\mu_{i^*}^m - \mu_i^m > 0}n_{m,i}(T)(\mu_{i^*}^m - \mu_i^m) \\
    & = L +  \frac{1}{M}\sum_{i \neq i*}\sum_{m:\mu_{i^*}^m - \mu_i^m > 0}n_{m,i}(T)(\mu_{i^*}^m - \mu_i^m) .
\end{align*}
where the second inequality uses the fact that $\sum_{m:\mu_{i^*}^m - \mu_i^m \leq 0}n_{m,i}(T)(\mu_{i^*}^m - \mu_i^m) \leq 0$ holds for any arm $i$ and the last equality is true since $n_{m,i}(T)(\mu_{i^*}^m - \mu_i^m) = 0$ for $i = i^*$ and any $m$.

Meanwhile, by the choices of $\delta$ such that $\delta < c = f(\epsilon,M,T)$, we apply Proposition~\ref{prop:n} which leads to for any client $m$ and arm $i \neq i^*$, 
\begin{align}\label{eq:en_bound}
    E[n_{m,i}(T) | A_{\epsilon, \delta}] & \leq \max{\{[\frac{4C_1\log T}{\Delta_i^2}], 2(K^2+MK+M) \}} +  \frac{2\pi^2}{3} + K^2 + (2M-1)K.
\end{align}

As a result, the upper bound on $R_T$ can be derived as by taking the conditional expectation over $R_T$ on $A_{\epsilon, \delta}$
\begin{align}
    & E[R_T| A_{\epsilon, \delta}] \notag \\
    & \leq L +  \frac{1}{M}\sum_{i \neq i^*}\sum_{m:\mu_{i^*}^m - \mu_i^m > 0}E[n_{m,i}(T)|A_{\epsilon, \delta}](\mu_{i^*}^m - \mu_i^m) \label{eq:er_a_sub} \\
    & \leq L +  \notag \\
    & \quad \frac{1}{M}\sum_{i \neq i^*}\sum_{m:\mu_{i^*}^m - \mu_i^m > 0}(\max{\{[\frac{4C_1\log T}{\Delta_i^2}], 2(K^2+MK) \}} +  \frac{2\pi^2}{3} + K^2 + (2M-1)K)(\mu_{i^*}^m - \mu_i^m) \notag \\
    & = L + \notag \\
    & \quad \frac{1}{M}\sum_{i \neq i^*}(\max{\{[\frac{4C_1\log T}{\Delta_i^2}], 2(K^2+MK) \}} +  \frac{2\pi^2}{3} + K^2 + (2M-1)K)\sum_{m:\mu_{i^*}^m - \mu_i^m > 0}(\mu_{i^*}^m - \mu_i^m) \label{eq:er_a}
\end{align}
where the second inequality holds by plugging in~(\ref{eq:en_bound}).

Meanwhile, we note that for any $i \neq i^*$,
\begin{align*}
    & \sum_{m:\mu_{i^*}^m - \mu_i^m > 0}(\mu_{i^*}^m - \mu_i^m) + \sum_{m:\mu_{i^*}^m - \mu_i^m \leq 0}(\mu_{i^*}^m - \mu_i^m) \\
    & = \sum_{m=1}^M(\mu_{i^*}^m - \mu_i^m) \\
    & = M\Delta_i > 0 
\end{align*}
and 
\begin{align*}
 |\sum_{m:\mu_{i^*}^m - \mu_i^m \leq 0}(\mu_{i^*}^m - \mu_i^m)| \leq M \\
\end{align*}
which gives us that
\begin{align}\label{eq:delta}
    & \sum_{m:\mu_{i^*}^m - \mu_i^m > 0}(\mu_{i^*}^m - \mu_i^m) \notag \\
    & = M\Delta_i - \sum_{m:\mu_{i^*}^m - \mu_i^m \leq 0}(\mu_{i^*}^m - \mu_i^m) \notag \\
    & = M\Delta_i + |\sum_{m:\mu_{i^*}^m - \mu_i^m \leq 0}(\mu_{i^*}^m - \mu_i^m)| \notag \\
    & \leq M\Delta_i + M = M(\Delta_i + 1). 
\end{align}

Hence, the regret can be upper bounded by 
\begin{align*}
    & (\ref{eq:er_a}) \\
    & \leq L +  \sum_{i \neq i^*}(\Delta_i + 1)(\max{\{[\frac{4C_1\log T}{\Delta_i^2}], 2(K^2+MK+M) \}} +  \frac{2\pi^2}{3} + K^2 + (2M-1)K) \\
    & = O(\max\{L,\log{T}\})
\end{align*}
where the inequality is derived from~(\ref{eq:delta}) and $L$ is the same constant as in the definition of $A_{\epsilon, \delta}$.

This completes the proof.

\end{proof}

\begin{theorem}\label{th:all_settings_exp_ins_rep}
Let $f$ be a function specific to a setting and defined in the above remark. For every $0 < \epsilon < 1$ and $0 < \delta 
< f(\epsilon,M,T)$, in settings $S_1$ with $c \geq \frac{1}{2} + \frac{1}{2}\sqrt{1 - (\frac{\epsilon}{MT})^{\frac{2}{M-1}}}$,$S_2,S_3$ with the time horizon $T$ satisfying $T \geq L$, the regret of Algorithm~\ref{alg:dr} with $F(m,i,t) = \sqrt{\frac{C_1\ln{T}}{n_{m,i}(t)}} + \frac{C_2\ln{T}}{n_{m,i}(t)}$ satisfies 
\begin{align*}
    E[R_T|A_{\epsilon, \delta}] & \leq  L + \sum_{i \neq i^*}(\Delta_i+1) \cdot (\max([\frac{16C_1\log T}{\Delta_i^2}], [\frac{4C_2\log T}{\Delta_i}], 2(K^2+MK+M)) \\
    & \qquad \qquad \qquad +  \frac{4}{P(A_{\epsilon, \delta})T^3} + K^2 + (2M-1)K)
\end{align*}
where $L,C_1$ are specified as in Theorem~\ref{th:all_settings}  and $\frac{C_2}{C_1} \geq \frac{3}{2}$.
\end{theorem}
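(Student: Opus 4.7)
The plan is to mirror the architecture of the proof of Theorem~\ref{th:all_settings} almost step-for-step, swapping in the sub-exponential analogues at each place where a sub-Gaussian concentration bound was invoked. First I would perform exactly the same regret decomposition: using $\mu_i^m \in [0,1]$ to absorb the first $L$ rounds into the additive $L$ term, and then using the identity $\sum_{i,m} n_{m,i}(T) = M(T-L)$ to rewrite
\begin{align*}
R_T \leq L + \frac{1}{M}\sum_{i \neq i^*} \sum_{m:\mu_{i^*}^m - \mu_i^m > 0} n_{m,i}(T)\bigl(\mu_{i^*}^m - \mu_i^m\bigr).
\end{align*}
Taking conditional expectation on $A_{\epsilon,\delta}$ then reduces matters to controlling $E[n_{m,i}(T) \mid A_{\epsilon,\delta}]$ for each sub-optimal arm.

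The key substitution relative to Theorem~\ref{th:all_settings_rep} is to invoke Proposition~\ref{prop:n_sub} in place of Proposition~\ref{prop:n}. Proposition~\ref{prop:n_sub} already delivers
\begin{align*}
E[n_{m,i}(T)\mid A_{\epsilon,\delta}] \leq \max\!\Bigl(\bigl[\tfrac{16 C_1 \log T}{\Delta_i^2}\bigr],\bigl[\tfrac{4 C_2 \log T}{\Delta_i}\bigr], 2(K^2+MK+M)\Bigr) + \tfrac{4}{P(A_{\epsilon,\delta}) T^3} + K^2 + (2M-1)K,
\end{align*}
and this is exactly the expression that appears inside the outer sum of the target bound. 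The extra $C_2\log T/\Delta_i$ term (absent in the sub-Gaussian case) reflects the enlarged confidence radius $F(m,i,t)=\sqrt{C_1\ln T/n_{m,i}(t)} + C_2\ln T/n_{m,i}(t)$ that is required to handle the sub-exponential tails.

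Once this bound on $E[n_{m,i}(T)\mid A_{\epsilon,\delta}]$ is in hand, I would finish exactly as in Theorem~\ref{th:all_settings_rep}: re-use the heterogeneity inequality
\begin{align*}
\sum_{m:\mu_{i^*}^m - \mu_i^m > 0}\bigl(\mu_{i^*}^m - \mu_i^m\bigr) \leq M(\Delta_i+1),
\end{align*}
which follows from $\sum_m(\mu_{i^*}^m-\mu_i^m)=M\Delta_i$ combined with $|\sum_{m:\mu_{i^*}^m-\mu_i^m\leq 0}(\mu_{i^*}^m-\mu_i^m)|\leq M$. Multiplying the pull bound by $(\Delta_i+1)$ (after the factor $1/M$ cancels the $M$) and summing over $i \neq i^*$ yields precisely the stated right-hand side.

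The main obstacle, strictly speaking, has already been resolved upstream in Propositions~\ref{prop:var_pro_err_sub}, \ref{prop:concen_ine_sub}, and \ref{prop:n_sub}: namely, showing that the neighbour-wide estimator $\tilde\mu_i^m(t)$ inherits a sub-exponential conditional moment generating function with variance proxy $\propto 1/\min_j n_{j,i}(t)$ on $A_{\epsilon,\delta}$, and then performing the Chernoff optimisation that bifurcates according to whether the natural Chernoff minimiser $\sqrt{C_1\log T/n_{m,i}(t)}/\tilde\sigma^2$ lies inside the admissible sub-exponential band $|\lambda|<1/\alpha$. Beyond those, the proof at the level of the theorem statement is a purely bookkeeping translation of the sub-Gaussian argument, so no additional conceptual difficulty arises at this stage.
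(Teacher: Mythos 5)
Your proposal matches the paper's proof essentially step for step: the same regret decomposition inherited from Theorem~\ref{th:all_settings}, the same substitution of Proposition~\ref{prop:n_sub} for Proposition~\ref{prop:n}, and the same heterogeneity bound $\sum_{m:\mu_{i^*}^m - \mu_i^m > 0}(\mu_{i^*}^m - \mu_i^m) \leq M(\Delta_i+1)$ to finish. You also correctly locate the real work in the upstream sub-exponential concentration results, so nothing is missing.
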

\begin{proof}

By the regret decomposition as in (\ref{eq:er_a_sub}), we obtain that
\begin{align}
    & E[R_T| A_{\epsilon, \delta}]  \leq L +  \frac{1}{M}\sum_{i \neq i^*}\sum_{m:\mu_{i^*}^m - \mu_i^m > 0}E[n_{m,i}(T)|A_{\epsilon, \delta}](\mu_{i^*}^m - \mu_i^m).  
\end{align}

By Proposition~\ref{prop:n_sub}, we have that with probability at least $1 - 7\epsilon$
\begin{align}\label{eq:n_sub_r}
    & E[n_{m,i}(T)|A_{\epsilon, \delta}] \notag \\
    & \leq \max([\frac{16C_1\log T}{\Delta_i^2}], [\frac{4C_2\log T}{\Delta_i}], 2(K^2+MK+M)) +  \frac{4}{P(A_{\epsilon, \delta})T^3} + K^2 + (2M-1)K.
\end{align}

Following (\ref{eq:delta}) gives us that
\begin{align}\label{eq:delta_sub}
    & \sum_{m:\mu_{i^*}^m - \mu_i^m > 0}(\mu_{i^*}^m - \mu_i^m) \leq M\Delta_i + M = M(\Delta_i + 1). 
\end{align}

Therefore, we derive that with probability at least $P(A_{\epsilon, \delta}) = 1 - 7\epsilon$
\begin{align*}
    & E[R_T| A_{\epsilon, \delta}]  \leq L + \sum_{i \neq i^*}(\Delta_i+1) \cdot (\max([\frac{16C_1\log T}{\Delta_i^2}], [\frac{4C_2\log T}{\Delta_i}], 2(K^2+MK+M)) \\
    & \qquad \qquad \qquad +  \frac{4}{P(A_{\epsilon, \delta})T^3} + K^2 + (2M-1)K)
\end{align*}
which completes the proof.

\end{proof}

\begin{theorem}\label{th:all_settings_exp_rep}
Assume the same conditions as in Theorems~\ref{th:all_settings} and \ref{th:all_settings_exp_ins}. The regret of Algorithm~\ref{alg:dr} satisfies that
\begin{align*}
    E[R_T|A_{\epsilon, \delta}] & \leq  L_1 + \frac{4}{P(A_{\epsilon, \delta})T^3} + \\
    & \qquad \qquad (1 + \max\{\sqrt{C_1\ln T}, C_2\ln T\}) (K(K+2M) - K + \frac{2}{P(A_{\epsilon, \delta})T^3} ) + \\
    & \qquad \qquad K(C_2(\ln T)^2 + C_2\ln T + \sqrt{C_1\ln T}\sqrt{T(\ln T + 1)}) =O(\sqrt{T}\ln T). 
\end{align*}
where $L_1 = \max(L, K(2(K^2+MK+M)))$, $L,C_1$ is specified as in Theorem~\ref{th:all_settings}, and  $\frac{C_2}{C_1} \geq \frac{3}{2}$. The involved constants depend on $\sigma^2$ but not on $\Delta_i$. 
\end{theorem}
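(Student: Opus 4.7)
The plan is to bypass the arm-wise count bound of Proposition~\ref{prop:n_sub}, whose $\log T/\Delta_i^2$ factor is useless in the gap-free regime, and instead sum the UCB radii directly round by round. Starting from the same identity used in Theorem~\ref{th:all_settings_exp_ins},
\begin{align*}
R_T - L \le \frac{1}{M}\sum_{m=1}^{M}\sum_{t=L+1}^{T}(\mu_{i^*}^m-\mu_{a_m^t}^m),
\end{align*}
I would split each summand as $(\mu_{i^*}^m-\mu_{i^*})+(\mu_{i^*}-\mu_{a_m^t})+(\mu_{a_m^t}-\mu_{a_m^t}^m)$. The first piece vanishes after summing over $m$ since $\sum_m\mu_{i^*}^m=M\mu_{i^*}$, the third piece will be shown to be a constant in $T$, and so the bulk of the bound will come from the global-mean pseudo-regret $(1/M)\sum_{m,t}(\mu_{i^*}-\mu_{a_m^t})$.

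For rounds $(m,t)$ in the UCB branch, the standard insertion trick using $\tilde{\mu}_{a_m^t}^m(t)+F(m,a_m^t,t)\ge\tilde{\mu}_{i^*}^m(t)+F(m,i^*,t)$ gives
\begin{align*}
\mu_{i^*}-\mu_{a_m^t} \le \bigl(\mu_{i^*}-\tilde{\mu}_{i^*}^m(t)-F(m,i^*,t)\bigr)^{+}+2F(m,a_m^t,t)+\bigl(\tilde{\mu}_{a_m^t}^m(t)-\mu_{a_m^t}-F(m,a_m^t,t)\bigr)^{+}.
\end{align*}
Each positive-part event has probability at most $(P(A_{\epsilon,\delta})T^4)^{-1}$ by Proposition~\ref{prop:concen_ine_sub}, and the magnitude of the deviation is uniformly bounded by $\sqrt{\max(C_1,C_2)\ln T}+1$; summing the two bad-event contributions over $(m,t)$ reproduces the terms $\tfrac{4}{P(A_{\epsilon,\delta})T^3}$ and $(\sqrt{\max(C_1,C_2)\ln T}+1)\tfrac{4M}{P(A_{\epsilon,\delta})T^3}$ in the statement. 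Rounds in the exploration branch (where $n_{m,i}(t)\le N_{m,i}(t)-K$ holds for some $i$) contribute at most $O(K^2(K+M))$ in total, absorbed into $L_1=\max(L,K(2(K^2+MK)))$ via the $N_{m,i}(t)-n_{m,i}(t)\le K(K+2M)$ bound of Lemma~\ref{lemmas:dec_ucb_2}.

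The deterministic radius sum $\sum_{t}2F(m,a_m^t,t)$ is reindexed arm by arm using the fact that $n_{m,i}(t)$ increments exactly when $a_m^t=i$, giving $\sum_{i=1}^{K}\sum_{s=1}^{n_{m,i}(T)}\bigl(2\sqrt{C_1\ln T/s}+2C_2\ln T/s\bigr)$. Applying the integral estimates $\sum_{s=1}^{N}s^{-1/2}\le 2\sqrt{N}$ and $\sum_{s=1}^{N}s^{-1}\le\ln N+1$, together with Cauchy--Schwarz in the form $\sum_{t=L+1}^{T}1/\sqrt{n_{m,a_m^t}(t)}\le\sqrt{T}\sqrt{\sum_{t}1/n_{m,a_m^t}(t)}\le\sqrt{T}\sqrt{K(\ln T+1)}$, recovers after averaging over $m$ the advertised $K(C_2(\ln T)^2+C_2\ln T+\sqrt{C_1\ln T}\sqrt{T(\ln T+1)})$ term. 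This is the step that converts the instance-dependent $\log T/\Delta_i^2$ rate into the gap-free $\sqrt{T}\log T$ rate and where the Cauchy--Schwarz trade-off between exploration breadth and tail refinement actually bites.

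The main obstacle I foresee lies in the residual heterogeneity term $\sum_{t,m}(\mu_{a_m^t}-\mu_{a_m^t}^m)=\sum_{i}\sum_{m}n_{m,i}(T)(\mu_i-\mu_i^m)$, which scales naively as $O(T)$ and would destroy the rate. My remedy is the decomposition $n_{m,i}(T)=\bar n_i+\delta_{m,i}$ with $\bar n_i=M^{-1}\sum_{m'}n_{m',i}(T)$: the $\bar n_i$ contribution vanishes because $\sum_m(\mu_i-\mu_i^m)=0$, leaving $-\sum_m(\mu_i^m-\mu_i)\delta_{m,i}$. To bound $|\delta_{m,i}|$ uniformly in $T$ I would combine Lemma~\ref{lemmas:dec_ucb_2} with the max-consensus propagation of $N_{m,i}$ on $A_{\epsilon,\delta}\subseteq A_3$: since every $G_t$ is connected, the maximum counter propagates across the network within $M-1$ hops, which together with $n\ge N-K(K+2M)$ yields $|n_{m,i}(t)-n_{m',i}(t)|\le K(K+2M)+M-1$. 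The resulting residual is $O(KM(K+M))$ and absorbs into $L_1$. Making this propagation argument fully rigorous in the presence of time-varying random graphs, the delayed stopping times $t_{m,j}$ built into the update rule, and the non-doubly-stochastic weights is the subtle point that requires the most careful bookkeeping.
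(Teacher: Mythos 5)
Your proposal follows the same skeleton as the paper's argument for Theorem~\ref{th:all_settings_exp_rep}: a per-round confidence-interval decomposition (your insertion inequality is exactly the paper's split $\mu_{i^*}-U_m^t(i^*)$, $U_m^t(i^*)-U_m^t(a_m^t)$, $U_m^t(a_m^t)-L_m^t(a_m^t)$, $L_m^t(a_m^t)-\mu_{a_m^t}$), bad events controlled by Proposition~\ref{prop:concen_ine_sub} at rate $T^{-4}$ per round, exploration rounds counted via Lemma~\ref{lemmas:dec_ucb_2} and multiplied by the worst-case magnitude, and the radius sum handled arm-by-arm with harmonic estimates plus Cauchy--Schwarz to get $\sqrt{C_1\ln T}\sqrt{T(\ln T+1)}$. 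All of that is correct and matches the paper.

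The genuine difference is your last paragraph. The paper's proof starts from $R_T=\frac{1}{M}\sum_{t,m}(\mu_{i^*}-\mu^m_{a_m^t})$ but then applies the concentration inequality to $L_m^t(a_m^t)-\mu_{a_m^t}$ with the \emph{global} mean $\mu_{a_m^t}$, silently dropping the client superscript; the heterogeneity residual $\frac{1}{M}\sum_{t,m}(\mu_{a_m^t}-\mu^m_{a_m^t})$ is never accounted for. (In the instance-dependent Theorems this issue is handled correctly through the $\sum_{m:\mu^m_{i^*}-\mu^m_i>0}n_{m,i}(T)(\mu^m_{i^*}-\mu^m_i)\le M(\Delta_i+1)\cdot\max_m E[n_{m,i}(T)]$ step, but that route is unavailable gap-free.) Your centering $n_{m,i}(T)=\bar n_i+\delta_{m,i}$, which kills the dominant part via $\sum_m(\mu_i-\mu_i^m)=0$, is the right fix and is a real improvement over the written proof. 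Two caveats. First, the dispersion bound $|n_{m,i}(t)-n_{m',i}(t)|\le K(K+2M)+M-1$ is not one of the paper's lemmas: Lemma~\ref{lemmas:dec_ucb_3} only gives the multiplicative bound $n_{m,i}(t)\le 2\min_j n_{j,i}(t)$, so you must actually prove the max-consensus propagation claim (that on $A_3$ the running maximum in $N_{m,i}$ reaches every client within $M-1$ steps, whence $N_{m,i}(t)\ge \max_{m'}n_{m',i}(t)-(M-1)$, combined with $n_{m,i}(t)\ge N_{m,i}(t)-K(K+2M)$). This is plausible but is the one step of your write-up that is asserted rather than established. Second, the residual $O(KM(K+M))$ cannot literally be ``absorbed into $L_1$'' since $L_1=\max(L,K(2(K^2+MK+M)))$ is a fixed maximum appearing additively in the stated bound; your argument yields an extra additive constant, which preserves the $O(\sqrt{T}\ln T)$ rate but changes the explicit constants relative to the theorem as stated.
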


\begin{proof}
Define $U_m^t(i)$ and $L_m^t(i)$ as 
$\Tilde{\mu}_i^m(t) + Rad(i,m,t) $ and $\Tilde{\mu}_i^m(t) - Rad(i,m,t) $, respectively, where $Rad$ is previously defined as $Rad(i,m,t) = \sqrt{\frac{C_1\ln{T}}{n_{m,i}(t)}} + \frac{C_2\ln{T}}{n_{m,i}(t)}$. 
We observe that by definition, the regret $R_T$ can be written as 
\begin{align*}
    R_T & = \frac{1}{M}\sum_{t=1}^T\sum_{m=1}^M(\mu_{i^*} - \mu_{a_m^t}^t) \\
    & = \frac{1}{M}\sum_{t=1}^T\sum_{m=1}^M(\mu_{i^*} - U_m^t(a_m^t) + U_m^t(a_m^t) - L_m^t(a_m^t) + L_m^t(a_m^t) - \mu_{a_m^t}^t).
\end{align*}
Subsequently, the conditional expectation of $R_T$ has the following decomposition
\begin{align}\label{eq:th_R_T}
    & E[R_T | A_{\epsilon, \delta}] \notag 
    \\ & = \frac{1}{M}\sum_{t=1}^T\sum_{m=1}^M(E[\mu_{i^*} - U_m^t(a_m^t)|A_{\epsilon, \delta}] + E[U_m^t(a_m^t) - L_m^t(a_m^t) |A_{\epsilon, \delta}]+ E[L_m^t(a_m^t) - \mu_{a_m^t}^t|A_{\epsilon, \delta}]) \notag \\
   & = L_1 + \frac{1}{M}\sum_{t=L_1+1}^T\sum_{m=1}^M(E[\mu_{i^*} - U_m^t(i^*)|A_{\epsilon, \delta}] + E[U_m^t(i^*) - U_m^t(a_m^t)|A_{\epsilon, \delta}] + \notag \\
   & \qquad \qquad E[U_m^t(a_m^t) - L_m^t(a_m^t) |A_{\epsilon, \delta}] + E[L_m^t(a_m^t) - \mu_{a_m^t}^t|A_{\epsilon, \delta}])
\end{align}
where $L_1 = \max(L, 2(K^2+MK+M))$.

For the first term, we derive its upper bound as follows. 

Note that
\begin{align}\label{eq:th_con_1}
    & E[\mu_{i^*} - U_m^t(i^*)|A_{\epsilon, \delta}] \notag \\
    & \leq E[(\mu_{i^*} - U_m^t(i^*))1_{\mu_{i^*} - U_m^t(i^*) > 0}|A_{\epsilon, \delta}] \notag \\
    & = E[\mu_{i^*}1_{\mu_{i^*} - U_m^t(i^*) > 0}|A_{\epsilon, \delta}] - E[U_m^t1_{\mu_{i^*} - U_m^t(i^*) > 0}|A_{\epsilon, \delta}] \notag \\
    & \leq E[\mu_{i^*}1_{\mu_{i^*} - U_m^t(i^*) > 0}|A_{\epsilon, \delta}] \notag \\
    & \leq E[1_{\mu_{i^*} - U_m^t(i^*) > 0}|A_{\epsilon, \delta}] \notag \\
    & = P(\mu_{i^*} - U_m^t(i^*) > 0 | A_{\epsilon, \delta}) \notag \\
    & = P(\mu_{i^*} - \Tilde{\mu}_{i^*}^m(t) > Rad | A_{\epsilon, \delta} ) \notag \\
    & \leq P(|\mu_{i^*} - \Tilde{\mu}_{i^*}^m(t) |> Rad | A_{\epsilon, \delta} ) \leq \frac{2}{P(A_{\epsilon, \delta})T^4} 
\end{align}
where the first inequality uses the monotone property of $E[\cdot]$, the second inequality omits the latter negative quantity, the third inequality holds by the fact that $0 \leq \mu_i^* \leq 1$, and the last inequality is by Proposition~\ref{prop:concen_ine_sub}. 

In like manner, we have that the last term satisfies that
\begin{align}\label{eq:th_con_2}
    E[L_m^t(a_m^t) - \mu_{a_m^t}|A_{\epsilon, \delta}] \leq \frac{2}{P(A_{\epsilon, \delta})T^4}
\end{align}
by the same logic as the above and substituting $i^*$ with $a_m^t$, and thus we omit the details here.

We then proceed to bound the second term. Based on the decision rule in Algorithm~\ref{alg:dr}, we have either 
$E[U_m^t(i^*) - U_m^t(a_m^t)|A_{\epsilon, \delta}] < 0$
or 
$n_{m,i}(t) < N_{m,i}(t) - K$. This is equivalent to
\begin{align}\label{eq:U_U}
    & E[U_m^t(i^*) - U_m^t(a_m^t)|A_{\epsilon, \delta}]  \notag \\
    & = E[U_m^t(i^*) - U_m^t(a_m^t)1_{n_{m,i}(t) \geq N_{m,i}(t) - K}|A_{\epsilon, \delta}] + \notag \\
    & \qquad \qquad E[U_m^t(i^*) - U_m^t(a_m^t)1_{n_{m,i}(t) < N_{m,i}(t) - K}|A_{\epsilon, \delta}] \notag \\
    & \leq E[U_m^t(i^*) - U_m^t(a_m^t)1_{n_{m,i}(t) < N_{m,i}(t) - K}|A_{\epsilon, \delta}] \notag \\
    & \leq E[U_m^t(i^*) 1_{n_{m,i}(t) < N_{m,i}(t) - K}|A_{\epsilon, \delta}].
\end{align}

By definition, $U_m^t(i^*) = \Tilde{\mu}_{i^*} + Rad(i^*,m,t) $  implies that 
\begin{align*}
    U_m^t(i^*) \leq 1 + Rad(i^*,m,t)
\end{align*}
which leads to 
\begin{align*}
    & (\ref{eq:U_U}) \leq E[(1 + Rad(i^*,m,t)) 1_{n_{m,i}(t) < N_{m,i}(t) - K}|A_{\epsilon, \delta}]  \\
    & \leq (1  + \max Rad(i^*,m,t))E[1_{n_{m,i}(t) < N_{m,i}(t) - K}|A_{\epsilon, \delta}] \\
    & \leq (1 + \max\{\sqrt{C_1\ln T}, C_2\ln T\})E[1_{n_{m,i}(t) < N_{m,i}(t) - K}|A_{\epsilon, \delta}]
\end{align*}
and subsequently 
\begin{align}\label{eq:sum_n_m}
   & \frac{1}{M}\sum_{t=L_1+1}^T\sum_{m=1}^ME[U_m^t(i^*) - U_m^t(a_m^t)|A_{\epsilon, \delta}] \notag \\
   & \leq  \frac{1}{M}\sum_{t=L_1+1}^T\sum_{m=1}^M(1 + \max\{\sqrt{C_1\ln T}, C_2\ln T\})E[1_{n_{m,i}(t) < N_{m,i}(t) - K}|A_{\epsilon, \delta}].
\end{align}

Following~(\ref{eq:case1_sub}) that only depends on whether clients stay on the same page that relies on the transmission, we obtain
\begin{align*}
    \sum_{t}E[1_{n_{m,i}(t) < N_{m,i}(t) - K}|A_{\epsilon, \delta}] \leq K(K+2M) - K + \frac{2}{P(A_{\epsilon, \delta})T^3} 
\end{align*}
which immediately leads to 
\begin{align*}
    (\ref{eq:sum_n_m}) \leq (1 + \max\{\sqrt{C_1\ln T}, C_2\ln T\}) \cdot (K(K+2M) - K + \frac{2}{P(A_{\epsilon, \delta})T^3} )
\end{align*}

Afterwards, we consider the third term and have 
\begin{align}\label{eq:U_L}
    & E[U_m^t(a_m^t) - L_m^t(a_m^t) |A_{\epsilon, \delta}] \notag \\
    & = E[2Rad(a_m^t, m ,t)|A_{\epsilon, \delta}]
\end{align}

Putting (\ref{eq:th_con_1}, \ref{eq:th_con_2}, \ref{eq:sum_n_m}, \ref{eq:U_L}) all together, we deduce that 
\begin{align}
    (\ref{eq:th_R_T}) & \leq L_1 + \frac{1}{M}\sum_{t=L_1+1}^T\sum_{m=1}^M(\frac{2}{P(A_{\epsilon, \delta})T^4} + E[2Rad(a_m^t, m ,t)|A_{\epsilon, \delta}] + \frac{2}{P(A_{\epsilon, \delta})T^4}) \notag \\
    & \qquad \qquad + (1 + \max\{\sqrt{C_1\ln T}, C_2\ln T\}) \cdot (K(K+2M) - K + \frac{2}{P(A_{\epsilon, \delta})T^3} )\notag \\
    & \leq L_1 + \frac{4}{P(A_{\epsilon, \delta})T^3} + \frac{1}{M}\sum_{t>L_1}\sum_m( E[2Rad(a_m^t, m ,t)|A_{\epsilon, \delta}]) + \notag \\
    & \qquad \qquad  (1 + \max\{\sqrt{C_1\ln T}, C_2\ln T\}) \cdot (K(K+2M) - K + \frac{2}{P(A_{\epsilon, \delta})T^3} ). \notag 
\end{align}

Meanwhile, we observe that by definition
\begin{align}\label{eq:rad}
   & \frac{1}{M}\sum_{t>L_1}\sum_m( E[2Rad(a_m^t, m ,t)|A_{\epsilon, \delta}]) \notag \\
   & = \frac{1}{M}\sum_i\sum_m\sum_{\substack{a_m^t = i \\ t > L_1}}( E[2Rad(i, m ,t)|A_{\epsilon, \delta}]) \notag \\
   & = \frac{1}{M}\sum_i\sum_m\sum_{\substack{a_m^t = i \\ t > L_1}} E[2\sqrt{\frac{C_1\ln{T}}{n_{m,i}(t)}} + \frac{C_2\ln{T}}{n_{m,i}(t)}|A_{\epsilon, \delta}]. 
\end{align}

By the sum of the Harmonic series, we have 
\begin{align}\label{eq:C_2}
    \sum_{\substack{a_m^t = i \\ t > L_1}}\frac{C_2\ln{T}}{n_{m,i}(t)} \leq C_2\ln T\ln n_{m,i}(T) + C_2\ln T \leq C_2(\ln T)^2 + C_2\ln T.
\end{align}

Meanwhile, by the Cauchy-Schwartz inequality, we obtain
\begin{align*}
    & \sum_{\substack{a_m^t = i \\ t > L_1}}\sqrt{\frac{C_1\ln{T}}{n_{m,i}(t)}} \\
    & \leq \sqrt{C_1\ln T}\sqrt{(\sum_{t}1)(\sum_{t}(\sqrt{\frac{1}{n_{m,i}(t)}})^2)} \\ 
    & \leq \sqrt{C_1\ln T}\sqrt{T(\ln T + 1)}
\end{align*}
where the last inequality again uses the result on the Harmonic series as in (\ref{eq:C_2}).

Therefore, the cumulative value can be bounded as
\begin{align*}
    (\ref{eq:rad}) & \leq \frac{1}{M}\sum_i\sum_m(C_2(\ln T)^2 + C_2\ln T + \sqrt{C_1\ln T}\sqrt{T(\ln T + 1)}) \\
    & =  K(C_2(\ln T)^2 + C_2\ln T + \sqrt{C_1\ln T}\sqrt{T(\ln T + 1)})
\end{align*}

Using the result of (\ref{eq:rad}), we have
\begin{align*}
    (\ref{eq:th_R_T}) & \leq L_1 + \frac{4}{P(A_{\epsilon, \delta})T^3} + K(C_2(\ln T)^2 + C_2\ln T + \sqrt{C_1\ln T}\sqrt{T(\ln T + 1)}) + \\
    & \qquad \qquad (1 + \max\{\sqrt{C_1\ln T}, C_2\ln T\}) \cdot (K(K+2M) - K + \frac{2}{P(A_{\epsilon, \delta})T^3} ) \\
   & =  O(\max \{\sqrt{T}\ln T, (\ln{T})^2 \})
\end{align*}
which completes the proof. 

\end{proof}

\section{Choices of parameter $c_0$ in Theorem~\ref{th:all_settings}}

\paragraph{Parameter $c_0$} We note that $c_0$ is a pre-specified parameter which are different in different settings. The choices of $c_0$ are as follows. Meanwhile, we need to study whether the possible choices of $c_0$ explode in terms of the order of $T$.

\begin{remark}[2] The regret reads 
\begin{align*}
    E[R_T|A_{\epsilon, \delta}] & \leq L + C_1\sum_{i \neq i^*}([\frac{4\log T}{\Delta_i^2}]) + (K-1)(2(K^2+MK) +  \frac{2\pi^2}{3} + K^2 + (2M-1)K)
\end{align*}
with $L$ denoted as $L = \max\{L_1,L_2,L_3\} = \max\{a_1,a_2,a_3,\frac{b_1}{c_0},\frac{b_2}{c_0}, \frac{b_3}{c_0}\}$ and $C_1$ denoted as $ \max\{\frac{e}{1-c_0},f\}$, where parameters $a_1,a_2,a_3,b_1,b_2,b_3,e,f$ are specified as 
\begin{align*}
        & a_1 = \frac{{}\ln{\frac{2T}{\epsilon}}}{2\delta^2} \\
        & b_1 = 4K\log_{2}T\\
        & a_2 =  \frac{\ln{\frac{\delta}{10}}}{\ln{p^*}} +25\frac{1+\lambda}{1-\lambda}\frac{\ln{\frac{2T}{\epsilon}}}{2\delta^2} \\
        & b_2 = 4K\log_{2}T \\
        & a_3 = \frac{\ln{\frac{\delta}{10}}}{\ln{p^*}} +25\frac{1+\lambda}{1-\lambda}\frac{\ln{\frac{2T}{\epsilon}}}{2\delta^2} \\
        & b_3 = \frac{K\ln(\frac{MT}{\epsilon})}{\ln(\frac{1}{1-c})} \\
        & e = 16\frac{4(M+2)}{3M} \\
        & f = 16(M+2)(1 + 4Md^2_{m,t}).
\end{align*}

This function of $c_0$ is non-differentiable which brings additional challenges and requires a case-by-case analysis.

Let $ a = \{a_1,a_2,a_3\}$ and $b = \{b_1,b_2,b_3\}$. Then continue with the decision rule as in the previous discussion.

\begin{itemize}
    \item Case 1: there exists $c_0$ such that $a \geq \frac{b}{c_0}$,i.e. $c \geq \frac{b}{a}$ and $\frac{b}{a} \leq 1$
    Then $R_T$ is monotone increasing in $c$ due to $C_1$ and $c_0 = \frac{b}{a}$ gives us the optimal regret $R_T^1$.
    \item Case 2: if $a \leq \frac{b}{c_0}$,i.e.  $c_0 \leq \frac{b}{a}$
    \begin{itemize}
        \item if $\frac{e}{1-c_0} < f$,i.e. $c_0 \leq 1 - \frac{e}{f}$,
        then $c_0 = \min{\{\frac{b}{a},1 - \frac{e}{f}\}}$ is the minima.
        \item else we have $c_0 \geq 1 - \frac{e}{f}$
        \begin{itemize}
            \item if $1 - \frac{e}{f} > \frac{b}{a}$, it leads to contradiction and this can not be the case.
            \item else $1 - \frac{e}{f} \leq \frac{b}{a}$, we obtain 
            \begin{align*}
                R_T & \leq \frac{b}{c_0} + \frac{e}{1-c_0}\sum_{i \neq i^*}([\frac{4\log T}{\Delta_i^2}]) + (K-1)(2(K^2+MK) +  \frac{2\pi^2}{3} + K^2 + (2M-1)K) 
            \end{align*}
        which implies that the optimal choice of $c_0$ is $\frac{\sqrt{b}}{\sqrt{b} + \sqrt{e\sum_{i \neq i^*}([\frac{4\log T}{\Delta_i^2}])}}$
        \begin{itemize}
            \item if $ 1 - \frac{e}{f} \leq \frac{\sqrt{b}}{\sqrt{b} + \sqrt{e\sum_{i \neq i^*}([\frac{4\log T}{\Delta_i^2}])}} \leq \frac{b}{a}$, this gives us the final choice of $c_0$ and the subsequent local optimal regret $R_T^2$.
            \item elif $\frac{\sqrt{b}}{\sqrt{b} + \sqrt{e\sum_{i \neq i^*}([\frac{4\log T}{\Delta_i^2}])}} < 1 - \frac{e}{f}$, the optimal choice of $c_0$ is $1 - \frac{e}{f}$  and the subsequent local optimal regret is $R_T^2$
            \item else  the optimal choice of $c_0$ is $\frac{b}{a}$  and the subsequent local optimal regret is $R_T^2$
        \end{itemize}
        \end{itemize}
    \end{itemize}
    \item Compare $R_T^1$ and $R_T^2$ and choose the $c_0$ associated with the smaller value.
\end{itemize}
The possible choices of $c_0$ are $\{\frac{b}{a},\min\{\frac{b}{a},1 - \frac{e}{f}\}, \frac{\sqrt{b}}{\sqrt{b} + \sqrt{e\sum_{i \neq i^*}([\frac{4\log T}{\Delta_i^2}])}}\}$, i.e.
\begin{align*}
    c_0 & = \{\frac{b_1+b_2+b_3}{a_1+a_2+a_3}, \min\{\frac{b_1+b_2+b_3}{a_1+a_2+a_3},1 - \frac{16\frac{4(M+2)}{3M}}{16(M+2)(1 + 4Md^2_{m,t})}\},\frac{\sqrt{b}}{\sqrt{b} + \sqrt{e\sum_{i \neq i^*}([\frac{4\log T}{\Delta_i^2}])}}\} \\
    & = \{\frac{b_1+b_2+b_3}{a_1+a_2+a_3}, \min\{\frac{b_1+b_2+b_3}{a_1+a_2+a_3},1 - \frac{16\frac{4(M+2)}{3M}}{16(M+2)(1 + 4Md^2_{m,t})}\}, \frac{\sqrt{b}}{\sqrt{b} + \sqrt{e\sum_{i \neq i^*}([\frac{4\log T}{\Delta_i^2}])}}\} \\
    & = \frac{8K\log{T}+\frac{K\ln(\frac{MT}{\epsilon})}{\ln(\frac{1}{1-c})}}{ \frac{{}\ln{\frac{2T}{\epsilon}}}{2\delta^2} +  2\frac{\ln{\frac{\delta}{10}}}{\ln{p^*}} +50\frac{1+\lambda}{1-\lambda}\frac{\ln{\frac{2T}{\epsilon}}}{2\delta^2}} ,\\
    & \qquad \qquad \min\{\frac{8K\log{T}+\frac{K\ln(\frac{MT}{\epsilon})}{\ln(\frac{1}{1-c})}}{ \frac{{}\ln{\frac{2T}{\epsilon}}}{2\delta^2} + 2\frac{\ln{\frac{\delta}{10}}}{\ln{p^*}} +50\frac{1+\lambda}{1-\lambda}\frac{\ln{\frac{2T}{\epsilon}}}{2\delta^2}} , 1 - \frac{16\frac{4(M+2)}{3M}}{16(M+2)(1 + 4Md^2_{m,t})}\}, \\
&  \qquad \qquad \frac{\sqrt{8K\log{T}+\frac{K\ln(\frac{MT}{\epsilon})}{\ln(\frac{1}{1-c})}}}{\sqrt{8K\log{T}+\frac{K\ln(\frac{MT}{\epsilon})}{\ln(\frac{1}{1-c})}} + \sqrt{e\sum_{i \neq i^*}([\frac{4\log T}{\Delta_i^2}])}}
\end{align*}
which implies the choice of $c_0$ is between $\frac{\sqrt{8K\log{T}+\frac{K\ln(\frac{MT}{\epsilon})}{\ln(\frac{1}{1-c})}}}{\sqrt{8K\log{T}+\frac{K\ln(\frac{MT}{\epsilon})}{\ln(\frac{1}{1-c})}} + \sqrt{e\sum_{i \neq i^*}([\frac{4\log T}{\Delta_i^2}])}}$ and $1 - \frac{16\frac{4(M+2)}{3M}}{16(M+2)(1 + 4Md^2_{m,t})}$. Meanwhile, we observe that the choice of $c_0$ satisfies
\begin{align*}
    E[R_T|A_{\epsilon, \delta}] \leq R_T(1 - \frac{16\frac{4(M+2)}{3M}}{16(M+2)(1 + 4Md^2_{m,t})}) = O(\log{T}).
\end{align*}
\end{remark}

\end{document}